\newcommand{\figurescalefactor}{0.995}
\def\subsubsection{\@startsection{subsubsection} {3} {\z@} {1.5ex plus 1.5ex minus 0.5ex} {0.7ex plus .5ex minus 0ex} {\normalfont\normalsize\itshape}}
\setlist[enumerate]{label=(\alph*)}
\newcommand{\N}{\mathbb{N}}
\newcommand{\Z}{\mathbb{Z}}
\newcommand{\R}{\mathbb{R}}
\newcommand{\ROI}{B}
\newcommand{\conv}{\ast}
\newcommand{\discint}[2]{\{#1,\dotsc,#2\}}
\newcommand{\inint}[2]{\in\discint{#1}{#2}}
\newcommand{\nceil}[1]{\lceil #1 \rceil}
\newcommand{\bceil}[1]{\big\lceil #1 \big\rceil}
\newcommand{\transp}{^T}
\DeclareMathOperator{\vect}{vec}
\DeclareMathOperator{\id}{id}
\DeclareMathOperator{\convop}{conv}
\DeclareMathOperator{\avg}{avg}
\DeclareMathOperator{\rdim}{rdim}
\DeclareMathOperator{\cdim}{cdim}
\DeclareMathOperator{\Subsignal}{Subsignal}
\DeclareMathOperator{\Slide}{Slide}
\DeclareMathOperator{\Stride}{Stride}
\DeclareMathOperator{\Fragmentation}{Frag}
\DeclareMathOperator{\Defragmentation}{Defrag}
\DeclareMathOperator{\EvalStride}{EvalStride}
\DeclareMathOperator{\EvalSlide}{EvalSlide}
\DeclareMathOperator{\Stuffing}{Stuff}
\DeclareMathOperator{\Trimming}{Trim}
\DeclareMathOperator{\Patch}{Patch}
\renewcommand{\div}[2]{\operatorname{div}(#1,\ #2)}
\newcommand{\rem}[2]{\operatorname{rem}(#1,\ #2)}
\newcommand{\row}{\operatorname{row}}
\newcommand{\col}{\operatorname{col}}
\newcommand{\equsing}[1]{\overset{\mathclap{\text{#1}}}{=}}
\newcommand{\gequsing}[1]{\overset{\mathclap{\text{#1}}}{\geq}}
\newcites{ms}{References}
\newcites{trconv}{References}
\newcites{dil}{References}
\newcites{rlx}{References}
\newcites{rlxsld}{References}
\DeclareMathOperator{\DilatedSubsignal}{DilatedSubsignal}
\DeclareMathOperator{\Dilate}{Dilate}
\DeclareMathOperator{\EvalDilate}{EvalDilate}
\DeclareMathOperator{\EvalRelax}{EvalRelax}
\DeclareMathOperator{\EvalRelaxSlide}{EvalRelaxSlide}
\DeclareMathOperator{\Downsampling}{Down}
\DeclareMathOperator{\Upsampling}{Up}
\DeclareMathOperator{\Padding}{Pad}
\DeclareMathOperator{\Cropping}{Crop}
\DeclareMathOperator{\Spreading}{Spread}
\DeclareMathOperator{\TransposedConvolution}{TransposedConvolution}
\DeclareMathOperator{\ZOH}{ZOH}
\DeclareMathOperator{\DUC}{DUC}
\newcommand{\PaddingParams}{\Padding_R^\vartheta}
\DeclareMathOperator{\SubsignalPad}{SubsignalPad}
\newcommand{\SubsignalPadParams}{\SubsignalPad_{(d,\; R)}^{\vartheta}}
\newcommand{\SubsignalPadROIParams}{\SubsignalPad_{(\ROI,\; R)}^{\vartheta}}
\DeclareMathOperator{\MultiScaleSubsignal}{MultiScaleSubsignal}
\newcommand{\MultiScaleSubsignalParams}{\MultiScaleSubsignal_{(d,\; R,\; k)}^{(\vartheta,\; H)}}
\newcommand{\MultiScaleSubsignalROIParams}{\MultiScaleSubsignal_{(\ROI,\; R,\; k)}^{(\vartheta,\; H)}}
\DeclareMathOperator{\MultiScaleIndex}{MultiScaleIndex}
\DeclareMathOperator{\Original}{Org}
\newcommand{\Dirichlet}{1\text{st}}
\newcommand{\Neumann}{2\text{nd}}
\renewcommand*{\LS@rot}{\setbox\@outputbox\vbox{\hbox{\rotatebox{-90}{\box\@outputbox}}}}
\begin{document}
\begin{textblock*}{170mm}(17.5mm,9.3375mm)
\parindent0mm
\normalfont\scriptsize
This is an extended preprint of:
M. Thom and F. Gritschneder,
"Rapid Exact Signal Scanning with Deep Convolutional Neural Networks,"
IEEE Transactions on Signal Processing,
vol.~65, no.~5, pp.~1235--1250, 2017.
Digital Object Identifier 10.1109/TSP.2016.2631454.

Pages 1--16 only:
Copyright $\copyright$ 2016 IEEE.
Personal use of this material is permitted.
Permission from IEEE must be obtained for all other uses, in any current or future media, including reprinting/republishing this material for advertising or promotional purposes, creating new collective works, for resale or redistribution to servers or lists, or reuse of any copyrighted component of this work in other works.
\end{textblock*}

\title{\vspace{9mm}Rapid Exact Signal Scanning with\\Deep Convolutional Neural Networks}%
\author{Markus Thom and Franz Gritschneder%
\thanks{M. Thom was with driveU / Institute of Measurement, Control and Microtechnology, Ulm University, 89081 Ulm, Germany.
He is now with Daimler AG, 89081 Ulm, Germany (e-mail: markus.thom@daimler.com).}
\thanks{F. Gritschneder is with driveU / Institute of Measurement, Control and Microtechnology, Ulm University, 89081 Ulm, Germany (e-mail: franz.gritschneder@uni-ulm.de).}
\thanks{
This work was supported by Daimler AG, Germany.}
}
\maketitle

\begin{abstract}
A rigorous formulation of the dynamics of a signal processing scheme aimed at dense signal scanning without any loss in accuracy is introduced and analyzed.
Related methods proposed in the recent past lack a satisfactory analysis of whether they actually fulfill any exactness constraints.
This is improved through an exact characterization of the requirements for a sound sliding window approach.
The tools developed in this paper are especially beneficial if Convolutional Neural Networks are employed, but can also be used as a more general framework to validate related approaches to signal scanning.
The proposed theory helps to eliminate redundant computations and renders special case treatment unnecessary, resulting in a dramatic boost in efficiency particularly on massively parallel processors.
This is demonstrated both theoretically in a computational complexity analysis and empirically on modern parallel processors.
\end{abstract}

\begin{IEEEkeywords}
Deep learning techniques, dense signal scanning, sliding window approach, convolutional neural networks.
\end{IEEEkeywords}

\section{Introduction}
\IEEEPARstart{E}{ven} though today's signal processing systems have achieved an unprecedented complexity, a multitude of them have a very basic commonality:
The application of a translation-invariant function to a large signal in a sliding fashion facilitates the dense computation of interesting output values for each possible spatial location.
Consider filter-based signal denoising as an example:
Here, each entry of the denoised output signal always depends on a fixed computation rule applied to only a limited number of samples within the input signal, or in other words, on a \emph{subsignal} of the input signal.
The computation rule is completely agnostic with regard to the actual position, it is merely important that the input samples are drawn accordingly from the input signal.

Of course, modern systems apply more sophisticated techniques than mere filtering.
However, recently an architecture essentially made up of simple filtering building blocks has displayed advantages over any other approach in a wide variety of practical applications.
Due to significant advances in the design of massively parallel processors and the availability of huge annotated data sets, deep artificial neural networks which learn desired behavior by adapting their degrees of freedom to concrete sample data rather than being programmed explicitly have become the de facto state-of-the-art in the domains of signal restoration and signal classification.

The most important architecture for analyzing signals that possess a spatial structure, such as images where pixels are arranged on a two-dimensional grid, was inspired by findings on the dynamics of mammalian visual cortex~\cite{Hubel1962}:
\emph{Convolutional Neural Networks (CNNs)}~\cite{Fukushima1980,LeCun1990a,LeCun1998} respect the weight-sharing principle, hence convolution with trainable filters becomes the actual workhorse for data processing.
This principle greatly reduces the network's degrees of freedom, making it less susceptible to overfitting, and it incorporates a strong prior with respect to the spatial layout of the input data.
In fact, this particular architecture has proven highly successful both for image restoration tasks~\cite{Jain2009,Xu2015,Dong2016} and pattern recognition problems~\cite{Ciresan2012a,Krizhevsky2013,Szegedy2015}.

If a CNN trained for object categorization is evaluated at each feasible image position, it is possible to assign class membership estimations to all the pixels in an image, yielding a semantic segmentation of a scene~\cite{Ning2005,Grangier2009,Farabet2013}.
This representation is much more powerful than what can be gained from a strict conventional object detection approach which solely outputs bounding boxes of found object instances.
Instead, it facilitates applications such as automated biological or medical image analysis~\cite{Ning2005,Giusti2013,Thong2016} and dense vehicle environment perception~\cite{Nuss2014}.
While the computational complexity of a sophisticated classification system used in conjunction with a sliding window approach may seem excessive at first glance, the weight-sharing principle of a CNN can be exploited so that intermediate computation results can be shared among adjacent image patches, resulting in a speedup of several orders of magnitude.
Although this was already realized for CNNs without pooling layers more than two decades ago~\cite{Vaillant1993}, approaches that also account for pooling layers emerged only recently~\cite{Giusti2013,Li2014,Sermanet2014}.

The approach of Giusti \emph{et al.}~\cite{Giusti2013} achieves fast scanning of entire images through the introduction of a fragmentation data structure.
Here, the internal representations of a CNN are decomposed using a spatial reordering operation after each pooling layer, allowing the evaluation of convolutions on contiguous signals at all times.
The intermediate signals are however inhomogeneous with respect to their dimensionality, leaving the possibility for the use of efficient tensor convolution routines unclear.
Li \emph{et al.}~\cite{Li2014}, on the other hand, propose enlarging the filter banks of convolutional layers by inserting vanishing entries at regular locations.
These sparse filter banks require a cumbersome re-engineering of efficient convolution implementations, which may not be able to achieve maximum throughput on modern massively parallel processors.
Sermanet \emph{et al.}~\cite{Sermanet2014} use the same processing pipeline for patches and entire images, which incurs relaxations with accuracy loss effects where the actual impact is hard to predict.

All these approaches have in common that it is not inherently clear what they actually compute or if the result is even the desired one.
Instead of a rigorous mathematical proof of correctness, only toy examples are available, illustrating the implementation of these approaches.
This situation is especially unsatisfactory if, instead of pure convenience functions, systems subject to safety considerations should be realized where precise statements rather than only an empirical evaluation are required.

The key contributions of this paper are
(i) the development of an original theory on \emph{subsignal compatible transformations} as exact characterization of functions that fulfill the invariants required for a sound sliding window approach,
(ii) the proposition of a method for dense signal scanning \emph{provably without any accuracy loss} that yields significant speedups due to homogeneous data structures and elimination of redundant computations and special case treatment,
and (iii) the demonstration how CNNs interconnect with the theory and how they can be \emph{exactly} transformed from subsignal-based application to signal-based application \emph{without} any necessary adjustments to the computationally most demanding tensor convolution.
To the authors' best knowledge, they are the first to actually have mathematically rigorous statements to support their claims on the correctness of dense signal scanning with CNNs.
Due to the generality of the results, the herein developed theoretical framework can also serve as a basis for analyzing related and emerging methods for signal processing based on translation-invariant functions applied in a sliding fashion.

The remainder of this paper is structured as follows.
Section~\ref{sect:prerequisites} presents an introduction to the CNN structure, fixes the notation and introduces what is meant by subsignals.
Section~\ref{sect:subsignal-compatible-transformations} establishes the basics of the theory on subsignal compatible transformations and shows how the building blocks of CNNs fit into the theory.
In the following Sect.~\ref{sect:strided_functions}, the theory is extended to functions applied in a strided fashion, which is particularly important for pooling operators evaluated on non-overlapping blocks.
Section~\ref{sect:computational_complexity} provides a theoretical computational complexity analysis.
Practical considerations for image processing and the results of experiments on real parallel processors are discussed in Sect.~\ref{sect:experimental_evaluation}.
The paper is concluded with a discussion of the results in Sect.~\ref{sect:conclusions}.

\section{Prerequisites}
\label{sect:prerequisites}
This section begins with an introduction to the building blocks of a CNN.
Then the notation used throughout the paper is established.
The section concludes with the definition of the subsignal extraction operator and statements on its properties.

\subsection{Convolutional Neural Networks}
CNNs are organized in a number of specialized layers~\cite{LeCun1998}.
Each layer receives input data from its predecessor, processes it, and sends the result to the next layer.
The network's output is then the output of the final layer.
The training process consists of tuning the network's degrees of freedom until the network produces the desired output given concrete input sample data~\cite{Bishop1995}.
After a network has been trained, it can be used as a predictor on previously unseen data in regression or classification tasks.

The different specialized layer types are given as follows.
\emph{Convolutional layers} respect the weight-sharing principle: They convolve their input with a trainable filter bank and add a trainable scalar bias to form the layer output.
These layers fall into the class of subsignal compatible transformations detailed in Sect.~\ref{sect:subsignal-compatible-transformations}, a mathematical analysis of the involved computations is given in Sect.~\ref{sect:CNNs-wo-pooling}.

\emph{Fully-connected layers} are a special case of convolutional layers in that they carry out a convolution with unit spatial filter size.
Mathematical treatment of these layers is hence superseded by the analysis of convolutional layers.

\emph{Non-linearity layers} independently pass each sample of a signal through a scalar transfer function.
This prevents the entire network from forming a purely linear system and hence enhances the network's representational capacity.
Since these operations are agnostic with respect to any spatial structure, an analysis is straightforward and handled in Sect.~\ref{sect:CNNs-wo-pooling}.

Eventually, \emph{pooling layers} strengthen a network's invariance to small translations of the input data by evaluation of a fixed pooling kernel followed by a downsampling operation.
For brevity of the presentation, only functions applied to non-overlapping blocks are considered here.
Pooling requires an extension of the plain theory of subsignal compatible transformations, provided in Sect.~\ref{sect:strided_functions}.

This paper proves that CNNs can be transformed from subsignal-based application to signal-based application by transforming strided function evaluation into sliding function evaluation and inserting special helper layers, namely fragmentation, defragmentation, stuffing and trimming.
This transformation is completely lossless, both subsignal-based application and signal-based application lead to the same results.
Even after the transformation, CNNs can be further fine-tuned with standard optimization methods.
An example for this process is given in Sect.~\ref{sect:experimental_evaluation}.

\subsection{Notation}
For the sake of simplicity, the mathematical analysis is restricted to vector-shaped signals.
The generalization to more complex signals such as images is straightforward through application of the theory to the two independent spatial dimensions of images.
This is briefly discussed in Sect.~\ref{sect:experimental_evaluation}.

$\N_1 := \N\setminus\set{0}$ represents the positive natural numbers.
If $M$ is a set and $q\in\N_1$, then $M^q$ denotes the set of all $q$-tuples with entries from $M$.
The elements of $M^q$ are called \emph{signals}, their $q$ entries are called \emph{samples}.
If $\xi = (\xi_1,\dotsc,\xi_q)\in M^q$ is a signal and $I\in\discint{1}{q}^r$ is an index list with $r$ entries, the \emph{formal sum} $\omega := \sum_{\nu = 1}^r \xi_{I_\nu}\cdot e_\nu^r$ is used for the element $\omega\in M^r$ with $\omega_\nu = \xi_{I_\nu}$ for all $\nu\inint{1}{r}$.
For example, when $M$ equals the set of real numbers $\R$ and hence $M^r$ is the $r$-dimensional Euclidean space, then the formal sum $\omega$ corresponds to the linear combination of canonical basis vectors $e_\nu^r$ weighted with selected coordinates of the signal $\xi$.

For $\xi\in M^q$, $\dim_M(\xi) = q$ represents the \emph{dimensionality} of $\xi$.
This does not need to correspond exactly with the concept of dimensionality in the sense of linear algebra.
If for example $M = \N^c$ for categorical data with $c\in\N_1$ features, then $M^q$ is not a vector space over $M$.
The theory presented in this paper requires algebraic structures such as vector spaces or analytic structures such as the real numbers only for certain examples.
The bulk of the results hold for signals with samples from arbitrary sets.

If $M$ is a set and $c\in\N_1$ is a positive natural number, then $\cup_c(M) := \cup_{q = c}^\infty M^q$ is written for the set that contains all the signals of dimensionality greater than or equal to $c$ with samples from $M$.
For example, if $\xi\in\cup_c(M)$ then there is a natural number $q\geq c$ so that $\xi = (\xi_1,\dotsc,\xi_q)$ with $\xi_\nu\in M$ for all $\nu\inint{1}{q}$.
Note that $\cup_1(M)$ contains all non-empty signals with samples from $M$.

\subsection{Division of a Signal into Subsignals}
A \emph{subsignal} is a contiguous list of samples contained in a larger signal.
First, the concept of extracting subsignals with a fixed number of samples from a given signal is formalized:
\begin{definition}
\label{def:subsignal}
Let $M$ be an arbitrary set and let $d\in\N_1$ denote a fixed subsignal dimensionality.
Then the function $\Subsignal_d\colon\bigcup_{D = d}^\infty\big(M^D\times\discint{1}{D - d + 1}\big)\to M^d$,
\begin{displaymath}
  (\xi,\; i)\mapsto\sum\nolimits_{\nu = 1}^d \xi_{i + \nu - 1}\cdot e_\nu^d\text{,}
\end{displaymath}
is called the \emph{subsignal extraction operator}.
Here, $\xi$ is the input signal and $i$ denotes the \emph{subsignal index}.
\end{definition}

\begin{figure}[t]
  \centering
  \scalebox{\figurescalefactor}{\includegraphics[page=1]{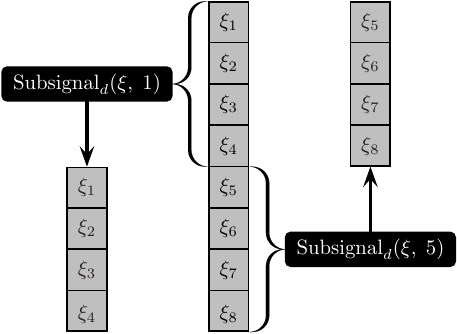}}
  \caption{Illustration of the subsignal extraction operator applied to a signal $\xi$ with $D = 8$ samples for extraction of subsignals with $d = 4$ samples.
    The left-hand side shows the first subsignal and the right-hand side shows the final subsignal with the maximum subsignal index of $D - d + 1 = 5$.}
  \label{fig:subsignal}
\end{figure}

It is straightforward to verify that $\Subsignal_d$ is well-defined and actually returns all possible $D - d + 1$ contiguous subsignals of length $d$ from a given signal with $D$ samples (see Fig.~\ref{fig:subsignal}).
Note that for application of this operator, it must always be ensured that the requested subsignal index $i$ is within bounds, that is $i\inint{1}{D - d + 1}$ must hold to address a valid subsignal.

Iterated extraction of subsignals of different length can be collapsed into one operator evaluation:
\begin{lemma}
\label{lem:subsignal-composition}
Let $M$ be a set and further let $c,d\in\N_1$, $c\leq d$, be two subsignal dimensionalities.
Then for all $\xi\in\cup_c(M)$, $i\inint{1}{\dim_M(\xi) - d + 1}$ and $j\inint{1}{d - c + 1}$ it is $\Subsignal_c(\Subsignal_d(\xi,\; i),\; j) = \Subsignal_c(\xi,\; i + j - 1)$.
\end{lemma}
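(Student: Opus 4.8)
The plan is to prove the identity by comparing the two signals in $M^c$ appearing on either side sample-by-sample. Since both $\Subsignal_c(\Subsignal_d(\xi,\; i),\; j)$ and $\Subsignal_c(\xi,\; i + j - 1)$ are elements of $M^c$, it suffices to show that their $\mu$-th samples coincide for every $\mu\inint{1}{c}$, which reduces the whole claim to the explicit index arithmetic supplied by Definition~\ref{def:subsignal}.

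Before the computation I would first confirm that all three operator evaluations are legitimate under the stated hypotheses, since $\Subsignal$ is defined only when its index argument lies within bounds. Reading from the outer to the inner application: $\Subsignal_d(\xi,\; i)$ is valid because $i\inint{1}{\dim_M(\xi) - d + 1}$ is assumed (which in particular forces $\dim_M(\xi)\geq d\geq c$); its output lies in $M^d$, so $\Subsignal_c(\cdot,\; j)$ is valid because $j\inint{1}{d - c + 1}$ is assumed; and the right-hand side $\Subsignal_c(\xi,\; i + j - 1)$ requires $i + j - 1\inint{1}{\dim_M(\xi) - c + 1}$. The lower bound $i + j - 1\geq 1$ is immediate from $i,j\geq 1$, while the upper bound follows by adding the two given upper bounds, $i + j - 1\leq(\dim_M(\xi) - d + 1) + (d - c + 1) - 1 = \dim_M(\xi) - c + 1$, where the two $d$-terms cancel. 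This telescoping is the one place that warrants a moment of attention.

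For the main step I would abbreviate $\eta := \Subsignal_d(\xi,\; i)$, so that by definition $\eta_\nu = \xi_{i + \nu - 1}$ for every $\nu\inint{1}{d}$. Applying the definition a second time, the $\mu$-th sample of $\Subsignal_c(\eta,\; j)$ is $\eta_{j + \mu - 1}$; here the intermediate index $j + \mu - 1$ stays within $\discint{1}{d}$ precisely because of the bound on $j$, so this lookup is a genuine sample access rather than a formal one, and substitution gives $\xi_{i + (j + \mu - 1) - 1} = \xi_{i + j + \mu - 2}$. On the other side, the $\mu$-th sample of $\Subsignal_c(\xi,\; i + j - 1)$ is, directly from the definition, $\xi_{(i + j - 1) + \mu - 1} = \xi_{i + j + \mu - 2}$. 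The two expressions agree for all $\mu\inint{1}{c}$, which establishes the equality of the two signals.

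The argument is essentially pure bookkeeping, so I do not anticipate a genuine obstacle; the only real care is needed in the range verification for the composite index $i + j - 1$ and in confirming that $j + \mu - 1$ never exceeds $d$, so that every nested lookup is well-defined.
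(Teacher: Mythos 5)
Your proof is correct and follows essentially the same route as the paper's: both arguments verify that the index $i+j-1$ stays within bounds and then reduce the identity to the index substitution $\nu = j+\mu-1$ in Definition~\ref{def:subsignal}, comparing the two sides sample by sample. Your additional check that the intermediate index $j+\mu-1$ never exceeds $d$ is a point the paper's ($\lozenge$) step leaves implicit, so this is a slightly more careful write-up of the same argument.
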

\begin{proof}
The subsignal indices of the left-hand side are well within bounds.
Since $i + j - 1 \inint{1}{\dim_M(\xi) - c + 1}$ this also holds for the right-hand side.
Now
\begin{align*}
     & \Subsignal_c(\Subsignal_d(\xi,\; i),\; j)\\
  \equsing{D.~\ref{def:subsignal}}\ \ & \sum\nolimits_{\lambda = 1}^c\Subsignal_d(\xi,\;i)_{j + \lambda - 1}\cdot e_\lambda^c\\
  \equsing{D.~\ref{def:subsignal}}\ \ & \sum\nolimits_{\lambda = 1}^c\left(\sum\nolimits_{\nu = 1}^d \xi_{i + \nu - 1}\cdot e_\nu^d\right)_{j + \lambda - 1}\cdot e_\lambda^c\\
  \equsing{($\lozenge$)}\ \ & \sum\nolimits_{\lambda = 1}^c \xi_{(i + j - 1) + \lambda - 1}\cdot e_\lambda^c\\
  \equsing{D.~\ref{def:subsignal}}\ \ & \Subsignal_c(\xi,\; i + j - 1)\text{,}
\end{align*}
where in the ($\lozenge$) step $\nu = j + \lambda - 1$ has been substituted.
\end{proof}

\section{Subsignal Compatible Transformations}
\label{sect:subsignal-compatible-transformations}
This section introduces the concept of subsignal compatible transformations.
These are functions that can be applied to an entire signal at once and then yield the same result as if they had been applied to each subsignal independently.
It is shown that functions applied in a sliding fashion can be characterized as subsignal compatible transformations, and that the composition of subsignal compatible transformations is again a subsignal compatible transformation.

At the end of this section, CNNs without pooling layers are considered and it is demonstrated that these satisfy the requirements of subsignal compatible transformations.
As a consequence, such networks can be applied to the whole input signal at once without having to handle individual subsignals.
CNNs that \emph{do} contain pooling layers require more theoretical preparations and are discussed verbosely in Sect.~\ref{sect:strided_functions}.

Now the primary definition of this section:
\begin{definition}
\label{def:subsignal_compatible}
Let $M$ and $N$ be sets, let $c\in\N_1$ be a positive natural number, and let $T\colon \cup_c(M) \to \cup_1(N)$ be a function.
$T$ is then called a \emph{subsignal compatible transformation with dimensionality reduction constant $c$} if and only if these two properties hold:
\begin{enumerate}[label=(\roman*)]
  \item \emph{Dimensionality reduction property (DRP):}\\ $\dim_N(T(\xi)) = \dim_M(\xi) - c + 1$ for all $\xi\in\cup_c(M)$.
  \item \emph{Exchange property (XP):}\\ For all subsignal dimensionalities $d\in\N_1$, $d\geq c$, it holds that $T(\Subsignal_d(\xi,\;i)) = \Subsignal_{d - c + 1}(T(\xi),\;i)$ for all $\xi\in\cup_d(M)$ and all $i\inint{1}{\dim_M(\xi) - d + 1}$.
\end{enumerate}
\end{definition}

The first property guarantees that $T$ reduces the dimensionality of its argument always by the same amount regardless of the concrete input.
The second property states that if $T$ is applied to an individual subsignal, then this is the same as applying $T$ to the entire signal and afterwards extracting the appropriate samples from the resulting signal.
Therefore, if with subsignal-based application of $T$ the outcome for \emph{all} feasible subsignals should be determined, it suffices to carry out signal-based application of $T$ on the entire input signal once, preventing redundant computations.
These concepts are illustrated in Fig.~\ref{fig:exchange-property}.

\begin{figure}[t]
  \centering
  \scalebox{\figurescalefactor}{\includegraphics[page=2]{paper-pics.pdf}}
  \caption{Example of a subsignal compatible transformation.
    Here, $\operatorname{Quot}$ is a non-linear operator that computes the quotient of $c = 2$ adjacent samples and always reduces the dimensionality of its input by one sample, satisfying the dimensionality reduction property.
    The lower part shows the result of first extracting subsignals with $d = 3$ samples from the input signal $\xi$ and then evaluating $\operatorname{Quot}$.
    This yields processed subsignals $\operatorname{Quot}\left(\Subsignal_d(\xi,\;i)\right)$ with $d - c + 1 = 2$ samples each.
    The exchange property guarantees that these processed subsignals can also be found in $\operatorname{Quot}(\xi)$, exemplarily shown at the right-hand side of the graphics for subsignal index $i = 3$.}
  \label{fig:exchange-property}
\end{figure}

Note that the exchange property is well-defined:
The dimensionality reduction property guarantees that the dimensionalities on both sides of the equation match.
Further, the subsignal index $i$ is within bounds on both sides.
This is trivial for the left-hand side, and can be seen for the right-hand side since $\dim_M(\xi) - d + 1 = (\dim_M(\xi) - c + 1) - (d - c + 1) + 1$.

An identity theorem for subsignal compatible transformations immediately follows:
\begin{theorem}
\label{thm:subsignal-identity}
Let $M,N$ be sets and $T_1,T_2\colon\cup_c(M)\to\cup_1(N)$ two subsignal compatible transformations with dimensionality reduction constant $c\in\N_1$.
If $T_1(\rho) = T_2(\rho)$ holds for all $\rho\in M^c$, then already $T_1 = T_2$.
\end{theorem}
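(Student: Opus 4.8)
The plan is to reduce the equality of the two output signals $T_1(\xi)$ and $T_2(\xi)$ to a sample-by-sample comparison, and to isolate each individual sample by extracting length-one subsignals through the exchange property. First I would fix an arbitrary $\xi\in\cup_c(M)$ and write $q := \dim_M(\xi)\geq c$. The dimensionality reduction property applied to both transformations gives $\dim_N(T_1(\xi)) = \dim_N(T_2(\xi)) = q - c + 1$, so the two outputs live in the same $N^{q - c + 1}$; it therefore suffices to show that their $i$-th samples coincide for every $i\inint{1}{q - c + 1}$.

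To reach a single sample, I would invoke the exchange property with the smallest admissible subsignal dimensionality $d = c$. For any $i\inint{1}{q - c + 1}$, property~(XP) yields $T_k(\Subsignal_c(\xi,\; i)) = \Subsignal_1(T_k(\xi),\; i)$ for $k\in\{1,2\}$, where I have used $d - c + 1 = 1$. The right-hand side $\Subsignal_1(T_k(\xi),\; i)$ is precisely the length-one signal carrying the $i$-th sample of $T_k(\xi)$, so this identity expresses each individual output sample in terms of $T_k$ evaluated on a genuine element of $M^c$, namely the length-$c$ subsignal $\Subsignal_c(\xi,\; i)$.

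Now the hypothesis enters: since $\Subsignal_c(\xi,\; i)\in M^c$ and $T_1$ and $T_2$ agree on all of $M^c$, the left-hand sides coincide, whence $\Subsignal_1(T_1(\xi),\; i) = \Subsignal_1(T_2(\xi),\; i)$ for every admissible $i$. As $i$ ranges over $\discint{1}{q - c + 1}$ this matches all samples of the two equidimensional signals, forcing $T_1(\xi) = T_2(\xi)$. Because $\xi$ was arbitrary, $T_1 = T_2$ follows.

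I expect no serious obstacle here; the only points demanding care are purely bookkeeping: verifying that the subsignal index $i$ stays within bounds on both sides of the exchange property (which the remark following Definition~\ref{def:subsignal_compatible} already guarantees), and recognizing that $\Subsignal_1(\cdot,\; i)$ isolates exactly the $i$-th sample, so that agreement of all such extractions is equivalent to equality of the full signals.
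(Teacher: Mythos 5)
Your proposal is correct and follows essentially the same route as the paper's own proof: fix $\xi$, use the dimensionality reduction property to match output lengths, then apply the exchange property with $d = c$ to identify each output sample $T_k(\xi)_i$ with $T_k(\Subsignal_c(\xi,\;i))$, where the hypothesis forces agreement. The only cosmetic difference is that you phrase the extraction via $\Subsignal_1(T_k(\xi),\;i)$ while the paper writes the sample $T_k(\xi)_i$ directly; the argument is identical.
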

\begin{proof}
Let $\xi\in\cup_c(M)$.
For $\mu\inint{1}{\dim_M(\xi) - c + 1}$, applying the precondition (PC) and the exchange property where the subsignal dimensionality $d$ is set to $c$ yields:
$T_1(\xi)_\mu \equsing{XP} T_1(\Subsignal_c(\xi,\;\mu)) \equsing{PC} T_2(\Subsignal_c(\xi,\;\mu)) \equsing{XP} T_2(\xi)_\mu$.
Hence all samples of the transformed signals match, thus $T_1(\xi) = T_2(\xi)$ for all $\xi$ in the domain of $T_1$ and $T_2$.
\end{proof}

\subsection{Relationship between Functions Applied in a Sliding Fashion and Subsignal Compatible Transformations}
Turning now to functions applied to a signal in a \emph{sliding} fashion, first a definition what is meant hereby:
\begin{definition}
\label{def:sliding-function}
Let $M$ and $N$ be sets, let $c\in\N_1$ be a positive natural number and let $f\colon M^c\to N$ be a function.
Then $\Slide_f\colon\cup_c(M)\to\cup_1(N)$,
\begin{displaymath}
  \xi\mapsto\sum\nolimits_{i = 1}^{\dim_M(\xi) - c + 1} f(\Subsignal_c(\xi,\;i))\cdot e_i^{\dim_M(\xi) - c + 1}\text{,}
\end{displaymath}
is the operator that applies $f$ in a \emph{sliding fashion} to all the subsignals of length $c$ of the input signal and stores the result in a contiguous signal.
The sliding window is always advanced by exactly one entry after each evaluation of $f$.
\end{definition}

The next result states that functions applied in a sliding fashion are essentially the same as subsignal compatible transformations, and that the exchange property could be weakened to hold only for the case where the dimensionality reduction constant equals the subsignal dimensionality:
\begin{theorem}
\label{thm:sliding-subsignal}
Let $M$ and $N$ be sets, let $c\in\N_1$ and let $T\colon \cup_c(M) \to \cup_1(N)$ be a function.
Then the following are equivalent:
\begin{enumerate}
  \item \label{thm:sliding-subsignal-a} $T$ is a subsignal compatible transformation with dimensionality reduction constant $c$.
  \item \label{thm:sliding-subsignal-b} $T$ fulfills the dimensionality reduction property, and for all $\xi\in\cup_c(M)$ and all $i\inint{1}{\dim_M(\xi) - c + 1}$ it holds that $T(\Subsignal_c(\xi,\;i)) = T(\xi)_i$.
  \item \label{thm:sliding-subsignal-c} There is a unique function $f\colon M^c\to N$ with $T = \Slide_f$.
\end{enumerate}
\end{theorem}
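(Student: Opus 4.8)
The plan is to prove the three-way equivalence economically as a cycle of implications $\ref{thm:sliding-subsignal-a}\Rightarrow\ref{thm:sliding-subsignal-b}\Rightarrow\ref{thm:sliding-subsignal-c}\Rightarrow\ref{thm:sliding-subsignal-a}$. Before starting, I would fix the convention that a length-one signal in $N^1$ is identified with its single sample in $N$; this is exactly the situation produced by the dimensionality reduction property when $T$ is applied to an argument $\rho\in M^c$, since then $\dim_N(T(\rho)) = c - c + 1 = 1$. This identification is what makes the statements in \ref{thm:sliding-subsignal-b} and \ref{thm:sliding-subsignal-c} typecheck at all.

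For $\ref{thm:sliding-subsignal-a}\Rightarrow\ref{thm:sliding-subsignal-b}$ there is almost nothing to do: the dimensionality reduction property is part of the hypothesis, and the sample-wise identity $T(\Subsignal_c(\xi,i)) = T(\xi)_i$ is merely the exchange property with the subsignal dimensionality chosen as $d = c$, because then $\Subsignal_{d-c+1}(T(\xi),i) = \Subsignal_1(T(\xi),i)$ extracts precisely the $i$-th sample $T(\xi)_i$.

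For $\ref{thm:sliding-subsignal-b}\Rightarrow\ref{thm:sliding-subsignal-c}$ I would construct the witness explicitly by setting $f(\rho) := T(\rho)$ for $\rho\in M^c$, which is well-defined by the identification above. To check $T = \Slide_f$, fix $\xi\in\cup_c(M)$: both signals have dimensionality $\dim_M(\xi) - c + 1$ by the dimensionality reduction property and by Definition~\ref{def:sliding-function}, and their $i$-th samples coincide since $\Slide_f(\xi)_i = f(\Subsignal_c(\xi,i)) = T(\Subsignal_c(\xi,i)) = T(\xi)_i$, using property~\ref{thm:sliding-subsignal-b}. Uniqueness is obtained by evaluating any representation $T = \Slide_g$ at an arbitrary $\rho\in M^c$, where the defining sum collapses to a single term and forces $g(\rho) = T(\rho) = f(\rho)$.

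Finally, for $\ref{thm:sliding-subsignal-c}\Rightarrow\ref{thm:sliding-subsignal-a}$ the dimensionality reduction property can be read off directly from the number of summands in $\Slide_f$. The exchange property is the substantial step, and I expect the re-indexing of nested subsignal extractions to be the main obstacle. I would argue sample by sample: with $\eta := \Subsignal_d(\xi,i)$, the $j$-th sample of $T(\eta) = \Slide_f(\eta)$ is $f(\Subsignal_c(\Subsignal_d(\xi,i),j))$, which Lemma~\ref{lem:subsignal-composition} rewrites as $f(\Subsignal_c(\xi,i + j - 1))$; on the other side, the $j$-th sample of $\Subsignal_{d-c+1}(T(\xi),i)$ is $T(\xi)_{i+j-1} = f(\Subsignal_c(\xi,i + j - 1))$. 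Since both signals also share the dimensionality $d - c + 1$, they are equal. The only thing to watch is that all subsignal indices stay within bounds, but this was already verified in Lemma~\ref{lem:subsignal-composition} and in the remarks following Definition~\ref{def:subsignal_compatible}.
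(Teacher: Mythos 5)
Your proposal is correct and follows essentially the same route as the paper's own proof: the same cycle \ref{thm:sliding-subsignal-a}$\Rightarrow$\ref{thm:sliding-subsignal-b}$\Rightarrow$\ref{thm:sliding-subsignal-c}$\Rightarrow$\ref{thm:sliding-subsignal-a}, the same construction $f(\rho) := T(\rho)$ with uniqueness by evaluating on $M^c$, and the same use of Lemma~\ref{lem:subsignal-composition} for the exchange property. Phrasing the last implication sample-wise rather than via formal sums is only a cosmetic difference.
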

\begin{proof}
\ref{thm:sliding-subsignal-a} $\Rightarrow$ \ref{thm:sliding-subsignal-b}:
Trivial, since the dimensionality reduction property is fulfilled by definition, and the claimed condition is only the special case of the exchange property where $d = c$.

\ref{thm:sliding-subsignal-b} $\Rightarrow$ \ref{thm:sliding-subsignal-c}:
For showing existence, define $f\colon M^c\to N$, $\xi\mapsto T(\xi)$.
For $\xi\in M^c$ it is $\dim_N(T(\xi)) = 1$ due to the dimensionality reduction property, therefore $f$ is well-defined.
Now let $\xi\in\cup_c(M)$ and define $D := \dim_M(\xi)$.
It is clear that $\dim_N(T(\xi)) = \dim_N(\Slide_f(\xi)) = D - c + 1$.
Now let $i\inint{1}{D - c + 1}$, then the precondition (PC) implies
$\Slide_f(\xi)_i \equsing{D.~\ref{def:sliding-function}} f(\Subsignal_c(\xi,\;i)) = T(\Subsignal_c(\xi,\;i)) \equsing{PC} T(\xi)_i$,
hence $T = \Slide_f$.

Considering uniqueness, suppose that there exist functions $f_1,f_2\colon M^c\to N$ with $T = \Slide_{f_1} = \Slide_{f_2}$.
Let $\rho\in M^c$ be arbitrary, then Definition~\ref{def:sliding-function} gives $f_1(\rho) = \Slide_{f_1}(\rho) = \Slide_{f_2}(\rho) = f_2(\rho)$, therefore $f_1 = f_2$ on $M^c$.

\ref{thm:sliding-subsignal-c} $\Rightarrow$ \ref{thm:sliding-subsignal-a}:
Suppose there is a function $f\colon M^c\to N$ with $T = \Slide_f$.
$\Slide_f$ inherently fulfills the dimensionality reduction property.
Let $d\in\N_1$, $d\geq c$, be an arbitrary subsignal dimensionality and let $\xi\in\cup_d(M)$ be a signal.
Further, let $i\inint{1}{\dim_M(\xi) - d + 1}$ be an arbitrary subsignal index.
Remembering that $\dim_M(\Subsignal_d(\xi,\;i)) = d$ and using Lemma~\ref{lem:subsignal-composition} gives
\begin{align*}
       & \Slide_f(\Subsignal_d(\xi,\;i))\\
  \equsing{D.~\ref{def:sliding-function}}\ \ & \sum\nolimits_{j = 1}^{d - c + 1} f(\Subsignal_c(\Subsignal_d(\xi,\; i),\; j)) \cdot e_j^{d - c + 1}\\
  \equsing{L.~\ref{lem:subsignal-composition}}\ \ & \sum\nolimits_{j = 1}^{d - c + 1} f(\Subsignal_c(\xi,\; i + j - 1)) \cdot e_j^{d - c + 1}\\
  \equsing{D.~\ref{def:sliding-function}}\ \ & \sum\nolimits_{j = 1}^{d - c + 1} \Slide_f(\xi)_{i + j - 1} \cdot e_j^{d - c + 1}\\
  \equsing{D.~\ref{def:subsignal}}\ \ & \Subsignal_{d - c + 1}(\Slide_f(\xi),\;i)\text{,}
\end{align*}
thus the exchange property is satisfied as well.
\end{proof}

Therefore, for each subsignal compatible transformation there is a unique function that \emph{generates} the transformation.
This yields a succinct characterization which helps in deciding whether a given transformation fulfills the dimensionality reduction property and the exchange property.
It is further clear that subsignal compatible transformation evaluations themselves can be parallelized since there is no data dependency between individual samples of the outcome.

Reconsidering Fig.~\ref{fig:exchange-property} it is now obvious that the $\operatorname{Quot}$ operator introduced there is no more than the quotient of two samples evaluated in a sliding fashion.
It seems plausible from this example that convolution is also a subsignal compatible transformation.
This is proven rigorously in Sect.~\ref{sect:CNNs-wo-pooling}.

Before discussing more theoretical properties, first an example of a transformation that is \emph{not} subsignal compatible:
\begin{example}
Let $\Z$ denote the integers and consider the function $T\colon\cup_1(\Z) \to \cup_1(\Z)$, $\xi\mapsto (-1)^{\dim_{\Z}(\xi)}\cdot\xi$, which fulfills the dimensionality reduction property with dimensionality reduction constant $c := 1$.
The exchange property is, however, not satisfied:
Let $d := 1$ and $\xi\in\Z^2$, then $i := 1$ yields $T(\Subsignal_d(\xi,\;i)) = T(\xi_1) = -\xi_1$, but it is $\Subsignal_{d - c + 1}(T(\xi),\;i) = \Subsignal_{d - c + 1}(\xi) = \xi_1$.
Since $\xi_1 \neq -\xi_1$ unless $\xi_1$ vanishes, $T$ cannot be a subsignal compatible transformation.
\end{example}

\subsection{Composition of Subsignal Compatible Transformations}
The composition of subsignal compatible transformations is again a subsignal compatible transformation, where the dimensionality reduction constant has to be adjusted:
\begin{theorem}
\label{thm:sscomp-composition}
Let $M$, $N$ and $P$ be sets and let $c_1,c_2\in\N_1$.
Suppose $T_1\colon \cup_{c_1}(M) \to \cup_1(N)$ is a subsignal compatible transformation with dimensionality reduction constant $c_1$, and $T_2\colon \cup_{c_2}(N) \to \cup_1(P)$ is a subsignal compatible transformation with dimensionality reduction constant $c_2$.

Define $c := c_1 + c_2 - 1\in\N_1$.
Then $T\colon\cup_c(M)\to\cup_1(P)$, $\xi\mapsto T_2(T_1(\xi))$, is a subsignal compatible transformation with dimensionality reduction constant $c$.
\end{theorem}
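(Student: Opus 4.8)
The plan is to verify the two defining properties of Definition~\ref{def:subsignal_compatible} for $T$ directly, after first confirming that $T$ is well-defined. For well-definedness, note that any $\xi\in\cup_c(M)$ satisfies $\dim_M(\xi)\geq c\geq c_1$, so $T_1(\xi)$ exists; the dimensionality reduction property (DRP) of $T_1$ then gives $\dim_N(T_1(\xi)) = \dim_M(\xi) - c_1 + 1\geq c - c_1 + 1 = c_2$, whence $T_1(\xi)\in\cup_{c_2}(N)$ and $T_2(T_1(\xi))$ is defined.

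The DRP for $T$ is a one-line computation: chaining the DRP of $T_2$ and of $T_1$ yields $\dim_P(T(\xi)) = \dim_N(T_1(\xi)) - c_2 + 1 = \dim_M(\xi) - c_1 - c_2 + 2 = \dim_M(\xi) - c + 1$, using $c = c_1 + c_2 - 1$.

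For the exchange property (XP), I would fix a subsignal dimensionality $d\geq c$, a signal $\xi\in\cup_d(M)$, and an index $i\inint{1}{\dim_M(\xi) - d + 1}$. First apply the XP of $T_1$ to $\Subsignal_d(\xi,i)$ (legitimate since $d\geq c\geq c_1$), obtaining $T_1(\Subsignal_d(\xi,i)) = \Subsignal_{d - c_1 + 1}(T_1(\xi),i)$. The key bookkeeping step is to set $d' := d - c_1 + 1$ and to recognize this as the subsignal dimensionality at which $T_2$'s XP must be invoked on the signal $T_1(\xi)$: one checks $d'\geq c_2$ (equivalent to $d\geq c$), that $T_1(\xi)\in\cup_{d'}(N)$, and that $i$ lies in range, since $\dim_N(T_1(\xi)) - d' + 1 = \dim_M(\xi) - d + 1$ matches exactly the admissible range for $i$. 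Applying the XP of $T_2$ then gives $T_2(\Subsignal_{d'}(T_1(\xi),i)) = \Subsignal_{d' - c_2 + 1}(T(\xi),i)$, and the simplification $d' - c_2 + 1 = d - c + 1$ delivers the desired identity $T(\Subsignal_d(\xi,i)) = \Subsignal_{d-c+1}(T(\xi),i)$.

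The only genuine obstacle is the index and dimensionality bookkeeping: I must confirm that every intermediate extraction and every application of a component's XP stays within bounds, all of which collapses to the single algebraic fact $d' - c_2 + 1 = d - c + 1$ together with the range identity $\dim_N(T_1(\xi)) - d' + 1 = \dim_M(\xi) - d + 1$. A slicker alternative route uses Theorem~\ref{thm:sliding-subsignal}: define $g\colon M^c\to P$, $\rho\mapsto T_2(T_1(\rho))$, which is well-defined because $T_1(\rho)\in N^{c_2}$ by the DRP, and then show $T = \Slide_g$ by checking characterization~\ref{thm:sliding-subsignal-b}; the implication \ref{thm:sliding-subsignal-c}~$\Rightarrow$~\ref{thm:sliding-subsignal-a} immediately yields that $T$ is subsignal compatible with reduction constant $c$.
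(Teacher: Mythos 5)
Your proposal is correct and follows essentially the same route as the paper's own proof: establish well-definedness via the dimensionality reduction property of $T_1$, chain the two dimensionality reduction properties, and then chain the two exchange properties with the index bookkeeping $d' := d - c_1 + 1 \geq c_2$ and $d' - c_2 + 1 = d - c + 1$ (your version just makes the range check $\dim_N(T_1(\xi)) - d' + 1 = \dim_M(\xi) - d + 1$ explicit, which the paper leaves implicit). The alternative via Theorem~\ref{thm:sliding-subsignal} that you sketch at the end would also work, but it is not needed and is not the paper's argument.
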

\begin{proof}
Note first that $c\geq 1$ since $c_1\geq 1$ and $c_2\geq 1$, hence indeed $c\in\N_1$.
Let $\xi\in\cup_c(M)$ be arbitrary for demonstrating that $T$ is well-defined.
As $c\geq c_1$ because of $c_2\geq 1$, this yields $\cup_c(M) \subseteq\cup_{c_1}(M)$ and hence $T_1(\xi)$ is well-defined.
Further, $\dim_N(T_1(\xi)) = \dim_M(\xi) - c_1 + 1 \geq c - c_1 + 1 = c_2$ using the dimensionality reduction property of $T_1$, therefore $T_1(\xi)\in\cup_{c_2}(N)$.
Thus $T_2(T_1(\xi))$ is well-defined, and so is $T$.

For all $\xi\in\cup_c(M)$, the dimensionality reduction property of $T_1$ and $T_2$ now implies
$\dim_P(T(\xi)) = \dim_P(T_2(T_1(\xi))) \equsing{DRP} \dim_N(T_1(\xi)) - c_2 + 1 \equsing{DRP} \dim_M(\xi) - c_1 + 1 - c_2 + 1 = \dim_M(\xi) - c + 1$,
therefore $T$ fulfills the dimensionality reduction property.

Let $d\in\N_1$, $d\geq c$, be arbitrary, and let $\xi\in\cup_d(M)$ and $i\inint{1}{\dim_M(\xi) - d + 1}$.
Since both $T_1$ and $T_2$ satisfy the exchange property, it follows that
$T(\Subsignal_d(\xi,\;i)) \equsing{XP} T_2(\Subsignal_{d - c_1 + 1}(T_1(\xi,\;i))) \equsing{XP} \Subsignal_{d - c + 1}(T(\xi),\;i)$,
where $d\geq c_1$ and $d - c_1 + 1 \geq c_2$ hold during the two respective applications of the exchange property.
Therefore, $T$ also fulfills the exchange property.
\end{proof}

This result can be generalized immediately to compositions of more than two subsignal compatible transformations:
\begin{corollary}
\label{cor:sscomp-mult-composition}
Let $n\in\N$, $n\geq 2$, and let $M_1,\dotsc,M_{n+1}$ be sets.
For each $\lambda\inint{1}{n}$ let $T_\lambda\colon\cup_{c_\lambda}(M_\lambda)\to\cup_1(M_{\lambda + 1})$ be a subsignal compatible transformation with dimensionality reduction constant $c_\lambda\in\N_1$.
Then the composed function $T\colon\cup_c(M_1)\to\cup_1(M_{n + 1})$, $\xi\mapsto\left(\circ_1^{\lambda = n}\ T_\lambda\right)(\xi)$, is a subsignal compatible transformation with dimensionality reduction constant $c := \sum_{\mu = 1}^n c_\mu - n + 1\in\N_1$.
\end{corollary}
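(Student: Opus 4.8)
The plan is to argue by induction on the number $n$ of composed transformations, using Theorem~\ref{thm:sscomp-composition} as both the base case and the engine of the inductive step. The base case $n = 2$ is precisely the statement of Theorem~\ref{thm:sscomp-composition}: two subsignal compatible transformations with dimensionality reduction constants $c_1$ and $c_2$ compose to a subsignal compatible transformation with constant $c_1 + c_2 - 1$, which already agrees with the claimed formula since $\sum_{\mu=1}^2 c_\mu - 2 + 1 = c_1 + c_2 - 1$.

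For the inductive step I would assume the assertion holds for any composition of $n - 1$ transformations and then split off the outermost factor, writing $T = T_n \circ T'$ where $T' := \circ_1^{\lambda = n-1}\ T_\lambda$ denotes the composition of the first $n - 1$ transformations. By the induction hypothesis, $T'$ is a subsignal compatible transformation with dimensionality reduction constant $c' := \sum_{\mu=1}^{n-1} c_\mu - (n-1) + 1$. Since $T_n$ is subsignal compatible with constant $c_n$, Theorem~\ref{thm:sscomp-composition} applies to the pair $(T',\ T_n)$ and yields that $T = T_n \circ T'$ is subsignal compatible with dimensionality reduction constant $c' + c_n - 1$. A short computation confirms that this telescopes to the desired value: $c' + c_n - 1 = \sum_{\mu=1}^{n-1} c_\mu - (n-1) + 1 + c_n - 1 = \sum_{\mu=1}^{n} c_\mu - n + 1 = c$, which closes the induction.

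Two minor points deserve a remark but present no real obstacle. First, one must verify $c \in \N_1$: since each $c_\mu \geq 1$, the sum satisfies $\sum_{\mu=1}^n c_\mu \geq n$, so $c = \sum_{\mu=1}^n c_\mu - n + 1 \geq 1$; the identical bound shows $c' \geq 1$, so the induction hypothesis is legitimately applicable at each stage. Second, the well-definedness of the intermediate composition $T'$ and the matching of its codomain with the domain of $T_n$ are already guaranteed internally by each invocation of Theorem~\ref{thm:sscomp-composition}, so no separate domain analysis is required here. The only genuine bookkeeping is the telescoping of the constants, which is routine, and I expect no conceptual difficulty: all the substantive work — establishing both the dimensionality reduction property and the exchange property under composition — was already carried out in the binary case of Theorem~\ref{thm:sscomp-composition}, and the corollary merely iterates it.
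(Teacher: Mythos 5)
Your proof is correct and follows essentially the same route as the paper: an induction whose step peels off the outermost transformation and invokes Theorem~\ref{thm:sscomp-composition}, with the telescoping of the constants $\sum_{\mu=1}^{\lambda} c_\mu - \lambda + 1$ doing the bookkeeping. The only cosmetic difference is that the paper anchors the induction at the trivial case of a single transformation ($S_1 = T_1$) rather than at $n = 2$, which changes nothing of substance.
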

\begin{proof}
Define $S_1 := T_1$, and for each $\lambda\inint{2}{n}$ let $S_\lambda\colon\cup_{\sum_{\mu = 1}^\lambda c_\mu - \lambda + 1}(M_1)\to\cup_1(M_{\lambda + 1})$, $\xi\mapsto T_\lambda(S_{\lambda - 1}(\xi))$, be a function.
Since $T = S_n$, the claim follows when it is shown with induction for $\lambda$ that $S_\lambda$ is a subsignal compatible transformation with dimensionality reduction constant $\sum_{\mu = 1}^\lambda c_\mu - \lambda + 1$.
While the situation $\lambda = 1$ is trivial, the induction step follows with Theorem~\ref{thm:sscomp-composition}.
\end{proof}

\subsection{CNNs without Pooling Layers}
\label{sect:CNNs-wo-pooling}
To conclude this section, a demonstration is provided of how CNNs without any pooling layers fit in the theory developed so far.
Since pooling layers require a non-trivial extension of the theory, they are detailed in Sect.~\ref{sect:strided_functions}.

Convolutional layers are the most substantial ingredient of CNNs, the trainable degrees of freedom which facilitate adaptation of the network to a specific task are located here.
In these layers, multi-channel input feature maps are convolved channel-wise with adjustable filter banks, the result is accumulated and an adjustable bias is added to yield the output feature map.

First, the introduction of the indexing rules for iterated structures to account for the multi-channel nature of the occurring signals.
Let $M$ be a set, $a,b\in\N_1$ positive natural numbers and $\xi\in(M^a)^b$ a multi-channel signal.
It is then $\xi_j\in M^a$ for indices $j\inint{1}{b}$, and moreover $(\xi_j)_i\in M$ for indices $j\inint{1}{b}$ and $i\inint{1}{a}$.
This rule is extended naturally to sets written explicitly as products with more than two factors.
Therefore, if $\xi\in((M^a)^b)^c$ for another number $c\in\N_1$, then for example $(\xi_k)_j\in M^a$ for indices $k\inint{1}{c}$ and $j\inint{1}{b}$.

These rules become clearer if the multi-channel convolution operation $\conv$ is considered.
Suppose the samples are members of a ring $R$, $m\in\N_1$ denotes the number of input channels, $n\in\N_1$ is the number of output channels, and $c\in\N_1$ equals the number of samples considered at any one time during convolution with the filter bank, or in other words the receptive field size of the convolutional layer.
Then input signals or feature maps with $D\in\N_1$ samples have form $\xi\in(R^m)^D$, and filter banks can be represented by a tensor $w\in((R^n)^m)^c$.
Here $D\geq c$ must hold, that is the filter kernel should be smaller than the input signal.

The output feature map $(\xi\conv w)\in(R^n)^{D - c + 1}$ is then
\begin{displaymath}
  (\xi\conv w)_i := \sum\nolimits_{\lambda = 1}^m\sum\nolimits_{\mu = 1}^c (w_\mu)_\lambda \cdot (\xi_{c + i - \mu})_\lambda \in R^n
\end{displaymath}
for indices $i\inint{1}{D - c + 1}$.
Note that $(w_\mu)_\lambda\in R^n$ and $(\xi_{c + i - \mu})_\lambda\in R$, so that the result of their product is understood here as scalar product.
The operation is well-defined since $c + i - \mu\inint{1}{D}$, which follows immediately through substitution of the extreme values of $i$ and $\mu$.

This multi-channel convolution operation is indeed a subsignal compatible transformation as shown explicitly here:
\begin{example}
Define $M := R^m$ and $N := R^n$ and consider $f_{\convop}\colon M^c\to N$,
\begin{displaymath}
  \xi\mapsto\sum\nolimits_{\lambda = 1}^m\sum\nolimits_{\mu = 1}^c (w_\mu)_\lambda \cdot (\xi_{c - \mu + 1})_\lambda\text{.}
\end{displaymath}
Since $\mu\inint{1}{c}$ it is $c - \mu + 1\inint{1}{c}$, hence $f_{\convop}$ is well-defined.
For all $\xi\in M^D$ and any $i\inint{1}{D - c + 1}$ follows
\begin{align*}
     & \Slide_{f_{\convop}}(\xi)_i\\
  \equsing{D.~\ref{def:sliding-function}}\ \ &f_{\convop}(\Subsignal_c(\xi,\;i))\\
  \equsing{D.~\ref{def:subsignal}}\ \ &f_{\convop}\left(\sum\nolimits_{\nu = 1}^c \xi_{i + \nu - 1}\cdot e_\nu^c\right)\\
  =\ \ &\sum\nolimits_{\lambda = 1}^m\sum\nolimits_{\mu = 1}^c (w_\mu)_\lambda \cdot \left(\left(\sum\nolimits_{\nu = 1}^c \xi_{i + \nu - 1}\cdot e_\nu^c\right)_{c - \mu + 1}\right)_\lambda\\
  \equsing{($\lozenge$)}\ \ &\sum\nolimits_{\lambda = 1}^m\sum\nolimits_{\mu = 1}^c (w_\mu)_\lambda \cdot (\xi_{i + c - \mu + 1 - 1})_\lambda\\
  =\ \ &(\xi\conv w)_i\text{,}
\end{align*}
where $\nu = c - \mu + 1$ was substituted in the ($\lozenge$) step.
The multi-channel convolution operation as defined above is hence in fact the application of $f_{\convop}$ in a sliding fashion.
Therefore, Theorem~\ref{thm:sliding-subsignal} guarantees that $\conv$ is a subsignal compatible transformation with dimensionality reduction constant $c$.
\end{example}

Since fully-connected layers are merely a special case of convolutional layers, these do not need any special treatment here.
Addition of biases does not require any knowledge on the spatial structure of the convolution's result and is therefore a trivial subsignal compatible transformation with dimensionality reduction constant $1$.
Non-linearity layers are nothing but the application of a scalar-valued function to all the samples of an input signal.
Hence these layers also form subsignal compatible transformations with dimensionality reduction constant $1$ due to Theorem~\ref{thm:sliding-subsignal}.

Furthermore, compositions of these operations can also be understood as subsignal compatible transformations with Corollary~\ref{cor:sscomp-mult-composition}.
As a consequence, the exchange property facilitates application of CNNs without pooling layers to an entire signal at once instead of each subsignal independently, all without incurring any accuracy loss.
The next section will extend this result to CNNs that may also feature pooling layers.

\section{Pooling Layers and Functions Applied in a Strided Fashion}
\label{sect:strided_functions}
So far it has been shown how convolutional layers and non-linearity layers of a CNN fit in the theoretical framework of subsignal compatible transformations.
This section analyzes pooling layers which apply a pooling kernel to non-overlapping blocks of the input signal.
This is equivalent to a function applied in a sliding fashion followed by a downsampling operation, which will here be referred to as the application of a function in a \emph{strided} fashion.

The theory developed herein can of course also be applied to other functions than the pooling kernels encountered in ordinary CNNs.
For example, multi-channel convolution in which the filter bank is advanced by the receptive field size is essentially $f_{\convop}$ from Sect.~\ref{sect:CNNs-wo-pooling} applied in a strided fashion.
Application of convolution where the filter banks are advanced by more than one sample has however no benefit in terms of execution speed for signal-based application.
This is discussed at the end of Sect.~\ref{sect:fragmentation_relationship} after having developed sufficient theory to analyze this notion.

This section demonstrates how these functions can be turned into efficiently computable subsignal compatible transformations using a data structure recently introduced as fragmentation by Giusti \emph{et al.}~\cite{Giusti2013}.
Here, that proposed method is generalized and rigorously proven correct.
As an added benefit of these results, the dynamics of the entire signal processing chain can also be accurately described, including the possibility of tracking down the position of each processed subsignal in the fragmentation data structure.

Moreover, the circumstances under which the fragment dimensionalities are guaranteed to always be homogeneous are analyzed.
This is a desirable property as it facilitates the application of subsequent operations to signals which all have the same number of samples, rendering cumbersome handling of special cases obsolete and thus resulting in accelerated execution on massively parallel processors.
For CNNs this means that conventional tensor convolutions can be used without any modifications whatsoever, which is especially beneficial if a highly-optimized implementation is readily available.

\begin{figure}[t]
  \centering
  \scalebox{\figurescalefactor}{\includegraphics[page=3]{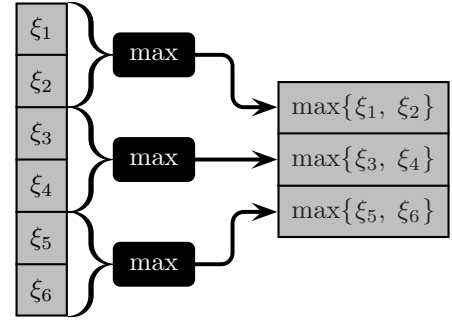}}
  \caption{Illustration of the pooling kernel $\max$ that determines the maximum of $k = 2$ adjacent samples.
    It is here applied in a strided fashion to non-overlapping subsignals of an input signal $\xi$ with $D = 6$ samples, yielding an output signal $\Stride_{\max}(\xi)$ with $\frac{D}{k} = 3$ samples.
    Since here the dimensionality halves and therefore the dimensionality reduction property is violated, this is \emph{not} a subsignal compatible transformation.}
  \label{fig:strided-function}
\end{figure}

First, a more precise statement on what the application of a function in a strided fashion means (see Fig.~\ref{fig:strided-function} for orientation):
\begin{definition}
\label{def:strided-function}
Let $M$ and $N$ be sets, let $k\in\N_1$ be a positive natural number and let $g\colon M^k\to N$ be a function.
Then $\Stride_g\colon\cup_{q = 1}^\infty M^{kq}\to\cup_1(N)$,
\begin{displaymath}
  \xi\mapsto\sum\nolimits_{i = 1}^{\dim_M(\xi) / k} g(\Subsignal_k(\xi,\;k(i - 1) + 1))\cdot e_i^{\dim_M(\xi) / k}\text{,}
\end{displaymath}
is the operator that applies $g$ in a \emph{strided fashion} to signals where the number of samples is a multiple of $k$.
The subsignal indices are chosen here so that all non-overlapping subsignals are fed through $g$, starting with the first valid subsignal.
\end{definition}

Since it is $k(i - 1) + 1\inint{1}{\dim_M(\xi) - k + 1}$ for all $i\inint{1}{\dim_M(\xi) / k}$, $\Stride_g$ is well-defined.
Further, $\dim_M(\xi) / \dim_N(\Stride_g(\xi)) = k$ for all $\xi$ in the domain of $\Stride_g$.
Since the input dimensionality is reduced here through division with a natural number rather than a subtraction, the dimensionality reduction property cannot be fulfilled unless $k = 1$.
The situation in which $k = 1$ is, however, not particularly interesting since then $\Stride_g = \Slide_g$ which was already handled in Sect.~\ref{sect:subsignal-compatible-transformations}.

Before continuing with fragmentation, first consider multi-channel pooling kernels commonly encountered in CNNs:
\begin{example}
Assume the goal is to process real-valued signals with $m\in\N_1$ channels, that is $M = N = \R^m$, where each channel should be processed independently of the others, and $k\in\N_1$ adjacent samples should be compressed into one output sample.
\emph{Average pooling} is then realized by the pooling kernel $g_{\avg}(\xi) := \frac{1}{k}\sum_{\nu = 1}^k \xi_\nu$, which determines the channel-wise empirical mean value of the samples.
Another example is \emph{max-pooling}, where the maximum entry in each channel should be determined.
This can be achieved with the pooling kernel $g_{\max}(\xi) := \sum_{\lambda = 1}^m \left(\max_{\nu = 1}^k (\xi_\nu)_\lambda \right)\cdot e_\lambda^m$.
\end{example}

\subsection{Fragmentation}
The fragmentation operator~\cite{Giusti2013} performs a spatial reordering operation.
Its precise analysis requires a recap of some elementary number theory.
For all numbers $a\in\N$ and $b\in\N_1$, \emph{Euclidean division} guarantees that there are unique numbers $\div{a}{b}\in\N$ and $\rem{a}{b}\inint{0}{b - 1}$ so that $a = \div{a}{b}\cdot b + \rem{a}{b}$.
Here is a small collection of results on these operators for further reference:
\begin{proposition}
\label{prop:number-theory}
It is $\div{a}{1} = a$ and $\rem{a}{1} = 0$ for all $a\in\N$.
Moreover, $\div{a + bc}{c} = \div{a}{c} + b$ and $\rem{a + bc}{c} = \rem{a}{c}$ for all $a,b\in\N$ and $c\in\N_1$.
\end{proposition}

If the fragmentation operator is applied to a signal, it puts certain samples into individual fragments which can be grasped as signals themselves.
If a collection of fragments is fragmented further, a larger collection of fragments results.
The total number of samples is, however, left unchanged after these operations.
For the sake of convenience, matrices are used here as concrete data structure for fragmented signals, where columns correspond to fragments and rows correspond to signal samples.

First, some notation needs to be defined.
If $M$ is a set and $a,b\in\N_1$, then $M^{a\times b}$ denotes the set of all matrices with $a$ rows and $b$ columns with entries from $M$.
In the present context, this represents a collection of $b$ fragments where each signal has $a$ samples.
For $\xi\in M^{a\times b}$, $\rdim_M(\xi) = a$ and $\cdim_M(\xi) = b$ denote the number of rows and columns, respectively.
Furthermore, $\xi_{i,\; j}$ is the entry in the $i$-th row and $j$-th column of $\xi$ where $i\inint{1}{a}$ and $j\inint{1}{b}$.
The transpose of $\xi$ is written as $\xi\transp$.

The vectorization operator~\cite{Neudecker1969} stacks all the columns of a matrix on top of another:
\begin{definition}
\label{def:vectorization}
Let $M$ be a set and $a,b\in\N_1$.
The \emph{vectorization operator} $\vect_{a\times b}\colon M^{a\times b}\to M^{ab}$ is characterized by $\vect_{a\times b}(\xi)_j = \xi_{\rem{j - 1}{a} + 1,\;\div{j - 1}{a} + 1}$ for all indices $j\inint{1}{ab}$ and all matrices $\xi\in M^{a\times b}$.
The \emph{inverse vectorization operator} $\vect_{a\times b}^{-1}\colon M^{ab}\to M^{a\times b}$ is given by $\vect_{a\times b}^{-1}(\xi)_{i,\;j} = \xi_{(j - 1)a + i}$ for all indices $i\inint{1}{a}$, $j\inint{1}{b}$ and all vectors $\xi\in M^{ab}$.
\end{definition}

It can be verified directly that these two operators are well-defined permutations and inversely related to one another.
With their help the fragmentation operator may now be defined:
\begin{definition}
\label{def:fragmentation}
Let $M$ be a set and $k\in\N_1$.
For arbitrary vector dimensionalities $q\in\N_1$ and numbers of input fragments $s\in\N_1$ the function $\Fragmentation_k\colon M^{kq\times s}\to M^{q\times ks}$,
\begin{displaymath}
  \xi\mapsto\left(\vect_{ks\times q}^{-1}\left( \vect_{s\times kq}\left(\xi\transp\right) \right)\right)\transp\!\text{,}
\end{displaymath}
is called the \emph{fragmentation operator}.
\end{definition}

Here, $k$ equals the corresponding parameter from the application of a function in a strided fashion.
$\Fragmentation_k$ is clearly well-defined, and the number of output fragments is $ks$.
Next consider this operator that undoes the ordering of the fragmentation operator:
\begin{definition}
Let $M$ be a set, let $k\in\N_1$, and let $q\in\N_1$ denote a vector dimensionality and $s\in\N_1$ a number of output fragments.
Then $\Defragmentation_k\colon M^{q\times ks}\to M^{kq\times s}$,
\begin{displaymath}
  \xi\mapsto\left(\vect_{s\times kq}^{-1}\left( \vect_{ks\times q}\left(\xi\transp\right) \right)\right)\transp\!\text{,}
\end{displaymath}
is called the \emph{defragmentation operator}.
\end{definition}

Note that $\Defragmentation_k$ is well-defined and the number of input fragments must equal $ks$.
Fragmentation and defragmentation are inversely related, that is $\Defragmentation_k\circ \Fragmentation_k = \id_{M^{kq\times s}}$ and $\Fragmentation_k\circ \Defragmentation_k = \id_{M^{q\times ks}}$.
An illustration of the operations performed during fragmentation and defragmentation is depicted in Fig.~\ref{fig:fragmentation-operator}.

\begin{figure}[t]
  \centering
  \scalebox{\figurescalefactor}{\includegraphics[page=4]{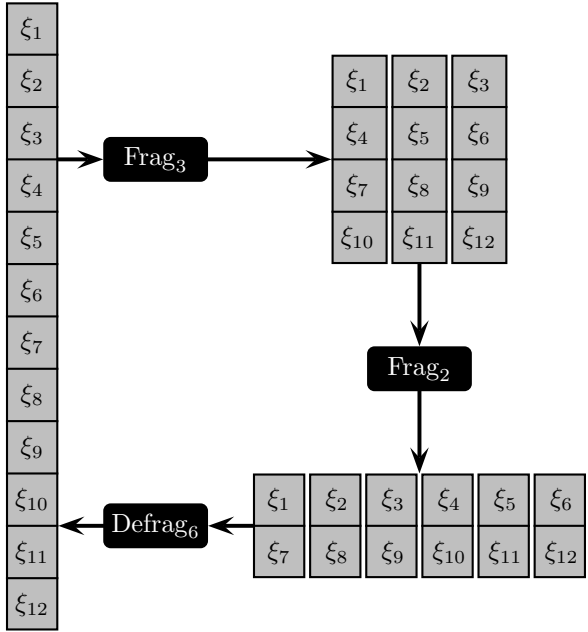}}
  \caption{Illustration of the fragmentation and defragmentation operators.
    The left-hand side shows an input signal $\xi$ with $q_0 = 12$ samples in a single fragment, that is $s_0 = 1$.
    Fragmentation with the parameter $k_1 = 3$ yields a signal with $q_1 = \frac{q_0}{k_1} = 4$ samples in each of $s_1 = k_1 s_0 = 3$ fragments, shown at the upper right in the graphics.
    A second application of fragmentation with $k_2 = 2$ results in $s_2 = k_2 s_1 = 6$ fragments with $q_2 = \frac{q_1}{k_2} = 2$ samples each, depicted at the lower right.
    A complete defragmentation with parameter $k^* = k_1 k_2 = 6$ yields the original input signal $\xi$ again, with $q_3 = k^* q_2 = 12$ samples in a single fragment, that is $s_3 = \frac{s_2}{k^*} = 1$.}
  \label{fig:fragmentation-operator}
\end{figure}

Fragmentation is merely a certain reordering operation:
\begin{lemma}
\label{lem:frag-ops}
Suppose that $M$ is a set, $k,q,s\in\N_1$ and ${\xi\in M^{kq\times s}}$.
Then $\rdim_M(\Fragmentation_k(\xi)) = \frac{1}{k}\cdot\rdim_M(\xi)$ and $\cdim_M(\Fragmentation_k(\xi)) = k\cdot\cdim_M(\xi)$.
Further, $\Fragmentation_k(\xi)_{\mu,\;\nu} = \xi_{\div{ (\mu - 1)ks + \nu - 1}{s} + 1,\;\rem{ (\mu - 1)ks + \nu - 1}{s} + 1}$
for all indices $\mu\inint{1}{q}$ and $\nu\inint{1}{ks}$.
\end{lemma}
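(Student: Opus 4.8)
The plan is to treat the two dimensionality claims and the entry formula separately, since the former follow at once from the type signature of $\Fragmentation_k$ whereas the latter requires unfolding the composition of operators in Definition~\ref{def:fragmentation}.

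For the dimensionalities I would simply note that Definition~\ref{def:fragmentation} declares $\Fragmentation_k\colon M^{kq\times s}\to M^{q\times ks}$. Since $\rdim_M(\xi)=kq$ and $\cdim_M(\xi)=s$, this gives $\rdim_M(\Fragmentation_k(\xi))=q=\frac{1}{k}\cdot\rdim_M(\xi)$ and $\cdim_M(\Fragmentation_k(\xi))=ks=k\cdot\cdim_M(\xi)$ directly, with no computation needed.

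For the entry formula I would chase a single index through the four operations comprising $\Fragmentation_k$, reading the definition from the inside out. Put $\eta:=\xi\transp$, so that $\eta_{i,\;j}=\xi_{j,\;i}$, and let $v:=\vect_{s\times kq}(\eta)$ and $w:=\vect_{ks\times q}^{-1}(v)\in M^{ks\times q}$, so that $\Fragmentation_k(\xi)=w\transp$. Fixing $\mu\inint{1}{q}$ and $\nu\inint{1}{ks}$, the outer transpose gives $\Fragmentation_k(\xi)_{\mu,\;\nu}=w_{\nu,\;\mu}$; the inverse-vectorization formula of Definition~\ref{def:vectorization} gives $w_{\nu,\;\mu}=v_{(\mu-1)ks+\nu}$; the vectorization formula gives $v_{(\mu-1)ks+\nu}=\eta_{\rem{(\mu-1)ks+\nu-1}{s}+1,\;\div{(\mu-1)ks+\nu-1}{s}+1}$; and substituting $\eta=\xi\transp$ swaps the two subscripts to produce exactly the claimed expression.

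The only point requiring care -- and the nearest thing to an obstacle -- is the index bookkeeping that guarantees every operator is evaluated on a legal argument. I would check that as $\mu,\nu$ range over their domains the quantity $(\mu-1)ks+\nu$ sweeps out $\discint{1}{skq}$ (the extreme values yielding $1$ and $qks=skq$), and that the resulting column index $\div{(\mu-1)ks+\nu-1}{s}+1$ never exceeds $kq$, which holds because its argument is at most $skq-1=(kq-1)s+(s-1)$. Once these ranges are confirmed the formula follows by plain substitution, without any need to invoke Proposition~\ref{prop:number-theory}.
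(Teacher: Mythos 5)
Your proposal is correct and follows essentially the same route as the paper's own proof: the dimensionality claims are read off directly from the definition of $\Fragmentation_k$, and the entry formula is obtained by unfolding the outer transpose, then the inverse vectorization, then the vectorization, and finally the inner transpose, exactly as in the paper. The additional index-range bookkeeping you include is sound but not strictly needed, since well-definedness of the operators involved was already established alongside their definitions.
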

\begin{proof}
The dimensionality statements are obvious by the definition of $\Fragmentation_k$.
To prove the identity, let $\mu\inint{1}{q}$ and $\nu\inint{1}{ks}$.
One yields
\begin{align*}
     & \Fragmentation_k(\xi)_{\mu,\;\nu}\\
  \equsing{D.~\ref{def:fragmentation}}\ \ & \vect_{ks\times q}^{-1}\left( \vect_{s\times kq}\left(\xi\transp\right) \right)_{\nu,\;\mu}
  \ \equsing{D.~\ref{def:vectorization}}\ \vect_{s\times kq}\left(\xi\transp\right)_{(\mu - 1)ks + \nu}\\
  \equsing{D.~\ref{def:vectorization}}\ \ & \left(\xi\transp\right)_{\rem{ (\mu - 1)ks + \nu - 1}{s} + 1,\;\div{ (\mu - 1)ks + \nu - 1}{s} + 1}\text{,}
\end{align*}
and the claim follows.
\end{proof}

Similar properties are fulfilled by defragmentation:
\begin{lemma}
\label{lem:defrag-ops}
Let $M$ be a set.
Let $k,q,s\in\N_1$ be positive natural numbers and let $\xi\in M^{q\times ks}$ be an arbitrary fragmented signal.
Then $\rdim_M(\Defragmentation_k(\xi)) = k\cdot\rdim_M(\xi)$, $\cdim_M(\Defragmentation_k(\xi)) = \frac{1}{k}\cdot\cdim_M(\xi)$, and $\Defragmentation_k(\xi)_{\mu,\;\nu} = \xi_{\div{ (\mu - 1)s + \nu - 1}{ks} + 1,\;\rem{ (\mu - 1)s + \nu - 1}{ks} + 1}$
for all indices $\mu\inint{1}{kq}$, $\nu\inint{1}{s}$.
\end{lemma}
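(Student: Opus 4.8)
The plan is to mirror the proof of Lemma~\ref{lem:frag-ops} almost verbatim, since $\Defragmentation_k$ has exactly the same layered structure as $\Fragmentation_k$ (an inner transpose, a vectorization, an inverse vectorization, and an outer transpose), only with the shapes $s\times kq$ and $ks\times q$ and their associated (inverse) vectorizations swapped relative to the fragmentation case. The entire argument is a bookkeeping calculation peeling off these operators from the outside in, with no case analysis required.

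First I would dispose of the two dimensionality claims, which are immediate from the definition: the input $\xi\in M^{q\times ks}$ has $q$ rows and $ks$ columns, while the output $\Defragmentation_k(\xi)\in M^{kq\times s}$ has $kq$ rows and $s$ columns, so that $\rdim_M(\Defragmentation_k(\xi)) = kq = k\cdot\rdim_M(\xi)$ and $\cdim_M(\Defragmentation_k(\xi)) = s = \tfrac{1}{k}\cdot\cdim_M(\xi)$.

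For the index identity I would fix $\mu\inint{1}{kq}$ and $\nu\inint{1}{s}$ and evaluate $\Defragmentation_k(\xi)_{\mu,\;\nu}$ step by step. The outermost transpose swaps the two indices, turning it into $\big(\vect_{s\times kq}^{-1}(\vect_{ks\times q}(\xi\transp))\big)_{\nu,\;\mu}$. Applying the inverse-vectorization formula of Definition~\ref{def:vectorization} with row count $s$ collapses this to the single linear coordinate $\vect_{ks\times q}(\xi\transp)_{(\mu - 1)s + \nu}$. Applying the vectorization formula with row count $ks$ then rewrites this as the entry of $\xi\transp$ with row index $\rem{(\mu - 1)s + \nu - 1}{ks} + 1$ and column index $\div{(\mu - 1)s + \nu - 1}{ks} + 1$, and a final transpose exchanges these two indices to yield precisely the claimed entry of $\xi$.

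The only thing to watch is keeping the two matrix shapes straight: the inverse vectorization uses the shape $s\times kq$, so its row count is $s$ and it contributes the linear index $(\mu - 1)s + \nu$, whereas the vectorization uses the shape $ks\times q$, so its row count is $ks$ and it contributes the $\rem{\cdot}{ks}$ and $\div{\cdot}{ks}$ split. I would also verify in passing that the linear index stays in range, i.e. $(\mu - 1)s + \nu\inint{1}{skq}$, which holds since $\mu\leq kq$ and $\nu\leq s$. No genuine obstacle is expected here; unlike the later homogeneity results, this lemma requires nothing beyond Definition~\ref{def:vectorization} and the existence of Euclidean division.
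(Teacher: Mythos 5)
Your proposal is correct and matches the paper's own treatment exactly: the paper proves Lemma~\ref{lem:defrag-ops} simply by declaring it ``completely analogous to Lemma~\ref{lem:frag-ops},'' and your step-by-step unwinding (outer transpose, inverse vectorization with row count $s$, vectorization with row count $ks$, inner transpose) is precisely that analogous computation, with the shapes correctly swapped and the index bookkeeping right.
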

\begin{proof}
Completely analogous to Lemma~\ref{lem:frag-ops}.
\end{proof}

As already outlined in Fig.~\ref{fig:fragmentation-operator}, compositions of the fragmentation operator are equivalent to a single fragmentation with an adjusted parameterization:

\begin{remark}
\label{rem:comp-frag}
Let $M$ be a set and $k_1,k_2,q,s\in\N_1$.
Then $\Fragmentation_{k_2}(\Fragmentation_{k_1}(\xi)) = \Fragmentation_{k_1k_2}(\xi)$ for all $\xi\in M^{k_1k_2q\times s}$.
\end{remark}
\begin{proof}
The claim follows through entry-wise comparison between $\Fragmentation_{k_2}(\Fragmentation_{k_1}(\xi))$ and $\Fragmentation_{k_1k_2}(\xi)$ using Lemma~\ref{lem:frag-ops}.
\end{proof}

It follows immediately that fragmentation is a commutative operation:
\begin{remark}
If $M$ denotes a set, $k_1,k_2,q,s\in\N_1$ are natural numbers and $\xi\in M^{k_1k_2q\times s}$ is a fragmented signal, then $\Fragmentation_{k_2}(\Fragmentation_{k_1}(\xi)) = \Fragmentation_{k_1}(\Fragmentation_{k_2}(\xi))$.
\end{remark}
\begin{proof}
Obvious with Remark~\ref{rem:comp-frag} as multiplication in $\N_1$ is commutative.
\end{proof}

\subsection{Relationship between Fragmentation, Functions Applied in a Strided Fashion and Subsignal Compatible Transformations}
\label{sect:fragmentation_relationship}
A bit more background is necessary before analyzing how functions applied in a strided fashion fit into the theory of subsignal compatible transformations.
The outcome of a subsignal compatible transformation applied to a fragmented signal is defined naturally:
\begin{definition}
\label{def:fragmentwise-evaluation}
Let $M,N$ be sets and $T\colon \cup_c(M) \to \cup_1(N)$ a subsignal compatible transformation with dimensionality reduction constant $c\in\N_1$.
Let $\xi\in M^{D\times s}$ be a fragmented signal with $D\in\N_1$ samples in each of the $s\in\N_1$ fragments, where $D\geq c$ holds.
For $\gamma\inint{1}{s}$ let $\xi^{_{(\gamma)}} := \sum_{\nu = 1}^D \xi_{\nu,\;\gamma}\cdot e_\nu^D\in M^D$ denote the individual fragments.
The output of $T$ applied to $\xi$ is then defined as $T(\xi) := [T(\xi^{_{(1)}}),\dotsc,T(\xi^{_{(s)}})]\in N^{(D - c + 1)\times s}$,
that is $T$ is applied to all the fragments independently.
\end{definition}

Since there is no data dependency between fragments, parallelization of subsignal compatible transformation evaluation over all output samples is straightforward.
What follows is the formal introduction of the processing chain concept which captures and generalizes all the dynamics of a CNN, and two notions of its application to signal processing:
\begin{definition}
\label{def:processing-chain}
The collection of the following objects is called a \emph{processing chain}:
A fixed subsignal dimensionality $\ROI\in\N_1$,
a number of layers $L\in\N_1$,
a sequence of sets $M_0,\dotsc,M_{L},N_1,\dotsc,N_L$,
and for each $j\inint{1}{L}$ subsignal compatible transformations $T_j\colon\cup_{c_j}(M_{j - 1})\to\cup_1(N_j)$ with dimensionality reduction constant $c_j\in\N_1$
and functions $g_j\colon N_j^{k_j}\to M_j$ where $k_j\in\N_1$.
The numbers $k_j^* := \prod_{\nu = 1}^j k_\nu$ for $j\inint{0}{L}$ are called the \emph{stride products} of the processing chain.
This implies that $k_0^* = 1$.
For $j\inint{0}{L}$, the operator $\EvalStride_j\colon M_0^\ROI\to\cup_1(M_j)$, 
\begin{displaymath}
  \rho\mapsto
  \begin{cases}
    \rho\text{,} & \text{if } j = 0\text{,}\\
    \Stride_{g_j}(T_j(\EvalStride_{j - 1}(\rho)))\text{, } & \text{if } j > 0\text{,}
  \end{cases}
\end{displaymath}
applies the processing chain in a \emph{strided} fashion, and further $\EvalSlide_j\colon\cup_\ROI(M_0)\to\cup_{q = 1}^\infty\cup_{s = 1}^\infty M_j^{q\times s}$, 
\begin{displaymath}
  \xi\mapsto
  \begin{cases}
    \xi\text{,} & \text{if } j = 0\text{,}\\
    \Fragmentation_{k_j}(\Slide_{g_j}(T_j(\EvalSlide_{j - 1}(\xi))))\text{, } & \text{if } j > 0\text{,}
  \end{cases}
\end{displaymath}
is the operator that applies the processing chain in a \emph{sliding} fashion.
Note that these two functions are \emph{not} well-defined unless additional conditions are fulfilled, detailed below.
\end{definition}

The number $B$ here represents the extent of the region that is fed into a CNN, or in other words the entire network's receptive field size.
This size is a design parameter of the network and depends on the concrete definitions of all of its layers.
The functions $T_j$ in a processing chain can be substituted with the appropriate layer types discussed earlier, such as convolutions or non-linearities, or compositions thereof.
Pooling kernels and other functions applied in a strided fashion to non-overlapping blocks can be plugged into a processing chain via the $g_j$ functions.
The recursive definitions of $\EvalStride_j$ and $\EvalSlide_j$ represent the alternating evaluation of a subsignal compatible transformation and a function applied in a strided fashion up to the specified layer index $j$.

The rationale for the $\EvalStride$ operator is the naive \emph{subsignal-based application} of a CNN:
Here, the CNN is applied in the ordinary way to signals of length equal to the network's receptive field size $B$.
According application of the network using a sliding window approach involves extraction of all feasible overlapping subsignals of length $B$ and feeding them through the network independently of each other.

The $\EvalSlide$ operator differs from $\EvalStride$ in that it corresponds to the \emph{signal-based application} of a CNN:
No overlapping subsignals need to be processed separately here, preventing redundant computations.
Instead, the complete input signal is processed in its entirety, sharing intermediate computation results among adjacent subsignals.
Using $\EvalSlide$, the $g_j$ functions are applied in a sliding rather than a strided fashion, followed by a fragmentation operation.

Definition~\ref{def:processing-chain} thus describes a recipe for how a CNN can be transformed from a subsignal-based application to signal-based application.
A concrete example will be discussed in Sect.~\ref{sect:experimental_evaluation}.
First, however, a theoretical justification that this method indeed produces the correct outcome under all circumstances will be presented.
The next result states when the application of a processing chain is well-defined, and it proves that the result of the $\EvalStride$ operator applied to a subsignal of a larger signal can be found within the result of $\EvalSlide$ applied to the entire signal.
This then implies that both approaches deliver the very same values and hence verify $\EvalSlide$ involves no accuracy loss whatsoever.
\begin{lemma}
\label{lem:processing-chain}
Given a processing chain with the same notation as in Definition~\ref{def:processing-chain},
first assume that $k_j$ divides $\dim_{N_j}(T_j(\EvalStride_{j - 1}(\rho)))$ and that $\EvalStride_{j}(\rho)$ is non-empty for all $j\inint{1}{L}$ and all $\rho\in M_0^\ROI$.
In other words, the application of the processing chain in a strided fashion should be well-defined.

Let $D\in\N_1$, $D\geq\ROI$, be a signal dimensionality so that the number of subsignals $D - \ROI + 1$ of length $\ROI$ is divisible by the final stride product $k_L^*$, and let $\xi\in M_0^D$ be the considered signal.
Then the application of the processing chain in a sliding fashion to $\xi$ is well-defined, and additional statements hold:

Let $u_j := \dim_{M_j}(\EvalStride_j(\Subsignal_\ROI(\xi,\; i)))\in\N_1$ for all $j\inint{0}{L}$ be an abbreviation for the dimensionality of the intermediate representations in each layer of the $\EvalStride$ cascade.
Note that these numbers are actually independent of any subsignal index $i$.
Further, for all $j\inint{0}{L}$, let $U_j^{\col} := \cdim_{M_j}(\EvalSlide_j(\xi))\in\N_1$ and $U_j^{\row} := \rdim_{M_j}(\EvalSlide_j(\xi))\in\N_1$ be defined as abbreviations for the number of fragments and the fragmented signal dimensionality, respectively, after each layer using the $\EvalSlide$ operator.
Then for all $j\inint{0}{L}$ the following holds:
\begin{enumerate}\setlength{\itemsep}{.5ex}
  \item \label{lem:processing-chain-a} $u_j = \frac{1}{k_j^*}\left(\ROI - \sum_{\mu = 1}^j k_{\mu - 1}^*(c_\mu - 1)\right)$.
  \item \label{lem:processing-chain-b} $U_j^{\row} = \frac{1}{k_j^*}\left(D - k_j^* + 1 - \sum_{\mu = 1}^j k_{\mu - 1}^*(c_\mu - 1)\right)$ and $U_j^{\col} = k_j^*$.
  \item \label{lem:processing-chain-c} $U_j^{\row} - u_j + 1 = \frac{1}{k_j^*}(D - \ROI + 1)$. In other words, the number of distinct subsignals with $u_j$ samples in each fragment of the fragmented signals equals the original number of distinct subsignals divided by the corresponding number of fragments.
  \item \label{lem:processing-chain-d} For $i\inint{1}{D - \ROI + 1}$ and $\mu\inint{1}{u_j}$ it is
        \begin{align*}
             & \EvalStride_j(\Subsignal_\ROI(\xi,\; i))_\mu\\
          =\ & \EvalSlide_j(\xi)_{\div{i - 1}{k_j^*} + \mu,\;\rem{i - 1}{k_j^*} + 1}\text{.}
        \end{align*}
        Here, the latter can also be understood as one sample of the $\Subsignal_{u_j}$ operator applied to a certain fragment of $\EvalSlide_j(\xi)$.
\end{enumerate}
\end{lemma}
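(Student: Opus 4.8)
The natural approach is induction on the layer index $j$, running from $j = 0$ up to $L$; all four claims are bundled into a single induction, since part~\ref{lem:processing-chain-d} at layer $j$ consumes the dimensionality bookkeeping of the lower layers. The base case $j = 0$ is immediate: $\EvalStride_0$ and $\EvalSlide_0$ are identities, so $u_0 = \ROI$, $U_0^{\row} = D$ and $U_0^{\col} = 1 = k_0^*$, which match the empty-sum specializations of parts~\ref{lem:processing-chain-a} and~\ref{lem:processing-chain-b}; part~\ref{lem:processing-chain-d} collapses to $\Subsignal_\ROI(\xi,\;i)_\mu = \xi_{i + \mu - 1} = \EvalSlide_0(\xi)_{i + \mu - 1,\;1}$, using $\div{i-1}{1} = i - 1$ and $\rem{i-1}{1} = 0$ from Proposition~\ref{prop:number-theory}.

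For the inductive step I would first settle the dimension counts and well-definedness. Writing $\eta := T_j(\EvalSlide_{j-1}(\xi))$, fragmentwise evaluation (Definition~\ref{def:fragmentwise-evaluation}) together with the dimensionality reduction property of $T_j$ shrinks each fragment by $c_j - 1$ samples, and the fragmentwise application of $\Slide_{g_j}$ --- itself subsignal compatible with constant $k_j$ by Theorem~\ref{thm:sliding-subsignal} --- shrinks them by a further $k_j - 1$. Hence before fragmentation the row dimension is $U_{j-1}^{\row} - (c_j - 1) - (k_j - 1)$ over $U_{j-1}^{\col} = k_{j-1}^*$ fragments, and $\Fragmentation_{k_j}$ (Lemma~\ref{lem:frag-ops}) divides the rows by $k_j$ and multiplies the columns by $k_j$. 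This gives the recursions $u_j = (u_{j-1} - c_j + 1)/k_j$, $U_j^{\row} = (U_{j-1}^{\row} - (c_j - 1) - (k_j - 1))/k_j$ and $U_j^{\col} = k_j U_{j-1}^{\col}$, out of which parts~\ref{lem:processing-chain-a} and~\ref{lem:processing-chain-b} drop by routine telescoping using $k_j^* = k_j k_{j-1}^*$; part~\ref{lem:processing-chain-c} is then a purely algebraic consequence of (a) and (b). Well-definedness requires the pre-fragmentation row count to be a positive multiple of $k_j$, i.e. $U_j^{\row}\in\N_1$; by (a) and (c) this equals $u_j - 1 + (D - \ROI + 1)/k_j^*$, which is a positive integer because the strided hypothesis forces $u_j\in\N_1$ and because $k_j^*$ divides $k_L^*$ and hence divides $D - \ROI + 1$. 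Thus the single global divisibility $k_L^* \mid (D - \ROI + 1)$, on top of well-definedness of the strided cascade, already guarantees well-definedness of the sliding cascade.

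Part~\ref{lem:processing-chain-d} is the crux and carries the real bookkeeping. Abbreviating $\sigma := \Subsignal_\ROI(\xi,\;i)$, $a := \div{i-1}{k_j^*}$, $b := \rem{i-1}{k_j^*}$, $a' := \div{i-1}{k_{j-1}^*}$ and $b' := \rem{i-1}{k_{j-1}^*}$, I would first unfold the outer $\Fragmentation_{k_j}$ via Lemma~\ref{lem:frag-ops} and reduce the resulting div/rem expressions using Proposition~\ref{prop:number-theory} together with the nesting identities $\rem{b}{k_{j-1}^*} = b'$ and $\div{b}{k_{j-1}^*} = a' - a k_j$, valid because $k_{j-1}^* \mid k_j^*$. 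This rewrites the target entry of $\EvalSlide_j(\xi)$ as sample $a' + k_j(\mu - 1) + 1$ of the $(b'+1)$-th fragment of the \emph{unfragmented} signal $\Slide_{g_j}(\eta)$, which by Definition~\ref{def:sliding-function} is $g_j\big(\sum_{\lambda = 1}^{k_j}\eta_{a' + k_j(\mu-1) + \lambda,\;b'+1}\cdot e_\lambda^{k_j}\big)$. It remains to identify this window of $\eta$ with a window of the strided computation: the inductive hypothesis for part~\ref{lem:processing-chain-d} at layer $j-1$ states precisely that $\Subsignal_{u_{j-1}}$ of the $(b'+1)$-th fragment of $\EvalSlide_{j-1}(\xi)$ at index $a'+1$ equals $\EvalStride_{j-1}(\sigma)$, and applying the exchange property of $T_j$ with subsignal dimensionality $d = u_{j-1}$ pushes this through $T_j$ to yield $\eta_{a' + \rho,\;b'+1} = T_j(\EvalStride_{j-1}(\sigma))_\rho$ for $\rho\inint{1}{u_{j-1} - c_j + 1}$; the index $\rho = k_j(\mu-1)+\lambda$ stays in range exactly because $u_j = (u_{j-1} - c_j + 1)/k_j$. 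Substituting back collapses $\EvalSlide_j(\xi)_{\ldots}$ to $g_j(\Subsignal_{k_j}(T_j(\EvalStride_{j-1}(\sigma)),\;k_j(\mu-1)+1))$, which is $\Stride_{g_j}(T_j(\EvalStride_{j-1}(\sigma)))_\mu = \EvalStride_j(\sigma)_\mu$ by Definitions~\ref{def:strided-function} and~\ref{def:processing-chain}. I expect the main obstacle to be keeping this two-level index arithmetic honest --- converting the single index $i$ between its $k_j^*$- and $k_{j-1}^*$-scaled representations and threading it simultaneously through the fragmentation reindexing and the exchange property --- rather than any conceptual difficulty.
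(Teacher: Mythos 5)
Your proof is correct and rests on the same skeleton as the paper's: a layer-by-layer induction, the recursions $u_j = (u_{j-1}-c_j+1)/k_j$, $U_j^{\col} = k_j U_{j-1}^{\col}$ and $U_j^{\row} = (U_{j-1}^{\row}-c_j-k_j+2)/k_j$ telescoped via $k_j^* = k_jk_{j-1}^*$, well-definedness from $u_j\in\N_1$ together with $k_j^*\mid k_L^*\mid D-\ROI+1$, and Lemma~\ref{lem:frag-ops} plus Proposition~\ref{prop:number-theory} to unfold the fragmentation indices in part~\ref{lem:processing-chain-d}. Where you genuinely depart from the paper is in the heart of part~\ref{lem:processing-chain-d}: the paper expands \emph{both} sides down to the window level, writing each as $g_j$ applied to $T_j$ evaluated on explicit $c_j$-sample windows (via Theorem~\ref{thm:sliding-subsignal}, i.e.\ the exchange property specialized to $d = c_j$), and then matches the innermost sample indices through the induction hypothesis; you instead leave the $\EvalStride$ side untouched and push the induction hypothesis through $T_j$ in a single stroke by invoking the exchange property at full strength with $d = u_{j-1}$, which converts the relevant window of $\eta = T_j(\EvalSlide_{j-1}(\xi))$ directly into $T_j(\EvalStride_{j-1}(\sigma))$ and lets the right-hand side collapse into the literal definition of $\EvalStride_j(\sigma)_\mu$. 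Your route is tidier: it is one-sided, never descends to the generating function $f_j$, and exhibits the exchange property as the structural reason the identity holds, whereas the paper's two-sided expansion is more mechanical. The one point you should make explicit is the bound check legitimizing that XP application, namely $a' + 1 \inint{1}{U_{j-1}^{\row} - u_{j-1} + 1}$; this follows from part~\ref{lem:processing-chain-c} at layer $j-1$, since $i \leq D - \ROI + 1$ and $k_{j-1}^*$ divides $D - \ROI + 1$ give $a' = \div{i-1}{k_{j-1}^*} \leq \tfrac{1}{k_{j-1}^*}(D - \ROI + 1) - 1$, so it costs only a line (the paper's version silently requires the analogous in-range checks for its window indices).
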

\begin{proof}
\ref{lem:processing-chain-a}
Let $i\inint{1}{D - \ROI + 1}$ be arbitrary and define $\rho := \Subsignal_\ROI(\xi,\;i)\in M_0^\ROI$ as an abbreviation.
It is $u_0 = \dim_{M_0}(\EvalStride_0(\rho)) = \dim_{M_0}(\rho) = \ROI$,
and the right-hand side of the claim trivially equals $\ROI$ for $j = 0$.
Carrying out induction for $j - 1 \to j$ yields:
\begin{align*}
  u_j\ &\equsing{D.~\ref{def:processing-chain}}\ \dim_{M_j}(\Stride_{g_j}(T_j(\EvalStride_{j - 1}(\rho))))\\
       &\equsing{D.~\ref{def:strided-function}}\ \tfrac{1}{k_j}\dim_{N_j}(T_j(\EvalStride_{j - 1}(\rho)))\\
       &\equsing{DRP}\ \tfrac{1}{k_j}\left(\dim_{M_{j - 1}}(\EvalStride_{j - 1}(\rho)) - c_j + 1\right)\\
       &\equsing{IH}\ \tfrac{1}{k_j}\left(\tfrac{1}{k_{j - 1}^*}\left(\ROI - \sum\nolimits_{\mu = 1}^{j - 1} k_{\mu - 1}^*(c_\mu - 1)\right) - (c_j - 1)\right)\\
       &=\ \tfrac{1}{k_j k_{j - 1}^*} \left(\ROI - \sum\nolimits_{\mu = 1}^{j - 1} k_{\mu - 1}^*(c_\mu - 1) - k_{j - 1}^*(c_j - 1)\right)\text{,}
\end{align*}
where IH denotes substitution of the induction hypothesis.
Hence, the claimed expression follows since $k_j^* = k_j k_{j - 1}^*$.
Note that $u_j$ is indeed a positive natural number because $k_j$ divides $\dim_{N_j}(T_j(\EvalStride_{j - 1}(\rho)))$ and $\EvalStride_{j}(\rho)$ is non-empty by requirement.

\ref{lem:processing-chain-b}
Besides the statements on $U_j^{\row}$ and $U_j^{\col}$ it is shown here that the application of the processing chain in a sliding fashion is well-defined using induction for $j$.
For $j = 0$ follows $\EvalSlide_0(\xi) = \xi$, which is trivially well-defined and by definition it is $\xi\in M_0^D = M_0^{D\times 1}$.
Therefore, $U_0^{\row} = D$ and $U_0^{\col} = 1$ which equals the claimed expressions since $k_0^* = 1$.

For $j - 1 \to j$, it is first demonstrated that $k_j$ divides ${\chi := \rdim_{M_j}(\Slide_{g_j}(T_j(\EvalSlide_{j - 1}(\xi))))}$ which implies well-definedness since the fragmentation operator can then indeed be applied.
It follows that
\begin{align*}
  \chi\ &\equsing{D.~\ref{def:sliding-function}}\ \rdim_{N_j}(T_j(\EvalSlide_{j - 1}(\xi))) - k_j + 1\\
        &\equsing{DRP}\ \rdim_{M_{j - 1}}(\EvalSlide_{j - 1}(\xi)) - c_j + 1 - k_j + 1\\
        &=\ U_{j - 1}^{\row} - c_j + 1 - k_j + 1\\
        &\equsing{IH}\ \tfrac{1}{k_{j - 1}^*}\left(D - k_{j - 1}^* + 1 - \sum\nolimits_{\mu = 1}^{j - 1} k_{\mu - 1}^*(c_\mu - 1)\right)\\
        &\phantom{\equsing{IH}}\ + \tfrac{1}{k_{j - 1}^*}\left(-k_{j - 1}^*(c_j - 1) - k_j^* + k_{j - 1}^*\right)\\
        &=\ \tfrac{1}{k_{j - 1}^*}\left(D - k_j^* + 1 - \sum\nolimits_{\mu = 1}^j k_{\mu - 1}^*(c_\mu - 1)\right)\text{.}
\end{align*}
By requirement on the signal length $D$ there exists a number $t\in\N_1$ so that $D - \ROI + 1 = k_L^* t$.
Substitution yields
\begin{align*}
  \chi\ &=\ \tfrac{1}{k_{j - 1}^*}\left(\ROI - \sum\nolimits_{\mu = 1}^j k_{\mu - 1}^*(c_\mu - 1) + k_L^* t - k_j^*\right)\\
       &\equsing{\ref{lem:processing-chain-a}}\ k_j u_j + k_j\cdots k_L\cdot t - k_j\text{.}
\end{align*}
Proposition~\ref{prop:number-theory} implies that $k_j$ divides $\chi$ since $u_j\in\N_1$ as shown in~\ref{lem:processing-chain-a}, hence the processing chain can be applied until the $j$-th layer.
With Lemma~\ref{lem:frag-ops} follows $U_j^{\row} = \frac{1}{k_j}\chi$, which immediately yields the claimed expression.
Since only fragmentation influences the number of columns in the processing chain application, it follows that $U_j^{\col} = k_j U_{j - 1}^{\col}$ with Lemma~\ref{lem:frag-ops}, proving the claimed identity.

\ref{lem:processing-chain-c}
Using~\ref{lem:processing-chain-a} and~\ref{lem:processing-chain-b} one obtains $U_j^{\row} - u_j + 1 = \tfrac{1}{k_j^*}\left(D - k_j^* + 1 - \ROI\right) + 1 = \tfrac{D - \ROI + 1}{k_j^*}$,
which is a natural number as the number of subsignals was required to be divisible by $k_L^*$, implying divisibility by $k_j^*$.

\ref{lem:processing-chain-d}
This is proved by induction for $j$.
For $j = 0$, the left-hand side equals $\Subsignal_\ROI(\xi,\; i)_\mu = \xi_{i + \mu - 1}$ using Definition~\ref{def:subsignal} for all $i\inint{1}{D - \ROI + 1}$ and all $\mu\inint{1}{\ROI}$.
Since $k_0^* = 1$, Proposition~\ref{prop:number-theory} shows that the right-hand side equals $\xi_{\div{i - 1}{1} + \mu,\;\rem{i - 1}{1} + 1} = \xi_{i - 1 + \mu,\; 1}$, hence both sides are equal.

Turning now to $j - 1\to j$,
let $\mu\inint{1}{u_j}$ be arbitrary, let $i\inint{1}{D - \ROI + 1}$ be a fixed subsignal index and write $\tau := \EvalStride_{j - 1}(\Subsignal_\ROI(\xi,\;i))\in M_{j - 1}^{u_{j - 1}}$ as an abbreviation.
The left-hand side of the claim leads to
\begin{align*}
  & \EvalStride_j(\Subsignal_\ROI(\xi,\;i))_\mu\\
  \equsing{D.~\ref{def:processing-chain}}\ \ & \Stride_{g_j}(T_j(\tau))_\mu\\
  \equsing{D.~\ref{def:strided-function}}\ \ & g_j\!\left( \sum\nolimits_{\nu = 1}^{k_j} T_j(\tau)_{k_j(\mu - 1) + \nu}\cdot e_\nu^{k_j} \right)\\
  \equsing{T.~\ref{thm:sliding-subsignal}}\ \ & g_j\!\left( \sum\nolimits_{\nu = 1}^{k_j} T_j\!\left( \sum\nolimits_{\lambda = 1}^{c_j} \tau_{k_j(\mu - 1) + \nu + \lambda - 1}\cdot e_\lambda^{c_j} \right)\cdot e_\nu^{k_j} \right)\\
\equsing{IH}\ \ & g_j\!\left( \sum\nolimits_{\nu = 1}^{k_j} T_j\!\left( \sum\nolimits_{\lambda = 1}^{c_j} \EvalSlide_{j - 1}(\xi)_{\psi,\;\omega}\cdot e_\lambda^{c_j} \right)\cdot e_\nu^{k_j} \right)\text{,}
\end{align*}
where ${\psi := \div{i - 1}{k_{j - 1}^*} + k_j(\mu - 1) + \nu + \lambda - 1\in\N_1}$ and ${\omega := \rem{i - 1}{k_{j - 1}^*} + 1\in\N_1}$ are abbreviations.

Let $\pi := \EvalSlide_{j - 1}(\xi)\in M_{j - 1}^{U_{j - 1}^{\row} \times U_{j - 1}^{\col}}$ be an abbreviation for the analysis of the right-hand side of the claim.
Now
\begin{align*}
  & \EvalSlide_j(\xi)_{\div{i - 1}{k_j^*} + \mu,\;\rem{i - 1}{k_j^*} + 1}\\
  \equsing{D.~\ref{def:processing-chain}}\ \ &\Fragmentation_{k_j}(\Slide_{g_j}(T_j(\pi))_{\div{i - 1}{k_j^*} + \mu,\;\rem{i - 1}{k_j^*} + 1}\\
  \equsing{L.~\ref{lem:frag-ops}}\ \ & \Slide_{g_j}(T_j(\pi))_{\div{\phi}{k_{j - 1}^*} + 1,\;\rem{\phi}{k_{j - 1}^*} + 1}\text{,}
\end{align*}
where the number of input fragments to $\Fragmentation_{k_j}$ was $k_{j - 1}^*$, as already shown in~\ref{lem:processing-chain-b}, and where it has been defined that
$\phi := \left(\div{i - 1}{k_j^*} + \mu - 1\right)k_jk_{j - 1}^* + \rem{i - 1}{k_j^*}\in\N$.
By the definition of the operators from Euclidean division follows that $\phi = i - 1 + (\mu - 1)k_jk_{j - 1}^*$.
Proposition~\ref{prop:number-theory} implies
$\div{\phi}{k_{j - 1}^*} = \div{i - 1}{k_{j - 1}^*} + k_j(\mu - 1)$ and $\rem{\phi}{k_{j - 1}^*} + 1 = \rem{i - 1}{k_{j - 1}^*} + 1 = \omega$.
Hence
\begin{align*}
  & \EvalSlide_j(\xi)_{\div{i - 1}{k_j^*} + \mu,\;\rem{i - 1}{k_j^*} + 1}\\
  \equsing{D.~\ref{def:sliding-function}}\ \ & g_j\left( \sum\nolimits_{\nu = 1}^{k_j} T_j(\pi)_{\div{i - 1}{k_{j - 1}^*} + k_j(\mu - 1) + \nu,\;\omega}\cdot e_\nu^{k_j} \right)\\
  \equsing{T.~\ref{thm:sliding-subsignal}}\ \ & g_j\left( \sum\nolimits_{\nu = 1}^{k_j} T_j\left( \sum\nolimits_{\lambda = 1}^{c_j} \pi_{\psi,\;\omega}\cdot e_\lambda^{c_j} \right) \cdot e_\nu^{k_j} \right)\text{,}
\end{align*}
which equals the left-hand side of the claim as shown earlier and thus the proof is finished.
\end{proof}

\begin{figure*}[t]
  \centering
  \scalebox{\figurescalefactor}{\includegraphics[page=5]{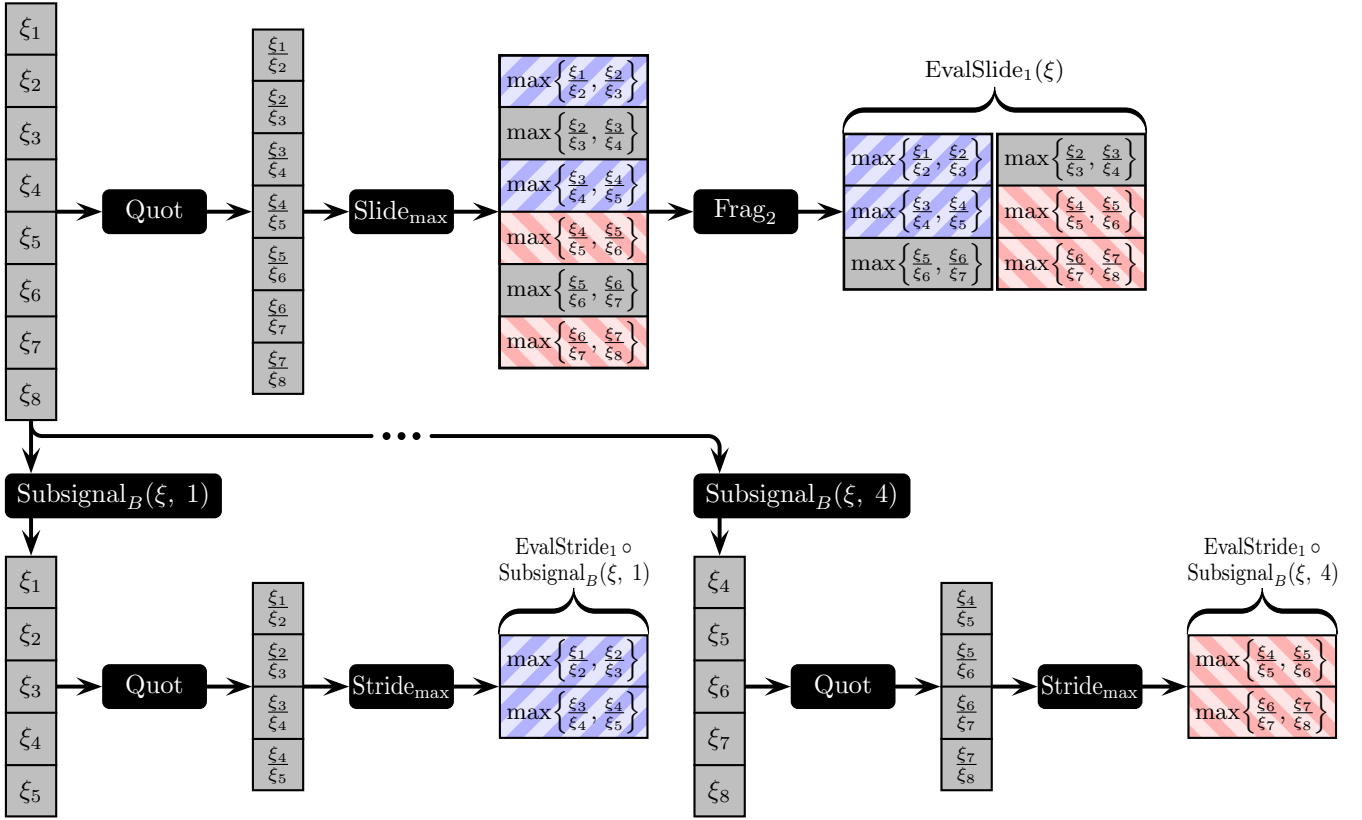}}
  \caption{Example of the two notions of a one-layered ($L = 1$) processing chain with receptive field size $B = 5$ applied to an input signal $\xi$ with $D = 8$ samples.
    The processing chain consists of the $\operatorname{Quot}$ operator as subsignal compatible transformation followed by max-pooling with a stride of $k_1 = 2$.
    In this example, the processing chain is well-defined and all divisibility requirements are met.    
    The upper part shows the result of $\EvalSlide$ evaluated on $\xi$:
    Here, max-pooling is applied to $\operatorname{Quot}(\xi)$ in a sliding fashion, followed by fragmentation producing two fragments.
    The lower part shows the outcome of $\EvalStride$ applied to two different subsignals of length $B$.
    Max-pooling is used here in a strided fashion which halves dimensionality.
    It is evident that the outcome of $\EvalStride$ can be found within $\Slide_{\max}\left(\operatorname{Quot}(\xi)\right)$, although the individual output signals are interleaved (highlighted with a colored pattern in the graphics).
    The reordering through fragmentation corrects interleaving so that the outcome is always available contiguously.
    A second layer in the processing chain has then the ability to further process independent contiguous signals, which is particularly effective on real computing machines.}
  \label{fig:evalchain-example}
\end{figure*}

In conclusion, the result of the $\EvalStride$ operator applied to an arbitrary subsignal of an input signal emerges contiguously in the result of the $\EvalSlide$ operator which processes the signal in its entirety.
The concrete position in the fragmentation data structure can be determined with Lemma~\ref{lem:processing-chain}\ref{lem:processing-chain-d}.
An example for this is depicted in Fig.~\ref{fig:evalchain-example}.
It will be shown later how defragmentation can be used eventually to restore the expected order of the resulting samples.
It is further intuitively clear that $\EvalSlide$ is much more efficient than $\EvalStride$ since redundant computations are avoided.
This is analyzed rigorously in Sect.~\ref{sect:computational_complexity}.

Before continuing with the theory, a short discussion of two notable special cases.
First, a processing chain is of course not required to have a pooling layer following every subsignal compatible transformation, or to even have pooling layers at all.
Consider a fixed layer index $j\inint{1}{L}$.
By setting $k_j := 1$ and $g_j\colon N_j\to M_j$, $\tau\mapsto\tau$, where $N_j = M_j$, one sees that $\Stride_{g_j}$, $\Slide_{g_j}$ and $\Fragmentation_{k_j}$ are the identity functions on their respective domains.
In other words, this parameterization of pooling layer $j$ causes it to act like a neutral bypass operation.
If, on the other hand, one would want to have two pooling layers one directly after the other, it is completely analogous to achieve a neutral subsignal compatible transformation.

The other special case is to have a convolution which is evaluated in a non-overlapping manner, or equivalently in a strided fashion.
This can be achieved by plugging $f_{\convop}$ from Sect.~\ref{sect:CNNs-wo-pooling} into a pooling kernel within a processing chain.
Non-overlapping convolution has, however, no advantage in computational complexity if entire input signals should be processed using a sliding window approach without accuracy loss:
Lemma~\ref{lem:processing-chain} states that strided fashion has then to be turned into sliding fashion, which is essentially the same as carrying out convolution conventionally by advancing the filter banks exactly one sample after each evaluation.

\subsection{Defragmentation and Arbitrary Input Signal Length}
Lemma~\ref{lem:processing-chain} requires the length of the input signal to satisfy certain divisibility constraints.
Extension of its statements to signals of arbitrary length requires two additional operators:
\begin{definition}
\label{def:stuff-trim}
Let $r\in\N$ be a natural number, let $M$ be a set and $\zeta\in M$ be an arbitrary dummy element from $M$.
Then $\Stuffing_r\colon\cup_1(M) \to \cup_{r + 1}(M)$,
\begin{displaymath}
  (\xi_1,\dotsc,\xi_q)\mapsto \sum\nolimits_{\nu = 1}^q \xi_\nu\cdot e_\nu^{q + r} + \sum\nolimits_{\nu = 1}^r \zeta\cdot e_{q + \nu}^{q + r}\text{,}
\end{displaymath}
is called the \emph{stuffing operator}, which appends $r$ copies of $\zeta$ to its argument.
Further, $\Trimming_r\colon\cup_{r + 1}(M) \to \cup_1(M)$,
\begin{displaymath}
  (\xi_1,\dotsc,\xi_q,\xi_{q + 1},\dotsc,\xi_{q + r}) \mapsto \sum\nolimits_{\nu = 1}^q \xi_\nu\cdot e_\nu^q\text{,}
\end{displaymath}
is called the \emph{trimming operator}, which removes the final $r$ entries from its argument.
\end{definition}

The concrete choice of the dummy element $\zeta$ does not matter in the following considerations since all output entries which are affected by its choice are trimmed away in the end.
It is now possible to state the main theoretical result of this section:
\begin{theorem}
\label{thm:processing-chain}
Consider a processing chain with the same notation as in Definition~\ref{def:processing-chain}, where the application in a strided fashion is well-defined as in Lemma~\ref{lem:processing-chain}.
Further assume that $\dim_{M_L}(\EvalStride_L(\rho)) = 1$ for all $\rho\in M_0^\ROI$, that is the output of the entire processing chain applied in a strided fashion consists of exactly one sample.

Let $\tilde{r}\colon\N_1\to\discint{0}{k_L^* - 1}$, 
\begin{displaymath}
  \delta\mapsto
  \begin{cases}
    0\text{,} & \!\!\!\text{if }k_L^*\text{ divides }\delta - \ROI + 1\text{,}\\
    k_L^* - \rem{\delta - \ROI + 1}{k_L^*}\text{, } & \!\!\!\text{otherwise,}
  \end{cases}
\end{displaymath}
denote the number of dummy samples that have to be padded to an original signal with $\delta$ samples to satisfy divisibility requirements.
Further define $r\colon\cup_1(M_0)\to\discint{0}{k_L^* - 1}$, $\xi\mapsto\tilde{r}(\dim_{M_0}(\xi))$, as an abbreviation that computes the required number of dummy samples in dependence on a concrete original signal $\xi$.
Consider $T\colon\cup_\ROI(M_0)\to\cup_1(M_L)$,
\begin{displaymath}
  \xi\mapsto\Trimming_{r(\xi)}( \Defragmentation_{k_L^*}( \EvalSlide_L( \Stuffing_{r(\xi)}(\xi) ) ) )\text{.}
\end{displaymath}
This function first stuffs the input signal with as many dummy samples such that each fragmentation operation during application of the processing chain in a sliding fashion comes out even, applies the processing chain in a sliding fashion, defragments the outcome and eventually removes all superfluous entries that emerged from the initial stuffing.
Then $T$ is a subsignal compatible transformation with dimensionality reduction constant $\ROI$.
Furthermore, $T(\rho) = \EvalStride_L(\rho)$ for all $\rho\in M_0^\ROI$ and $T = \Slide_{\EvalStride_L}$.
\end{theorem}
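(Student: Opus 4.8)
The plan is to reduce the whole statement to the characterization in Theorem~\ref{thm:sliding-subsignal}: I would verify the dimensionality reduction property together with the weakened exchange property in Theorem~\ref{thm:sliding-subsignal}\ref{thm:sliding-subsignal-b}, which simultaneously yields that $T$ is subsignal compatible with dimensionality reduction constant $\ROI$ and that its unique generating function is $\EvalStride_L$, giving $T = \Slide_{\EvalStride_L}$. The first step is to make Lemma~\ref{lem:processing-chain} applicable to the stuffed signal. Fixing $\xi\in\cup_\ROI(M_0)$ with $\delta := \dim_{M_0}(\xi)$ and writing $\hat\xi := \Stuffing_{r(\xi)}(\xi)$ with $D := \delta + r(\xi)\geq\ROI$ samples, the definition of $\tilde r$ together with Proposition~\ref{prop:number-theory} shows that $D - \ROI + 1$ is divisible by $k_L^*$, so the divisibility hypothesis of Lemma~\ref{lem:processing-chain} holds; the assumption $\dim_{M_L}(\EvalStride_L(\rho)) = 1$ moreover gives $u_L = 1$.

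For the dimensionality reduction property I would trace the sample counts through the composition. Lemma~\ref{lem:processing-chain}\ref{lem:processing-chain-b},\ref{lem:processing-chain-c} give $U_L^{\col} = k_L^*$ and $U_L^{\row} = \frac{1}{k_L^*}(D - \ROI + 1)$, so $\EvalSlide_L(\hat\xi)\in M_L^{U_L^{\row}\times k_L^*}$; applying $\Defragmentation_{k_L^*}$ merges these $k_L^*$ fragments into a single fragment of $k_L^* U_L^{\row} = D - \ROI + 1$ samples, and $\Trimming_{r(\xi)}$ discards the final $r(\xi)$ of them, leaving $\delta - \ROI + 1 = \dim_{M_0}(\xi) - \ROI + 1$ samples. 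For the weakened exchange property I would, for $i\inint{1}{\delta - \ROI + 1}$, compute $T(\xi)_i$ directly. As $i\leq\delta - \ROI + 1$ this index survives the trimming, so $T(\xi)_i = \Defragmentation_{k_L^*}(\EvalSlide_L(\hat\xi))_{i,\;1}$, and Lemma~\ref{lem:defrag-ops} (with a single output fragment, $s = 1$) rewrites this as $\EvalSlide_L(\hat\xi)_{\div{i - 1}{k_L^*} + 1,\;\rem{i - 1}{k_L^*} + 1}$. Since $i + \ROI - 1\leq\delta$, the subsignal $\Subsignal_\ROI(\hat\xi,\;i)$ avoids the appended dummy tail and equals $\Subsignal_\ROI(\xi,\;i)$, so Lemma~\ref{lem:processing-chain}\ref{lem:processing-chain-d} (at $\mu = 1$, using $u_L = 1$) identifies $\EvalStride_L(\Subsignal_\ROI(\xi,\;i))$ with that very entry of $\EvalSlide_L(\hat\xi)$. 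Because $\Subsignal_\ROI(\xi,\;i)\in M_0^\ROI$, the left side is a one-sample signal, and equating the two expressions yields $T(\Subsignal_\ROI(\xi,\;i)) = T(\xi)_i$, which is exactly Theorem~\ref{thm:sliding-subsignal}\ref{thm:sliding-subsignal-b}.

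Finally, specializing to $\xi = \rho\in M_0^\ROI$, where $\delta = \ROI$ and $i = 1$ is the only admissible index, collapses the identity to $T(\rho) = T(\rho)_1 = \EvalStride_L(\rho)$. Theorem~\ref{thm:sliding-subsignal} then promotes the verified properties to the assertion that $T$ is a subsignal compatible transformation with dimensionality reduction constant $\ROI$ whose unique generating function is $\rho\mapsto\EvalStride_L(\rho)$, whence $T = \Slide_{\EvalStride_L}$. I expect the main obstacle to be the index bookkeeping of the middle paragraph: confirming that $\Defragmentation_{k_L^*}$ exactly inverts the fragment interleaving so that the fragment-row pair $(\div{i - 1}{k_L^*} + 1,\;\rem{i - 1}{k_L^*} + 1)$ emitted by $\EvalSlide_L$ lands at linear position $i$, and checking that the range $i\leq\delta - \ROI + 1$ keeps every sample of interest clear of the trimmed dummy tail, which is simultaneously what renders the choice of the dummy element $\zeta$ irrelevant.
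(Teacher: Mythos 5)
Your proposal is correct and follows essentially the same route as the paper's own proof: stuffing to secure the divisibility hypothesis of Lemma~\ref{lem:processing-chain}, the dimension count via Lemma~\ref{lem:processing-chain}\ref{lem:processing-chain-b}--\ref{lem:processing-chain-c} for the dimensionality reduction property, the index computation via Lemma~\ref{lem:defrag-ops} combined with Lemma~\ref{lem:processing-chain}\ref{lem:processing-chain-d} and the observation that $\Subsignal_\ROI(\Stuffing_{r(\xi)}(\xi),\;i) = \Subsignal_\ROI(\xi,\;i)$ for the weakened exchange property, and Theorem~\ref{thm:sliding-subsignal} to conclude.
The only cosmetic differences are that you derive $T(\rho) = \EvalStride_L(\rho)$ as the $\delta = \ROI$ special case of your general identity rather than proving it first as the paper does, and that you invoke the uniqueness clause of Theorem~\ref{thm:sliding-subsignal} instead of Theorem~\ref{thm:subsignal-identity} for the final identity $T = \Slide_{\EvalStride_L}$; both are equivalent to the paper's steps.
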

\begin{proof}
Note that $\tilde{r}(\delta)$ is well-defined if $k_L^*$ does not divide $\delta - \ROI + 1$, since then $\rem{\delta - \ROI + 1}{k_L^*}\inint{1}{k_L^* - 1}$ and thus $k_L^* - \rem{\delta - \ROI + 1}{k_L^*}\inint{1}{k_L^* - 1}$.

Let $\xi\in\cup_\ROI(M_0)$ and write $\tilde{D} := \dim_{M_0}(\xi)$.
Then $D := \dim_{M_0}(\Stuffing_{r(\xi)}(\xi)) = \tilde{D} + r(\xi)$.
If $k_L^*$ divides $\tilde{D} - \ROI + 1$ it is $r(\xi) = 0$, so $\rem{ \dim_{M_0}(\Stuffing_{r(\xi)}(\xi)) - \ROI + 1 }{k_L^*} = 0$.
If on the other hand $k_L^*$ does not divide $\tilde{D} - \ROI + 1$, then $r(\xi) > 0$ and
$\dim_{M_0}(\Stuffing_{r(\xi)}(\xi)) - \ROI + 1 = \tilde{D} + k_L^* - \rem{\tilde{D} - \ROI + 1}{k_L^*} - \ROI + 1$.
Hence $\rem{ \dim_{M_0}(\Stuffing_{r(\xi)}(\xi)) - \ROI + 1 }{k_L^*} = 0$ due to the idempotence of the $\operatorname{rem}$ operator.
Therefore, the number of subsignals of $\Stuffing_{r(\xi)}(\xi)$ with $\ROI$ samples is always divisible by $k_L^*$, as required for application of Lemma~\ref{lem:processing-chain}.

Lemma~\ref{lem:processing-chain} guarantees that $\pi := \EvalSlide_L( \Stuffing_{r(\xi)}(\xi) )$ is well-defined.
With Lemma~\ref{lem:processing-chain}\ref{lem:processing-chain-b} follows $\cdim_{M_L}(\pi) = k_L^*$.
Since $\dim_{M_L}(\EvalStride_L(\rho)) = 1$ was required for all $\rho\in M_0^\ROI$, $\rdim_{M_L}(\pi) = \frac{1}{k_L^*}(D - \ROI + 1)$ with Lemma~\ref{lem:processing-chain}\ref{lem:processing-chain-c}.
Therefore, $\Defragmentation_{k_L^*}(\pi)$ is well-defined with exactly one output fragment, which has the same number of samples as there are subsignals of length $B$ in the stuffed input signal: $\dim_{M_L}(\Defragmentation_{k_L^*}(\pi)) = D - \ROI + 1$.

Now $\dim_{M_L}(\Defragmentation_{k_L^*}(\pi)) \geq r(\xi) + 1$ holds because $D = \tilde{D} + r(\xi)$ where $\tilde{D} \geq \ROI$, thus $\Trimming_{r(\xi)}(\Defragmentation_{k_L^*}(\pi))$ is well-defined.
Since trimming reduces dimensionality by $r(\xi)$ follows
$\dim_{M_0}(\xi) - \dim_{M_L}(T(\xi)) = \tilde{D} - (D - \ROI + 1 - r(\xi)) = \ROI - 1$.
Therefore, $T$ fulfills the dimensionality reduction property with dimensionality reduction constant $\ROI$.

To prove $T$ is subsignal compatible it is hence sufficient to use Theorem~\ref{thm:sliding-subsignal} to show $T(\Subsignal_\ROI(\xi,\;i)) = T(\xi)_i\in M_L$ for all $i\inint{1}{\tilde{D} - \ROI + 1}$.
It is first demonstrated that $T(\rho) = \EvalStride_L(\rho)$ for all $\rho\in M_0^\ROI$.
This can then be used to prove the weakened exchange property.

Let $\rho\in M_0^\ROI$, then $\Subsignal_\ROI(\Stuffing_{r(\rho)}(\rho),\;1) = \rho$ by the definition of the stuffing operator.
$T(\rho)$ consists of a single sample since the dimensionality reduction constant of $T$ is $\ROI$, hence $T(\rho) = T(\rho)_1$.
Extraction of the very first sample of the result of the trimming operator is equal here to the extraction of the very first sample of the trimming operator's argument.
Therefore,
\begin{align*}
  & T(\rho)\\
  =\ \ \ &\Defragmentation_{k_L^*}( \EvalSlide_L( \Stuffing_{r(\rho)}(\rho) ) )_{1,\;1}\\
  \equsing{L.~\ref{lem:defrag-ops}}\ \ \ &\EvalSlide_L( \Stuffing_{r(\rho)}(\rho) )_{\div{0}{k_L^*} + 1,\;\rem{0}{k_L^*} + 1}\\
  \equsing{L.~\ref{lem:processing-chain}\ref{lem:processing-chain-d}}\ \ \ &\EvalStride_L(\Subsignal_\ROI(\Stuffing_{r(\rho)}(\rho),\; 1))_1\\
  =\ \ \ &\EvalStride_L(\rho)\text{.}
\end{align*}

Returning to the exchange property, let $\xi\in M_0^{\tilde{D}}$ as before, and let $i\inint{1}{\tilde{D} - \ROI + 1}$ be an arbitrary subsignal index.
Omitting the trimming operator as before yields
\begin{align*}
  & T(\xi)_i\\
  =\ \ \ & \Defragmentation_{k_L^*}( \EvalSlide_L( \Stuffing_{r(\xi)}(\xi) ) )_{i,\;1}\\
  \equsing{L.~\ref{lem:defrag-ops}}\ \ \ & \EvalSlide_L( \Stuffing_{r(\xi)}(\xi) )_{\div{i - 1}{k_L^*} + 1,\;\rem{i - 1}{k_L^*} + 1}\\
  \equsing{L.~\ref{lem:processing-chain}\ref{lem:processing-chain-d}}\ \ \ & \EvalStride_L(\Subsignal_\ROI(\Stuffing_{r(\xi)}(\xi),\; i))_1\\
  \equsing{D.~\ref{def:stuff-trim}}\ \ \ & \EvalStride_L(\Subsignal_\ROI(\xi,\; i))\\
  =\ \ \ & T(\Subsignal_\ROI(\xi,\;i))\text{.}
\end{align*}
Hence $T$ is a subsignal compatible transformation due to Theorem~\ref{thm:sliding-subsignal}.
Theorem~\ref{thm:subsignal-identity} finally implies $T = \Slide_{\EvalStride_L}$.
\end{proof}

It can be concluded that CNNs can be turned into efficiently computable subsignal compatible transformations using the $\EvalSlide$ operator regardless of the input signal's dimensionality.
One could suspect that stuffing the input signal with dummy samples might have a negative effect on the efficiency.
However, the number of stuffed samples is always less than the stride product $k_L^*$ of the final layer and hence very small for reasonably sized CNNs.

Moreover, stuffing guarantees that all fragments encountered during evaluation are homogeneous.
This enables tensors to be used as the sole data structure for input data, intermediate representations and computation results.
This is far more efficient than storing each fragment individually, especially on massively parallel processors where then simple parallelized implementations can achieve maximum throughput.

\section{Computational Complexity Analysis}
\label{sect:computational_complexity}
In this section, a detailed theoretical analysis of the computational complexity of processing chain evaluation is carried out.
As introduced in Definition~\ref{def:processing-chain}, this corresponds to the alternating application of subsignal compatible transformations and functions applied in a strided fashion.
For measuring the computational complexity of the $\EvalStride$ and $\EvalSlide$ operators, the function evaluations required for computing the output of each layer are counted.
Regarding the subsignal compatible transformations, the unique functions that generate the transformations on account of Theorem~\ref{thm:sliding-subsignal} are considered.

It is shown that $\EvalSlide$ requires at most the same number of function evaluations as $\EvalStride$ in each layer.
This then implies that $\EvalSlide$ is more efficient on the global scale of an entire processing chain where all the layers are evaluated subsequently.
Further, it is shown that the theoretical speedup can be factorized into simple expressions, facilitating statements on the effect of individual parameters.

\subsection{Identification of the Number of Function Evaluations}
Assume a situation of Lemma~\ref{lem:processing-chain} in which an input signal $\xi\in M_0^D$ and a well-defined processing chain are given.
An arbitrary layer index $j\inint{1}{L}$ is fixed and $\EvalStride_j$ is analyzed for an arbitrary subsignal index $i\inint{1}{D - B + 1}$.
As in the proof of Lemma~\ref{lem:processing-chain}\ref{lem:processing-chain-d}, write $\tau := \EvalStride_{j - 1}(\Subsignal_\ROI(\xi,\;i))\in M_{j - 1}^{u_{j - 1}}$ which results in $\EvalStride_j(\Subsignal_\ROI(\xi,\;i)) = \Stride_{g_j}(T_j(\tau))$.
Now Theorem~\ref{thm:sliding-subsignal} guarantees that there is exactly one function $f_j\colon M_{j - 1}^{c_j}\to N_j$ satisfying $T_j = \Slide_{f_j}$.
Thus
\begin{displaymath}
  T_j(\tau)\ \ \equsing{D.~\ref{def:sliding-function}}\ \ \sum\nolimits_{\mu = 1}^{u_{j - 1} - c_j + 1} f_j(\Subsignal_{c_j}(\tau,\;\mu))\cdot e_\mu^{u_{j - 1} - c_j + 1}\text{,}
\end{displaymath}
hence $f_j$ has to be evaluated $u_{j - 1} - c_j + 1$ times.
Considering the strided application of $g_j$, one notes $u_{j - 1} - c_j + 1 = k_j u_j$ with Lemma~\ref{lem:processing-chain}\ref{lem:processing-chain-a} and furthermore
\begin{align*}
     & \Stride_{g_j}(T_j(\tau))\\
  \equsing{D.~\ref{def:strided-function}}\ \ &\sum\nolimits_{\nu = 1}^{u_j} g_j(\Subsignal_{k_j}(T_j(\tau),\;k_j(\nu - 1) + 1))\cdot e_\nu^{u_j}\text{.}
\end{align*}
Here, $u_j$ evaluations of $g_j$ are necessary.
Since all function evaluations have to be carried out for each of the $D - B + 1$ possible subsignals, the total number of function evaluations increases proportionately with this factor.

Redundant computations are avoided if $\EvalSlide_j$ is used instead.
Here the complexity of processing an individual fragment in layer $j$ is analyzed.
The overall complexity then results from multiplication with the number of input fragments $U_{j - 1}^{\col}$.
Analogous to the proof of Lemma~\ref{lem:processing-chain}\ref{lem:processing-chain-d}, let $\gamma\inint{1}{U_{j - 1}^{\col}}$ be a fragment index and define
\begin{displaymath}
  \pi := \sum\nolimits_{\kappa = 1}^{U_{j - 1}^{\row}} \EvalSlide_{j - 1}(\xi)_{\kappa,\;\gamma}\cdot e_{\kappa}^{U_{j - 1}^{\row}}\in M_{j - 1}^{U_{j - 1}^{\row}}
\end{displaymath}
as an abbreviation for the input fragment with index $\gamma$.
Now, the output of the $j$-th layer of the processing chain is
\begin{displaymath}
  \EvalSlide_j(\xi) = \Fragmentation_{k_j}(\Slide_{g_j}(T_j(\EvalSlide_{j - 1}(\xi))))\text{.}
\end{displaymath}
The complexity of the fragmentation operator is neglected here because it is merely a structured permutation with very little overhead.
Considering a single fragment now yields
\begin{displaymath}
  T_j(\pi)\ \equsing{D.~\ref{def:sliding-function}}\ \sum\nolimits_{\mu = 1}^{U_{j - 1}^{\row} - c_j + 1} f_j(\Subsignal_{c_j}(\pi,\;\mu))\cdot e_\mu^{U_{j - 1}^{\row} - c_j + 1}\text{,}
\end{displaymath}
accounting for $U_{j - 1}^{\row} - c_j + 1$ evaluations of $f_j$.
Application of $g_j$ in a sliding fashion yields
\begin{displaymath}
  \Slide_{g_j}(T_j(\pi))\ \ \equsing{D.~\ref{def:sliding-function}}\ \ \sum\nolimits_{\nu = 1}^{\chi} g_j(\Subsignal_{k_j}(T_j(\pi),\;\nu))\cdot e_\nu^{\chi}\text{,}
\end{displaymath}
which requires $\chi := U_{j - 1}^{\row} - c_j - k_j + 2$ evaluations of $g_j$.
Lemma~\ref{lem:processing-chain}\ref{lem:processing-chain-b} finally implies $\chi = k_j U_j^{\row}$.

\subsection{Analysis for the Subsignal Compatible Transformation Evaluation Component}
To determine the resulting speedup when redundant computations are avoided, the ratio of the number of function evaluations required for the naive approach $\EvalStride_j$ to the number needed when the input signal is processed in its entirety using $\EvalSlide_j$ is evaluated.
Considering $f_j$ this yields
\begin{displaymath}
  S_{f_j} := \frac{(D - B + 1) (u_{j - 1} - c_j + 1)}{U_{j - 1}^{\col} (U_{j - 1}^{\row} - c_j + 1)}\text{,}
\end{displaymath}
where the number of subsignals and the number of fragments was included in the numerator and denominator, respectively.
With Lemma~\ref{lem:processing-chain}\ref{lem:processing-chain-b} and Lemma~\ref{lem:processing-chain}\ref{lem:processing-chain-c} one obtains
$\frac{D - B + 1}{U_{j - 1}^{\col}} = U_{j - 1}^{\row} - u_{j - 1} + 1$,
and by substituting this and after minor algebraic manipulation one sees that
\begin{displaymath}
  S_{f_j} = 1 + \frac{(U_{j - 1}^{\row} - u_{j - 1}) (u_{j - 1} - c_j)}{U_{j - 1}^{\row} - c_j + 1}\text{.}
\end{displaymath}
Since $1\leq c_j\leq u_{j - 1}\leq U_{j - 1}^{\row}$ using Lemma~\ref{lem:processing-chain}\ref{lem:processing-chain-c} it follows that $S_{f_j} \geq 1$, which means $\EvalSlide_j$ requires at most the same number of applications of $f_j$ as $\EvalStride_j$.
Merely in the special cases where the extent $u_{j - 1}$ of the region fed into layer $j$ equals the length of the signal fragments ($u_{j - 1} = U_{j - 1}^{\row}$) or the dimensionality reduction constant of the subsignal compatible transformation ($u_{j - 1} = c_j$) the speedup attains unity, indicating that both approaches require the same number of function evaluations.

If $S_{f_j}$ is understood as a function dependent upon the signal dimensionality $D$, then the only quantity that depends on $D$ in the derived expression is $U_{j - 1}^{\row}$ since $c_j$ is constant and $u_{j - 1}$ is independent of $D$ as can be seen from Lemma~\ref{lem:processing-chain}\ref{lem:processing-chain-a}.
As Lemma~\ref{lem:processing-chain} requires $D - B + 1$ to be divisible by $k_L^*$, the next larger feasible signal dimensionality is $D + k_L^* =: D_+$.
Subtracting $S_{f_j}$ evaluated for signal dimensionality $D$ from its value for an extended signal dimensionality $D_+$ yields
\begin{align*}
     &S_{f_j}(D_+) - S_{f_j}(D)\\[.5ex]
  =\ &\frac{(u_{j - 1} - c_j) (u_{j - 1} - c_j + 1) (U_{j - 1}^{\row}(D_+) - U_{j - 1}^{\row}(D))}{(U_{j - 1}^{\row}(D_+) - c_j + 1) (U_{j - 1}^{\row}(D) - c_j + 1)}\text{.}
\end{align*}
Now Lemma~\ref{lem:processing-chain}\ref{lem:processing-chain-b} implies that $U_{j - 1}^{\row}(D_+) - U_{j - 1}^{\row}(D) = \frac{k_L^*}{k_{j - 1}^*}$, therefore $S_{f_j}(D_+) \geq S_{f_j}(D)$ and thus the speedup increases if the signal dimensionality is increased.
In the limit case of arbitrarily large input signals one obtains $\lim_{D\to\infty} S_{f_j}(D) = u_{j - 1} - c_j + 1$, that is the speedup asymptotically attains a finite value.
From this it is evident that greatest speedups can be achieved for large regions of interest $u_{j - 1}$ and small dimensionality reduction constants $c_j$.

\subsection{Analysis for the Strided Function Evaluation Component}
Turning now to the function $g_j$ applied in a strided fashion, incorporating the number of subsignals and fragments for $\EvalStride_j$ and $\EvalSlide_j$, respectively, yields the ratio
\begin{displaymath}
  S_{g_j} := \frac{(D - B + 1) u_j}{U_{j - 1}^{\col}k_j U_j^{\row}} = 1 + \frac{(U_j^{\row} - u_j) (u_j - 1)}{U_j^{\row}}\text{.}
\end{displaymath}
With Lemma~\ref{lem:processing-chain}\ref{lem:processing-chain-c} follows $S_{g_j} \geq 1$, hence here the complexity of $\EvalSlide_j$ is also less than that of $\EvalStride_j$.
The speedup is unity only if $u_j = U_j^{\row}$ or $u_j = 1$.

Grasping $S_{g_j}$ as a function of $D$ and denoting the next larger signal dimensionality with $D_+ := D + k_L^*$, one obtains
\begin{displaymath}
  S_{g_j}(D_+) - S_{g_j}(D) = \frac{u_j (u_j \! - \! 1) (U_j^{\row}(D_+) \! - \! U_j^{\row}(D))}{U_j^{\row}(D_+) U_j^{\row}(D)} \geq 0\text{.}
\end{displaymath}
Thus speedups increase with larger signal dimensionality.
In the limit case of arbitrarily large input signals it is $\lim_{D\to\infty} S_{g_j}(D) = u_j$, hence here the speedup is bounded from above as well.

\subsection{Discussion}
Simple expressions have been derived that imply the number of function evaluations required by $\EvalSlide$ is always less or equal than that for $\EvalStride$ in each layer.
Therefore this also holds for the subsequent evaluation of all the layers.
Further, the special cases have been identified in which the computational complexity of $\EvalSlide$ matches that of $\EvalStride$.
Moreover, it was demonstrated that the speedup becomes more significant for larger input signals, although its growth is not unbounded.

The analysis was restricted to only the amount of necessary function evaluations and neglected the parallelization potential of the individual approaches.
On a massively parallel processor, the throughput of the $\EvalSlide$ approach might be substantially lower than that of $\EvalStride$ since here coarse-grained parallelism on the subsignal level can be exploited facilitating load balancing on thousands of parallel computing cores.
However, experiments in the next section demonstrate that, as predicted by the theoretical analysis, in practice $\EvalSlide$ is orders of magnitude faster than $\EvalStride$.

\section{Practical Considerations for Image Processing and Experimental Evaluation}
\label{sect:experimental_evaluation}
This section discusses practical considerations when using the theory proposed in this paper for image processing tasks.
Further, results of experiments on semantic segmentation and runtime measurements on real processors are reported.

\subsection{Generalization to 2D Image Data and Transformation for Image-Based Application}
\label{sect:generalization_image_data}
In the context of image processing, two-dimensional signals are referred to as \emph{images} and two-dimensional subsignals as \emph{patches}.
The generalization of subsignal compatible transformation theory to two spatial dimensions is straightforward and shown here exemplarily for Definition~\ref{def:subsignal_compatible}.
Images are represented by matrices, patch indices are two-dimensional and patch extraction is the same as forming a submatrix of adjacent entries.
Now suppose $T$ is a transformation from the space of sufficiently large images $\xi$ with pixels in the set $M$ to images with pixels from the set $N$.
Then $T$ fulfills the dimensionality reduction property with dimensionality reduction constants $r,c\in\N_1$ if both $\rdim_N(T(\xi)) = \rdim_M(\xi) - r + 1$ and $\cdim_N(T(\xi)) = \cdim_M(\xi) - c + 1$.
The exchange property generalizes to $T(\Patch_{d_r\times d_c}(\xi,\;i,\;j)) = \Patch_{(d_r - r + 1)\times(d_c - c + 1)}(T(\xi),\;i,\;j)$, which is a condition in two dimensions.
Here, $\Patch$ is the patch extraction operator with subscripts specifying the dimensionalities of the extracted patches, $d_r,d_c\in\N_1$, $d_r\geq r$, $d_c\geq c$, are patch dimensionalities and $i\inint{1}{\rdim_M(\xi) - d_r + 1}$ and $j\inint{1}{\cdim_M(\xi) - d_c + 1}$ are patch indices.
If both properties are fulfilled, $T$ is called a \emph{patch compatible transformation}.

The remaining theory can be generalized analogously.
In practice it is sufficient to use plain 4D tensors as the only data structure for CNN input data, intermediate representations and computation results.
Here, one dimension accounts for the feature map index and two dimensions account for the spatial position within each feature map.
The fourth dimension represents an image index.
Although image processing fragmentation requires two dimensions, these can be collapsed into one dimension by linearizing the two-dimensional fragment indices.
Since all computations on fragments are carried out independently by Definition~\ref{def:fragmentwise-evaluation}, this essentially corresponds to the meaning of the image index dimension in common 4D tensor processing~\cite{Chetlur2014}.
Therefore, fragmentation does not require any modifications whatsoever to the computationally demanding routines such as tensor convolution.

\begin{figure}[t]
  \centering
  \includegraphics[page=6]{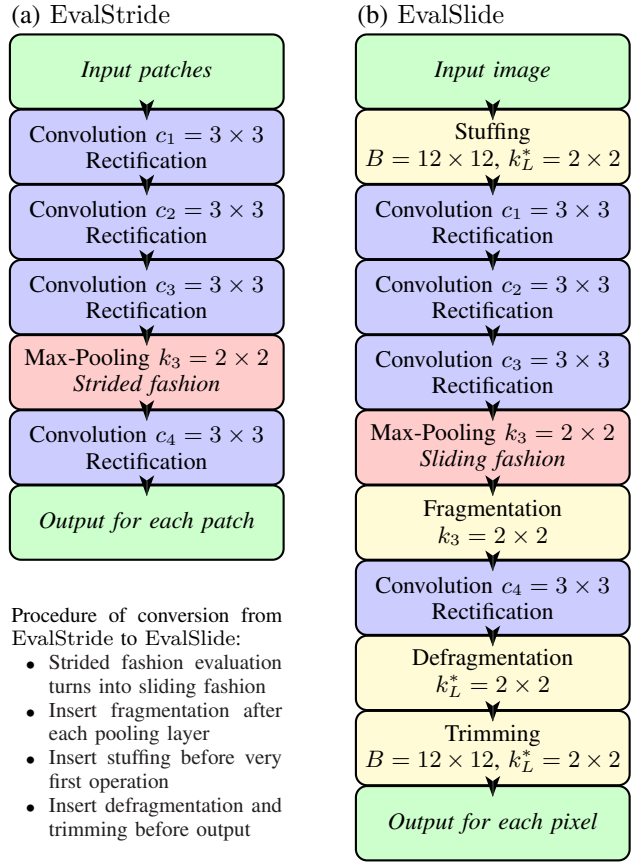}
  \caption{Example network architecture for $L = 4$ targeted at the processing of two-dimensional images.
    For $\EvalStride$ (left-hand side) suitable for patch-based application, this architecture consists of four $3\times 3$ convolutional layers, where after the third convolutional layer $2\times 2$ max-pooling is carried out in a strided fashion.
    The right-hand side shows how this architecture has been transformed for the $\EvalSlide$ approach suitable for image-based application:
    Pooling is now applied in a sliding fashion and followed by fragmentation.
    The input image is stuffed to guarantee fragmentation always comes out even and hence homogeneous 4D tensors can be used as the only data structure.
    The effects of stuffing and fragmentation are undone in the end, yielding an output for each pixel in the input image.
    Here, $B$ denotes the entire CNN's receptive field size and $k_L^*$ is the stride product of the entire network.}
  \label{fig:evalslide-net}
\end{figure}

For an experimental evaluation, CNNs with a varying number of convolutional layers $L$ were created using the following scheme.
Similar to~\cite{Simonyan2015}, each convolutional layer was parameterized for a filter size of $3\times 3$~pixels.
The output of each convolutional layer was fed through a rectification non-linearity~\cite{Sanger1989a}.
A $2\times 2$ max-pooling layer was inserted after each three pairs of convolution and rectification layers, unless the pooling layer would be the final layer of the network~\cite{Simonyan2015}.
Fig.~\ref{fig:evalslide-net} depicts such a network architecture as it was used for the $\EvalStride$ operator and how it was transformed to account for image-based application using the $\EvalSlide$ operator (see Definition~\ref{def:processing-chain} for the formal recipe).

\subsection{Semantic Image Segmentation}
The practicality of the approach proposed in this paper has been verified by realizing a semantic image segmentation through evaluation of a classifier on all feasible patches in an image~\cite{Ning2005,Grangier2009,Farabet2013}.
In doing so, images recorded with a wide angle camera attached to the windshield of an experimental vehicle were manually labeled to yield regions which only contain pixels of the four object categories road, vehicle, person, and background.
As a classifier, a CNN with $L = 12$ convolutional layers as described above was used.
The number of output feature maps was set to $16$ for the first three convolutional layers and subsequently doubled after each pooling layer.
A fully-connected layer (spatial dimension $1\times 1$) with four output feature maps was appended to project from the high-dimensional feature space into the label space, followed by a final softmax non-linearity~\cite{Bishop1995}.

The CNN was first trained with patches extracted from random positions of the images from the learning set ($\EvalStride$ notion), transformed to image-based application, and thereafter fine-tuned using entire images ($\EvalSlide$ notion).
Therefore, a huge number of weight updates on unbiased learning examples was carried out in the early phase of training, facilitating fast learning progress~\cite{Bottou2004}.
After the transformation, learning on entire images ensured all feasible patches were considered, improving homogeneity and reducing remaining misclassification artifacts.
Note that backpropagating gradients through a transformed CNN is straightforward:
The gradients of stuffing and trimming are trivial, and fragmentation and defragmentation are merely inversely related permutations.
Elementary calculus is sufficient for determination of the gradient of max-pooling in a sliding fashion.

\begin{figure}[t]
  \centering
  \includegraphics[page=7]{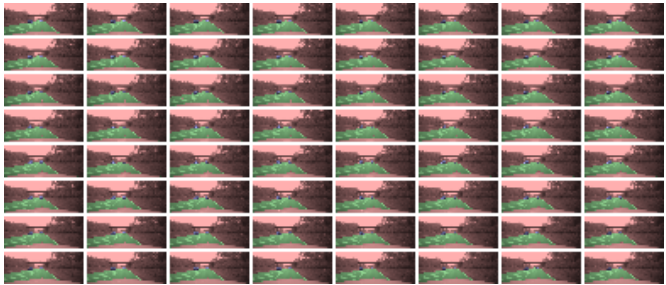}
  \caption{Visualization of the fragmented classification decisions. Each fragment carries only low-resolution information, which is afterwards combined with a defragmentation operation (see Fig.~\ref{fig:tinted} for the result).}
  \label{fig:fragmented-tinted}
\end{figure}

\begin{figure}[t]
  \centering
  \includegraphics[page=8]{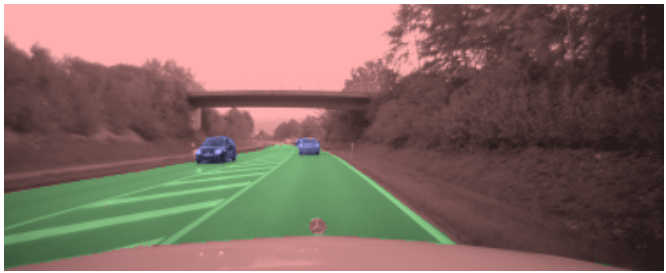}
  \caption{Visualization of the final output of the classifier (best viewed in color). The grayscale input image has been tinted with the defragmented classification decisions. The information obtained facilitates an accurate representation of the scene in front of the vehicle.}
  \label{fig:tinted}
\end{figure}

The classification decisions on an image from the test set just before defragmentation is carried out are shown in Fig.~\ref{fig:fragmented-tinted}.
Since for $L = 12$ three $2\times 2$ pooling layers are involved, accounting for a stride product of $k_L^* = 8\times 8$, there are $64$~fragments in total.
Each of the fragments represents a low-resolution version of the final output shifted by the corresponding number of pixels in either spatial dimension.
Defragmentation yields a single output image, see Fig.~\ref{fig:tinted}, where the classification decisions are available at high resolution.
This output can subsequently be employed for vehicle environment perception~\cite{Nuss2014}.
It is however more efficient to additionally employ multi-scale analysis which incorporates context information and hence provides more discriminant features~\cite{Farabet2013}.
This technique was not considered in this paper due to space constraints.
An elaborate discussion of its theoretical background is available in the technical report~\cite{Thom2016}.

\subsection{Runtime Measurements and Speedup Verification}
Section~\ref{sect:computational_complexity} analyzed the theoretical speedup that can be expected if redundant computations are eliminated by means of the $\EvalSlide$ operator.
To confirm whether speed can be significantly increased when real processors are employed, CNNs were applied to entire two-dimensional images using patch-based and image-based application.
The CNNs were parameterized as described above, where the number of layers $L\inint{1}{15}$ was varied and where the number of output feature maps was always set to $128$.
This renders the computational complexity of each layer about equal, allowing an undistorted evaluation of the overall speedup.
All degrees of freedom of the CNNs were initialized with random numbers and random image data was used as input.
This is no restriction to the generality or practicality of the results since the focus was on an analysis of runtime measurements and an assessment of the achieved speedups.

The experiments were run on a massively parallel GPU implementation and on a parallel CPU implementation.
For the GPU variant, an NVIDIA GeForce GTX 980 Ti graphics card was used.
Computations where sped up through the cuDNN~\cite{Chetlur2014} software library.
The CPU implementation employed an Intel Core i7-5930K processor.
Here, the Intel Integrated Performance Primitives~\cite{Taylor2007} and OpenMP~\cite{Dagum1998} libraries were used for increasing efficiency.

The images had a height of $240$~pixels, a width of $320$~pixels and a single feature map was used as input for the networks.
The time required for carrying out both operators on GPU and on CPU was measured and the ratio taken to determine the speedup.
All measurements were repeated twenty times on twenty distinct input images, and the average of the resulting four hundred runs was used for further evaluation.

For $\EvalStride$, neither the time required for extracting all feasible patches and storing them in a dedicated tensor nor the time required for assembling the final output tensor were included in the measurements.
Due to memory constraints, the tensor with all the patches had to be broken down into batches before processing on the GPU.
The batch size was maximized with respect to the available GPU memory to ensure maximum throughput of the graphics card.
For $\EvalSlide$, the time required for stuffing, fragmentation, defragmentation and trimming was included in the measurements.
Here, splitting into batches was not necessary since redundancies in overlapping patches are avoided and the memory demands were therefore very low.
Any measured speedups are hence biased in favor of $\EvalStride$ as here only computations but no overhead in organizing data structures were considered.

\begin{figure}[t]
  \centering
  \includegraphics[page=9]{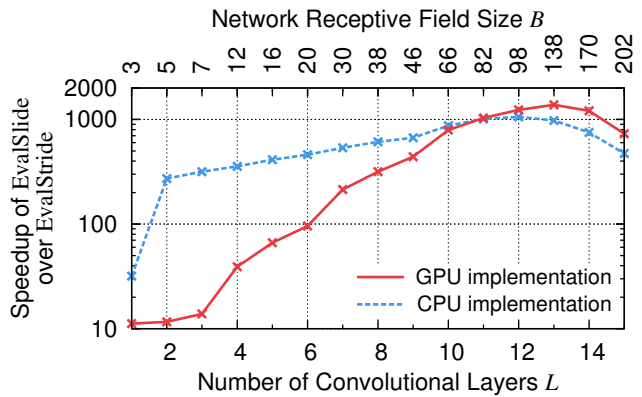}
  \caption{Speedup factors of $\EvalSlide$ over $\EvalStride$ as measured on GPU and on CPU in dependence on the number of convolutional layers $L$ in the networks.
    The second axis on top depicts the resulting receptive field size $B$ of the entire network in either spatial dimension.
    Note that these values are non-linear in $L$ since pooling layers were inserted at regular intervals in the networks.
    The measurements indicate that especially for deep networks, huge speedups can be obtained if images are processed in their entirety.}
  \label{fig:eval-withpool}
\end{figure}

The achieved speedups both for GPU and CPU of $\EvalSlide$ over $\EvalStride$ in dependence on the number of convolutional layers $L$ is depicted in Fig.~\ref{fig:eval-withpool}.
Although the input images were rather small, a notable speedup could be determined even for very shallow networks.
Even for $L = 1$, significant speedups could be measured although here the theoretical number of function evaluations was equal for both operators.
However, there was still a huge memory redundancy in the tensor storing all the patches necessary for $\EvalStride$, which was disadvantageous in terms of memory throughput.
While the CPU implementation achieved speedup factors beyond one hundred for $L\geq 2$, the GPU implementation required deeper networks with $L\geq 7$ for a similar speedup.
This is because of the GPU's superior parallelization capabilities which facilitated an overproportional throughput of the $\EvalStride$ operator where the patches could be processed in parallel.

A peak in the speedup for $L = 13$ and $L = 12$ for GPU and CPU, respectively, was noted.
If deeper networks were used, the relative speedup decreased but remained at a high level.
The reason for this was the large receptive field size $B$ of such deep networks:
Since the patch size almost matched the image dimensions, there were only relatively few patches compared to the situation of a smaller receptive field size.
As predicted by the theoretical results from Sect.~\ref{sect:computational_complexity}, the degree of redundancy of $\EvalStride$ decreases in this case resulting in a decreased relative speedup.

Finally, the execution times of $\EvalSlide$ using the GPU implementation versus the CPU implementation were compared.
Averaging over $L\inint{1}{15}$ yielded a relative speedup of the GPU implementation of factor~$89$ over the CPU implementation, demonstrating the massive parallelization potential.

\section{Conclusions}
\label{sect:conclusions}
This paper introduced and analyzed the concept of subsignal compatible transformations, functions that allow exchanging subsignal extraction with function evaluation without any effect on the outcome.
In doing so, it was demonstrated how CNNs can be applied efficiently without accuracy loss to large signals using a sliding window approach and homogeneous data structures while eliminating redundant computations and special case treatment.
A theoretical analysis has proven the computational complexity of processing an input signal in its entirety is inferior to subsignal-based application, which was subsequently verified through numerical experiments.
All theoretical results have been proven rigorously, mathematically demonstrating the exactness of the proposed approach.
The theoretical framework developed in this paper facilitates further research for gaining deeper insight into related methods for dense signal scanning.

\bibliographystyle{IEEEtran}
\bibliography{IEEEabrv,the}

\begin{thebibliography}{1}
\providecommand{\url}[1]{#1}
\csname url@samestyle\endcsname
\providecommand{\newblock}{\relax}
\providecommand{\bibinfo}[2]{#2}
\providecommand{\BIBentrySTDinterwordspacing}{\spaceskip=0pt\relax}
\providecommand{\BIBentryALTinterwordstretchfactor}{4}
\providecommand{\BIBentryALTinterwordspacing}{\spaceskip=\fontdimen2\font plus
\BIBentryALTinterwordstretchfactor\fontdimen3\font minus
  \fontdimen4\font\relax}
\providecommand{\BIBforeignlanguage}[2]{{%
\expandafter\ifx\csname l@#1\endcsname\relax
\typeout{** WARNING: IEEEtran.bst: No hyphenation pattern has been}%
\typeout{** loaded for the language `#1'. Using the pattern for}%
\typeout{** the default language instead.}%
\else
\language=\csname l@#1\endcsname
\fi
#2}}
\providecommand{\BIBdecl}{\relax}
\BIBdecl

\bibitem{Li2014dil}
H.~Li, R.~Zhao, and X.~Wang, ``{H}ighly {E}fficient {F}orward and {B}ackward
  {P}ropagation of {C}onvolutional {N}eural {N}etworks for {P}ixelwise
  {C}lassification,'' Tech. Rep. arXiv:1412.4526, 2014.

\bibitem{Long2015dil}
J.~Long, E.~Shelhamer, and T.~Darrell, ``{F}ully {C}onvolutional {N}etworks for
  {S}emantic {S}egmentation,'' in \emph{Proceedings of the IEEE Conference on
  Computer Vision and Pattern Recognition}, 2015, pp. 3431--3440.

\bibitem{Tschopp2015dil}
F.~Tschopp, ``{E}fficient {C}onvolutional {N}eural {N}etworks for {P}ixelwise
  {C}lassification on {H}eterogeneous {H}ardware {S}ystems,'' ETH Z\"urich,
  Tech. Rep. arXiv:1509.03371, 2015.

\bibitem{Yu2016dil}
F.~Yu and V.~Koltun, ``{M}ulti-{S}cale {C}ontext {A}ggregation by {D}ilated
  {C}onvolutions,'' in \emph{Proceedings of the International Conference on
  Learning Representations}.\hskip 1em plus 0.5em minus 0.4em\relax
  arXiv:1511.07122, 2016.

\bibitem{Chen2016dil}
L.-C. Chen, G.~Papandreou, I.~Kokkinos, K.~Murphy, and A.~L. Yuille,
  ``{D}eep{L}ab: {S}emantic {I}mage {S}egmentation with {D}eep {C}onvolutional
  {N}ets, {A}trous {C}onvolution, and {F}ully {C}onnected {CRF}s,'' Tech. Rep.
  arXiv:1606.00915, 2016.

\bibitem{Zlateski2016dil}
A.~Zlateski, K.~Lee, and H.~S. Seung, ``{ZNN}$i$ -- {M}aximizing the
  {I}nference {T}hroughput of 3{D} {C}onvolutional {N}etworks on {M}ulti-{C}ore
  {CPU}s and {GPU},'' Tech. Rep. arXiv:1606.05688, 2016.

\end{thebibliography}


\begin{thebibliography}{1}
\providecommand{\url}[1]{#1}
\csname url@samestyle\endcsname
\providecommand{\newblock}{\relax}
\providecommand{\bibinfo}[2]{#2}
\providecommand{\BIBentrySTDinterwordspacing}{\spaceskip=0pt\relax}
\providecommand{\BIBentryALTinterwordstretchfactor}{4}
\providecommand{\BIBentryALTinterwordspacing}{\spaceskip=\fontdimen2\font plus
\BIBentryALTinterwordstretchfactor\fontdimen3\font minus
  \fontdimen4\font\relax}
\providecommand{\BIBforeignlanguage}[2]{{%
\expandafter\ifx\csname l@#1\endcsname\relax
\typeout{** WARNING: IEEEtran.bst: No hyphenation pattern has been}%
\typeout{** loaded for the language `#1'. Using the pattern for}%
\typeout{** the default language instead.}%
\else
\language=\csname l@#1\endcsname
\fi
#2}}
\providecommand{\BIBdecl}{\relax}
\BIBdecl

\bibitem{Farabet2013ms}
C.~Farabet, C.~Couprie, L.~Najman, and Y.~LeCun, ``{L}earning {H}ierarchical
  {F}eatures for {S}cene {L}abeling,'' \emph{IEEE Transactions on Pattern
  Analysis and Machine Intelligence}, vol.~35, no.~8, pp. 1915--1929, 2013.

\end{thebibliography}


\begin{thebibliography}{10}
\providecommand{\url}[1]{#1}
\csname url@samestyle\endcsname
\providecommand{\newblock}{\relax}
\providecommand{\bibinfo}[2]{#2}
\providecommand{\BIBentrySTDinterwordspacing}{\spaceskip=0pt\relax}
\providecommand{\BIBentryALTinterwordstretchfactor}{4}
\providecommand{\BIBentryALTinterwordspacing}{\spaceskip=\fontdimen2\font plus
\BIBentryALTinterwordstretchfactor\fontdimen3\font minus
  \fontdimen4\font\relax}
\providecommand{\BIBforeignlanguage}[2]{{%
\expandafter\ifx\csname l@#1\endcsname\relax
\typeout{** WARNING: IEEEtran.bst: No hyphenation pattern has been}%
\typeout{** loaded for the language `#1'. Using the pattern for}%
\typeout{** the default language instead.}%
\else
\language=\csname l@#1\endcsname
\fi
#2}}
\providecommand{\BIBdecl}{\relax}
\BIBdecl

\bibitem{Hubel1962}
D.~H. Hubel and T.~N. Wiesel, ``{R}eceptive {F}ields, {B}inocular {I}nteraction
  and {F}unctional {A}rchitecture in the {C}at's {V}isual {C}ortex,''
  \emph{Journal of Physiology}, vol. 160, no.~1, pp. 106--154, 1962.

\bibitem{Fukushima1980}
K.~Fukushima, ``{N}eocognitron: {A} {S}elf-{O}rganizing {N}eural {N}etwork
  {M}odel for a {M}echanism of {P}attern {R}ecognition {U}naffected by {S}hift
  in {P}osition,'' \emph{Biological Cybernetics}, vol.~36, no.~4, pp. 193--202,
  1980.

\bibitem{LeCun1990a}
Y.~LeCun, B.~Boser, J.~S. Denker, D.~Henderson, R.~E. Howard, W.~Hubbard, and
  L.~D. Jackel, ``{H}andwritten {D}igit {R}ecognition with a
  {B}ack-{P}ropagation {N}etwork,'' in \emph{Advances in Neural Information
  Processing Systems}, vol.~2, 1990, pp. 396--404.

\bibitem{LeCun1998}
Y.~LeCun, L.~Bottou, Y.~Bengio, and P.~Haffner, ``{G}radient-{B}ased {L}earning
  {A}pplied to {D}ocument {R}ecognition,'' \emph{Proceedings of the IEEE},
  vol.~86, no.~11, pp. 2278--2324, 1998.

\bibitem{Jain2009}
V.~Jain and H.~S. Seung, ``{N}atural {I}mage {D}enoising with {C}onvolutional
  {N}etworks,'' in \emph{Advances in Neural Information Processing Systems},
  vol.~21, 2009, pp. 769--776.

\bibitem{Xu2015}
L.~Xu, J.~S. Ren, C.~Liu, and J.~Jia, ``{D}eep {C}onvolutional {N}eural
  {N}etwork for {I}mage {D}econvolution,'' in \emph{Advances in Neural
  Information Processing Systems}, vol.~27, 2015, pp. 1790--1798.

\bibitem{Dong2016}
C.~Dong, C.~C. Loy, K.~He, and X.~Tang, ``{I}mage {S}uper-{R}esolution {U}sing
  {D}eep {C}onvolutional {N}etworks,'' \emph{IEEE Transactions on Pattern
  Analysis and Machine Intelligence}, vol.~38, no.~2, pp. 295--307, 2016.

\bibitem{Ciresan2012a}
D.~C. Cire\c{s}an, U.~Meier, and J.~Schmidhuber, ``{M}ulti-{C}olumn {D}eep
  {N}eural {N}etworks for {I}mage {C}lassification,'' in \emph{Proceedings of
  the IEEE Conference on Computer Vision and Pattern Recognition}, 2012, pp.
  3642--3649.

\bibitem{Krizhevsky2013}
A.~Krizhevsky, I.~Sutskever, and G.~E. Hinton, ``{I}mage{N}et {C}lassification
  with {D}eep {C}onvolutional {N}eural {N}etworks,'' in \emph{Advances in
  Neural Information Processing Systems}, vol.~25, 2013, pp. 1097--1105.

\bibitem{Szegedy2015}
C.~Szegedy, W.~Liu, Y.~Jia, P.~Sermanet, S.~Reed, D.~Anguelov, D.~Erhan,
  V.~Vanhoucke, and A.~Rabinovich, ``{G}oing {D}eeper with {C}onvolutions,'' in
  \emph{Proceedings of the IEEE Conference on Computer Vision and Pattern
  Recognition}, 2015, pp. 1--9.

\bibitem{Ning2005}
F.~Ning, D.~Delhomme, Y.~LeCun, F.~Piano, L.~Bottou, and P.~E. Barbano,
  ``{T}oward {A}utomatic {P}henotyping of {D}eveloping {E}mbryos from
  {V}ideos,'' \emph{IEEE Transactions on Image Processing}, vol.~14, no.~9, pp.
  1360--1371, 2005.

\bibitem{Grangier2009}
D.~Grangier, L.~Bottou, and R.~Collobert, ``{D}eep {C}onvolutional {N}etworks
  for {S}cene {P}arsing,'' in \emph{International Conference on Machine
  Learning, Workshop on Learning Feature Hierarchies}, 2009.

\bibitem{Farabet2013}
C.~Farabet, C.~Couprie, L.~Najman, and Y.~LeCun, ``{L}earning {H}ierarchical
  {F}eatures for {S}cene {L}abeling,'' \emph{IEEE Transactions on Pattern
  Analysis and Machine Intelligence}, vol.~35, no.~8, pp. 1915--1929, 2013.

\bibitem{Giusti2013}
A.~Giusti, D.~C. Cire\c{s}an, J.~Masci, L.~M. Gambardella, and J.~Schmidhuber,
  ``{F}ast {I}mage {S}canning with {D}eep {M}ax-{P}ooling {C}onvolutional
  {N}eural {N}etworks,'' in \emph{IEEE International Conference on Image
  Processing}, 2013, pp. 4034--4038.

\bibitem{Thong2016}
W.~Thong, S.~Kadoury, N.~Pich\'e, and C.~J. Pal, ``{C}onvolutional {N}etworks
  for {K}idney {S}egmentation in {C}ontrast-{E}nhanced {CT} {S}cans,''
  \emph{Computer Methods in Biomechanics and Biomedical Engineering: Imaging \&
  Visualization}, 2016.

\bibitem{Nuss2014}
D.~Nuss, M.~Thom, A.~Danzer, and K.~Dietmayer, ``{F}usion of {L}aser and
  {M}onocular {C}amera {D}ata in {O}bject {G}rid {M}aps for {V}ehicle
  {E}nvironment {P}erception,'' in \emph{Proceedings of the International
  Conference on Information Fusion}, 2014.

\bibitem{Vaillant1993}
R.~Vaillant, C.~Monrocq, and Y.~LeCun, ``{A}n {O}riginal {A}pproach for the
  {L}ocalization of {O}bjects in {I}mages,'' in \emph{Proceedings of the
  International Conference on Artificial Neural Networks}, 1993, pp. 26--30.

\bibitem{Li2014}
H.~Li, R.~Zhao, and X.~Wang, ``{H}ighly {E}fficient {F}orward and {B}ackward
  {P}ropagation of {C}onvolutional {N}eural {N}etworks for {P}ixelwise
  {C}lassification,'' Tech. Rep. arXiv:1412.4526, 2014.

\bibitem{Sermanet2014}
P.~Sermanet, D.~Eigen, X.~Zhang, M.~Mathieu, R.~Fergus, and Y.~LeCun,
  ``{O}ver{F}eat: {I}ntegrated {R}ecognition, {L}ocalization and {D}etection
  using {C}onvolutional {N}etworks,'' in \emph{Proceedings of the International
  Conference on Learning Representations}.\hskip 1em plus 0.5em minus
  0.4em\relax arXiv:1312.6229, 2014.

\bibitem{Bishop1995}
C.~M. Bishop, \emph{{N}eural {N}etworks for {P}attern {R}ecognition}.\hskip 1em
  plus 0.5em minus 0.4em\relax Clarendon Press, 1995.

\bibitem{Neudecker1969}
H.~Neudecker, ``{S}ome {T}heorems on {M}atrix {D}ifferentiation with {S}pecial
  {R}eference to {K}ronecker {M}atrix {P}roducts,'' \emph{Journal of the
  American Statistical Association}, vol.~64, no. 327, pp. 953--963, 1969.

\bibitem{Chetlur2014}
S.~Chetlur, C.~Woolley, P.~Vandermersch, J.~Cohen, J.~Tran, B.~Catanzaro, and
  E.~Shelhamer, ``cu{DNN}: {E}fficient {P}rimitives for {D}eep {L}earning,'' in
  \emph{Advances in Neural Information Processing Systems, Deep Learning and
  Representation Learning Workshop}.\hskip 1em plus 0.5em minus 0.4em\relax
  arXiv:1410.0759, 2014.

\bibitem{Simonyan2015}
K.~Simonyan and A.~Zisserman, ``{V}ery {D}eep {C}onvolutional {N}etworks for
  {L}arge-{S}cale {I}mage {R}ecognition,'' in \emph{Proceedings of the
  International Conference on Learning Representations}.\hskip 1em plus 0.5em
  minus 0.4em\relax arXiv:1409.1556, 2015.

\bibitem{Sanger1989a}
T.~D. Sanger, ``{O}ptimal {U}nsupervised {L}earning in {F}eedforward {N}eural
  {N}etworks,'' Master's thesis, Massachusetts Institute of Technology, 1989.

\bibitem{Bottou2004}
L.~Bottou and Y.~LeCun, ``{L}arge {S}cale {O}nline {L}earning,'' in
  \emph{Advances in Neural Information Processing Systems}, vol.~16, 2004, pp.
  217--224.

\bibitem{Thom2016}
M.~Thom and F.~Gritschneder, ``{R}apid {E}xact {S}ignal {S}canning with {D}eep
  {C}onvolutional {N}eural {N}etworks,'' Tech. Rep. arXiv:1508.06904, 2016.

\bibitem{Taylor2007}
S.~Taylor, \emph{{O}ptimizing {A}pplications for {M}ulti-{C}ore {P}rocessors,
  {U}sing the {I}ntel {I}ntegrated {P}erformance {P}rimitives}, 2nd~ed.\hskip
  1em plus 0.5em minus 0.4em\relax Intel Press, 2007.

\bibitem{Dagum1998}
L.~Dagum and R.~Menon, ``{O}pen{MP}: {A}n {I}ndustry-{S}tandard {API} for
  {S}hared-{M}emory {P}rogramming,'' \emph{IEEE Computational Science \&
  Engineering}, vol.~5, no.~1, pp. 46--55, 1998.

\end{thebibliography}


\begin{thebibliography}{1}
\providecommand{\url}[1]{#1}
\csname url@samestyle\endcsname
\providecommand{\newblock}{\relax}
\providecommand{\bibinfo}[2]{#2}
\providecommand{\BIBentrySTDinterwordspacing}{\spaceskip=0pt\relax}
\providecommand{\BIBentryALTinterwordstretchfactor}{4}
\providecommand{\BIBentryALTinterwordspacing}{\spaceskip=\fontdimen2\font plus
\BIBentryALTinterwordstretchfactor\fontdimen3\font minus
  \fontdimen4\font\relax}
\providecommand{\BIBforeignlanguage}[2]{{%
\expandafter\ifx\csname l@#1\endcsname\relax
\typeout{** WARNING: IEEEtran.bst: No hyphenation pattern has been}%
\typeout{** loaded for the language `#1'. Using the pattern for}%
\typeout{** the default language instead.}%
\else
\language=\csname l@#1\endcsname
\fi
#2}}
\providecommand{\BIBdecl}{\relax}
\BIBdecl

\bibitem{Sermanet2014rlx}
P.~Sermanet, D.~Eigen, X.~Zhang, M.~Mathieu, R.~Fergus, and Y.~LeCun,
  ``{O}ver{F}eat: {I}ntegrated {R}ecognition, {L}ocalization and {D}etection
  using {C}onvolutional {N}etworks,'' in \emph{Proceedings of the International
  Conference on Learning Representations}.\hskip 1em plus 0.5em minus
  0.4em\relax arXiv:1312.6229, 2014.

\bibitem{Long2015rlx}
J.~Long, E.~Shelhamer, and T.~Darrell, ``{F}ully {C}onvolutional {N}etworks for
  {S}emantic {S}egmentation,'' in \emph{Proceedings of the IEEE Conference on
  Computer Vision and Pattern Recognition}, 2015, pp. 3431--3440.

\end{thebibliography}


\begin{thebibliography}{1}
\providecommand{\url}[1]{#1}
\csname url@samestyle\endcsname
\providecommand{\newblock}{\relax}
\providecommand{\bibinfo}[2]{#2}
\providecommand{\BIBentrySTDinterwordspacing}{\spaceskip=0pt\relax}
\providecommand{\BIBentryALTinterwordstretchfactor}{4}
\providecommand{\BIBentryALTinterwordspacing}{\spaceskip=\fontdimen2\font plus
\BIBentryALTinterwordstretchfactor\fontdimen3\font minus
  \fontdimen4\font\relax}
\providecommand{\BIBforeignlanguage}[2]{{%
\expandafter\ifx\csname l@#1\endcsname\relax
\typeout{** WARNING: IEEEtran.bst: No hyphenation pattern has been}%
\typeout{** loaded for the language `#1'. Using the pattern for}%
\typeout{** the default language instead.}%
\else
\language=\csname l@#1\endcsname
\fi
#2}}
\providecommand{\BIBdecl}{\relax}
\BIBdecl

\bibitem{Chen2016rlxsld}
L.-C. Chen, G.~Papandreou, I.~Kokkinos, K.~Murphy, and A.~L. Yuille,
  ``{D}eep{L}ab: {S}emantic {I}mage {S}egmentation with {D}eep {C}onvolutional
  {N}ets, {A}trous {C}onvolution, and {F}ully {C}onnected {CRF}s,'' Tech. Rep.
  arXiv:1606.00915, 2016.

\end{thebibliography}


\begin{thebibliography}{1}
\providecommand{\url}[1]{#1}
\csname url@samestyle\endcsname
\providecommand{\newblock}{\relax}
\providecommand{\bibinfo}[2]{#2}
\providecommand{\BIBentrySTDinterwordspacing}{\spaceskip=0pt\relax}
\providecommand{\BIBentryALTinterwordstretchfactor}{4}
\providecommand{\BIBentryALTinterwordspacing}{\spaceskip=\fontdimen2\font plus
\BIBentryALTinterwordstretchfactor\fontdimen3\font minus
  \fontdimen4\font\relax}
\providecommand{\BIBforeignlanguage}[2]{{%
\expandafter\ifx\csname l@#1\endcsname\relax
\typeout{** WARNING: IEEEtran.bst: No hyphenation pattern has been}%
\typeout{** loaded for the language `#1'. Using the pattern for}%
\typeout{** the default language instead.}%
\else
\language=\csname l@#1\endcsname
\fi
#2}}
\providecommand{\BIBdecl}{\relax}
\BIBdecl

\bibitem{Long2015trconv}
J.~Long, E.~Shelhamer, and T.~Darrell, ``{F}ully {C}onvolutional {N}etworks for
  {S}emantic {S}egmentation,'' in \emph{Proceedings of the IEEE Conference on
  Computer Vision and Pattern Recognition}, 2015, pp. 3431--3440.

\bibitem{Wang2017trconv}
P.~Wang, P.~Chen, Y.~Yuan, D.~Liu, Z.~Huang, X.~Hou, and G.~Cottrell,
  ``{U}nderstanding {C}onvolution for {S}emantic {S}egmentation,'' Tech. Rep.
  arXiv:1702.08502, 2017.

\end{thebibliography}

\clearpage%
\onecolumn%
\makeatletter
\def\normalsize{\@setfontsize{\normalsize}{12}{13.92pt}}
\setlength{\@IEEEnormalsizeunitybaselineskip}{13.92pt}
\normalsize
\abovedisplayskip 1.5ex plus 6pt minus 4pt
\belowdisplayskip \abovedisplayskip
\abovedisplayshortskip 0pt plus 6pt
\belowdisplayshortskip 1.5ex plus 6pt minus 4pt
\def\small{\@setfontsize{\small}{10}{12pt}}
\def\footnotesize{\@setfontsize{\footnotesize}{9}{10.5pt}}
\def\scriptsize{\@setfontsize{\scriptsize}{8}{9pt}}
\def\tiny{\@setfontsize{\tiny}{6}{7pt}}
\def\sublargesize{\@setfontsize{\sublargesize}{14}{17pt}}
\def\large{\@setfontsize{\large}{14}{17pt}}
\def\Large{\@setfontsize{\Large}{17}{20pt}}
\def\LARGE{\@setfontsize{\LARGE}{20}{24pt}}
\def\huge{\@setfontsize{\huge}{22}{26pt}}
\def\Huge{\@setfontsize{\Huge}{24}{28pt}}
\makeatother
\appendix[Addendum to Subsignal Compatible Transformation Theory]
The expression from Lemma~\ref{lem:defrag-ops} regarding the source of the samples of the defragmentation operator can be simplified:
\begin{lemma}
\label{lem:defrag-ops-simplified}
Let $M$ be a set, $k,q,s\in\N_1$ and $\xi\in M^{q\times ks}$.
Then for all $\mu\inint{1}{kq}$, $\nu\inint{1}{s}$ it is $\Defragmentation_k(\xi)_{\mu,\;\nu} = \xi_{\div{\mu - 1}{k} + 1,\;\rem{\mu - 1}{k}\cdot s + \nu}$.
\end{lemma}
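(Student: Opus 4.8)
The plan is to start from the expression for the defragmentation operator already derived in Lemma~\ref{lem:defrag-ops}, namely
\[
  \Defragmentation_k(\xi)_{\mu,\;\nu} = \xi_{\div{(\mu-1)s + \nu - 1}{ks} + 1,\;\rem{(\mu-1)s + \nu - 1}{ks} + 1}\text{,}
\]
and to verify that its two index arguments coincide with the claimed ones. Thus it suffices to establish the two scalar identities $\div{(\mu-1)s + \nu - 1}{ks} = \div{\mu-1}{k}$ and $\rem{(\mu-1)s + \nu - 1}{ks} = \rem{\mu-1}{k}\cdot s + \nu - 1$; adding one to each index and substituting back then immediately yields the asserted form.

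First I would apply Euclidean division of $\mu-1$ by $k$, writing $\mu-1 = \div{\mu-1}{k}\cdot k + \rem{\mu-1}{k}$ with $\rem{\mu-1}{k}\inint{0}{k-1}$. Substituting this into the argument $(\mu-1)s + \nu - 1$ and isolating the $ks$-multiple gives
\[
  (\mu-1)s + \nu - 1 = \div{\mu-1}{k}\cdot ks + \bigl(\rem{\mu-1}{k}\cdot s + \nu - 1\bigr)\text{.}
\]

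The key step, and the only point requiring genuine care, is to confirm that the parenthesized term is a legitimate remainder modulo $ks$, that is $\rem{\mu-1}{k}\cdot s + \nu - 1\inint{0}{ks-1}$. The lower bound is trivial from $\rem{\mu-1}{k}\geq 0$ and $\nu\geq 1$. For the upper bound I would combine $\rem{\mu-1}{k}\leq k-1$ with the hypothesis $\nu\leq s$, obtaining $\rem{\mu-1}{k}\cdot s + \nu - 1 \leq (k-1)s + s - 1 = ks - 1$, which exactly reaches the boundary. This is precisely where the prescribed ranges of $\mu$ and $\nu$ are used, and I expect it to be the main (though modest) obstacle.

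Once the range is confirmed, invoking Proposition~\ref{prop:number-theory} with the roles $a := \rem{\mu-1}{k}\cdot s + \nu - 1$, $b := \div{\mu-1}{k}$ and $c := ks$ lets me read off both $\div{(\mu-1)s + \nu - 1}{ks} = \div{\mu-1}{k}$ and $\rem{(\mu-1)s + \nu - 1}{ks} = \rem{\mu-1}{k}\cdot s + \nu - 1$. Feeding these back into the expression from Lemma~\ref{lem:defrag-ops} and adding one to each index produces the simplified formula, completing the argument. Since the whole proof reduces to this single bookkeeping of Euclidean division, no substantive difficulty is anticipated beyond the boundary verification above.
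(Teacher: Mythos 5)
Your proposal is correct and takes essentially the same route as the paper's proof: both reduce the claim via Lemma~\ref{lem:defrag-ops} to the two identities $\div{(\mu - 1)s + \nu - 1}{ks} = \div{\mu - 1}{k}$ and $\rem{(\mu - 1)s + \nu - 1}{ks} = \rem{\mu - 1}{k}\cdot s + \nu - 1$, and both rest on the uniqueness of Euclidean division together with the same boundary bound $\rem{\mu - 1}{k}\cdot s + \nu - 1 \leq (k - 1)s + s - 1 = ks - 1$. The only cosmetic difference is that you exhibit the decomposition explicitly and invoke uniqueness (via Proposition~\ref{prop:number-theory} plus the immediate facts $\div{a}{ks} = 0$ and $\rem{a}{ks} = a$ for $a\inint{0}{ks - 1}$), whereas the paper subtracts the two Euclidean decompositions and argues that a multiple of $ks$ lying in $\discint{-(ks - 1)}{ks - 1}$ must vanish.
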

\begin{proof}
Let $\mu\inint{1}{kq}$ and $\nu\inint{1}{s}$ be arbitrary indices.
With Lemma~\ref{lem:defrag-ops} it is sufficient to show that $\div{ (\mu - 1)s + \nu - 1}{ks} = \div{\mu - 1}{k}$ and $\rem{ (\mu - 1)s + \nu - 1}{ks} = \rem{\mu - 1}{k}\cdot s + \nu - 1$.
By definition it holds that
\begin{displaymath}
  \mu - 1 = \div{\mu - 1}{k}\cdot k + \rem{\mu - 1}{k}
\end{displaymath}
and
\begin{displaymath}
  (\mu - 1)s + \nu - 1 = \div{ (\mu - 1)s + \nu - 1}{ks}\cdot ks + \rem{ (\mu - 1)s + \nu - 1}{ks}\text{.}
\end{displaymath}
Therefore,
\begin{align*}
  &\big( \div{ (\mu - 1)s + \nu - 1}{ks} - \div{\mu - 1}{k} \big)\cdot ks\\
  =\ & \big( (\mu - 1)s + \nu - 1 - \rem{ (\mu - 1)s + \nu - 1}{ks} \big) - \big( \mu - 1 - \rem{\mu - 1}{k} \big)\cdot s\\
  =\ & \rem{\mu - 1}{k}\cdot s + \nu - 1 - \rem{ (\mu - 1)s + \nu - 1}{ks}\text{.}
\end{align*}
The left-hand side of this equation is of the form $z\cdot ks$ for an integer $z\in\Z$.
The right-hand side is from the discrete interval $\discint{-(ks - 1)}{ks - 1}$ as can be seen from substituting the extreme values of $\nu - 1$ and the $\operatorname{rem}$ operator.
Hence this equation can only be satisfied if both sides vanish.
This yields the claimed identities.
\end{proof}

Here the proof to Remark~\ref{rem:comp-frag}, which was omitted in the main part of this paper due to space constraints:
\begin{proof}[Remark~\ref{rem:comp-frag}]
Let $\xi\in M^{k_1k_2q\times s}$ be a fragmented signal.
Define $A := \Fragmentation_{k_1}(\xi)\in M^{k_2q\times k_1 s}$, $B := \Fragmentation_{k_2}(A)\in M^{q\times k_1k_2s}$, and $C := \Fragmentation_{k_1k_2}(\xi)\in M^{q\times k_1k_2s}$.
Since $B$ and $C$ are of equal size, it is enough to show entry-wise equivalence.
Let $\mu\inint{1}{q}$ and $\nu\inint{1}{k_1k_2s}$.
With Lemma~\ref{lem:frag-ops} follows that $C_{\mu,\;\nu} = \xi_{\mu_C,\;\nu_C}$ and $B_{\mu,\;\nu} = A_{\mu_B,\;\nu_B} = \xi_{\mu_A,\;\nu_A}$ using the indices
\begin{align*}
  \mu_C &:= \div{ (\mu - 1)k_1k_2s + \nu - 1}{s} + 1\text{,}\\
  \nu_C &:= \rem{ (\mu - 1)k_1k_2s + \nu - 1}{s} + 1\text{,}\\
  \mu_B &:= \div{ (\mu - 1)k_1k_2s + \nu - 1}{k_1s} + 1\text{,}\\
  \nu_B &:= \rem{ (\mu - 1)k_1k_2s + \nu - 1}{k_1s} + 1\text{,}\\
  \mu_A &:= \div{ (\mu_B - 1)k_1s + \nu_B - 1}{s} + 1\text{,}\\
  \nu_A &:= \rem{ (\mu_B - 1)k_1s + \nu_B - 1}{s} + 1\text{.}
\end{align*}
Therefore, it only remains to be shown that $\mu_C = \mu_A$ and $\nu_C = \nu_A$.
It is
\begin{align*}
  \mu_A = \div{\phantom{+\ }   \operatorname{div}&((\mu - 1)k_1k_2s + \nu - 1,\ k_1s)\cdot k_1s\\
          + \operatorname{rem}&((\mu - 1)k_1k_2s + \nu - 1,\ k_1s)}{\ s} + 1\text{,}
\end{align*}
which equals $\mu_C$ since $\div{a}{b}\cdot b + \rem{a}{b} = a$ for all $a,b$.
Completely analogous follows $\nu_A = \nu_C$.
\end{proof}

\clearpage
\appendix[Multi-Scale Transformations]
\label{sect:multi_scale_transformations}
The main part of this paper has shown how CNNs can be efficiently evaluated on entire images through the theory of subsignal compatible transformations.
Now, functions that take multiple spatial resolutions of a single signal as input are considered.
Since the context of local regions is incorporated in addition here, this approach has proven highly effective in classification tasks~\citems{Farabet2013ms}.
Here it is assumed that the number of samples considered at any one time is fixed for all scale levels.
This facilitates the design of scale-invariant representations, for example by using the same classifier for all scales of the input~\citems{Farabet2013ms}.
The analysis here, however, is not restricted to the situation where the same classifier should be used for all scales.
Instead, the analysis is conducted directly for different functions which are applied to each scale.

\subsection{Multi-Scale Subsignals and Their Emergence in Downscaled Signals}
\label{sect:multi_scale_fund}
A signal is downscaled by application of a lowpass filter to reduce aliasing artifacts, followed by a downsampling operator which returns a subset of equidistant samples.
When a subsignal is extracted from a downscaled input signal, it should contain a downscaled copy of the corresponding subsignal from the original input signal.
This requires boundary-handling of the input signal, since for example the very first subsignal cannot be extended to allow for a larger context by means of only the original samples.

In the following, let $\Z$ denote the integers and let $\nceil{\cdot}$ denote the ceiling function that rounds up its argument to the next larger natural number.
First, the concepts of boundary handling and subsignal extraction subject to boundary handling are formalized:

\begin{definition}
\label{def:subsignalpad}
\label{def:padding}
Let $M$ be a set, let $d\in\N_1$ denote a subsignal dimensionality and let $R\in\N$ be a boundary size.
\begin{enumerate}
  \item A function $\vartheta\colon\cup_1(M)\times\Z\to M$ is called \emph{boundary-handling function} if and only if $\vartheta(\xi,\;\nu) = \xi_\nu$ for all $\xi\in\cup_1(M)$ and all $\nu\inint{1}{\dim_M(\xi)}$.
  \item The function $\PaddingParams\colon\cup_1(M)\to \cup_{1 + 2R}(M)$,
    \begin{displaymath}
      \xi\mapsto\sum\nolimits_{\nu = 1}^{\dim_M(\xi) + 2R} \vartheta(\xi,\;\nu - R)\cdot e_\nu^{\dim_M(\xi) + 2R}\text{,}
    \end{displaymath}
    which extends signals at both ends with $R$ samples subject to the boundary-handling function $\vartheta$ is called the \emph{padding operator}.
  \item The function $\SubsignalPadParams\colon\bigcup\nolimits_{D = d}^\infty\big(M^D\times\discint{1}{D - d + 1}\big)\to M^{d + 2R}$,
    \begin{displaymath}
      (\xi,\; i)\mapsto\sum\nolimits_{\nu = 1}^{d + 2R}\vartheta(\xi,\;i + \nu - R - 1)\cdot e_\nu^{d + 2R}\text{,}
    \end{displaymath}
    which extracts subsignals subject to the boundary-handling function $\vartheta$ and implicitly pads $R$ samples at both ends is called the \emph{padded subsignal extraction operator}.
\end{enumerate}
\end{definition}

The definition of a boundary-handling function here leaves open which concrete values should be returned for access outside of the original signal, allowing for great flexibility.
For example, the functions
\begin{displaymath}
  \vartheta_{\Dirichlet}(\xi,\;\nu) := \begin{cases}
    \xi_\nu\text{,} & \text{if }\nu\inint{1}{\dim_M(\xi)}\text{,}\\
    0\text{,} & \text{otherwise,}
  \end{cases}
  \text{ and }
  \vartheta_{\Neumann}(\xi,\;\nu) := \begin{cases}
    \xi_1\text{,} & \text{if }\nu < 1\text{,}\\
    \xi_\nu\text{,} & \text{if }\nu\inint{1}{\dim_M(\xi)}\text{,}\\
    \xi_{\dim_M(\xi)}\text{,} & \text{if }\nu > \dim_M(\xi)\text{,}
  \end{cases}
\end{displaymath}
realize Dirichlet (first-type) and Neumann (second-type) boundary conditions, respectively.

The padded subsignal extraction operator is a strict generalization of the subsignal extraction operator:
For $R = 0$, the second argument to the boundary-handling function fulfills $i + \nu - R - 1\inint{1}{D}$ for all $i\inint{1}{D - d + 1}$ and all $\nu\inint{1}{d + 2R}$ where $D := \dim_M(\xi)$ denotes the input signal dimensionality.
Therefore, in the case of $R = 0$ it holds that $\vartheta(\xi,\;i + \nu - R - 1) = \xi_{i + \nu - 1}$ by definition of a boundary-handling function, hence $\SubsignalPad_{(d,\; 0)}^{\vartheta} = \Subsignal_d$.

Before the presentation of theoretical results, extraction of downscaled subsignals using the concepts just introduced is defined:
\begin{definition}
\label{def:downsampling}
Let $M$ be a set and let $k\in\N_1$ denote a downsampling step size.
\begin{enumerate}
  \item Then the function $\Downsampling_k\colon\cup_1(M)\to \cup_1(M)$,
    \begin{displaymath}
      \xi\mapsto\sum\nolimits_{\nu = 1}^{\nceil{\dim_M(\xi) / k}} \xi_{k (\nu - 1) + 1}\cdot e_\nu^{\nceil{\dim_M(\xi) / k}}\text{,}
    \end{displaymath}
    which extracts samples from equidistant locations is called the \emph{downsampling operator}.
  \item The function $\MultiScaleIndex_k\colon\N_1\to\N_1$,
    \begin{displaymath}
      i\mapsto k\cdot\div{i - 1}{k} + 1\text{,}
    \end{displaymath}
    is called the \emph{multi-scale subsignal index transformation}.
  \item Suppose $H\colon M^h\to M$ is a lowpass filter kernel of size $h\in\N_1$.
    Here, $h\geq k$ should hold to avoid aliasing artifacts.
    Further, let $d\in\N_1$ be a subsignal dimensionality, let $R\in\N$ be a boundary size, and let $\vartheta\colon\cup_1(M)\times\Z\to M$ be a boundary-handling function.
    Then the function $\MultiScaleSubsignalParams\colon\bigcup\nolimits_{D = d}^\infty\big(M^D\times\discint{1}{D - d + 1}\big)\to\cup_1(M)$,
    \begin{displaymath}
      (\xi,\; i)\mapsto\Downsampling_k(\Slide_H(\SubsignalPadParams(\xi,\;i)))\text{,}
    \end{displaymath}
    is called the \emph{multi-scale subsignal extraction operator}.
\end{enumerate}
\end{definition}

The downsampling operator is well-defined because $k (\nu - 1) + 1\inint{1}{\dim_M(\xi)}$ holds for all $\nu\inint{1}{\nceil{\dim_M(\xi) / k}}$ as can be seen from a case-by-case analysis, depending on whether $k$ divides $\dim_M(\xi)$ or not.
The other defined functions are clearly well-defined.

There are a few requirements so that extraction of downscaled subsignals makes sense.
Most important is here the correct determination of the boundary size $R$ in the definition of the $\MultiScaleSubsignal$ operator.
It should be chosen so that the extracted subsignals from each scale level are always centered exactly around the corresponding subsignals from the original scale level.
It is moreover beneficial if the entire input signal can be downscaled in its entirety using only one operation, so that the output of the $\MultiScaleSubsignal$ operator equals simple extraction of subsignals from that downscaled signal.

However, if this approach is pursued there are subsignals in the original signal which do not possess a downscaled counterpart in this representation.
The $\MultiScaleIndex$ function alleviates this problem through computation of an appropriate subsignal index which is always guaranteed to possess a downscaled counterpart.
Although this is merely an approximation, it is assured that the correct subsignal index in the downscaled signal is always less than one sample off.
The next result formalizes these thoughts, an illustration of its statements is presented in Fig.~\ref{fig:multiscale-example} and Fig.~\ref{fig:multiscale-xp}.

\begin{figure*}[t]
  \centering
  \includegraphics[page=10]{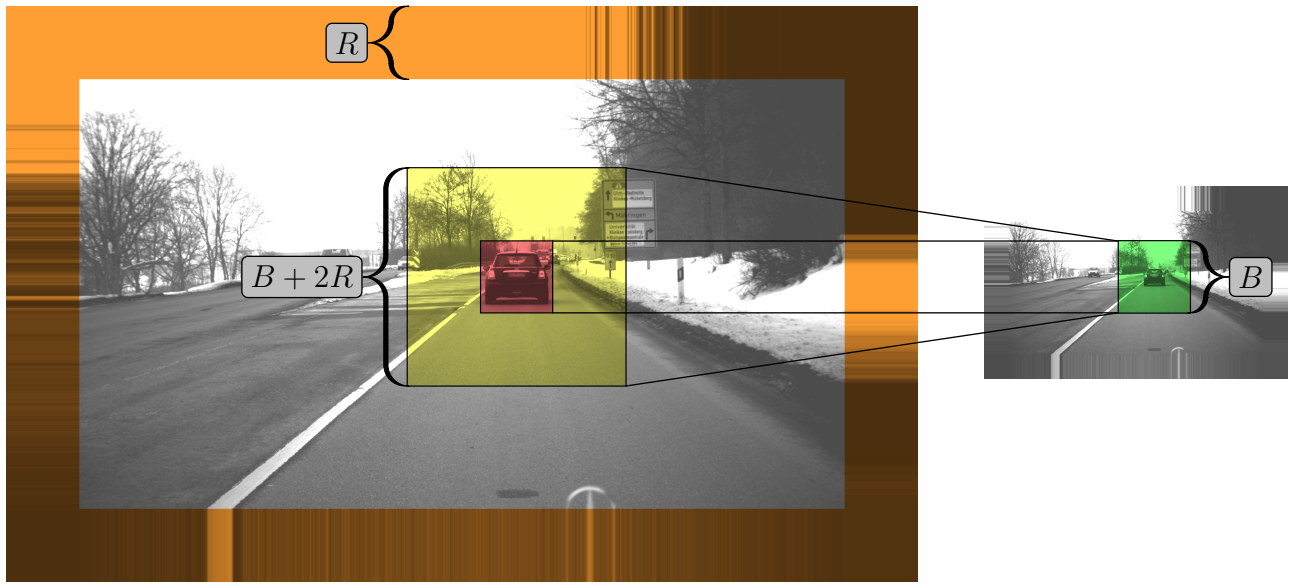}
  \caption{Illustration of the multi-scale analysis approach detailed in Lemma~\ref{lem:multiscale-props} for image processing as an example.
    The original image (gray area on the left-hand side) is padded with $R$ pixels (orange area) using Neumann boundary conditions.
    Downscaling of the padded image with a scale factor of $k = 3$ yields the small image on the right-hand side.
    The red area in the original image is a region of interest with $B$ pixels in either dimension, it can be extracted conventionally using the $\Subsignal$ operator.
    The $\SubsignalPad$ operator extracts an extended region of interest with $B + 2R$ pixels in either dimension (yellow area), which provides more context than the original region of interest.
    If that extended region is downscaled, it is guaranteed by the choice of $R$ that the resulting signal possesses $B$ samples in either dimension.
    This downscaled, padded region is equivalent to application of the $\MultiScaleSubsignal$ operator.
    Its outcome can also be found in the downscaled, padded image (green area) if the indices are adjusted through the $\MultiScaleIndex$ function.
    Figure best viewed in color.}
  \label{fig:multiscale-example}
\end{figure*}

\begin{figure}[p]
  \centering
  \scalebox{1.09}{\includegraphics[page=11]{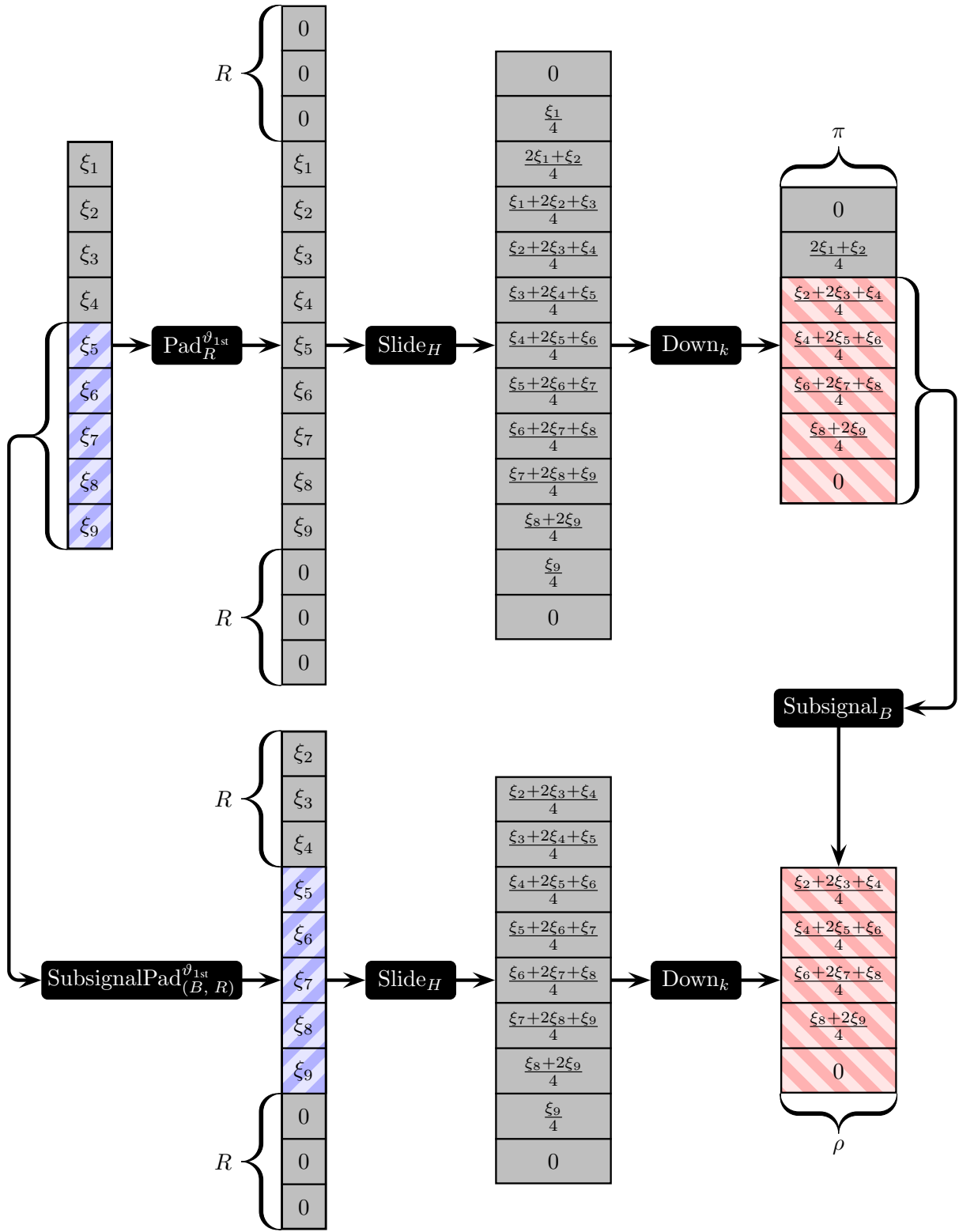}}
  \caption{Example for the properties of the multi-scale operators stated in Lemma~\ref{lem:multiscale-props}.
    Here, the downsampling factor $k = 2$, a binomial lowpass filter kernel $H$ with $h = 3$, that is $H(\sigma) = \frac{\sigma_1 + 2\sigma_2 + \sigma_3}{4}$, and a subsignal dimensionality of $B = 5$ are used.
    Therefore, $\tilde{R} = 6$ and $R = 3$ due to Lemma~\ref{lem:multiscale-props}.
    The upper part shows how an input signal $\xi$ with $D = 9$ samples is padded with $R$ samples at either end using Dirichlet boundary conditions $\vartheta_{\Dirichlet}$, lowpass-filtered with $H$ and then downsampled using the step size $k$.
    This yields the downscaled signal $\pi$.
    The lower part illustrates the process of extracting the padded subsignal with subsignal index $j = 5$, and then downscaling it yielding $\rho = \MultiScaleSubsignal_{(\ROI,\; R,\; k)}^{(\vartheta_{\Dirichlet},\; H)}(\xi,\;j)$.
    As stated in Lemma~\ref{lem:multiscale-props}, the padded subsignal is centered around the original subsignal, illustrated by a colored pattern in the graphics.
    Further, $\rho$ can be located as a subsignal in $\pi$, illustrated by a different colored pattern, as guaranteed by Lemma~\ref{lem:multiscale-props} since $j$ is a fixed point of $\MultiScaleIndex_k$.
    Note that for example the fourth subsignal from $\xi$ does not possess an exact downscaled correspondence in $\pi$, as predicted theoretically.}
  \label{fig:multiscale-xp}
\end{figure}

\begin{lemma}
\label{lem:multiscale-props}
Let $M$ be a set, and let $k\in\N$ be a downsampling step size where it is required that $k \geq 2$.
Moreover, let $H\colon M^h\to M$ be a lowpass filter kernel of size $h\in\N_1$, $h \geq k$, and let $\ROI\in\N_1$ be a subsignal dimensionality and suppose $\vartheta\colon\cup_1(M)\times\Z\to M$ is a boundary-handling function.
Define $\tilde{R} := (k - 1)\ROI + h - k\in\N$ and $R := \bceil{\tilde{R} / 2}\in\N$ as the boundary size.
Assume a signal $\xi\in\cup_\ROI(M)$ is given and write $D := \dim_M(\xi)$ as an abbreviation.
Finally, let $\pi := \Downsampling_k(\Slide_H(\PaddingParams(\xi)))\in\cup_1(M)$ denote the downscaled signal.
Then the following holds:
\begin{enumerate}\setlength{\itemsep}{.5ex}
  \item \label{lem:multiscale-props-a} $\Subsignal_\ROI(\xi,\;i)_\mu = \SubsignalPadROIParams(\xi,\;i)_{\mu + R}$ for all samples with index $\mu\inint{1}{\ROI}$ and all subsignals with index $i\inint{1}{D - \ROI + 1}$.
    In other words, the padded subsignals are centered around the original subsignals.
  \item \label{lem:multiscale-props-b} $\dim_M(\pi) = \bceil{\frac{D - \ROI + 1 + \rem{\tilde{R}}{2}}{k}} + \ROI - 1$, hence there are at least $\bceil{\frac{D - \ROI + 1}{k}}$ subsignals with $\ROI$ samples in $\pi$, and at most one additional subsignal.
  \item \label{lem:multiscale-props-c} Let $i\inint{1}{D - \ROI + 1}$ be a subsignal index and write $j := \MultiScaleIndex_k(i)$ as the result of the index transformation.
        Then $j\inint{1}{D - \ROI + 1}$, $j \leq i$ and $i - j < k$.
        The index adjustment hence decreases subsignal indices by at most $k - 1$ samples with respect to the original scale level.
  \item \label{lem:multiscale-props-d} It is
\begin{displaymath}
     \Subsignal_\ROI(\pi,\;\div{i - 1}{k} + 1)
  \ =\ \MultiScaleSubsignalROIParams(\xi,\;\MultiScaleIndex_k(i))
\end{displaymath} for all $i\inint{1}{D - \ROI + 1}$.
        In other words, the subsignals from $\pi$ equal downscaled subsignals from the original signal $\xi$ where the subsignal index was adjusted through $\MultiScaleIndex_k$.
\end{enumerate}
\end{lemma}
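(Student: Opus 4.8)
The plan is to dispatch parts~\ref{lem:multiscale-props-a} and~\ref{lem:multiscale-props-c} by direct computation, part~\ref{lem:multiscale-props-b} by a dimensionality count, and the substantive part~\ref{lem:multiscale-props-d} by reducing the multi-scale subsignal to a plain subsignal of the globally downscaled signal. For~\ref{lem:multiscale-props-a} I would unfold both sides: by Definition~\ref{def:subsignalpad} the right-hand side is $\vartheta(\xi,\;i+\mu-1)$, and since $i+\mu-1\inint{1}{D}$ for the stated ranges, the defining property of a boundary-handling function collapses this to $\xi_{i+\mu-1}=\Subsignal_\ROI(\xi,\;i)_\mu$. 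Part~\ref{lem:multiscale-props-c} is number theory: writing $i-1=\div{i-1}{k}\cdot k+\rem{i-1}{k}$ gives $\MultiScaleIndex_k(i)=i-\rem{i-1}{k}$, so $i-j=\rem{i-1}{k}\inint{0}{k-1}$ yields $j\leq i$ and $i-j<k$ simultaneously, while $1\leq j\leq i\leq D-\ROI+1$ supplies the range.

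For part~\ref{lem:multiscale-props-b} I would compose the three dimensionality rules: $\PaddingParams$ adds $2R$ samples, $\Slide_H$ subtracts $h-1$, and $\Downsampling_k$ applies $\nceil{\cdot/k}$, giving $\dim_M(\pi)=\nceil{(D+2R-h+1)/k}$. The one algebraic fact needed is $2R=\tilde{R}+\rem{\tilde{R}}{2}$, immediate from $R=\bceil{\tilde{R}/2}$; substituting this and the definition of $\tilde{R}$ rearranges the numerator to $(D-\ROI+1+\rem{\tilde{R}}{2})+k(\ROI-1)$, and pulling the integer $\ROI-1$ out of the ceiling produces the claimed formula. The subsignal count then follows from $\rem{\tilde{R}}{2}\in\{0,1\}$ together with the elementary bound $\nceil{(n+1)/k}\leq\nceil{n/k}+1$.

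The heart of the argument is part~\ref{lem:multiscale-props-d}. Writing $\zeta:=\PaddingParams(\xi)$ and $\sigma:=\Slide_H(\zeta)$ so that $\pi=\Downsampling_k(\sigma)$, and abbreviating $a:=\div{i-1}{k}$ so that $j:=\MultiScaleIndex_k(i)=ka+1$, I would proceed in three moves. First, $\SubsignalPadROIParams(\xi,\;j)=\Subsignal_{\ROI+2R}(\zeta,\;j)$: both entries equal $\vartheta(\xi,\;j+\nu-R-1)$, and the access to $\zeta$ stays in bounds precisely because part~\ref{lem:multiscale-props-c} guarantees $j\inint{1}{D-\ROI+1}$. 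Second, $\Slide_H$ is a subsignal compatible transformation by Theorem~\ref{thm:sliding-subsignal}, so its exchange property rewrites $\Slide_H(\Subsignal_{\ROI+2R}(\zeta,\;j))$ as $\Subsignal_{(\ROI+2R)-h+1}(\sigma,\;j)$, whose $\ell$-th sample is $\sigma_{j+\ell-1}$. Third, applying $\Downsampling_k$ to $\sigma$ on the left and to this shifted slice on the right, the $\mu$-th samples reduce to the single index identity $k(a+\mu-1)+1=ka+k(\mu-1)+1=j+(k(\mu-1)+1)-1$, so both sides equal $\sigma_{ka+k(\mu-1)+1}$. Along the way I would confirm that both sides have exactly $\ROI$ samples, where $k\geq 2$ is essential: the right-hand dimensionality computes to $(\ROI-1)+\nceil{(1+\rem{\tilde{R}}{2})/k}=\ROI$ only because $0<(1+\rem{\tilde{R}}{2})/k\leq 1$.

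I expect the main obstacle to be the index bookkeeping in the first and third moves of~\ref{lem:multiscale-props-d}. One must exploit that the correction $j-1=ka$ is divisible by $k$---this is the very purpose of $\MultiScaleIndex$ and the reason an unadjusted index $i$ would admit no exact downscaled counterpart---and must simultaneously check that the left-hand subsignal index $\div{i-1}{k}+1$ and all intermediate downsampling indices stay within the bounds established in part~\ref{lem:multiscale-props-b}. Once the padded subsignal is recognized as a window of $\zeta$ and the exchange property commutes $\Slide_H$ past subsignal extraction, the remainder is a routine application of Proposition~\ref{prop:number-theory}.
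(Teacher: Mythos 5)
Your proposal is correct, and parts~\ref{lem:multiscale-props-a}--\ref{lem:multiscale-props-c} coincide with the paper's own argument; the genuine divergence is in part~\ref{lem:multiscale-props-d}. The paper proves that part by direct computation: it first establishes $\dim_M(\rho)=\ROI$ exactly as you do (this is where $k\geq 2$ enters), and then unfolds \emph{both} sides sample-wise through every operator definition until each reduces to the identical expression $H\left(\sum\nolimits_{\nu=1}^h \vartheta(\xi,\;j+k(\mu-1)+\nu-R-1)\cdot e_\nu^h\right)$, i.e., the comparison happens inside the filter kernel evaluation, at the level of the boundary-handling function. You instead argue structurally: you factor $\SubsignalPadROIParams(\xi,\;j)=\Subsignal_{\ROI+2R}(\PaddingParams(\xi),\;j)$ (legitimate precisely because part~\ref{lem:multiscale-props-c} gives $j\leq D-\ROI+1$, so this window of the padded signal is in bounds), then invoke Theorem~\ref{thm:sliding-subsignal} to view $\Slide_H$ as a subsignal compatible transformation and commute it past subsignal extraction via the exchange property, and finally match indices under $\Downsampling_k$ using $j-1=k\cdot\div{i-1}{k}$, so that both sides collapse to the same sample of $\Slide_H(\PaddingParams(\xi))$. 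This buys a shorter computation that never opens up $H$, and it exposes the conceptual content: after the $\MultiScaleIndex_k$ correction, the multi-scale subsignal literally \emph{is} a plain subsignal of the globally downscaled signal. The cost is a few extra domain checks that the paper's unfolding never needs: besides the bound on $j$ you noted, the exchange property requires $\ROI+2R\geq h$, which follows from $2R\geq\tilde{R}=(k-1)\ROI+h-k$ and $\ROI\geq 1$, and the left-hand extraction index must satisfy $\div{i-1}{k}+1\leq\dim_M(\pi)-\ROI+1$, which holds since $\div{i-1}{k}+1\leq\div{D-\ROI}{k}+1=\nceil{(D-\ROI+1)/k}\leq\nceil{(D-\ROI+1+\rem{\tilde{R}}{2})/k}$. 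These are all routine, so your route is sound and, if anything, better aligned with the paper's own theory than its published proof.
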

\begin{proof}
\ref{lem:multiscale-props-a}
If $\mu\inint{1}{\ROI}$ and $i\inint{1}{D - \ROI + 1}$, then
\begin{displaymath}
  \SubsignalPadROIParams(\xi,\;i)_{\mu + R}
  \ \equsing{D.~\ref{def:subsignalpad}}\ \ \vartheta(\xi,\;i + \mu - 1)
  \ \equsing{($\lozenge$)}\ \xi_{i + \mu - 1}
  \ \equsing{D.~\ref{def:subsignal}}\ \Subsignal_\ROI(\xi,\;i)_\mu\text{,}
\end{displaymath}
where in the ($\lozenge$) step $i + \mu - 1\inint{1}{D}$ has been used.
Here, the boundary handling function evaluates to an original sample of the input signal.
Hence all the samples in the middle of $\SubsignalPadROIParams(\xi,\;i)$ stem from the input signal $\xi$ and are \emph{not} subject to boundary conditions.

\ref{lem:multiscale-props-b}
First note that
$2R = \tilde{R} + \rem{\tilde{R}}{2} = (k - 1)\ROI + h - k + \rem{\tilde{R}}{2}$
by the definition of the ceiling function, which is marked with ($\lozenge$) in the following.
Therefore
\begin{align*}
  \dim_M(\pi)
  \ \ &=\ \ \dim_M(\Downsampling_k(\Slide_H(\PaddingParams(\xi))))\\
  &\equsing{D.~\ref{def:downsampling}}\ \ \left\lceil \tfrac{1}{k}\cdot\dim_M(\Slide_H(\PaddingParams(\xi))) \right\rceil\\
  &\equsing{D.~\ref{def:sliding-function}}\ \ \left\lceil \tfrac{1}{k}\left( \dim_M(\PaddingParams(\xi)) - h + 1 \right) \right\rceil\\
  &\equsing{D.~\ref{def:padding}}\ \ \left\lceil \tfrac{1}{k}\left( \dim_M(\xi) + 2R - h + 1 \right) \right\rceil\\
  &\equsing{($\lozenge$)}\ \ \left\lceil \tfrac{D - \ROI + 1 + \rem{\tilde{R}}{2}}{k} + \ROI - 1 \right\rceil\\
  &=\ \ \left\lceil \tfrac{D - \ROI + 1 + \rem{\tilde{R}}{2}}{k} \right\rceil + \ROI - 1\text{,}
\end{align*}
where $\ROI - 1\in\N$ was used in the final step so that this term could be moved outside of the ceiling function.
Since $\rem{\tilde{R}}{2}\in\set{0,\;1}$, there is at most one superfluous subsignal of length $B$ in $\pi$, which is irrelevant in the following discussion.

\ref{lem:multiscale-props-c}
Let $i$ and $j := k\cdot\div{i - 1}{k} + 1$ be given as in the claim.
Clearly, $j\in\N$.
Since $\div{i - 1}{k} \geq 0$ follows $j \geq 1$.
On the other hand, Euclidean division yields $j = k\cdot\div{i - 1}{k} + 1 = i - \rem{i - 1}{k} \leq i \leq D - \ROI + 1$ because $\rem{i - 1}{k} \geq 0$.
Analogously, $i - j = \rem{i - 1}{k}\inint{0}{k - 1}$ follows, which proves the claimed inequalities.

\ref{lem:multiscale-props-d}
Let $i\inint{1}{D - \ROI + 1}$ be an arbitrary subsignal index.
It is first shown that the right-hand side of the claimed identity is indeed of dimensionality $\ROI$.
For this, define $j := \MultiScaleIndex_k(i)$ and $\rho := \MultiScaleSubsignalROIParams(\xi,\;j)$ as abbreviations.
Analogously to~\ref{lem:multiscale-props-b} where an expression for $2R$ has been deduced, marked with ($\lozenge$) in the following, one obtains
\begin{align*}
  &\dim_M(\rho)\\
  \equsing{D.~\ref{def:downsampling}}\ \ &\left\lceil \tfrac{1}{k}\cdot\dim_M(\Slide_H(\SubsignalPadROIParams(\xi,\;j))) \right\rceil\\
  \equsing{D.~\ref{def:sliding-function}}\ \ &\left\lceil \tfrac{1}{k}\left( \dim_M(\SubsignalPadROIParams(\xi,\;j)) - h + 1 \right) \right\rceil\\
  \equsing{D.~\ref{def:subsignalpad}}\ \ &\left\lceil \tfrac{1}{k}\left( \ROI + 2R - h + 1 \right) \right\rceil\\
  \equsing{($\lozenge$)}\ \ &\ROI - 1 + \left\lceil \tfrac{1 + \rem{\tilde{R}}{2}}{k} \right\rceil\text{.}
\end{align*}
As $1 + \rem{\tilde{R}}{2} \in\set{1,\;2}$ and $k\geq 2$ by requirement, it follows that $0 < \tfrac{1}{k}\big(1 + \rem{\tilde{R}}{2}\big) \leq 1$ and hence $\dim_M(\rho) = \ROI$.

Now let $\mu\inint{1}{\ROI}$ for comparing both sides of the claim sample-wise.
It is
\begin{align*}
  \rho_\mu\ \;\equsing{D.~\ref{def:downsampling}}\ \ &\Slide_H(\SubsignalPadROIParams(\xi,\;j))_{k(\mu - 1) + 1}\\
  \equsing{D.~\ref{def:sliding-function}}\ \ &H\left(\sum\nolimits_{\nu = 1}^h \SubsignalPadROIParams(\xi,\;j)_{k(\mu - 1) + \nu}\cdot e_\nu^h \right)\\
  \equsing{D.~\ref{def:subsignalpad}}\ \ &H\left(\sum\nolimits_{\nu = 1}^h \vartheta(\xi,\;j + k(\mu - 1) + \nu - R - 1)\cdot e_\nu^h \right)\text{.}
\end{align*}
The corresponding sample of the left-hand side equals
\begin{align*}
  &\Subsignal_\ROI(\pi,\;\div{i - 1}{k} + 1)_\mu\\
  \equsing{D.~\ref{def:subsignal}}\ \ &\Downsampling_k(\Slide_H(\PaddingParams(\xi)))_{\div{i - 1}{k} + \mu}\\
  \equsing{D.~\ref{def:downsampling}}\ \ &\Slide_H(\PaddingParams(\xi))_{k\cdot\div{i - 1}{k} + k(\mu - 1) + 1}\\
  \equsing{D.~\ref{def:sliding-function}}\ \ &H\left( \sum\nolimits_{\nu = 1}^h \PaddingParams(\xi)_{j + k(\mu - 1) + \nu - 1}\cdot e_\nu^h \right)\\
  \equsing{D.~\ref{def:padding}}\ \ &H\left( \sum\nolimits_{\nu = 1}^h \vartheta(\xi,\;j + k(\mu - 1) + \nu - R - 1)\cdot e_\nu^h \right)\text{,}
\end{align*}
which is the same as $\rho_\mu$, which proves the claimed identity.
\end{proof}

\subsection{Multi-Scale Evaluation of Subsignal Compatible Transformations}
The ultimate goal here is to analyze functions applied to different scale levels of a signal and propose an efficient evaluation scheme.
The first step has already been taken by analyzing the connection between downscaled subsignal extraction and subsignal extraction from a downscaled signal in Lemma~\ref{lem:multiscale-props}.
The complement of downscaling in this course of action is to repeat samples as many times as samples were omitted during downsampling.
This leads to the following definition:

\begin{definition}
\label{def:upsampling}
Let $M$ be a set and $k\in\N_1$.
Then the function $\Upsampling_k\colon\cup_1(M)\to\cup_k(M)$,
\begin{displaymath}
  \xi\mapsto\sum\nolimits_{\mu = 1}^{\dim_M(\xi)}\left(\xi_\mu\cdot \sum\nolimits_{\lambda = 1}^k e_{k (\mu - 1) + \lambda}^{k\cdot\dim_M(\xi)} \right)\text{,}
\end{displaymath}
is called the \emph{upsampling operator with zero-order hold}.
\end{definition}

In the definition it holds that $k (\mu - 1) + \lambda\inint{1}{k\cdot\dim_M(\xi)}$, which is indeed a bijection between index sets, therefore this operator is well-defined and each sample of its output is a copy of a certain sample from the input signal.
A statement on which samples go where during upsampling directly follows:
\begin{lemma}
\label{lem:upsampling-props}
Let $M$ be a set, $q\in\N_1$ and $\xi\in M^q$.
Then $\Upsampling_k(\xi)_\nu = \xi_{\div{\nu - 1}{k} + 1}$ for all $\nu\inint{1}{kq}$.
\end{lemma}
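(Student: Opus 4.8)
The plan is to directly unwind the formal sum in Definition~\ref{def:upsampling} and identify, for each target index $\nu$, the unique source sample deposited there. First I would fix an arbitrary $\nu\inint{1}{kq}$ and apply Euclidean division to $\nu - 1$, writing $\nu - 1 = \div{\nu-1}{k}\cdot k + \rem{\nu-1}{k}$ with $\rem{\nu-1}{k}\inint{0}{k-1}$ as guaranteed by the number-theoretic facts recalled before Proposition~\ref{prop:number-theory}. Setting $\mu := \div{\nu-1}{k}+1$ and $\lambda := \rem{\nu-1}{k}+1$, I would check the range constraints $\mu\inint{1}{q}$ (since $\nu-1\inint{0}{kq-1}$ forces $\div{\nu-1}{k}\inint{0}{q-1}$) and $\lambda\inint{1}{k}$, and then verify the key identity $k(\mu-1)+\lambda = k\cdot\div{\nu-1}{k} + \rem{\nu-1}{k} + 1 = \nu$.

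This identity shows that among all index expressions $k(\mu'-1)+\lambda'$ appearing in the double sum of Definition~\ref{def:upsampling}, it is exactly the pair $(\mu,\lambda)$ that produces coordinate $\nu$. Because the map $(\mu',\lambda')\mapsto k(\mu'-1)+\lambda'$ is a bijection from $\discint{1}{q}\times\discint{1}{k}$ onto $\discint{1}{kq}$ --- as already observed right after the definition --- no other summand contributes to coordinate $\nu$. Reading off the value placed there then gives $\Upsampling_k(\xi)_\nu = \xi_\mu = \xi_{\div{\nu-1}{k}+1}$, which is the claim.

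The only point requiring care, rather than a genuine obstacle, is the bookkeeping of the formal-sum notation: one must be sure that each output coordinate is written exactly once, so that the value at position $\nu$ is unambiguous. This is precisely the bijectivity statement above, which reduces the whole argument to inverting the indexing map rather than performing any real computation.
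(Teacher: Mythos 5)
Your proof is correct and follows essentially the same route as the paper's: both arguments hinge on inverting the indexing map $(\mu,\lambda)\mapsto k(\mu-1)+\lambda$ via the uniqueness of Euclidean division, the only difference being direction (you construct $(\mu,\lambda)$ from $\nu$ and invoke bijectivity, while the paper starts from the existence of $(\mu,\lambda)$ guaranteed by the definition and uses uniqueness of Euclidean division to identify $\mu$). This is a purely cosmetic distinction; no gap.
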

\begin{proof}
With Definition~\ref{def:upsampling} there exists $\mu\inint{1}{q}$ with $\Upsampling_k(\xi)_\nu = \xi_\mu$, where $\nu = k (\mu - 1) + \lambda$ and $\lambda\inint{1}{k}$.
One obtains
$k\cdot (\mu - 1) + (\lambda - 1) + 1 = (\nu - 1) + 1 = k\cdot\div{\nu - 1}{k} + \rem{\nu - 1}{k} + 1$.
Here, $\lambda - 1\inint{0}{k - 1}$, hence uniqueness of Euclidean division implies $\mu - 1 = \div{\nu - 1}{k}$, and the claim follows.
\end{proof}

Now the main result of this appendix, which states under which circumstances a function that accepts inputs in both the original scale and in a downscaled version can be evaluated efficiently.
Indexing rules are here as follows.
Suppose $P$ and $Q$ are sets and $\chi\in\cup_1(P\times Q)$ is a signal with paired samples from $P\times Q$.
Then there exists a dimensionality $d\in\N_1$ so that $\chi\in(P\times Q)^d$.
Since $(P\times Q)^d\cong P^d\times Q^d$ one can also express $\chi$ as a pair of signals, say $\chi = (\psi,\;\omega)$ where $\psi\in P^d$ and $\omega\in Q^d$.
If $\nu\inint{1}{d}$ is an index, then $\chi_i := (\psi_i,\;\omega_i)\in P\times Q$ is an individual sample.

\begin{theorem}
\label{tmh:multiscale}
Let $M,N$ be sets and let $\ROI\in\N_1$ be a constant subsignal dimensionality.
Further, let $f\colon M^\ROI\times M^\ROI\to N$ be a function that accepts signals in the original scale and in a downscaled version.
Assume $f$ can be factorized into functions $g_{\Original}\colon M^\ROI\to P$, $g_{\Downsampling}\colon M^\ROI\to Q$ and $g\colon P\times Q\to N$ with
\begin{displaymath}
  f(\rho,\;\tau) = g(g_{\Original}(\rho),\;g_{\Downsampling}(\tau))\text{ for all }\rho,\tau\in M^\ROI\text{,}
\end{displaymath}
where $P$ and $Q$ are sets.

As in Lemma~\ref{lem:multiscale-props}, let $k\in\N$, $k \geq 2$, be a downsampling step size.
Further, let $h\in\N_1$, $h \geq k$, and let $H\colon M^h\to M$ be a lowpass filter kernel and $\vartheta\colon\cup_1(M)\times\Z\to M$ a boundary-handling function.
Let $\tilde{R} := (k - 1)\ROI + h - k\in\N$ and let $R := \bceil{\tilde{R} / 2}\in\N$ denote the required boundary size.
Considering a signal $\xi\in\cup_\ROI(M)$, write $D := \dim_M(\xi)$, $\pi := \Downsampling_k(\Slide_H(\PaddingParams(\xi)))\in\cup_1(M)$, and moreover
\begin{displaymath}
  r := \rem{\tilde{R}}{2} - \rem{D - \ROI + 1 + \rem{\tilde{R}}{2}}{k}
       + \begin{cases}
           0\text{,} & \text{if }k\text{ divides }D - \ROI + 1 + \rem{\tilde{R}}{2}\text{,}\\
           k\text{, } & \text{otherwise.}
         \end{cases}
\end{displaymath}
Then $r\in\N$ and
\begin{align*}
     & f\!\left(\Subsignal_\ROI(\xi,\;i),\; \MultiScaleSubsignalROIParams(\xi,\;\MultiScaleIndex_k(i))\right)\\
  =\ & \Slide_g\!\left(\Slide_{g_{\Original}}(\xi),\; \Trimming_{r}(\Upsampling_k(\Slide_{g_{\Downsampling}}(\pi)))\right)_i
\end{align*}
for all $i\inint{1}{D - \ROI + 1}$.
In other words, $f$ applied to the subsignals of $\xi$ and certain multi-scale subsignals of $\xi$ equals the output samples of $g$, $g_{\Original}$ and $g_{\Downsampling}$ applied in a sliding fashion to signals derived from $\xi$.
After $g_{\Downsampling}$ has been applied to the downscaled signal $\pi$, the result has to be upsampled and the superfluous $r$ trailing entries have to be trimmed away.
\end{theorem}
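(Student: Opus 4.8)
The plan is to exploit the factorization $f(\rho,\tau) = g(g_\Original(\rho),\, g_\Downsampling(\tau))$ to split the claim into one identity per scale level. The key structural observation is that $g\colon P\times Q\to N$ has unit window, so $\Slide_g$ acts samplewise on its paired argument: using the pairing convention introduced just before the theorem together with Definition~\ref{def:sliding-function}, the $i$-th output sample of the right-hand side is $g$ applied to the $i$-th samples of $\Slide_{g_\Original}(\xi)$ and of $\Trimming_{r}(\Upsampling_k(\Slide_{g_\Downsampling}(\pi)))$. Matching this against the left-hand side $g(g_\Original(\Subsignal_\ROI(\xi,i)),\, g_\Downsampling(\MultiScaleSubsignalROIParams(\xi,\MultiScaleIndex_k(i))))$, it then suffices to verify the two samplewise identities separately.

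For the original-scale component, $\Slide_{g_\Original}(\xi)_i = g_\Original(\Subsignal_\ROI(\xi,i))$ holds immediately by Definition~\ref{def:sliding-function}, since $g_\Original$ has window $\ROI$. For the downscaled component, the plan is to chain three ingredients. First, Lemma~\ref{lem:upsampling-props} gives $\Upsampling_k(\sigma)_i = \sigma_{\div{i-1}{k}+1}$, collapsing the upsampling to an index shift. Second, Definition~\ref{def:sliding-function} applied to $\Slide_{g_\Downsampling}(\pi)$ rewrites the shifted sample as $g_\Downsampling(\Subsignal_\ROI(\pi,\;\div{i-1}{k}+1))$. Third, Lemma~\ref{lem:multiscale-props}\ref{lem:multiscale-props-d} identifies $\Subsignal_\ROI(\pi,\;\div{i-1}{k}+1)$ with $\MultiScaleSubsignalROIParams(\xi,\;\MultiScaleIndex_k(i))$, which closes the gap. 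Since the trimming operator only discards trailing entries (Definition~\ref{def:stuff-trim}), it leaves every value at an index $i\inint{1}{D-\ROI+1}$ untouched, so it can be ignored for value-matching once in-bounds-ness is confirmed.

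The remaining work, and the main obstacle, is the dimension bookkeeping that justifies well-definedness and pins down $r$. I would first compute the dimensionality of the pre-trimmed second argument: Lemma~\ref{lem:multiscale-props}\ref{lem:multiscale-props-b} gives $\dim_M(\pi) = \bceil{(D-\ROI+1+\rem{\tilde{R}}{2})/k} + \ROI - 1$, whence $\dim_M(\Slide_{g_\Downsampling}(\pi)) = \bceil{(D-\ROI+1+\rem{\tilde{R}}{2})/k}$ by Definition~\ref{def:sliding-function} and $\dim_M(\Upsampling_k(\Slide_{g_\Downsampling}(\pi))) = k\cdot\bceil{(D-\ROI+1+\rem{\tilde{R}}{2})/k}$ by Definition~\ref{def:upsampling}. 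The number $r$ is then forced to be exactly $k\cdot\bceil{(D-\ROI+1+\rem{\tilde{R}}{2})/k} - (D-\ROI+1)$, so that trimming yields precisely $D-\ROI+1$ samples, matching $\dim_M(\Slide_{g_\Original}(\xi))$; this makes the pairing in $\Slide_g$ well-defined and keeps every index $i\inint{1}{D-\ROI+1}$ in bounds.

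It remains to reconcile this forced value with the stated piecewise formula and to check $r\in\N$, which I would do by a case split on whether $k$ divides $A := D-\ROI+1+\rem{\tilde{R}}{2}$. Writing $\bceil{A/k} = \div{A}{k}$ when $k$ divides $A$ and $\bceil{A/k} = \div{A}{k}+1$ otherwise, and expanding $k\cdot\div{A}{k} = A - \rem{A}{k}$ via Proposition~\ref{prop:number-theory}, the forced $r$ collapses to $\rem{\tilde{R}}{2}$ in the divisible case and to $\rem{\tilde{R}}{2} - \rem{A}{k} + k$ otherwise, exactly the two branches in the definition of $r$. Non-negativity follows in the first case from $\rem{\tilde{R}}{2}\in\set{0,1}$ and in the second from $\rem{A}{k}\leq k-1$, which gives $r\geq 1$; well-definedness of the trimming operator reduces to $D\geq\ROI$, which holds since $\xi\in\cup_\ROI(M)$. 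Combining the two samplewise identities then establishes the claimed equality for every $i\inint{1}{D-\ROI+1}$.
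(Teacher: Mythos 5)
Your proposal is correct and follows essentially the same route as the paper's own proof: the same dimension bookkeeping via Lemma~\ref{lem:multiscale-props}\ref{lem:multiscale-props-b}, the same case split on whether $k$ divides $D-\ROI+1+\rem{\tilde{R}}{2}$ to identify $r$ and confirm $r\in\N$, and the same samplewise chain (trimming elimination, Lemma~\ref{lem:upsampling-props}, Definition~\ref{def:sliding-function}, Lemma~\ref{lem:multiscale-props}\ref{lem:multiscale-props-d}, then the factorization precondition). The only cosmetic difference is that you derive $r$ as the value forced by dimension matching and then reconcile it with the stated formula, whereas the paper computes directly with the stated $r$; the arithmetic is identical.
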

\begin{proof}
Application of Lemma~\ref{lem:multiscale-props}\ref{lem:multiscale-props-b} yields $\dim_M(\pi) = \bceil{\frac{D - \ROI + 1 + \rem{\tilde{R}}{2}}{k}} + \ROI - 1$.
As $\dim_M(\pi) \geq \ROI$, $\Slide_{g_{\Downsampling}}(\pi)$ is well-defined and one obtains
\begin{displaymath}
  \dim_Q(\Slide_{g_{\Downsampling}}(\pi))
  \ \equsing{D.~\ref{def:sliding-function}}\ \dim_M(\pi) - \ROI + 1
  \ =\ \left\lceil\tfrac{D - \ROI + 1 + \rem{\tilde{R}}{2}}{k}\right\rceil\text{.}
\end{displaymath}
Therefore,
\begin{displaymath}
  \dim_Q(\Upsampling_k(\Slide_{g_{\Downsampling}}(\pi)))\ \ 
  \equsing{D.~\ref{def:upsampling}}\ \ k\cdot \left\lceil\tfrac{D - \ROI + 1 + \rem{\tilde{R}}{2}}{k}\right\rceil\text{.}
\end{displaymath}

In the case of $k$ dividing $D - \ROI + 1 + \rem{\tilde{R}}{2}$ it is $\dim_Q(\Upsampling_k(\Slide_{g_{\Downsampling}}(\pi))) = D - \ROI + 1 + \rem{\tilde{R}}{2}$ and by definition holds $r = \rem{\tilde{R}}{2}\in\N$, hence
\begin{displaymath}
  \dim_Q(\Trimming_{r}(\Upsampling_k(\Slide_{g_{\Downsampling}}(\pi))))
  \ \ \equsing{D.~\ref{def:stuff-trim}}\ \ D - \ROI + 1\text{.}
\end{displaymath}

In the other case of $k$ not dividing $D - \ROI + 1 + \rem{\tilde{R}}{2}$ follows
\begin{displaymath}
  \left\lceil\tfrac{D - \ROI + 1 + \rem{\tilde{R}}{2}}{k}\right\rceil
  = \div{D - \ROI + 1 + \rem{\tilde{R}}{2}}{k} + 1\text{.}
\end{displaymath}
The definition of Euclidean division now implies that
\begin{displaymath}
     \dim_Q(\Upsampling_k(\Slide_{g_{\Downsampling}}(\pi)))
  =  D - \ROI + 1 + \rem{\tilde{R}}{2} - \rem{D - \ROI + 1 + \rem{\tilde{R}}{2}}{k} + k\text{.}
\end{displaymath}
Further, by definition it is
\begin{displaymath}
  r = \rem{\tilde{R}}{2} - \rem{D - \ROI + 1 + \rem{\tilde{R}}{2}}{k} + k\text{.}
\end{displaymath}
Since $\rem{D - \ROI + 1 + \rem{\tilde{R}}{2}}{k}\inint{0}{k - 1}$ it holds that $r \geq 1$.
Therefore $r\in\N$ and $\Trimming_{r}(\Upsampling_k(\Slide_{g_{\Downsampling}}(\pi)))$ is well-defined and $\dim_Q(\Trimming_{r}(\Upsampling_k(\Slide_{g_{\Downsampling}}(\pi)))) = D - \ROI + 1$ in this case as well.

Hence the number of samples in $\Slide_{g_{\Original}}(\xi)$ equals the number of samples in $\Trimming_{r}(\Upsampling_k(\Slide_{g_{\Downsampling}}(\pi)))$ in both cases.
Since $g$ works on single samples, it follows that
\begin{displaymath}
  \Slide_g\!\big(\Slide_{g_{\Original}}(\xi),\; \Trimming_{r}(\Upsampling_k(\Slide_{g_{\Downsampling}}(\pi)))\big)
\end{displaymath}
consists of $D - \ROI + 1$ samples from $N$.

Let $i\inint{1}{D - \ROI + 1}$ be arbitrary and define
\begin{displaymath}
  \tau := \MultiScaleSubsignalROIParams(\xi,\;\MultiScaleIndex_k(i))
\end{displaymath}
as an abbreviation.
Then Lemma~\ref{lem:multiscale-props}\ref{lem:multiscale-props-d} implies $\tau\in M^\ROI$.
From Definition~\ref{def:sliding-function} it immediately follows that $\Slide_{g_{\Original}}(\xi)_i = g_{\Original}(\Subsignal_\ROI(\xi,\;i))$.
Considering the second argument to $\Slide_g$ one obtains
\begin{align*}
  &\Trimming_{r}(\Upsampling_k(\Slide_{g_{\Downsampling}}(\pi)))_i\\
  =\ \ \ &\Upsampling_k(\Slide_{g_{\Downsampling}}(\pi))_i\\
  \equsing{L.~\ref{lem:upsampling-props}}\ \ \ &\Slide_{g_{\Downsampling}}(\pi)_{\div{i - 1}{k} + 1}\\
  \equsing{D.~\ref{def:sliding-function}}\ \ \ &g_{\Downsampling}(\Subsignal_\ROI(\pi,\;\div{i - 1}{k} + 1))\\
  \equsing{L.~\ref{lem:multiscale-props}\ref{lem:multiscale-props-d}}\ \ \ &g_{\Downsampling}(\tau)\text{,}
\end{align*}
where $i\leq D - \ROI + 1$ has been used in the first step so the trimming operator could be eliminated.

Combining these results and using the precondition (PC) that $f$ can be factorized leads to
\begin{align*}
  &\Slide_g\!\big(\Slide_{g_{\Original}}(\xi),\;\Trimming_{r}(\Upsampling_k(\Slide_{g_{\Downsampling}}(\pi)))\big)_i\\
  \equsing{D.~\ref{def:sliding-function}}\ \ &g\big(\Slide_{g_{\Original}}(\xi)_i,\;\Trimming_{r}(\Upsampling_k(\Slide_{g_{\Downsampling}}(\pi)))_i\big)\\
  =\ \ &g\big(\,g_{\Original}(\Subsignal_\ROI(\xi,\;i)),\;g_{\Downsampling}(\tau)\big)\\
  \equsing{PC}\ \ &f\big(\Subsignal_\ROI(\xi,\;i),\;\tau\big)\text{,}
\end{align*}
which equals the claimed expression.
\end{proof}

Theorem~\ref{tmh:multiscale} directly provides an algorithm for efficient multi-scale analysis.
The functional part $g_{\Original}$ operating on the original input signal should be applied in a sliding fashion.
If this function can be cast as a processing chain, as discussed in the main part of this paper, the theory proposed there can be used for efficient evaluation.
The multi-scale subsignal index approximation proved in Lemma~\ref{lem:multiscale-props} facilitates application of the functional part $g_{\Downsampling}$ operating on downscaled subsignals in a sliding fashion as well.
Therefore, subsignal compatible transformation theory can be applied here also.
It is finally noted that the generalization of the statements of Theorem~\ref{tmh:multiscale} to functions that process an arbitrary number of different downscaled signals is straightforward if a proper factorization is provided.

\bibliographystylems{IEEEtran}
\bibliographyms{IEEEabrv,the}

\clearpage
\appendix[Transposed Convolution]
Convolution with trainable filter banks enables artificial neural networks to solve important tasks in the domains of signal restoration and signal classification.
Transposed convolution is the operation that emerges during computation of the gradient of convolution:
Since conventional convolution can be expressed as a matrix-vector product, its gradient involves the transpose of that matrix.
This operator is sometimes also called backwards convolution, fractionally strided convolution, or deconvolution~\citetrconv{Long2015trconv}.
Note that the term deconvolution should not be mixed up in this context with the process of reversing the effects of convolution in the sense of solving an inverse problem.
In artificial neural networks, transposed convolution facilitates an upsampling of internal representations or the network output, which can for example be used to obtain output values for each feasible subsignal of a larger input signal~\citetrconv{Long2015trconv}.

This appendix analyzes transposed convolution in greater detail.
In doing so, an exact definition is given and proved well-defined.
Then, it is analyzed how upsampling with zero-order hold can be realized through transposed convolution.
Afterwards, an operator called Dense Upsampling Convolution, originally proposed by~\citetrconv{Wang2017trconv}, is introduced.
It is eventually shown that this operator is equivalent to a certain parameterization of transposed convolution.

\subsection{Transposed Convolution}
Before transposed convolution can be defined, a few prerequisites are necessary:
\begin{definition}
\label{def:cropping}
Let $P\in\N$ be a natural number, let $M$ be a set.
Then $\Cropping_P\colon\cup_{1 + 2P}(M) \to \cup_1(M)$,
\begin{displaymath}
  \xi\mapsto \sum\nolimits_{\nu = 1}^{\dim_M(\xi) - 2P} \xi_{\nu + P}\cdot e_{\nu}^{\dim_M(\xi) - 2P}\text{,}
\end{displaymath}
is called the \emph{cropping operator}, which removes $P$ samples from both the beginning and the end of the input signal.
\end{definition}

Cropping undoes the effect of padding:
\begin{remark}
For all $P\in\N$ and all boundary handling functions $\vartheta\colon\cup_1(M)\times\Z\to M$, it holds that $\Cropping_P \circ \Padding_P^\vartheta = \id_{\cup_1(M)}$.
\end{remark}
\begin{proof}
Let $P\in\N$ and let $\vartheta\colon\cup_1(M)\times\Z\to M$ be an arbitrary boundary handling function.
Further, let $\xi\in\cup_1(M)$ be an arbitrary signal and let $D := \dim_M(\xi)\in\N_1$ denote its length.
Then $\Padding_P^\vartheta(\xi)\in M^{D + 2P}$ with Definition~\ref{def:padding}.
As $D + 2P \geq 1 + 2P$, $\Cropping_P$ can be applied to the padded signal, producing an output signal from $M^D$ due to Definition~\ref{def:cropping}.
For $i\inint{1}{D}$ follows
\begin{displaymath}
  \Cropping_P(\Padding_P^\vartheta(\xi))_i
  \ \ \equsing{D.\ref{def:cropping}}\ \ \Padding_P^\vartheta(\xi)_{i + P}
  \ \ \equsing{D.\ref{def:padding}}\ \ \vartheta(\xi,\;i + P - P)
  \ \ \equsing{D.\ref{def:padding}}\ \ \xi_i\text{,}
\end{displaymath}
proving the claimed identity.
\end{proof}

Next, an operator which spreads a signal by filling zeros between its samples is defined:
\begin{definition}
\label{def:spreading}
Let $M$ be a set and let $k\in\N_1$ be a stride.
Then $\Spreading_k\colon\cup_1(M) \to \cup_1(M)$,
\begin{displaymath}
  \xi\mapsto \sum\nolimits_{\mu = 1}^{\dim_M(\xi)} \xi_{\mu}\cdot e_{k(\mu - 1) + 1}^{k(\dim_M(\xi) - 1) + 1}\text{,}
\end{displaymath}
is called the \emph{spreading operator}.
It inserts $k - 1$ zeros between the individual samples of the input signal, so that the distance between the original samples in the output signals equals $k$.
\end{definition}

\begin{figure}[t]
  \centering
  \scalebox{1.20}{\includegraphics[page=12]{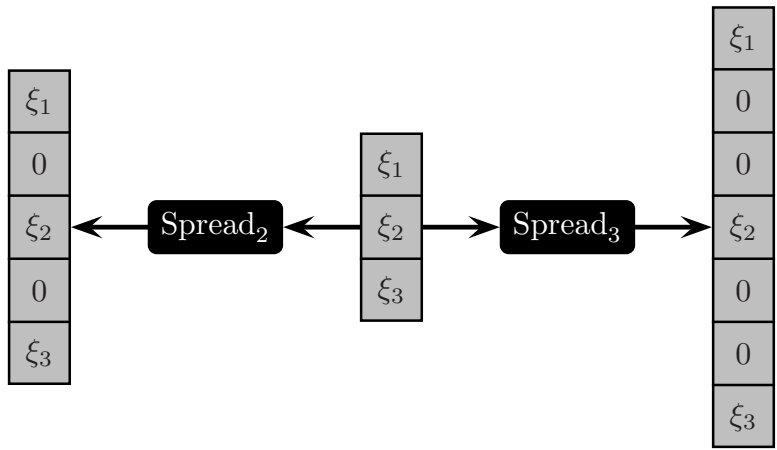}}
  \caption{The spreading operator inserts zeros between the samples of the input signal so that the distance between the original samples equals the stride parameter.
    The number of zeros between each pair of samples is therefore the stride minus one.
    In the depicted example, an input signal $\xi$ with $D = 3$ samples (at the center of the graphics) is spreaded using a stride of $k_1 = 2$ (left-hand side) and a stride of $k_2 = 3$ (right-hand side), yielding output signals with $k_1 (D - 1) + 1 = 5$ and $k_2 (D - 1) + 1 = 7$ samples, respectively.}
  \label{fig:spreading}
\end{figure}

An example of the spreading operator is depicted in Fig.~\ref{fig:spreading}.
A result on the mapping from output sample index to input sample index immediately follows:
\begin{lemma}
\label{lem:spreading}
Let $M$ be a set, let $k\in\N_1$ be a stride and let $\xi\in\cup_1(M)$ be an input signal.
Then for all $i\inint{1}{k(\dim_M(\xi) - 1) + 1}$ it holds that
\begin{displaymath}
  \Spreading_k(\xi)_i =
  \begin{cases}
    \xi_{\div{i - 1}{k} + 1}\text{,} & \text{if }k\text{ divides }i - 1\text{,}\\
    0\text{,} & \text{otherwise.}
  \end{cases}
\end{displaymath}
\end{lemma}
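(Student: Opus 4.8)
The plan is to unfold Definition~\ref{def:spreading} directly and then split on whether $k$ divides $i - 1$. Writing $D := \dim_M(\xi)$ and $L := k(D - 1) + 1$ for the length of the output signal, the spreading operator places the sample $\xi_\mu$ at output position $k(\mu - 1) + 1$ for each $\mu\inint{1}{D}$ and leaves every other position equal to zero. The core observation driving the proof is that the map $\mu\mapsto k(\mu - 1) + 1$ sends $\discint{1}{D}$ bijectively onto exactly those indices $i\inint{1}{L}$ for which $i - 1$ is a multiple of $k$; distinct values of $\mu$ yield distinct positions, so no position receives more than one contribution.

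First I would treat the case where $k$ divides $i - 1$. Setting $\mu_0 := \div{i - 1}{k} + 1$ and using that the remainder $\rem{i - 1}{k}$ vanishes, one computes $k(\mu_0 - 1) + 1 = k\cdot\div{i - 1}{k} + 1 = (i - 1) + 1 = i$, so the $\mu_0$-th term of the formal sum contributes at precisely position $i$ with coefficient $\xi_{\mu_0}$. It then remains to confirm that $\mu_0$ is a legitimate index, that is $\mu_0\inint{1}{D}$: the lower bound follows from $i\geq 1$, and the upper bound from $i\leq L = k(D - 1) + 1$ together with the exactness of the division. Hence $\Spreading_k(\xi)_i = \xi_{\mu_0} = \xi_{\div{i - 1}{k} + 1}$, matching the first branch of the claim.

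In the remaining case, where $k$ does not divide $i - 1$, I would argue that no term of the formal sum can land at position $i$: if $k(\mu - 1) + 1 = i$ held for some $\mu$, then $i - 1 = k(\mu - 1)$ would force $k$ to divide $i - 1$, contradicting the assumption. Since position $i$ therefore receives no contribution, $\Spreading_k(\xi)_i = 0$, which is the second branch of the claim.

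This argument is essentially a direct reading of the definition, so I do not anticipate any genuine obstacle. The only point requiring a little care is the range check $\mu_0\inint{1}{D}$ in the divisible case, which is what guarantees that the index $\div{i - 1}{k} + 1$ appearing in the statement actually addresses a valid sample of $\xi$; everything else reduces to the elementary characterization of the multiples of $k$ among $\discint{1}{L}$.
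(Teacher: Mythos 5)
Your proposal is correct and follows essentially the same route as the paper's own proof: establish injectivity of the index map $\mu\mapsto k(\mu - 1) + 1$ so positions receive at most one contribution, then in the divisible case exhibit $\mu_0 = \div{i - 1}{k} + 1$ with the range check $\mu_0\inint{1}{\dim_M(\xi)}$, and in the non-divisible case argue by contradiction that no index can hit position $i$. There is nothing to add.
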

\begin{proof}
First note that for $\mu_1,\mu_2\inint{1}{\dim_M(\xi)}$ from $k(\mu_1 - 1) + 1 = k(\mu_2 - 1) + 1$ always follows that $\mu_1 = \mu_2$ as $k\neq 0$.
Therefore, the output coordinates in the definition of $\Spreading_k$ are distinct.

Let $i\inint{1}{k(\dim_M(\xi) - 1) + 1}$ and consider the case when $k$ divides $i - 1$.
Then clearly $i - 1 = k\cdot\div{i - 1}{k}$.
Define $\mu^* := \div{i - 1}{k} + 1$, then $\mu^*\inint{1}{\dim_M(\xi)}$ as $\mu^*$ is clearly a positive integer and $\mu^* \leq \div{k(\dim_M(\xi) - 1) + 1 - 1}{k} + 1 = \dim_M(\xi)$.
As $k(\mu^* - 1) + 1 = i$ it follows that $\Spreading_k(\xi)_i = \xi_{\mu^*} = \xi_{\div{i - 1}{k} + 1}$.

Now suppose $k$ does not divide $i - 1$.
There cannot exist an index $\mu^*\inint{1}{\dim_M(\xi)}$ with $k(\mu^* - 1) + 1 = i$ since this would imply that $k$ divides $i - 1$.
Therefore, $\Spreading_k(\xi)_i$ vanishes.
\end{proof}

After these preparations, the transposed convolution operator can be defined:
\begin{definition}
\label{def:trconv}
Let $R$ be a ring and let $w\in((R^n)^m)^c$ be a filter bank with spatial extent $c\in\N_1$ for a mapping from $m\in\N_1$ input channels to $n\in\N_1$ output channels.
Let $k\in\N_1$ be a stride and let $P\in\N$ be a padding size.
Let $q := \max\set{1,\ \nceil{\frac{1}{k}(2P - c + 1) + 1}}\in\N_1$ denote a minimum input signal length.
Then $\TransposedConvolution_{(w,\; k,\; P)}\colon\cup_q(R^m) \to \cup_1(R^n)$,
\begin{displaymath}
  \xi\mapsto \Cropping_P( \Padding_{c - 1}^{\vartheta_{\Dirichlet}}( \Spreading_k(\xi)) \conv w)\text{,}
\end{displaymath}
is called the \emph{transposed convolution operator}.
It first spreads the input signal $\xi$ using a stride of $k$.
Then, a ``full'' convolution with the filter bank $w$ is carried out by first padding the spreaded input signal respecting Dirichlet (first-type) boundary conditions and then performing conventional ``valid'' convolution.
Eventually, $P$ samples are removed from both ends of the result of convolution.
\end{definition}

Note that the names of the stride and padding parameters in Definition~\ref{def:trconv} actually refer to convolution, of which transposed convolution computes the gradient.
For transposed convolution, conventional unit-stride convolution is carried out on an input signal that is spreaded using the stride parameter.
Moreover, the padding parameter specifies the number of samples that are removed after the convolution operation.

The minimum input signal length depends on the spatial extent of the used filter bank, the stride, and the padding size.
The next result shows this requirement is correct and deduces the output signal length of transposed convolution:
\begin{lemma}
\label{lem:trconv}
In the situation of Definition~\ref{def:trconv}, the transposed convolution operator is well-defined.
The output signal's spatial extent is $\dim_{R^n}(\TransposedConvolution_{(w,\; k,\; P)}(\xi)) = k\cdot(\dim_{R^m}(\xi) - 1) + c - 2P$ for all $\xi$ in the domain of this operator.
\end{lemma}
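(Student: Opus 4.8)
The plan is to trace the dimensionality of the signal through the four operators composing $\TransposedConvolution_{(w,\; k,\; P)}$, verifying well-definedness at each stage, and to read off the output dimensionality from the resulting count. Fix $\xi\in(R^m)^D$ with $D := \dim_{R^m}(\xi) \geq q$.

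First I would compute the intermediate dimensionalities. By Definition~\ref{def:spreading}, $\Spreading_k(\xi)$ has $k(D - 1) + 1$ samples. Padding with $\Padding_{c - 1}^{\vartheta_{\Dirichlet}}$ appends $c - 1$ samples at each end (Definition~\ref{def:padding}), yielding $k(D - 1) + 2c - 1$ samples. The convolution $\conv w$ with a filter bank of spatial extent $c$ reduces the dimensionality by $c - 1$, as defined in Sect.~\ref{sect:CNNs-wo-pooling}, so the convolved signal has $k(D - 1) + c$ samples. Finally $\Cropping_P$ removes $2P$ samples (Definition~\ref{def:cropping}), leaving $k(D - 1) + c - 2P$ samples, which is exactly the claimed output dimensionality. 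Alongside this count I would check the domain constraint of each operator: spreading and padding are total on $\cup_1(R^m)$, and the convolution requires at least $c$ input samples, which holds automatically since its input has $k(D - 1) + 2c - 1 \geq c$ samples (using $k(D - 1) \geq 0$ and $c - 1 \geq 0$).

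The only substantive constraint comes from $\Cropping_P$, whose domain is $\cup_{1 + 2P}(R^m)$, so its input must carry at least $1 + 2P$ samples; this is the main point of the argument. The input to cropping has $k(D - 1) + c$ samples, so the requirement $k(D - 1) + c \geq 1 + 2P$ rearranges to $D \geq \frac{1}{k}(2P - c + 1) + 1$. Since $D\in\N_1$, this is equivalent to $D \geq \nceil{\frac{1}{k}(2P - c + 1) + 1}$, and together with the trivial bound $D \geq 1$ it is precisely $D \geq \max\set{1,\ \nceil{\frac{1}{k}(2P - c + 1) + 1}} = q$. Hence the minimum input length in Definition~\ref{def:trconv} is exactly calibrated so that cropping --- and therefore the whole composition --- is well-defined, after which the output dimensionality formula follows immediately from the dimension count above.
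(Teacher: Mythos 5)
Your proposal is correct and follows the same overall structure as the paper's proof: trace the dimensionality through $\Spreading_k$, $\Padding_{c-1}^{\vartheta_{\Dirichlet}}$, $\conv\, w$, and $\Cropping_P$, checking each operator's domain constraint along the way; the identical counts $k(D-1)+1$, $k(D-1)+2c-1$, $k(D-1)+c$, and finally $k(D-1)+c-2P$ appear in both. The one place where you genuinely diverge is the well-definedness of $\Cropping_P$, which is indeed the substantive step. The paper argues by cases: if $\nceil{\tfrac{1}{k}(2P-c+1)+1}\leq 1$ then $2P-c+1\leq 0$ and $k(D-1)+c\geq c\geq 2P+1$; otherwise it bounds $k(D-1)+c$ below using $D\geq\nceil{\tfrac{1}{k}(2P-c+1)+1}$ and $\nceil{x}\geq x$. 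You instead observe that the constraint $k(D-1)+c\geq 2P+1$ rearranges to $D\geq\tfrac{1}{k}(2P-c+1)+1$, which by integrality of $D$ is equivalent to $D\geq\nceil{\tfrac{1}{k}(2P-c+1)+1}$, and together with $D\geq 1$ is exactly $D\geq q$. This avoids the case split and buys slightly more than the paper proves: your equivalence shows $q$ is not merely a sufficient lower bound but precisely the minimal admissible input length, i.e., the domain in Definition~\ref{def:trconv} is tight. Both arguments are sound; yours is the cleaner treatment of that step.
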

\begin{proof}
Let $\xi$ be a signal from the domain of $\TransposedConvolution_{(w,\; k,\; P)}$.
Clearly, $\Spreading_k(\xi)$ and $\Padding_{c - 1}^{\vartheta_{\Dirichlet}}(\Spreading_k(\xi))$ are well-defined since they only require non-empty input signals.
One obtains
\begin{align*}
  & \dim_{R^m}( \Padding_{c - 1}^{\vartheta_{\Dirichlet}}( \Spreading_k(\xi)) )\\
  \equsing{D.~\ref{def:padding}}\ \ \ & \dim_{R^m}( \Spreading_k(\xi) ) + 2(c - 1)\\
  \equsing{D.~\ref{def:spreading}}\ \ \ & k\cdot(\dim_{R^m}(\xi) - 1) + 1 + 2(c - 1)\text{.}
\end{align*}
This length is at least $2c - 1$ since $k\geq 1$ and $\dim_{R^m}(\xi) \geq 1$.
As $2c - 1 \geq c$ because $c \geq 1$, $\Padding_{c - 1}^{\vartheta_{\Dirichlet}}(\Spreading_k(\xi))$ is large enough to be convolved by $w$ as stated in Sect.~\ref{sect:CNNs-wo-pooling}.
Furthermore,
\begin{align*}
  & \dim_{R^n}( \Padding_{c - 1}^{\vartheta_{\Dirichlet}}( \Spreading_k(\xi)) \conv w)\\
  \equsing{Sect.~\ref{sect:CNNs-wo-pooling}}\ \ \ \ & \dim_{R^m}( \Padding_{c - 1}^{\vartheta_{\Dirichlet}}( \Spreading_k(\xi)) ) - (c - 1)\\
  =\ \ \ \ & k\cdot(\dim_{R^m}(\xi) - 1) + c\text{.}
\end{align*}
For showing that transposed convolution is well-defined it has finally to be demonstrated that this length is at least $1 + 2P$ due to Definition~\ref{def:cropping}.
First consider the case when $\nceil{\frac{1}{k}(2P - c + 1) + 1} \leq 1$.
This implies that $2P - c + 1 \leq 0$.
As it is further guaranteed that $\dim_{R^m}(\xi) \geq 1$, it follows that $k\cdot(\dim_{R^m}(\xi) - 1) + c \geq c \geq 2P + 1$, hence well-definedness follows in this case.
If, on the other hand, $\nceil{\frac{1}{k}(2P - c + 1) + 1} > 1$, then
\begin{displaymath}
  k\cdot(\dim_{R^m}(\xi) - 1) + c
  > k\cdot(\nceil{\tfrac{1}{k}(2P - c + 1) + 1} - 1) + c
  \geq k\cdot\tfrac{1}{k}(2P - c + 1) + c
  = 2P + 1\text{.}
\end{displaymath}
Transposed convolution is therefore well-defined in this case as well.

The output length of transposed convolution is
\begin{align*}
  & \dim_{R^n}(\TransposedConvolution_{(w,\; k,\; P)}(\xi))\\
  \equsing{D.~\ref{def:trconv}}\ \ & \dim_{R^n}( \Cropping_P( \Padding_{c - 1}^{\vartheta_{\Dirichlet}}( \Spreading_k(\xi)) \conv w) )\\
  \equsing{D.~\ref{def:cropping}}\ \ & \dim_{R^n}( \Padding_{c - 1}^{\vartheta_{\Dirichlet}}( \Spreading_k(\xi)) \conv w) - 2P\\
  =\ \ & k\cdot(\dim_{R^m}(\xi) - 1) + c - 2P\text{,}
\end{align*}
as has been claimed.
\end{proof}

\subsection{Upsampling with Transposed Convolution}
From Lemma~\ref{lem:trconv} it is evident that transposed convolution is capable of producing signals with an output dimensionality strictly greater than the input dimensionality.
Therefore, this cannot be a subsignal compatible transformation.
Instead, it is possible to upsample signals with this operation.
It is shown here explicitly that a certain parameterization and filter bank lead to upsampling with zero-order hold as introduced in Definition~\ref{def:upsampling}.

The analysis is begun with the deduction of a parameterization that achieves the mandatory dimensionality for upsampling by an even integer factor:
\begin{remark}
\label{rem:upsmp-trconv}
Suppose that $u\in\N_1$ denotes an even upsampling factor.
Let $R$ be a ring and $w\in((R^n)^m)^c$ a filter bank for a number of input channels $m\in\N_1$ and output channels $n\in\N_1$, where the spatial extent is $c := 2u\in\N_1$.
Let $k := u\in\N_1$ and $P := u/2 \in\N_1$ denote the stride and the padding size, respectively.
Then $\dim_{R^n}(\TransposedConvolution_{(w,\; k,\; P)}(\xi)) = u\cdot\dim_{R^m}(\xi)$ for all $\xi\in\cup_1(R^m)$.
\end{remark}
\begin{proof}
The quantities $c$ and $k$ are clearly positive natural numbers.
This also applies to $P$ since $u$ has been required to be even and positive.
It is now $\nceil{\frac{1}{k}(2P - c + 1) + 1} = \nceil{\frac{1}{u}} = 1$ as $u \geq 2$, therefore the minimum input signal length for transposed convolution using this parameterization is $1$.
For $\xi\in\cup_1(R^m)$ it is then $\dim_{R^n}(\TransposedConvolution_{(w,\; k,\; P)}(\xi)) \ \equsing{L.~\ref{lem:trconv}}\ u\cdot(\dim_{R^m}(\xi) - 1) + 2u - u = u\cdot\dim_{R^m}(\xi)$.
\end{proof}

Now, a few statements regarding the properties of sample indices that emerge in the analysis of upsampling with transposed convolution are formulated.
The first result will be used later to show that in a special case convolution with a spreaded signal reduces to the multiplication of two scalars since all other relevant values of the spreaded signal vanish:
\begin{lemma}
\label{lem:upsmp-idx-unique}
Assume $u\in\N_1$, $i\in\N$ and $\mu\inint{1}{u}$.
Then $\rem{i - \mu}{u} = 0$ if and only if $\mu = \rem{i - 1}{u} + 1$.
\end{lemma}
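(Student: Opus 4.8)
The plan is to reduce everything to the Euclidean decomposition of $i - 1$ and then to exploit a tight interval estimate, exactly the squeeze argument already used in Lemma~\ref{lem:defrag-ops-simplified}. First I would observe that the statement only carries content when $i \geq \mu$: in that case $i - \mu \in \N$ so that $\rem{i - \mu}{u}$ is defined, and conversely the right-hand condition $\mu = \rem{i - 1}{u} + 1$ will be seen to force $i \geq \mu$ as well. Assuming $i \geq 1$ (which is implied by $\mu \geq 1$ on either side), I would write the Euclidean division $i - 1 = \div{i - 1}{u}\cdot u + r$ with $r := \rem{i - 1}{u} \inint{0}{u - 1}$, so that $i - \mu = \div{i - 1}{u}\cdot u + (r + 1 - \mu)$.

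For the direction $\mu = \rem{i - 1}{u} + 1 \Rightarrow \rem{i - \mu}{u} = 0$, substituting $\mu = r + 1$ gives $i - \mu = \div{i - 1}{u}\cdot u$, which is a nonnegative multiple of $u$ (in particular $i \geq \mu$, confirming definedness); Proposition~\ref{prop:number-theory}, applied with vanishing remainder, then yields $\rem{i - \mu}{u} = 0$ directly.

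For the converse, assuming $\rem{i - \mu}{u} = 0$ means $u$ divides $i - \mu$, and since $u$ trivially divides $\div{i - 1}{u}\cdot u$, it follows that $u$ divides the leftover term $r + 1 - \mu$. The crux is now a range estimate: since $r \inint{0}{u - 1}$ and $\mu \inint{1}{u}$, the quantity $r + 1 - \mu$ lies in the discrete interval $\discint{-(u - 1)}{u - 1}$, and the only multiple of $u$ in that interval is $0$. Therefore $r + 1 - \mu = 0$, that is $\mu = \rem{i - 1}{u} + 1$, as claimed.

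I expect the main obstacle to be purely a matter of careful bookkeeping rather than any deep idea: one must ensure that $\rem{i - \mu}{u}$ is legitimately defined (i.e.\ that $i - \mu \geq 0$) before invoking Proposition~\ref{prop:number-theory}, and one must verify the endpoints of the range of $r + 1 - \mu$ so that $0$ is genuinely its only multiple of $u$. Both checks reduce to substituting the extreme values of $r$ and $\mu$, in the same spirit as the ($\lozenge$) estimates used elsewhere in the paper.
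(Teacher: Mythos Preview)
Your proof is correct, but it differs from the paper's approach. The paper argues via a bijectivity trick: it considers the map $h\colon\discint{1}{u}\to\discint{0}{u-1}$, $\beta\mapsto\rem{i-\beta}{u}$, exhibits an explicit preimage $\beta^* := \rem{i-\gamma-1}{u}+1$ for each $\gamma$, concludes that $h$ is surjective and hence bijective between equal-sized finite sets, and then observes that $h(\rem{i-1}{u}+1)=0$. Uniqueness of $\mu$ falls out of injectivity. Your route is instead the direct Euclidean decomposition plus a squeeze on $r+1-\mu\in\discint{-(u-1)}{u-1}$, exactly the pattern of Lemma~\ref{lem:defrag-ops-simplified}. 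Your argument is more elementary and arguably cleaner for this particular statement, and it has the added virtue of making the definedness constraint $i\geq\mu$ explicit (the paper's proof silently assumes $\operatorname{rem}$ behaves on the relevant arguments). The paper's bijection argument, on the other hand, would generalise more readily if one wanted to locate the preimage of an arbitrary $\gamma$ rather than just $0$.
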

\begin{proof}
Consider the function $h\colon\discint{1}{u}\to\discint{0}{u - 1}$, $\beta\mapsto\rem{i - \beta}{u}$.
Let $\gamma\inint{0}{u - 1}$ be arbitrary and define $\beta^* := \rem{i - \gamma - 1}{u} + 1 \inint{1}{u}$.
Then
\begin{displaymath}
  h(\beta^*) = \rem{i - \rem{i - \gamma - 1}{u} - 1}{u} = \rem{\gamma}{u} = \gamma
\end{displaymath}
due to the idempotence of the $\operatorname{rem}$ operator.
Therefore, $h$ is surjective.
As a mapping between finite sets with the same cardinality it is hence a bijection.
The claim then follows since $h( \rem{i - 1}{u} + 1 ) = 0$.
\end{proof}

The next statement will be used to prove that in a special case boundary handling is not necessary since all accessed samples are already within bounds:
\begin{lemma}
\label{lem:upsmp-no-boundhand}
Let $u,D\in\N_1$ and $i\inint{1}{uD}$. Then $i - \rem{i - 1}{u} \inint{1}{u(D - 1) + 1}$.
\end{lemma}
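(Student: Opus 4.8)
The plan is to reduce the claim to the defining identity of Euclidean division and then bound the resulting quotient. Writing $\delta := \div{i - 1}{u}\in\N$, the identity $i - 1 = \delta\cdot u + \rem{i - 1}{u}$ (recalled just before Proposition~\ref{prop:number-theory}) rearranges to $i - \rem{i - 1}{u} = \delta u + 1$. Hence the entire statement reduces to showing $0\le\delta\le D - 1$, since then $\delta u + 1\inint{1}{u(D-1)+1}$ follows immediately because $u\ge 1$.

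For the lower bound I would simply note that $i\ge 1$ forces $i - 1\ge 0$, so the quotient satisfies $\delta\ge 0$ and therefore $\delta u + 1\ge 1$. For the upper bound, the key observation is that the remainder term may be discarded: from $\delta u = (i - 1) - \rem{i - 1}{u}\le i - 1\le uD - 1 < uD$ together with $u\ge 1$ one obtains $\delta < D$. As $\delta$ and $D$ are integers, this sharpens to $\delta\le D - 1$, which yields $\delta u + 1\le u(D - 1) + 1$ as required.

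I do not expect a genuine obstacle here; the argument is elementary once the quotient $\delta$ is isolated. The only point demanding a little care is the final step, where the strict inequality $\delta < D$ must be converted to $\delta\le D - 1$ using integrality of the quotient — bounding the remainder from below by $0$ rather than tracking its exact value is precisely what keeps the upper bound clean. Everything then rests on Proposition~\ref{prop:number-theory} and the uniqueness of Euclidean division.
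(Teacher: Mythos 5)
Your proof is correct, and it shares the paper's skeleton: both isolate the quotient by rewriting $i - \rem{i - 1}{u} = \div{i - 1}{u}\cdot u + 1$ via the defining identity of Euclidean division, and the lower bound (the quotient is non-negative, hence the expression is at least $1$) is identical. The difference lies in the upper bound. The paper uses $\div{i - 1}{u}\le\div{uD - 1}{u}$ (an implicit monotonicity of $\operatorname{div}$ in its first argument, which it never proves) and then evaluates the endpoint exactly, invoking Proposition~\ref{prop:number-theory} to compute $\rem{uD - 1}{u} = u - 1$, so that $\div{uD - 1}{u}\cdot u + 1 = u(D - 1) + 1$. You instead discard the remainder, $\div{i - 1}{u}\cdot u \le i - 1 \le uD - 1 < uD$, divide by $u > 0$ to get $\div{i - 1}{u} < D$, and use integrality of the quotient to sharpen this to $\div{i - 1}{u}\le D - 1$. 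Your route is more self-contained: it needs neither the monotonicity fact nor Proposition~\ref{prop:number-theory}, only non-negativity of the remainder and integrality. What the paper's endpoint computation buys in exchange is that it exhibits the bound as attained at $i = uD$, i.e., the interval in the statement is tight, though nothing in the lemma or its later uses requires this.
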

\begin{proof}
Clearly, $i - \rem{i - 1}{u}$ is an integer.
From $i - 1 = \div{i - 1}{u}\cdot u + \rem{i - 1}{u}$ follows $i - \rem{i - 1}{u} = \div{i - 1}{u}\cdot u + 1 \geq 1$ as $i \geq 1$.
On the other hand,
\begin{align*}
  & i - \rem{i - 1}{u}
  \ =\ \div{i - 1}{u}\cdot u + 1
  \ \leq\ \div{uD - 1}{u}\cdot u + 1\\
  \ =\ &uD - 1 - \rem{uD - 1}{u} + 1
  \ \equsing{P.~\ref{prop:number-theory}}\ uD - 1 - (u - 1) + 1
  \ =\ u(D - 1) + 1\text{.}
\end{align*}
The claim follows as both bounds are satisfied.
\end{proof}

Eventually, the combination of the index derived in Lemma~\ref{lem:upsmp-idx-unique} with the transformation due to spreading can be simplified:
\begin{lemma}
\label{lem:upsmp-idx-simplify}
Suppose that $u\in\N_1$ and $i\in\N$.
Then $\div{i - \rem{i - 1}{u} - 1}{u} = \div{i - 1}{u}$.
\end{lemma}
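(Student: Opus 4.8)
The plan is to reduce the nested Euclidean-division expression on the left-hand side to a single application of Proposition~\ref{prop:number-theory}. The guiding observation is that the quantity $i - \rem{i - 1}{u} - 1$ is precisely the largest multiple of $u$ not exceeding $i - 1$, so it ought to be an \emph{exact} multiple of $u$; once this is made explicit, the outer division collapses routinely.

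First I would invoke the defining identity of Euclidean division for numerator $i - 1$ and divisor $u$, namely $i - 1 = \div{i - 1}{u}\cdot u + \rem{i - 1}{u}$, which holds since $i - 1\in\N$ and $u\in\N_1$. Rearranging this identity immediately gives $i - \rem{i - 1}{u} - 1 = \div{i - 1}{u}\cdot u$, exhibiting the numerator appearing in the claim as an exact multiple of $u$.

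Next I would substitute this into the left-hand side, obtaining $\div{i - \rem{i - 1}{u} - 1}{u} = \div{\div{i - 1}{u}\cdot u}{u}$. Writing the argument as $0 + \div{i - 1}{u}\cdot u$ and applying the additive rule $\div{a + bc}{c} = \div{a}{c} + b$ from Proposition~\ref{prop:number-theory} with $a := 0$, $b := \div{i - 1}{u}$ and $c := u$ yields $\div{0}{u} + \div{i - 1}{u}$, which equals $\div{i - 1}{u}$ since $\div{0}{u} = 0$ follows directly from $0 = 0\cdot u + 0$. This is exactly the right-hand side, finishing the argument.

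I do not expect any genuine obstacle: the statement is essentially the remark that subtracting off the remainder before dividing leaves the quotient unchanged. The only point deserving mild care is well-definedness, since $\operatorname{div}$ and $\operatorname{rem}$ are defined only for arguments in $\N$; hence one should note that $i\geq 1$ is implicitly required for $\rem{i - 1}{u}$ to be meaningful (the degenerate case $i = 0$ being excluded by the context in which this lemma is applied). Beyond that, the proof is a short computation driven entirely by Proposition~\ref{prop:number-theory}.
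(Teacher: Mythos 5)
Your proof is correct. Both you and the paper start from the defining identity $i - 1 = \div{i-1}{u}\cdot u + \rem{i-1}{u}$, but the arguments close differently. The paper never rewrites the numerator explicitly as $\div{i-1}{u}\cdot u$; instead it first uses idempotence of the $\operatorname{rem}$ operator to see that $\rem{i - \rem{i-1}{u} - 1}{u} = 0$, then forms the quantity $u\cdot\big(\div{i-1}{u} - \div{i - \rem{i-1}{u} - 1}{u}\big)$, shows by comparing the two Euclidean expansions that it vanishes, and concludes from $u\neq 0$ --- in effect a uniqueness-of-division argument. You instead rearrange the identity to exhibit $i - \rem{i-1}{u} - 1 = \div{i-1}{u}\cdot u$ as an exact multiple of $u$ and then apply the additive rule $\div{a + bc}{c} = \div{a}{c} + b$ of Proposition~\ref{prop:number-theory} with $a := 0$, which finishes in one step. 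Your route is slightly shorter: it dispenses with both the idempotence step and the difference computation, at the cost of invoking Proposition~\ref{prop:number-theory}, which the paper's proof avoids entirely by staying inside the two division identities. Your side remark on well-definedness is also accurate and is a point the paper glosses over: the statement allows $i\in\N$, yet $\rem{i-1}{u}$ only makes sense for $i\geq 1$ under the paper's definition of Euclidean division, and indeed every application of this lemma (Theorems~\ref{thm:trconvupzoh} and~\ref{thm:duc_trconv}) uses indices $i\geq 1$.
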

\begin{proof}
It holds that
\begin{align*}
  & i - \rem{i - 1}{u} - 1\\
  =\ & \div{i - \rem{i - 1}{u} - 1}{u}\cdot u + \rem{i - \rem{i - 1}{u} - 1}{u}\\
  =\ & \div{i - \rem{i - 1}{u} - 1}{u}\cdot u
\end{align*}
as the $\operatorname{rem}$ operator is idempotent.
As $i - 1 = \div{i - 1}{u}\cdot u + \rem{i - 1}{u}$, it follows that
\begin{align*}
  & u\cdot\big( \div{i - 1}{u} - \div{i - \rem{i - 1}{u} - 1}{u} \big)\\
  =\ & \big(i - 1 - \rem{i - 1}{u}\big) - \big( i - \rem{i - 1}{u} - 1 \big)
  \ =\ 0\text{.}
\end{align*}
The claim follows because $u\neq 0$.
\end{proof}

After these preparations, it can be shown how transposed convolution is able to implement upsampling with zero-order hold:
\begin{theorem}
\label{thm:trconvupzoh}
Let $u\in\N_1$ be an even upsampling factor and let $m\in\N_1$ denote a number of channels.
Moreover, let $R$ be a ring and suppose $w^{\ZOH}\in((R^m)^m)^{2u}$ is a filter bank, where the individual entries are given by
\begin{displaymath}
  ((w^{\ZOH}_\mu)_\lambda)_\kappa :=
  \begin{cases}
    1\text{,} & \text{if }\lambda = \kappa\text{ and }\mu\inint{1 + u/2}{u + u/2}\text{,}\\
    0\text{,} & \text{otherwise,}
  \end{cases}
\end{displaymath}
for all $\lambda,\kappa\inint{1}{m}$ and all $\mu\inint{1}{2u}$.
Then $\TransposedConvolution_{(w^{\ZOH},\; u,\; u/2)} = \Upsampling_u$ on $\cup_1(R^m)$.
In other words, transposed convolution with $w^{\ZOH}$ carries out upsampling with zero-order hold.
\end{theorem}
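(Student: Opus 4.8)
The plan is to first confirm that both sides have matching dimensionality and then to establish equality sample-by-sample and channel-by-channel. For the dimensionality, Remark~\ref{rem:upsmp-trconv} applies verbatim with spatial extent $c = 2u$, stride $k = u$, padding $P = u/2$ and $n = m$, giving $\dim_{R^m}(\TransposedConvolution_{(w^{\ZOH},\; u,\; u/2)}(\xi)) = u\cdot\dim_{R^m}(\xi)$, which is exactly the length of $\Upsampling_u(\xi)$ by Definition~\ref{def:upsampling}. Writing $D := \dim_{R^m}(\xi)$, it then suffices to show that for every $i\inint{1}{uD}$ and every channel $\kappa\inint{1}{m}$ the two outputs agree; by Lemma~\ref{lem:upsampling-props} the target value is $(\xi_{\div{i - 1}{u} + 1})_\kappa$.

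Next I would unfold the transposed convolution. Abbreviating $\psi := \Padding_{2u - 1}^{\vartheta_{\Dirichlet}}(\Spreading_u(\xi))$, Definition~\ref{def:cropping} gives $\TransposedConvolution_{(w^{\ZOH},\; u,\; u/2)}(\xi)_i = (\psi\conv w^{\ZOH})_{i + u/2}$. Expanding the convolution with the formula from Sect.~\ref{sect:CNNs-wo-pooling} (spatial extent $c = 2u$) and inserting the definition of $w^{\ZOH}$, every term vanishes unless the input channel index equals $\kappa$ and $\mu\inint{1 + u/2}{u + u/2}$. After the substitution $\mu = u/2 + \mu'$ this collapses the double sum to $\sum_{\mu' = 1}^{u}(\psi_{2u + i - \mu'})_\kappa$. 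Unwinding the padding (Definition~\ref{def:padding}, with $\vartheta_{\Dirichlet}$ returning zero outside the valid range) and then Lemma~\ref{lem:spreading} shows that $(\psi_{2u + i - \mu'})_\kappa$ can only be nonzero when $u$ divides $i - \mu'$ and the accessed index is in range.

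The heart of the argument is that exactly one summand survives and that it equals the desired sample. Lemma~\ref{lem:upsmp-idx-unique} identifies the unique index $\mu^* := \rem{i - 1}{u} + 1\inint{1}{u}$ with $\rem{i - \mu^*}{u} = 0$, so only that term can contribute. For $\mu^*$ the accessed spreaded-signal index is $i - \mu^* + 1 = i - \rem{i - 1}{u}$, and Lemma~\ref{lem:upsmp-no-boundhand} certifies this lies in $\discint{1}{u(D - 1) + 1}$, so the Dirichlet boundary handling returns the genuine sample rather than a padded zero. Lemma~\ref{lem:spreading} then yields $\xi_{\div{i - \mu^*}{u} + 1}$ in channel $\kappa$, and Lemma~\ref{lem:upsmp-idx-simplify} rewrites $\div{i - \rem{i - 1}{u} - 1}{u} = \div{i - 1}{u}$, matching $\Upsampling_u(\xi)_i$ exactly.

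The main obstacle I anticipate is purely bookkeeping: correctly composing the three index shifts — the stride-$u$ spreading, the padding by $2u - 1$, and the cropping by $u/2$ — through the convolution's own $c + i - \mu$ indexing convention, so that the combined offset lands precisely on the hypotheses of the three helper lemmas. In particular I must keep track that $n = m$ so $w^{\ZOH}$ acts as a per-channel identity, and that the in-range check via Lemma~\ref{lem:upsmp-no-boundhand} is exactly what prevents the surviving term from being suppressed by Dirichlet padding. Once the offsets are aligned, each helper lemma slots in mechanically and the sample-wise equality follows.
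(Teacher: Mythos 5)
Your proposal is correct and follows essentially the same route as the paper's own proof: dimensionality matching via Remark~\ref{rem:upsmp-trconv}, then sample-wise and channel-wise comparison in which the filter-bank structure collapses the double sum, and the chain of Lemmas~\ref{lem:spreading}, \ref{lem:upsmp-idx-unique}, \ref{lem:upsmp-no-boundhand}, \ref{lem:upsmp-idx-simplify} and~\ref{lem:upsampling-props} isolates the single surviving term and identifies it with $(\xi_{\div{i-1}{u}+1})_\kappa$. Your index bookkeeping (the shift $\mu = u/2 + \mu'$ yielding $\sum_{\mu'=1}^{u}(\psi_{2u+i-\mu'})_\kappa$, and the padding offset reducing to $i - \mu' + 1$) matches the paper's ($\lozenge$) steps exactly.
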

\begin{proof}
With Remark~\ref{rem:upsmp-trconv} follows that the parameterization in the claim performs an upsampling of factor $u$ for all non-empty input signals.
Therefore it only remains to be shown that the output samples match.
Let $\xi\in\cup_1(R^m)$ be an input signal and write $D := \dim_{R^m}(\xi)\in\N_1$.
Further, let $i\inint{1}{uD}$ be an arbitrary index with respect to the spatial output dimension and let $\kappa\inint{1}{m}$ denote the index of an arbitrary output channel.
Then
\begin{align*}
  & (\TransposedConvolution_{(w^{\ZOH},\; u,\; u/2)}(\xi)_i)_\kappa\\
  \equsing{D.~\ref{def:trconv}}\ \ \ \ & (\Cropping_{u/2}( \Padding_{2u - 1}^{\vartheta_{\Dirichlet}}( \Spreading_u(\xi)) \conv w^{\ZOH})_i)_\kappa\\
  \equsing{D.~\ref{def:cropping}}\ \ \ \ & ((\Padding_{2u - 1}^{\vartheta_{\Dirichlet}}( \Spreading_u(\xi)) \conv w^{\ZOH})_{i + u/2})_\kappa\\
  \equsing{Sect.~\ref{sect:CNNs-wo-pooling}}\ \ \ \ & \left(\sum\nolimits_{\lambda = 1}^m\sum\nolimits_{\mu = 1}^{2u} (w^{\ZOH}_\mu)_\lambda \cdot (\Padding_{2u - 1}^{\vartheta_{\Dirichlet}}( \Spreading_u(\xi))_{2u + i + u/2 - \mu})_\lambda\right)_\kappa\\
  \equsing{Sect.~\ref{sect:CNNs-wo-pooling}}\ \ \ \ & \sum\nolimits_{\lambda = 1}^m\sum\nolimits_{\mu = 1}^{2u} ((w^{\ZOH}_\mu)_\lambda)_\kappa \cdot (\Padding_{2u - 1}^{\vartheta_{\Dirichlet}}( \Spreading_u(\xi))_{2u + i + u/2 - \mu})_\lambda\\
  \equsing{($\lozenge$)}\ \ \ \ & \sum\nolimits_{\mu = 1}^{2u} ((w^{\ZOH}_\mu)_\kappa)_\kappa \cdot (\Padding_{2u - 1}^{\vartheta_{\Dirichlet}}( \Spreading_u(\xi))_{2u + i + u/2 - \mu})_\kappa\\
  \equsing{($\lozenge$)}\ \ \ \ & \sum\nolimits_{\mu = 1 + u/2}^{u + u/2} (\Padding_{2u - 1}^{\vartheta_{\Dirichlet}}( \Spreading_u(\xi))_{2u + i + u/2 - \mu})_\kappa\\
  \equsing{($\lozenge$)}\ \ \ \ & \sum\nolimits_{\mu = 1}^{u} (\Padding_{2u - 1}^{\vartheta_{\Dirichlet}}( \Spreading_u(\xi))_{2u + i + u/2 - \mu - u/2})_\kappa\\
  \equsing{D.~\ref{def:padding}}\ \ \ \ & \sum\nolimits_{\mu = 1}^{u} \vartheta_{\Dirichlet}( \Spreading_u(\xi),\;2u + i - \mu - (2u - 1))_\kappa\\
  \equsing{D.~\ref{def:spreading}}\ \ \ \ & \sum\nolimits_{\mu = 1}^{u}
      \begin{cases}
        (\Spreading_u(\xi)_{i - \mu + 1})_\kappa\text{,} & \text{if }i - \mu + 1\inint{1}{u(D - 1) + 1}\text{,}\\
        0\text{,} & \text{otherwise}
      \end{cases}\\
  \equsing{L.~\ref{lem:spreading}}\ \ \ \ & \sum\nolimits_{\mu = 1}^{u}
      \begin{cases}
        (\xi_{\div{i - \mu}{u} + 1})_\kappa\text{,} & \text{if }i - \mu + 1\inint{1}{u(D - 1) + 1}\text{ and }\rem{i - \mu}{u} = 0\text{,}\\
        0\text{,} & \text{otherwise}
      \end{cases}\\
  \equsing{L.~\ref{lem:upsmp-idx-unique}}\ \ \ \ &
      \begin{cases}
        (\xi_{\div{i - \rem{i - 1}{u} - 1}{u} + 1})_\kappa\text{,} & \text{if }i - \rem{i - 1}{u}\inint{1}{u(D - 1) + 1}\text{,}\\
        0\text{,} & \text{otherwise}
      \end{cases}\\
  \equsing{L.~\ref{lem:upsmp-no-boundhand}}\ \ \ \ & (\xi_{\div{i - \rem{i - 1}{u} - 1}{u} + 1})_\kappa\\
  \equsing{L.~\ref{lem:upsmp-idx-simplify}}\ \ \ \ & (\xi_{\div{i - 1}{u} + 1})_\kappa\\
  \equsing{L.~\ref{lem:upsampling-props}}\ \ \ \ & (\Upsampling_u(\xi)_i)_\kappa\text{.}
\end{align*}
In the first two ($\lozenge$) steps it has been used that $((w^{\ZOH}_\mu)_\lambda)_\kappa \neq 0$ holds only if $\lambda = \kappa$ and only if $\mu\inint{1 + u/2}{u + u/2}$, respectively.
Consequently, an index shift has been carried out in the third ($\lozenge$) step.
As both output signals are identical, the proof is completed.
\end{proof}

The filter bank $w^{\ZOH}$ from Theorem~\ref{thm:trconvupzoh} is sparse and binary.
First, there is no interconnection between distinct channels, that is each channel is processed independently of the others.
Second, non-vanishing entries are located only in the middle of the spatial dimension.
Therefore, each output sample depends on only a single input sample due to the spreading of the input signal.
More sophisticated upsampling methods such as linear interpolation can be realized through filter banks where the entire spatial dimension is populated with non-vanishing entries.

\subsection{Dense Upsampling Convolution}
Now, an operator known as Dense Upsampling Convolution (DUC)~\citetrconv{Wang2017trconv} is analyzed.
The DUC operator simultaneously carries out an upsampling operation and a mapping from a number of input channels to a number of output channels.
This can be used for example in classification tasks to achieve a projection of a downscaled internal representation into label space at an increased spatial resolution~\citetrconv{Wang2017trconv}.
Here, the employed filter bank is adapted on concrete sample data with the intention to recover detailed information that is otherwise lost if conventional interpolation was used.
This is, however, only a heuristic which does not lead to the same result as exact dense signal scanning.

First consider the definition of this operator:
\begin{definition}
\label{def:duc}
Let $u\in\N_1$ be an upsampling factor, let $R$ be a ring and let $m,n\in\N_1$ be positive natural numbers.
Furthermore, suppose $w\in((R^{un})^m)^1$ is a filter bank with unit spatial extent used for mapping from $m$ input channels to $un$ output channels.
For arbitrary positive natural numbers $a,b\in\N_1$ let $\Phi_{a,\; b}\colon (R^a)^b \to R^{b\times a}$, $\Phi_{a,\; b}(\xi)_{j,\; i} := (\xi_j)_i$ for all $\xi\in(R^a)^b$, all $j\inint{1}{b}$ and all $i\inint{1}{a}$, be a helper function for transforming from a multi-channel representation to a fragmented representation.
This function is clearly a bijection, its inverse is given by $\Phi_{a,\; b}^{-1}\colon R^{b\times a} \to (R^a)^b$, $(\Phi_{a,\; b}^{-1}(\xi)_j)_i = \xi_{j,\; i}$ for all $\xi\in R^{b\times a}$, all $j\inint{1}{b}$ and all $i\inint{1}{a}$.
Then $\DUC_w\colon\cup_1(R^m)\to\cup_u(R^n)$,
\begin{displaymath}
  \xi\mapsto \Phi_{n,\; u\cdot\dim_{R^m}(\xi)}^{-1}( \Defragmentation_u( \Phi_{un,\; \dim_{R^m}(\xi)}( \xi \conv w ) ) )\text{,}
\end{displaymath}
is called the \emph{Dense Upsampling Convolution operator}.
It convolves the input signal with a filter bank that provides an increased number of output channels followed by defragmentation to restore an intended number of output channels at an increased spatial resolution.
\end{definition}

Note that here it is not required that the upsampling factor is even.
The next statement formalizes well-definedness and output dimensionalities of $\DUC$:
\begin{lemma}
\label{lem:duc}
In the situation of Definition~\ref{def:duc}, the $\DUC$ operator is well-defined.
It increases the spatial extent of the output signal by factor $u$, that is $\dim_{R^n}(\DUC_w(\xi)) = u\cdot \dim_{R^m}(\xi)$ for all $\xi\in\cup_1(R^m)$.
\end{lemma}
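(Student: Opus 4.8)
The plan is to verify well-definedness by tracing the input signal through the three operators composing $\DUC_w$ and checking at each stage that the dimensionalities match the domains required by the subsequent operator. First I would fix an input $\xi\in\cup_1(R^m)$ and write $D := \dim_{R^m}(\xi)\in\N_1$. The filter bank $w\in((R^{un})^m)^1$ has unit spatial extent $c = 1$, so the convolution $\xi\conv w$ is well-defined for every non-empty $\xi$ (since $D\geq 1 = c$), and by the dimensionality formula for convolution stated in Sect.~\ref{sect:CNNs-wo-pooling} it yields a signal in $(R^{un})^{D - 1 + 1} = (R^{un})^D$. Applying $\Phi_{un,\;D}$ then produces a matrix in $R^{D\times un}$, that is, a single-column-block fragmented signal with $D$ rows and $un$ columns.

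Next I would check the defragmentation step. The operator $\Defragmentation_u$ is defined on $M^{q\times ks}$ with $k := u$; to match $R^{D\times un}$ I read off $q = D$ and $ks = un$, giving $s = n$. Hence $\Defragmentation_u$ applies, and by Lemma~\ref{lem:defrag-ops} it returns a matrix with $u\cdot D$ rows and $\frac{1}{u}\cdot un = n$ columns, that is an element of $R^{uD\times n}$. Finally $\Phi_{n,\;u D}^{-1}$ maps $R^{uD\times n}$ back to the multi-channel representation $(R^n)^{uD}$, which is exactly $\cup_1(R^n)$ with dimensionality $uD$. Each composed map is therefore well-defined on the output of its predecessor, establishing that $\DUC_w$ is well-defined and that $\dim_{R^n}(\DUC_w(\xi)) = uD = u\cdot\dim_{R^m}(\xi)$, as claimed.

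The only genuinely delicate point—and the step I expect to be the main obstacle—is matching the indices on the helper maps $\Phi$ correctly, since $\Phi_{un,\;D}$ and $\Phi_{n,\;uD}^{-1}$ use \emph{different} channel counts ($un$ versus $n$) and different fragment counts ($D$ versus $uD$), and one must confirm these agree with the shapes produced by $\conv w$ and $\Defragmentation_u$ respectively. Once the bookkeeping $q = D$, $s = n$, $k = u$ for the defragmentation is pinned down, everything else is a routine chain of dimensionality substitutions using the already-proven Lemma~\ref{lem:defrag-ops} together with the convolution dimensionality rule; no nontrivial identity on the \texttt{div}/\texttt{rem} operators is needed here, in contrast to the entry-wise statements proved earlier.
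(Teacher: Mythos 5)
Your proposal is correct and follows essentially the same route as the paper's proof: trace $\xi$ through $\conv w$, $\Phi_{un,\;D}$, $\Defragmentation_u$, and $\Phi_{n,\;uD}^{-1}$, verifying at each stage that the output shape lies in the domain of the next operator, yielding $\dim_{R^n}(\DUC_w(\xi)) = uD$. Your explicit bookkeeping $q = D$, $s = n$, $k = u$ for the defragmentation step is exactly the identification the paper makes implicitly, so there is no substantive difference between the two arguments.
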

\begin{proof}
Let $\xi\in\cup_1(R^m)$ be an input signal and $D := \dim_{R^m}(\xi)\in\N_1$ its length.
As $D \geq 1$, $\xi \conv w$ is well-defined and produces a signal from $(R^{un})^D$.
It is then $\Phi_{un,\; D}( \xi \conv w )\in R^{D\times un}$, hence defragmentation can be applied and leads to a signal $\Defragmentation_u( \Phi_{un,\; D}( \xi \conv w ) )\in R^{uD\times n}$.
Eventually, it holds that $\DUC_w(\xi) = \Phi_{n,\; uD}^{-1}( \Defragmentation_u( \Phi_{un,\; D}( \xi \conv w ) ) )\in (R^n)^{uD}$, which implies that $\dim_{R^n}(\DUC_w(\xi)) = uD$.
\end{proof}

Dense Upsampling Convolution is equivalent to a special case of transposed convolution:
\begin{theorem}
\label{thm:duc_trconv}
Let $R$ be a ring.
Suppose $m\in\N_1$ and $n\in\N_1$ denote a number of input and output channels, respectively.
Further, let $u\in\N_1$ denote an upsampling factor and let $w\in((R^n)^m)^u$ be a filter bank.
The function $\Psi\colon((R^n)^m)^u\to ((R^{un})^m)^1$, $((\Psi(w)_1)_\lambda)_\nu := ((w_{\div{\nu - 1}{n} + 1})_\lambda)_{\rem{\nu - 1}{n} + 1}$ for all $\nu\inint{1}{un}$ and all $\lambda\inint{1}{m}$, realizes a reordering of the filter bank to a shape compatible with the Dense Upsampling Convolution operator.
Then $\DUC_{\Psi(w)} = \TransposedConvolution_{(w,\; u,\; 0)}$ on $\cup_1(R^m)$.
Here, the transposed convolution operator is used with a spatial filter size that equals the stride, which is furthermore equivalent to the upsampling factor $u$.
Padding is not used since the padding size vanishes.
\end{theorem}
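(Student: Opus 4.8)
The plan is to fix an arbitrary input $\xi\in\cup_1(R^m)$, abbreviate $D := \dim_{R^m}(\xi)$, first confirm that both operators return signals of dimensionality $uD$, and then compare them entry-wise. For the dimensionalities, Lemma~\ref{lem:duc} gives $\dim_{R^n}(\DUC_{\Psi(w)}(\xi)) = uD$, and Lemma~\ref{lem:trconv} applied with stride and spatial extent both equal to $u$ and padding $P = 0$ gives $\dim_{R^n}(\TransposedConvolution_{(w,\; u,\; 0)}(\xi)) = u(D - 1) + u - 0 = uD$ as well. It then suffices to fix a spatial index $i\inint{1}{uD}$ together with an output channel $\kappa\inint{1}{n}$ and show that the $\kappa$-th component of the $i$-th sample agrees on both sides. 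I expect both to reduce to the common closed form $\sum_{\lambda = 1}^m ((w_{\rem{i - 1}{u} + 1})_\lambda)_\kappa \cdot (\xi_{\div{i - 1}{u} + 1})_\lambda$, which then finishes the proof.

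For the $\DUC$ side I would peel Definition~\ref{def:duc} from the outside in. Applying $\Phi_{n,\; uD}^{-1}$ turns the $(i,\kappa)$ component into the matrix entry $\Defragmentation_u(\Phi_{un,\; D}(\xi\conv\Psi(w)))_{i,\;\kappa}$, where $\Phi_{un,\; D}(\xi\conv\Psi(w))\in R^{D\times un}$. Lemma~\ref{lem:defrag-ops-simplified}, used with $k = u$, $q = D$ and $s = n$, rewrites this as the entry of $\xi\conv\Psi(w)$ at spatial position $\div{i - 1}{u} + 1$ and channel $l := \rem{i - 1}{u}\cdot n + \kappa$. Since $\Psi(w)$ has unit spatial extent, the convolution formula from Sect.~\ref{sect:CNNs-wo-pooling} collapses to $\sum_{\lambda = 1}^m ((\Psi(w)_1)_\lambda)_l \cdot (\xi_{\div{i - 1}{u} + 1})_\lambda$. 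The key step is the decomposition $l - 1 = \rem{i - 1}{u}\cdot n + (\kappa - 1)$: because $\kappa - 1\inint{0}{n - 1}$, uniqueness of Euclidean division yields $\div{l - 1}{n} = \rem{i - 1}{u}$ and $\rem{l - 1}{n} = \kappa - 1$, so the definition of $\Psi$ in the theorem gives $((\Psi(w)_1)_\lambda)_l = ((w_{\rem{i - 1}{u} + 1})_\lambda)_\kappa$, producing exactly the claimed expression.

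For the transposed-convolution side, $\Cropping_0$ is the identity, so I would expand $\TransposedConvolution_{(w,\; u,\; 0)}(\xi) = \Padding_{u - 1}^{\vartheta_{\Dirichlet}}(\Spreading_u(\xi))\conv w$ and evaluate its $i$-th sample through the convolution formula with $c = u$, obtaining $\sum_{\lambda = 1}^m \sum_{\mu = 1}^u ((w_\mu)_\lambda)_\kappa \cdot (\eta'_{u + i - \mu})_\lambda$ where $\eta' := \Padding_{u - 1}^{\vartheta_{\Dirichlet}}(\Spreading_u(\xi))$. Definition~\ref{def:padding} turns $\eta'_{u + i - \mu}$ into $\vartheta_{\Dirichlet}(\Spreading_u(\xi),\; i - \mu + 1)$, and Lemma~\ref{lem:spreading} shows this spread sample is nonzero only when $u$ divides $i - \mu$. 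Lemma~\ref{lem:upsmp-idx-unique} guarantees that over $\mu\inint{1}{u}$ this occurs for the unique index $\mu^* = \rem{i - 1}{u} + 1$, collapsing the inner sum to a single term; Lemma~\ref{lem:upsmp-no-boundhand} certifies the accessed index lies inside the spread signal so Dirichlet boundary handling returns the genuine sample, and Lemma~\ref{lem:upsmp-idx-simplify} simplifies its spatial position to $\div{i - 1}{u} + 1$. What survives is precisely $\sum_{\lambda = 1}^m ((w_{\rem{i - 1}{u} + 1})_\lambda)_\kappa \cdot (\xi_{\div{i - 1}{u} + 1})_\lambda$, matching the $\DUC$ side. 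I expect the main obstacle to be conceptual rather than computational: recognizing that the channel-reordering $\Psi$ composed with defragmentation selects the filter slice $w_{\rem{i - 1}{u} + 1}$ on the $\DUC$ side in exactly the way that spreading composed with full convolution isolates the unique nonvanishing spread sample on the transposed-convolution side. Keeping the index identity $l - 1 = \rem{i - 1}{u}\cdot n + \kappa - 1$ aligned with the divisibility condition $\rem{i - \mu}{u} = 0$ is where care is required; once both sides are driven to the common form, equality is immediate.
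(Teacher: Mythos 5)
Your proposal is correct and follows essentially the same route as the paper's own proof: establish equal output dimensionalities via Lemma~\ref{lem:duc} and Lemma~\ref{lem:trconv}, then drive both sides entry-wise to the common form $\sum_{\lambda = 1}^m ((w_{\rem{i - 1}{u} + 1})_\lambda)_\kappa \cdot (\xi_{\div{i - 1}{u} + 1})_\lambda$, using Lemma~\ref{lem:defrag-ops-simplified} plus the Euclidean-division identity for $\Psi$ on the $\DUC$ side and Lemmas~\ref{lem:spreading}, \ref{lem:upsmp-idx-unique}, \ref{lem:upsmp-no-boundhand} and~\ref{lem:upsmp-idx-simplify} on the transposed-convolution side. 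Your appeal to uniqueness of Euclidean division where the paper cites Proposition~\ref{prop:number-theory} is an equivalent formulation of the same index manipulation, so there is no substantive difference.
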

\begin{proof}
Suppose that $\xi\in(R^m)^D$ with $D\in\N_1$ is an input signal.
Then $\DUC_{\Psi(w)}(\xi)\in(R^n)^{uD}$ with Lemma~\ref{lem:duc}.
The minimum spatial input signal length for transposed convolution is here unity.
Further, $\TransposedConvolution_{(w,\; u,\; 0)}(\xi)\in(R^n)^{uD}$ due to Lemma~\ref{lem:trconv}.
The signals on both sides of the claimed identity are from the same set and can hence be compared on the sample level.

The analysis is begun with the Dense Upsampling Convolution operator.
Let $i\inint{1}{uD}$ and $\kappa\inint{1}{n}$ be arbitrary indices.
Then
\begin{align*}
  & (\DUC_{\Psi(w)}(\xi)_i)_\kappa\\
  \equsing{D.~\ref{def:duc}}\ \ \ \ & (\Phi_{n,\; uD}^{-1}( \Defragmentation_u( \Phi_{un,\; D}( \xi \conv \Psi(w) ) ) )_i)_\kappa\\
  \equsing{D.~\ref{def:duc}}\ \ \ \ & \Defragmentation_u( \Phi_{un,\; D}( \xi \conv \Psi(w) ) )_{i,\; \kappa}\\
  \equsing{L.~\ref{lem:defrag-ops-simplified}}\ \ \ \ & \Phi_{un,\; D}( \xi \conv \Psi(w) )_{\div{i - 1}{u} + 1,\; \rem{i - 1}{u}\cdot n + \kappa}\\
  \equsing{D.~\ref{def:duc}}\ \ \ \ & ( ( \xi \conv \Psi(w) )_{\div{i - 1}{u} + 1} )_{\rem{i - 1}{u}\cdot n + \kappa}\\
  \equsing{Sect.~\ref{sect:CNNs-wo-pooling}}\ \ \ \ & \left(\sum\nolimits_{\lambda = 1}^m (\Psi(w)_1)_\lambda \cdot (\xi_{1 + \div{i - 1}{u} + 1 - 1})_\lambda \right)_{\rem{i - 1}{u}\cdot n + \kappa}\\
  \equsing{Sect.~\ref{sect:CNNs-wo-pooling}}\ \ \ \ & \sum\nolimits_{\lambda = 1}^m ((\Psi(w)_1)_\lambda)_{\rem{i - 1}{u}\cdot n + \kappa} \cdot (\xi_{\div{i - 1}{u} + 1})_\lambda\\
  \equsing{($\lozenge$)}\ \ \ \ & \sum\nolimits_{\lambda = 1}^m ((w_{\rem{i - 1}{u} + 1})_\lambda)_\kappa \cdot (\xi_{\div{i - 1}{u} + 1})_\lambda\text{,}
\end{align*}
where in the ($\lozenge$) step the reordering of the filter bank has been made explicit:
\begin{align*}
  & ((\Psi(w)_1)_\lambda)_{\rem{i - 1}{u}\cdot n + \kappa}\\
  =\ \ & ((w_{\div{ \rem{i - 1}{u}\cdot n + \kappa - 1 }{n} + 1})_\lambda)_{\rem{ \rem{i - 1}{u}\cdot n + \kappa - 1 }{n} + 1}\\
  \equsing{P.~\ref{prop:number-theory}}\ \ & ((w_{\rem{i - 1}{u} + \div{\kappa - 1}{n} + 1})_\lambda)_{\rem{\kappa - 1}{n} + 1}\\
  =\ \ & ((w_{\rem{i - 1}{u} + 1})_\lambda)_\kappa\text{.}
\end{align*}
In the final step it has been used that $\kappa\inint{1}{n}$, implying $\div{\kappa - 1}{n} = 0$ and $\rem{\kappa - 1}{n} = \kappa - 1$.

Now, for considering transposed convolution suppose $i\inint{1}{uD}$ and $\kappa\inint{1}{n}$ are indices.
It is then
\begin{align*}
  & (\TransposedConvolution_{(w,\; u,\; 0)}(\xi)_i)_\kappa\\
  \equsing{D.~\ref{def:trconv}}\ \ \ \ & (\Cropping_0( \Padding_{u - 1}^{\vartheta_{\Dirichlet}}( \Spreading_u(\xi)) \conv w)_i)_\kappa\\
  \equsing{D.~\ref{def:cropping}}\ \ \ \ & ((\Padding_{u - 1}^{\vartheta_{\Dirichlet}}( \Spreading_u(\xi)) \conv w)_i)_\kappa\\
  \equsing{Sect.~\ref{sect:CNNs-wo-pooling}}\ \ \ \ & \left(\sum\nolimits_{\lambda = 1}^m\sum\nolimits_{\mu = 1}^u (w_\mu)_\lambda \cdot (\Padding_{u - 1}^{\vartheta_{\Dirichlet}}( \Spreading_u(\xi))_{u + i - \mu})_\lambda\right)_\kappa\\
  \equsing{Sect.~\ref{sect:CNNs-wo-pooling}}\ \ \ \ & \sum\nolimits_{\lambda = 1}^m\sum\nolimits_{\mu = 1}^u ((w_\mu)_\lambda)_\kappa \cdot (\Padding_{u - 1}^{\vartheta_{\Dirichlet}}( \Spreading_u(\xi))_{u + i - \mu})_\lambda\\
  \equsing{D.~\ref{def:padding}}\ \ \ \ & \sum\nolimits_{\lambda = 1}^m\sum\nolimits_{\mu = 1}^u ((w_\mu)_\lambda)_\kappa \cdot \vartheta_{\Dirichlet}( \Spreading_u(\xi),\;u + i - \mu - (u - 1))_\lambda\\
  \equsing{D.~\ref{def:spreading}}\ \ \ \ & \sum\nolimits_{\lambda = 1}^m\sum\nolimits_{\mu = 1}^u ((w_\mu)_\lambda)_\kappa \cdot
      \begin{cases}
          (\Spreading_u(\xi)_{i - \mu + 1})_\lambda\text{,} & \text{if }i - \mu + 1\inint{1}{u(D - 1) + 1}\text{,}\\
          0\text{,} & \text{otherwise}
        \end{cases}\\
  \equsing{L.~\ref{lem:spreading}}\ \ \ \ & \sum\nolimits_{\lambda = 1}^m\sum\nolimits_{\mu = 1}^u ((w_\mu)_\lambda)_\kappa \cdot
      \begin{cases}
          (\xi_{\div{i - \mu}{u} + 1})_\lambda\text{,} & \text{if }i - \mu + 1\inint{1}{u(D - 1) + 1} \\& \text{and }\rem{i - \mu}{u} = 0\text{,}\\
          0\text{,} & \text{otherwise}
        \end{cases}\\
  \equsing{L.~\ref{lem:upsmp-idx-unique}}\ \ \ \ & \sum\nolimits_{\lambda = 1}^m ((w_{\rem{i - 1}{u} + 1})_\lambda)_\kappa \cdot
      \begin{cases}
          (\xi_{\div{i - \rem{i - 1}{u} - 1}{u} + 1})_\lambda\text{,} & \text{if }\discint{1}{u(D - 1) + 1} \\& \text{contains }i - \rem{i - 1}{u}\text{,}\\
          0\text{,} & \text{otherwise}
        \end{cases}\\
  \equsing{L.~\ref{lem:upsmp-no-boundhand}}\ \ \ \ & \sum\nolimits_{\lambda = 1}^m ((w_{\rem{i - 1}{u} + 1})_\lambda)_\kappa \cdot (\xi_{\div{i - \rem{i - 1}{u} - 1}{u} + 1})_\lambda\\
  \equsing{L.~\ref{lem:upsmp-idx-simplify}}\ \ \ \ & \sum\nolimits_{\lambda = 1}^m ((w_{\rem{i - 1}{u} + 1})_\lambda)_\kappa \cdot (\xi_{\div{i - 1}{u} + 1})_\lambda\\
  =\ \ \ \ & (\DUC_{\Psi(w)}(\xi)_i)_\kappa\text{,}
\end{align*}
which proves the claim.
\end{proof}

Theorem~\ref{thm:duc_trconv} guarantees that Dense Upsampling Convolution can be expressed as a special case of transposed convolution without any accuracy loss.
In practice, either the implementation using conventional convolution and defragmentation or the alternative implementation using transposed convolution can be more efficient.
This depends on the concrete parameterization, employed processor, and implementation of the required routines.

\bibliographystyletrconv{IEEEtran}
\bibliographytrconv{IEEEabrv,the}

\clearpage
\appendix[Functions Applied in a Dilated Fashion]
This appendix studies an alternative to the $\EvalSlide$ operator proposed in the main part of this paper.
In their original work, Li \emph{et al.}~\citedil{Li2014dil} proposed modifying the way convolution and pooling are carried out to transform a CNN from a subsignal-based application to a signal-based application.
In doing so, their $d$-regularly sparse kernels enlarge the receptive field size of these two operators through insertion of vanishing entries at regular intervals.
Until now, it has not been proven rigorously that this method does not suffer from accuracy loss.
Moreover, similar if not equivalent concepts have been reported most recently under different names, namely 
filter rarefactions~\citedil{Long2015dil}, strided kernels~\citedil{Tschopp2015dil}, dilated convolutions~\citedil{Yu2016dil}, and atrous convolutions~\citedil{Chen2016dil}.
It has further been claimed by~\citedil{Zlateski2016dil} that these methods are equivalent to a fragmentation-based approach as advocated in this paper.

In this appendix, subsignal compatible transformation theory is used to analyze such modified operators.
For notational convenience, this is hereafter referred to as the application of functions in a dilated fashion.
It is proved eventually that there is a direct connection between the naive subsignal-based application of a CNN using the $\EvalStride$ operator and the dilated signal processing scheme.
As a consequence, it is shown here that the approach of \citedil{Li2014dil} does not involve any accuracy loss.
Further, a computational complexity analysis shows that dilated function application and the application of functions in a sliding fashion to fragmented signals share the same number of required function applications.
In practice, however, a fragmentation-based approach is more beneficial since it facilitates direct usage of readily available routines for the computationally most demanding tensor convolution.
This has also been noted by~\citedil{Chen2016dil}, who first implemented their atrous convolution using a dilated function application scheme before switching to fragmentation.

\subsection{Dilated Subsignals and Functions Applied in a Dilated Fashion}
First, the definition of how access to dilated subsignals is realized:
\begin{definition}
\label{def:dilatedsubsignal}
Let $M$ be a set.
Let $d\in\N_1$ denote a dimensionality and $k\in\N_1$ a stride.
The function $\DilatedSubsignal_{(d,\; k)}\colon\bigcup_{D = k(d - 1) + 1}^\infty\big(M^D\times\discint{1}{D - k(d - 1)}\big)\to M^d$,
\begin{displaymath}
  (\xi,\; i)\mapsto\sum\nolimits_{\nu = 1}^d \xi_{i + k(\nu - 1)}\cdot e_\nu^d\text{,}
\end{displaymath}
is called the \emph{dilated subsignal extraction operator}.
\end{definition}

\begin{figure}[t]
  \centering
  \scalebox{1.20}{\includegraphics[page=13]{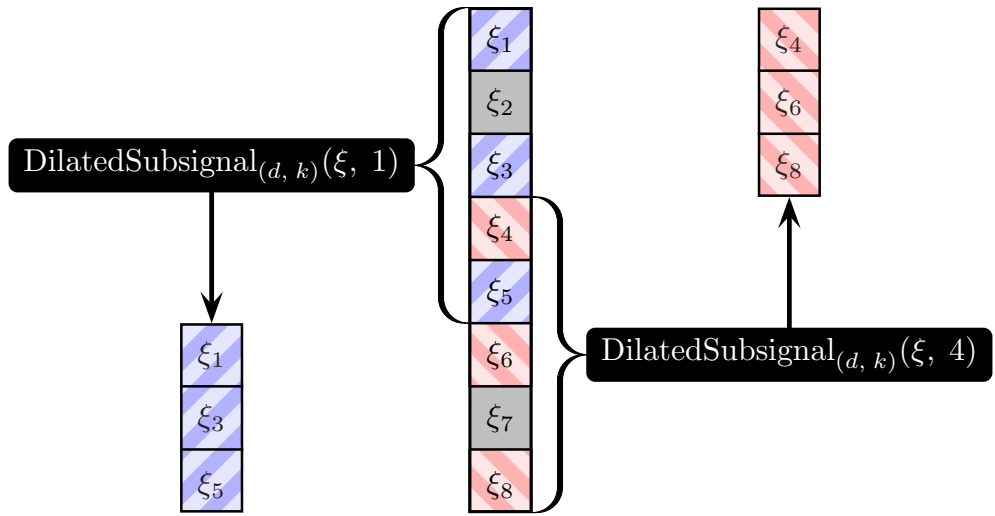}}
  \caption{Extraction of dilated subsignals from a signal $\xi$ with $D = 8$ samples, where each extracted dilated subsignal with $d = 3$ samples has a stride of $k = 2$ within the original signal. On the left-hand side, the first feasible dilated subsignal is shown. The right-hand side depicts the final dilated subsignal with the maximum index of $D - k(d - 1) = 4$.}
  \label{fig:dilatedsubsignal}
\end{figure}

An illustration of this operator is shown in Fig.~\ref{fig:dilatedsubsignal}.
It is clear by definition that the $i$-th dilated subsignal starts at the $i$-th sample of the original input signal, and that $k - 1$ samples from the input signal are always skipped during extraction of individual samples.
The minimum input signal length of $k(d - 1) + 1$ ensures the set of feasible dilated subsignal indices is always non-empty, and hence always contains unity.
The maximum dilated subsignal index of $D - k(d - 1)$ guarantees that the maximum sample index used for accessing the input signal corresponds to the signal length $D$.
Therefore, all accesses are within bounds and this operator is well-defined.
Eventually, this operator becomes a strict generalization of the subsignal extraction operator since for $k = 1$ follows $\DilatedSubsignal_{(d,\; 1)} = \Subsignal_d$.

Now to the usage of this operator for function evaluation:
\begin{definition}
\label{def:dilate}
Let $M$ and $N$ be sets, let $c\in\N_1$ and let $f\colon M^c\to N$ be a function.
Suppose $k\in\N_1$ is a stride.
Then $\Dilate_{(f,\; k)}\colon\cup_{k(c - 1) + 1}(M)\to\cup_1(N)$,
\begin{displaymath}
  \xi\mapsto\sum\nolimits_{i = 1}^{\dim_M(\xi) - k(c - 1)} f(\DilatedSubsignal_{(c,\;k)}(\xi,\;i))\cdot e_i^{\dim_M(\xi) - k(c - 1)}\text{,}
\end{displaymath}
is the operator that applies $f$ in a \emph{dilated fashion} with a stride of $k$.
\end{definition}

In other words, this operator extracts all feasible dilated subsignals from the input signal $\xi$ using the dimensionality $c$ of the function $f$ and the specified stride $k$, applies $f$ and stores the outcome contiguously in a new signal.
The minimum length of the input signal $k(c - 1) + 1$ is chosen here so that at least one dilated subsignal exists.
The dimensionality of the result, $\dim_M(\xi) - k(c - 1)$, matches exactly the number of dilated subsignals in $\xi$.
Therefore, $\Dilate_{(f,\; k)}$ is well-defined.

\begin{figure}[t]
  \centering
  \scalebox{1.20}{\includegraphics[page=14]{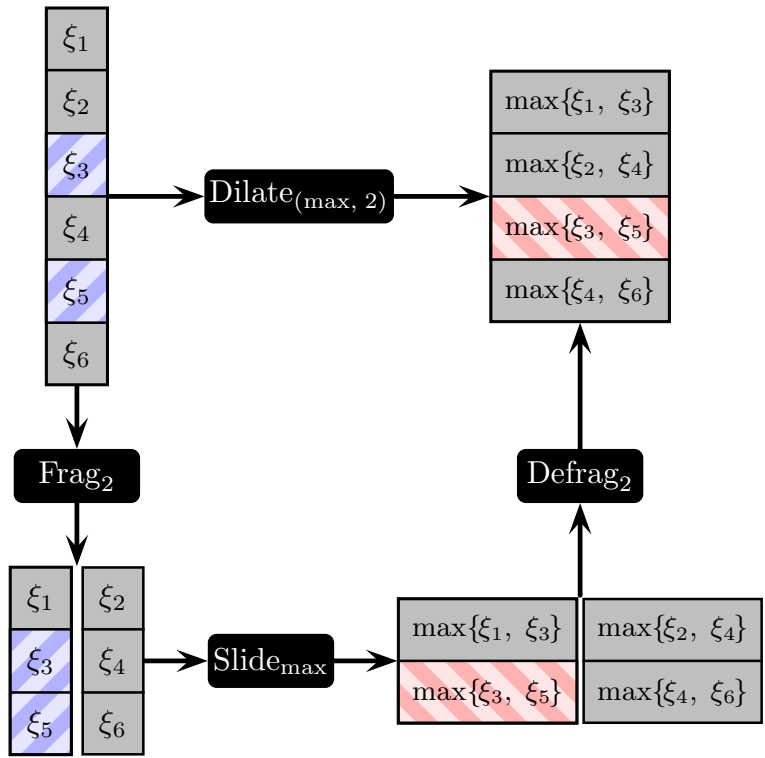}}
  \caption{The upper part depicts an illustration of the $\max$ operator applied in a dilated fashion to an input signal $\xi$ with $D = 6$ samples.
    Here, $\max$ accepts $c = 2$ input samples and is applied to dilated subsignals using a stride of $k = 2$.
    The lower portion illustrates the statement from Remark~\ref{rem:dilated-fragment}:
    If the input is first fragmented using the same parameter $k = 2$, followed by application of $\max$ in a sliding fashion and a defragmentation operation, this has the same outcome as the application of $\max$ in a dilated fashion.
    The combination of a corresponding pair of input samples and resulting output sample is highlighted with colored patterns in the graphics, illustrating that $\Dilate$ has to access strided data for each output sample whereas $\Slide$ is able to process contiguous samples due to the fragmentation carried out beforehand.
}
  \label{fig:dilate}
\end{figure}

Moreover, this is a generalization of the application of a function in a sliding fashion, since for $k = 1$ it is $\Dilate_{(f,\; 1)} = \Slide_f$.
It is straightforward to verify that the application of a function in a dilated fashion has a direct connection to the application in a sliding fashion, where for $k > 1$ non-trivial fragmentation and defragmentation are involved (see Fig.~\ref{fig:dilate} for orientation):
\begin{remark}
\label{rem:dilated-fragment}
Let $M,N$ be sets, $c\in\N_1$, $f\colon M^c\to N$ a function, and $k\in\N_1$.
Then
\begin{displaymath}
  \Dilate_{(f,\; k)}(\xi) = \Defragmentation_k(\Slide_f(\Fragmentation_k(\xi)))
\end{displaymath}
for all $\xi\in\cup_{k(c - 1) + 1}(M)$ where $k$ divides $\dim_M(\xi)$.
\end{remark}
\begin{proof}
$\Dilate_{(f,\; k)}$ is well-defined by the choice of $\xi$.
$\Fragmentation_k(\xi)$ is well-defined since $\dim_M(\xi)$ was required to be divisible by $k$.
Since $\Slide_f$ does not alter the number of fragments, $\Defragmentation_k$ can be applied to its result here.

Write $D := \dim_M(\xi)$, then the left-hand side has $D - k(c - 1)$ samples.
Since $\Fragmentation_k(\xi)\in M^{(D/k)\times k}$ follows $\Slide_f(\Fragmentation_k(\xi))\in N^{(D/k - c + 1)\times k}$, therefore the result of defragmentation has exactly one fragment with $k\cdot (D/k - c + 1) = D - k(c - 1)$ samples.
As dimensionalities match on both sides, it remains to be shown that all the samples are equivalent.

Let $\mu\inint{1}{D - k(c - 1)}$, then
\begin{align*}
  & \Defragmentation_k(\Slide_f(\Fragmentation_k(\xi)))_\mu\\
  \equsing{L.~\ref{lem:defrag-ops}}\ \ & \Slide_f(\Fragmentation_k(\xi))_{\div{\mu - 1}{k} + 1,\; \rem{\mu - 1}{k} + 1}\\
  \equsing{D.~\ref{def:sliding-function}}\ \ & f\left(\sum\nolimits_{\nu = 1}^c \Fragmentation_k(\xi)_{\div{\mu - 1}{k} + \nu,\; \rem{\mu - 1}{k} + 1}\cdot  e_\nu^c\right)\\
  \equsing{L.~\ref{lem:frag-ops}}\ \ & f\left(\sum\nolimits_{\nu = 1}^c \xi_{\div{(\div{\mu - 1}{k} + \nu - 1)\cdot k + \rem{\mu - 1}{k}}{1} + 1,\;\rem{(\div{\mu - 1}{k} + \nu - 1)\cdot k + \rem{\mu - 1}{k}}{1} + 1}\cdot  e_\nu^c\right)\\
  \equsing{P.~\ref{prop:number-theory}}\ \ & f\left(\sum\nolimits_{\nu = 1}^c \xi_{\div{\mu - 1}{k}\cdot k + \rem{\mu - 1}{k} + k(\nu - 1) + 1,\; 1}\cdot  e_\nu^c\right)\\
  \equsing{($\lozenge$)}\ \ & f\left(\sum\nolimits_{\nu = 1}^c \xi_{\mu + k(\nu - 1)}\cdot  e_\nu^c\right)\\
  \equsing{D.~\ref{def:dilatedsubsignal}}\ \ & f(\DilatedSubsignal_{(c,\;k)}(\xi,\;\mu))\\
  \equsing{D.~\ref{def:dilate}}\ \ & \Dilate_{(f,\; k)}(\xi)_\mu\text{,}
\end{align*}
where in the ($\lozenge$) step $\div{a}{b}\cdot b + \rem{a}{b} = a$ for all $a$ and all $b$ has been used.
Since all the samples are equal, both signals are equal.
\end{proof}

\subsection{Application of Processing Chains in a Dilated Fashion}
First consider the operator that uses the concepts just introduced for processing chain application:
\begin{definition}
\label{def:evaldilate}
Consider a processing chain with the same notation as in Definition~\ref{def:processing-chain}.
Additionally, with Theorem~\ref{thm:sliding-subsignal} let $f_j\colon M_{j - 1}^{c_j}\to N_j$ be the unique functions with $T_j = \Slide_{f_j}$ for all $j\inint{1}{L}$.
For $j\inint{0}{L}$, the operator $\EvalDilate_j\colon\cup_\ROI(M_0)\to\cup_1(M_j)$,
\begin{displaymath}
  \xi\mapsto
  \begin{cases}
    \xi\text{,} & \text{if } j = 0\text{,}\\
    \Dilate_{(g_j,\; k_{j - 1}^*)}( \Dilate_{(f_j,\; k_{j - 1}^*)}( \EvalDilate_{j - 1}(\xi) ) )\text{, } & \text{if } j > 0\text{,}
  \end{cases}
\end{displaymath}
is said to apply the processing chain in a \emph{dilated} fashion.
\end{definition}

In other words, each layer in the $\EvalDilate$ cascade applies the functions $f_j$ and $g_j$ in a dilated fashion to the output of the previous layer.
The stride for function application is chosen to equal the previous layer's stride product.
Analogous to the main part of this paper, this notion can be analyzed rigorously:
\begin{lemma}
\label{lem:evaldilate}
Consider a processing chain as in Definition~\ref{def:evaldilate}.
Suppose that for all $j\inint{1}{L}$ and all $\rho\in M_0^\ROI$ it holds that $k_j$ divides $\dim_{N_j}(\Slide_{f_j}(\EvalStride_{j - 1}(\rho)))$ and that $\EvalStride_{j}(\rho)$ is non-empty so that $\EvalStride$ is well-defined.

Let $\xi\in M_0^D$ be an input signal with $D\in\N_1$, $D\geq\ROI$, samples.
In contrast to Lemma~\ref{lem:processing-chain}, no divisibility constraints are imposed here on $D$.
This facilitates processing of input signals of arbitrary length greater than or equal to the processing chain's receptive field size $\ROI$, but prevents results of the $\EvalSlide$ operator from being used here.
As in Lemma~\ref{lem:processing-chain}, let $u_j := \dim_{M_j}(\EvalStride_j(\Subsignal_\ROI(\xi,\; i)))\in\N_1$ for all $j\inint{0}{L}$ denote the dimensionalities encountered during $\EvalStride$ evaluation, which are independent of any concrete subsignal index $i$.

Now let $V_j := \dim_{M_j}(\EvalDilate_j(\xi))\in\N_1$ for all $j\inint{0}{L}$ denote the dimensionalities of the intermediate representations of each layer during application of the processing chain in a dilated fashion.
Then for all $j\inint{0}{L}$ the following holds:
\begin{enumerate}
  \item \label{lem:evaldilate-a} $V_j = D - \sum_{\mu = 1}^j k_{\mu - 1}^*(c_\mu + k_\mu - 2)$.
  \item \label{lem:evaldilate-b} $V_j - k_j^*(u_j - 1) = D - \ROI + 1$. Therefore, there are $D - \ROI + 1$ dilated subsignals with $u_j$ samples and a stride of $k_j^*$ in $\EvalDilate_j(\xi)$.
  \item \label{lem:evaldilate-c} $V_j\in\N_1$ and $\EvalDilate_j(\xi)$ is well-defined.
  \item \label{lem:evaldilate-d} For all subsignal indices $i\inint{1}{D - \ROI + 1}$ it is 
    \begin{displaymath}
      \EvalStride_j(\Subsignal_\ROI(\xi,\; i)) = \DilatedSubsignal_{(u_j,\; k_j^*)}(\EvalDilate_j(\xi),\; i)\text{.}
    \end{displaymath}
    In other words, all subsignals of the input signal $\xi$ fed through the $\EvalStride$ operator emerge as dilated subsignals in the result of the $\EvalDilate$ operator applied to $\xi$.
\end{enumerate}
\end{lemma}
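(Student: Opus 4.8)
The plan is to prove all four statements by induction on $j$, closely mirroring the proof of Lemma~\ref{lem:processing-chain}. I would first treat the dimensionality claims (a)--(c) together, since the formula in (a) is only meaningful once $\EvalDilate_j(\xi)$ is known to be well-defined. The base case $j=0$ is immediate: $V_0 = \dim_{M_0}(\xi) = D$ and the sum in (a) is empty. For the step $j-1\to j$, Definition~\ref{def:dilate} shows that applying $\Dilate_{(f_j,\;k_{j-1}^*)}$ to a signal of length $V_{j-1}$ (with $f_j\colon M_{j-1}^{c_j}\to N_j$) produces length $V_{j-1} - k_{j-1}^*(c_j - 1)$, and the subsequent $\Dilate_{(g_j,\;k_{j-1}^*)}$ (with $g_j\colon N_j^{k_j}\to M_j$) subtracts a further $k_{j-1}^*(k_j - 1)$. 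This gives the recursion $V_j = V_{j-1} - k_{j-1}^*(c_j + k_j - 2)$, and telescoping it establishes (a).

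For (b), I would combine (a) with the formula for $u_j$ from Lemma~\ref{lem:processing-chain}\ref{lem:processing-chain-a}: multiplying that formula by $k_j^*$ gives $k_j^* u_j = \ROI - \sum_{\mu=1}^j k_{\mu-1}^*(c_\mu - 1)$, and subtracting from (a) the $c_\mu$-terms cancel, leaving $\sum_{\mu=1}^j k_{\mu-1}^*(k_\mu - 1) = \sum_{\mu=1}^j (k_\mu^* - k_{\mu-1}^*) = k_j^* - 1$ by a telescoping identity in the stride products. Whence $V_j - k_j^* u_j = (D - \ROI + 1) - k_j^*$; adding $k_j^*$ yields $V_j - k_j^*(u_j - 1) = D - \ROI + 1$, which is (b). The well-definedness claim (c) should be woven into this same induction: using the reformulation $V_{j-1} = k_{j-1}^*(u_{j-1} - 1) + (D - \ROI + 1)$ together with $u_{j-1}\geq c_j$ (from $u_{j-1} - c_j + 1 = k_j u_j \geq 1$ via Lemma~\ref{lem:processing-chain}\ref{lem:processing-chain-a}) and $D\geq\ROI$, one checks that $V_{j-1}$ meets the minimum length $k_{j-1}^*(c_j - 1) + 1$ required by the first dilate, and that the intermediate length $k_{j-1}^*(k_j u_j - 1) + (D - \ROI + 1)$ meets the requirement $k_{j-1}^*(k_j - 1) + 1$ of the second. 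Positivity $V_j\in\N_1$ then drops out of (b) since $u_j\geq 1$.

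The substance of the lemma is (d), which I would prove by a separate induction on $j$. The base case uses $u_0 = \ROI$, $k_0^* = 1$ and $\DilatedSubsignal_{(d,\;1)} = \Subsignal_d$, so both sides reduce to $\Subsignal_\ROI(\xi,\;i)$. For the step, write $\tau := \EvalStride_{j-1}(\Subsignal_\ROI(\xi,\;i))$ and expand the left-hand side sample-by-sample: using Definition~\ref{def:strided-function} and $T_j = \Slide_{f_j}$ (Theorem~\ref{thm:sliding-subsignal}), the $\mu$-th sample of $\EvalStride_j(\Subsignal_\ROI(\xi,\;i)) = \Stride_{g_j}(\Slide_{f_j}(\tau))$ becomes a nested $g_j\big(\cdots f_j(\cdots \tau_{k_j(\mu-1)+\nu+\lambda-1}\cdots)\cdots\big)$, and the induction hypothesis rewrites each $\tau_p$ as $\EvalDilate_{j-1}(\xi)_{i + k_{j-1}^*(p-1)}$. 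On the right-hand side I would unfold $\EvalDilate_j(\xi) = \Dilate_{(g_j,\;k_{j-1}^*)}(\Dilate_{(f_j,\;k_{j-1}^*)}(\EvalDilate_{j-1}(\xi)))$ twice through Definition~\ref{def:dilate} and Definition~\ref{def:dilatedsubsignal}, extracting the sample at position $i + k_j^*(\mu-1)$.

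The crux -- and the step I expect to be the main obstacle -- is verifying that the two nested index expressions coincide. On the right-hand side the outer $g_j$-dilate contributes a stride $k_{j-1}^*$ indexed by $\nu$, the inner $f_j$-dilate a stride $k_{j-1}^*$ indexed by $\lambda$, and the dilated-subsignal offset contributes $k_j^*(\mu-1)$; using $k_j^* = k_j k_{j-1}^*$ these combine into the single access $\EvalDilate_{j-1}(\xi)_{\,i + k_{j-1}^*(k_j(\mu-1)+\nu+\lambda-2)}$. This must match the index produced on the left after the induction hypothesis is applied to $\tau_{k_j(\mu-1)+\nu+\lambda-1}$, namely $\EvalDilate_{j-1}(\xi)_{\,i + k_{j-1}^*(k_j(\mu-1)+\nu+\lambda-2)}$; the two are identical, so the nested $f_j$/$g_j$ evaluations agree term by term, and equality of all $u_j$ samples proves the claim. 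Keeping the roles of $\mu$ (pooling block), $\nu$ (pooling offset) and $\lambda$ (filter tap) straight and confirming the stride factors telescope to $k_j^*$ is where the care lies, but it is otherwise a routine, bookkeeping-heavy verification.
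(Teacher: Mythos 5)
Your proposal is correct and follows essentially the same route as the paper's proof: the same induction with the recursion $V_j = V_{j-1} - k_{j-1}^*(c_j + k_j - 2)$ and stride-product telescoping for (a)--(c), and for (d) the same sample-wise expansion into nested $g_j$/$f_j$ evaluations whose access indices collapse to $i + k_{j-1}^*(k_j(\mu-1)+\nu+\lambda-2)$ on both sides (you apply the induction hypothesis left-to-right where the paper goes right-to-left, which is immaterial). The only detail left implicit is the verification that the coordinate $k_j(\mu-1)+\nu+\lambda-1$ stays within $\discint{1}{u_{j-1}}$, so that the dilated-subsignal access and the induction hypothesis are legitimately applied; this follows at once from the identity $u_{j-1} = k_j u_j + c_j - 1$ that you already invoke.
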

\begin{proof}
\ref{lem:evaldilate-a}
For $j = 0$ follows $V_0 = \dim_{M_0}(\xi) = D$, which equals the claimed expression.
For $j - 1 \to j$, one obtains
\begin{align*}
  V_j\ \ 
  &\equsing{D.~\ref{def:evaldilate}}\ \ \dim_{M_j}( \Dilate_{(g_j,\; k_{j - 1}^*)}( \Dilate_{(f_j,\; k_{j - 1}^*)}( \EvalDilate_{j - 1}(\xi) ) ) )\\
  &\equsing{D.~\ref{def:dilate}}\ \ \dim_{N_j}( \Dilate_{(f_j,\; k_{j - 1}^*)}( \EvalDilate_{j - 1}(\xi) ) ) - k_{j - 1}^*(k_j - 1)\\
  &\equsing{D.~\ref{def:dilate}}\ \ \dim_{M_{j - 1}}( \EvalDilate_{j - 1}(\xi) ) - k_{j - 1}^*(c_j - 1) - k_{j - 1}^*(k_j - 1)\\
  &\equsing{IH}\ \ D - \sum\nolimits_{\mu = 1}^{j - 1} k_{\mu - 1}^*(c_\mu + k_\mu - 2) - k_{j - 1}^*(c_j + k_j - 2)\\
  &=\ \ D - \sum\nolimits_{\mu = 1}^j k_{\mu - 1}^*(c_\mu + k_\mu - 2)\text{.}
\end{align*}
Hence the claim holds.

\ref{lem:evaldilate-b}
Using~\ref{lem:evaldilate-a} and Lemma~\ref{lem:processing-chain}\ref{lem:processing-chain-a} leads to
\begin{align*}
  V_j - k_j^*(u_j - 1)
  &= D - \sum\nolimits_{\mu = 1}^j k_{\mu - 1}^*(c_\mu + k_\mu - 2) - k_j^* \cdot \tfrac{1}{k_j^*}\left(\ROI - \sum\nolimits_{\mu = 1}^j k_{\mu - 1}^*(c_\mu - 1)\right) + k_j^*\\
  &= D - \ROI + \sum\nolimits_{\mu = 1}^j k_{\mu - 1}^*(-c_\mu - k_\mu + 2 + c_\mu - 1) + k_j^*\\
  &= D - \ROI + \sum\nolimits_{\mu = 1}^j k_{\mu - 1}^* - \sum\nolimits_{\mu = 1}^j k_\mu^* + k_j^*\\
  &= D - \ROI + k_0^* + \sum\nolimits_{\mu = 2}^j k_{\mu - 1}^* - \sum\nolimits_{\mu = 1}^{j - 1} k_\mu^*\\
  &= D - \ROI + 1\text{.}
\end{align*}

\ref{lem:evaldilate-c}
From~\ref{lem:evaldilate-a} it is clear that $V_j$ is an integer.
With~\ref{lem:evaldilate-b} follows $V_j = D - \ROI + 1 + k_j^*(u_j - 1)$.
Now $D \geq \ROI$ by definition.
Since the processing chain itself should be well-defined, it must hold $u_j\in\N_1$ and hence $u_j \geq 1$.
Eventually, $k_j^*\geq 1$ by definition and hence $V_j \geq 1$, therefore $V_j\in\N_1$.

To prove that $\EvalDilate_j(\xi)$ is well-defined, it must first be shown that $\Dilate_{(f_j,\; k_{j - 1}^*)}$ is applicable to $\EvalDilate_{j - 1}(\xi)$.
For this, the latter must contain at least $k_{j - 1}^*(c_j - 1) + 1$ samples, see Definition~\ref{def:dilate}.
It has been shown in~\ref{lem:evaldilate-a} that $V_j = V_{j - 1} - k_{j - 1}^*(c_j + k_j - 2)$ immediately before the induction hypothesis was substituted.
Since $V_j \geq 1$ follows $V_{j - 1} \geq k_{j - 1}^*(c_j + k_j - 2) + 1$, and since $k_j \geq 1$ by definition follows $V_{j - 1} \geq k_{j - 1}^*(c_j - 1) + 1$.
Therefore $\Dilate_{(f_j,\; k_{j - 1}^*)}$ is applicable.

Analogously, for $\Dilate_{(g_j,\; k_{j - 1}^*)}$ to be applicable to $\Dilate_{(f_j,\; k_{j - 1}^*)}( \EvalDilate_{j - 1}(\xi) )$ it must hold that $V_{j - 1} - k_{j - 1}^*(c_j - 1) \geq k_{j - 1}^*(k_j - 1) + 1$.
This condition is equivalent to $V_j \geq 1$, which was shown earlier.
In conclusion, $\EvalDilate_j(\xi)$ is well-defined.

\ref{lem:evaldilate-d}
In the case of $j = 0$, it holds that $\EvalStride_0(\Subsignal_\ROI(\xi,\; i)) = \Subsignal_\ROI(\xi,\; i)$ and further $\DilatedSubsignal_{(u_0,\; k_0^*)}(\EvalDilate_0(\xi),\; i) = \DilatedSubsignal_{(\ROI,\; 1)}(\xi,\; i) = \Subsignal_\ROI(\xi,\; i)$.

Now consider the induction step $j - 1 \to j$.
Completely analogous to the proof of Lemma~\ref{lem:processing-chain}\ref{lem:processing-chain-d}, let $\mu\inint{1}{u_j}$ be arbitrary, and let $\tau := \EvalStride_{j - 1}(\Subsignal_\ROI(\xi,\;i))\in M_{j - 1}^{u_{j - 1}}$ be an abbreviation.
The $\mu$-th sample of the left-hand side of the claim equals
\begin{align*}
  & \EvalStride_j(\Subsignal_\ROI(\xi,\;i))_\mu\\
  \equsing{D.~\ref{def:processing-chain}}\ \ & \Stride_{g_j}(\Slide_{f_j}(\tau))_\mu\\
  \equsing{D.~\ref{def:strided-function}}\ \ & g_j\!\left( \sum\nolimits_{\nu = 1}^{k_j} \Slide_{f_j}(\tau)_{k_j(\mu - 1) + \nu}\cdot e_\nu^{k_j} \right)\\
  \equsing{D.~\ref{def:sliding-function}}\ \ & g_j\!\left( \sum\nolimits_{\nu = 1}^{k_j} f_j\!\left(\sum\nolimits_{\lambda = 1}^{c_j}  \tau_{k_j(\mu - 1) + (\nu - 1) + (\lambda - 1) + 1}\cdot e_\lambda^{c_j}\right)\cdot e_\nu^{k_j} \right)\text{.}
\end{align*}
Consider the same sample of the right-hand side of the claim:
\begin{align*}
  & \DilatedSubsignal_{(u_j,\; k_j^*)}(\EvalDilate_j(\xi),\; i)_\mu\\
  \equsing{D.~\ref{def:dilatedsubsignal}}\ \ & \EvalDilate_j(\xi)_{i + k_j^*(\mu - 1)}\\
  \equsing{D.~\ref{def:evaldilate}}\ \ & \Dilate_{(g_j,\; k_{j - 1}^*)}( \Dilate_{(f_j,\; k_{j - 1}^*)}( \EvalDilate_{j - 1}(\xi) ) )_{i + k_j^*(\mu - 1)}\\
  \equsing{D.~\ref{def:dilate}}\ \ & g_j\!\left( \sum\nolimits_{\nu = 1}^{k_j} \Dilate_{(f_j,\; k_{j - 1}^*)}( \EvalDilate_{j - 1}(\xi) )_{i + k_j^*(\mu - 1) + k_{j - 1}^*(\nu - 1)} \cdot e_\nu^{k_j} \right)\\
  \equsing{D.~\ref{def:dilate}}\ \ & g_j\!\left( \sum\nolimits_{\nu = 1}^{k_j} f_j\!\left(\sum\nolimits_{\lambda = 1}^{c_j} \EvalDilate_{j - 1}(\xi)_{i + k_j^*(\mu - 1) + k_{j - 1}^*(\nu - 1) + k_{j - 1}^*(\lambda - 1)}\cdot e_\lambda^{c_j}\right) \cdot e_\nu^{k_j} \right)\text{.}
\end{align*}
Now, the innermost part of the final expression can be manipulated:
\begin{align*}
  & \EvalDilate_{j - 1}(\xi)_{i + k_j^*(\mu - 1) + k_{j - 1}^*(\nu - 1) + k_{j - 1}^*(\lambda - 1)}\\
  =\ \ & \EvalDilate_{j - 1}(\xi)_{i + k_{j - 1}^*\left(k_j(\mu - 1) + (\nu - 1) + (\lambda - 1) + 1 - 1\right)}\\
  \equsing{D.~\ref{def:dilatedsubsignal}}\ \ & \DilatedSubsignal_{(u_{j-1},\; k_{j - 1}^*)}(\EvalDilate_{j - 1}(\xi),\;i)_{k_j(\mu - 1) + (\nu - 1) + (\lambda - 1) + 1}\\
  \equsing{IH}\ \ & \EvalStride_{j - 1}(\Subsignal_\ROI(\xi,\; i))_{k_j(\mu - 1) + (\nu - 1) + (\lambda - 1) + 1}\text{.}
\end{align*}
For the application of the $\DilatedSubsignal_{(u_{j-1},\; k_{j - 1}^*)}$ operator it must be verified here that the selected coordinate $\upsilon := k_j(\mu - 1) + (\nu - 1) + (\lambda - 1) + 1$ lies within the set $\discint{1}{u_{j - 1}}$.
Substituting the lower bounds of $\mu$, $\nu$ and $\lambda$ yields $\upsilon \geq 1$.
Now Lemma~\ref{lem:processing-chain}\ref{lem:processing-chain-a} implies $u_j = \frac{1}{k_j}(u_{j - 1} - c_j + 1)$, and therefore $\upsilon \leq k_j(u_j - 1) + (k_j - 1) + (c_j - 1) + 1 = u_{j - 1}$.

Eventually, plugging the result into the $ \mu$-th sample of the right-hand side of the claim and remembering that $\tau = \EvalStride_{j - 1}(\Subsignal_\ROI(\xi,\;i))$ yields
\begin{align*}
  & \DilatedSubsignal_{(u_j,\; k_j^*)}(\EvalDilate_j(\xi),\; i)_\mu\\
  =\ \ & g_j\!\left( \sum\nolimits_{\nu = 1}^{k_j} f_j\!\left(\sum\nolimits_{\lambda = 1}^{c_j} \tau_{k_j(\mu - 1) + (\nu - 1) + (\lambda - 1) + 1}\cdot e_\lambda^{c_j}\right) \cdot e_\nu^{k_j} \right)\text{,}
\end{align*}
which equals the left-hand side of the claim as shown earlier.
\end{proof}

Therefore, $\EvalDilate$ is well-defined.
Similar to $\EvalSlide$, the intermediate representations of the $\EvalDilate$ operator contain the complete output of the EvalStride operator, which is applied to all feasible subsignals of the input signal through to the relevant layer.
Remark~\ref{rem:dilated-fragment} has shown a connection between dilated function application and fragmentation.
This statement can be used to establish a connection between $\EvalDilate$ and $\EvalSlide$ independent of Lemma~\ref{lem:evaldilate}:
\begin{remark}
\label{rem:evaldilate}
It is also possible to prove the identity from Lemma~\ref{lem:evaldilate}\ref{lem:evaldilate-d} using the results from the main part of this paper.
Since here the $\EvalSlide$ operator is required to be well-defined, the final stride product $k_L^*$ must divide $D - \ROI + 1$ as in Lemma~\ref{lem:processing-chain}.
Now the following holds for all $j\inint{0}{L}$ in the situation of Lemma~\ref{lem:evaldilate}:
\begin{enumerate}
  \item \label{rem:evaldilate-a} $\Defragmentation_{k_{j - 1}^*} = \Defragmentation_{k_j^*} \circ \Fragmentation_{k_j}$.
  \item \label{rem:evaldilate-b} $\EvalDilate_j(\xi) = \Defragmentation_{k_j^*}(\EvalSlide_j(\xi))$.
  \item \label{rem:evaldilate-c} $\EvalStride_j(\Subsignal_\ROI(\xi,\; i)) = \DilatedSubsignal_{(u_j,\; k_j^*)}(\Defragmentation_{k_j^*}(\EvalSlide_j(\xi)),\; i)$.
  \item \label{rem:evaldilate-d} $\EvalStride_j(\Subsignal_\ROI(\xi,\; i)) = \DilatedSubsignal_{(u_j,\; k_j^*)}(\EvalDilate_j(\xi),\; i)$.
\end{enumerate}
\end{remark}
\begin{proof}
\ref{rem:evaldilate-a}
From Remark~\ref{rem:comp-frag} it follows that
\begin{align*}
  & \Defragmentation_{k_j^*} \circ \Fragmentation_{k_j}\\
  =\ \ & \Defragmentation_{k_j^*} \circ \Fragmentation_{k_j} \circ \Fragmentation_{k_{j - 1}^*} \circ \Defragmentation_{k_{j - 1}^*}\\
  \equsing{R.~\ref{rem:comp-frag}}\ \ & \Defragmentation_{k_j^*} \circ \Fragmentation_{k_j^*} \circ \Defragmentation_{k_{j - 1}^*}\\
  =\ \ & \Defragmentation_{k_{j - 1}^*}
\end{align*}
since fragmentation and defragmentation are inversely related permutations.

\ref{rem:evaldilate-b}
This is shown through induction.
For $j = 0$, one obtains $\EvalDilate_0(\xi) = \xi$ and furthermore $\Defragmentation_{k_0^*}(\EvalSlide_0(\xi)) = \Defragmentation_{1}(\xi) = \xi$.
For $j - 1 \to j$ follows
\begin{align*}
  & \EvalDilate_j(\xi)\\
  \equsing{D.~\ref{def:evaldilate}}\ \ & \Dilate_{(g_j,\; k_{j - 1}^*)}( \Dilate_{(f_j,\; k_{j - 1}^*)}( \EvalDilate_{j - 1}(\xi) ) )\\
  \equsing{IH}\ \ & \Dilate_{(g_j,\; k_{j - 1}^*)}( \Dilate_{(f_j,\; k_{j - 1}^*)}(\Defragmentation_{k_{j - 1}^*}(\EvalSlide_{j - 1}(\xi))))\\
  \equsing{R.~\ref{rem:dilated-fragment}}\ \ & \Defragmentation_{k_{j - 1}^*}( \Slide_{g_j}( \Fragmentation_{k_{j - 1}^*}( \Defragmentation_{k_{j - 1}^*}( \Slide_{f_j}( \Fragmentation_{k_{j - 1}^*}( \Defragmentation_{k_{j - 1}^*}( \EvalSlide_{j - 1}(\xi) ) ) ) ) ) ) )\\
  \equsing{($\lozenge$)}\ \ & \Defragmentation_{k_{j - 1}^*}( \Slide_{g_j}( \Slide_{f_j}( \EvalSlide_{j - 1}(\xi) ) ) )\\
  \equsing{\ref{rem:evaldilate-a}}\ \ & \Defragmentation_{k_j^*}( \Fragmentation_{k_j}( \Slide_{g_j}( \Slide_{f_j}( \EvalSlide_{j - 1}(\xi) ) ) ) )\\
  \equsing{D.~\ref{def:processing-chain}}\ \ & \Defragmentation_{k_j^*}( \EvalSlide_j(\xi) )\text{,}
\end{align*}
where in the ($\lozenge$) step the composition $\Fragmentation_{k_{j - 1}^*} \circ \Defragmentation_{k_{j - 1}^*}$ equals identity.

\ref{rem:evaldilate-c}
Let $\mu\inint{1}{u_j}$.
Then the $\mu$-th sample of the left-hand side is
\begin{displaymath}
  \EvalStride_j(\Subsignal_\ROI(\xi,\; i))_\mu
  \ \ \ \equsing{L.~\ref{lem:processing-chain}\ref{lem:processing-chain-d}}\ \ \ \EvalSlide_j(\xi)_{\div{i - 1}{k_j^*} + \mu,\;\rem{i - 1}{k_j^*} + 1}\text{.}
\end{displaymath}
Considering the right-hand side, one obtains
\begin{align*}
  & \DilatedSubsignal_{(u_j,\; k_j^*)}(\Defragmentation_{k_j^*}(\EvalSlide_j(\xi)),\; i)_\mu\\
  \equsing{D.~\ref{def:dilatedsubsignal}}\ \ & \Defragmentation_{k_j^*}(\EvalSlide_j(\xi))_{i + k_j^*(\mu - 1),\; 1}\\
  \equsing{L.~\ref{lem:defrag-ops}}\ \ & \EvalSlide_j(\xi)_{\div{i - 1 + k_j^*(\mu - 1)}{k_j^*} + 1,\;\rem{i - 1 + k_j^*(\mu - 1)}{k_j^*} + 1}\\
  \equsing{P.~\ref{prop:number-theory}}\ \ & \EvalSlide_j(\xi)_{\div{i - 1}{k_j^*} + \mu,\;\rem{i - 1}{k_j^*} + 1}\text{.}
\end{align*}
It has been used here that $\Defragmentation_{k_j^*}(\EvalSlide_j(\xi))$ consists of exactly one fragment due to Lemma~\ref{lem:processing-chain}\ref{lem:processing-chain-b}.
Since all samples on both sides of the identity match, the claim follows.

\ref{rem:evaldilate-d}
Follows directly from~\ref{rem:evaldilate-c} and~\ref{rem:evaldilate-b}.
\end{proof}

Subsequent to this preparation, the main result of this appendix can be formulated.
It states that application of a processing chain in a dilated fashion leads to the same outcome as naive subsignal-based application:
\begin{theorem}
Consider a processing chain using the same notation as in Definition~\ref{def:evaldilate} for the $\EvalDilate$ operator and as in Definition~\ref{def:processing-chain} for the $\EvalStride$ operator.
Suppose that $k_j$ divides $\dim_{N_j}(\Slide_{f_j}(\EvalStride_{j - 1}(\rho)))$ and that $\EvalStride_{j}(\rho)$ is non-empty for all $j\inint{1}{L}$ and all $\rho\in M_0^\ROI$, so that $\EvalStride$ is well-defined.
Further, assume $\dim_{M_L}(\EvalStride_L(\rho)) = 1$ for all $\rho\in M_0^\ROI$, or in other words, the processing chain outputs signals with unit spatial size using subsignal-based application.
Then $\EvalDilate_L = \Slide_{\EvalStride_L}$, and therefore $\EvalDilate_L$ is a subsignal compatible transformation with dimensionality reduction constant $\ROI$.
\end{theorem}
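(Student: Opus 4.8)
The plan is to read off the desired identity directly from Lemma~\ref{lem:evaldilate}\ref{lem:evaldilate-d}, observing that the extra hypothesis $\dim_{M_L}(\EvalStride_L(\rho)) = 1$ forces $u_L = 1$ and thereby collapses the dilated subsignal extraction appearing there into plain selection of a single sample. Since $\EvalStride_L(\rho)\in M_L^1$ for every $\rho\in M_0^\ROI$, I would first identify $M_L^1$ with $M_L$ so that $\EvalStride_L$ may be regarded as a generating function $M_0^\ROI\to M_L$, which is exactly the kind of function that $\Slide$ consumes. Both $\EvalDilate_L$ and $\Slide_{\EvalStride_L}$ then share the domain $\cup_\ROI(M_0)$, so comparing them is meaningful.

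First I would fix an arbitrary $\xi\in\cup_\ROI(M_0)$ with $D := \dim_{M_0}(\xi)$ and an index $i\inint{1}{D - \ROI + 1}$. Because $u_L = \dim_{M_L}(\EvalStride_L(\Subsignal_\ROI(\xi,\;i))) = 1$ by hypothesis, Definition~\ref{def:dilatedsubsignal} gives $\DilatedSubsignal_{(1,\; k_L^*)}(\EvalDilate_L(\xi),\;i) = \EvalDilate_L(\xi)_i$, the single-sample signal sitting at position $i$. Plugging $j = L$ into Lemma~\ref{lem:evaldilate}\ref{lem:evaldilate-d} then yields $\EvalStride_L(\Subsignal_\ROI(\xi,\;i)) = \EvalDilate_L(\xi)_i$. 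Comparing this with Definition~\ref{def:sliding-function}, whose $i$-th output sample is precisely $\EvalStride_L(\Subsignal_\ROI(\xi,\;i))$, shows that $\EvalDilate_L(\xi)$ and $\Slide_{\EvalStride_L}(\xi)$ agree sample-by-sample.

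It remains to confirm the dimensionalities match: Lemma~\ref{lem:evaldilate}\ref{lem:evaldilate-b} with $u_L = 1$ gives $\dim_{M_L}(\EvalDilate_L(\xi)) = D - \ROI + 1$, which is exactly the length of $\Slide_{\EvalStride_L}(\xi)$. Hence $\EvalDilate_L(\xi) = \Slide_{\EvalStride_L}(\xi)$ for every $\xi$, that is $\EvalDilate_L = \Slide_{\EvalStride_L}$, and the final assertion follows from the implication \ref{thm:sliding-subsignal-c}$\Rightarrow$\ref{thm:sliding-subsignal-a} of Theorem~\ref{thm:sliding-subsignal}, which certifies any function of the form $\Slide_f$ with $f\colon M_0^\ROI\to M_L$ as a subsignal compatible transformation with dimensionality reduction constant $\ROI$.

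I do not expect a genuine obstacle here, since Lemma~\ref{lem:evaldilate} (and, alternatively, Remark~\ref{rem:evaldilate}) has already absorbed all the combinatorial bookkeeping; the only points demanding a little care are the identification $M_L^1\cong M_L$ used to feed $\EvalStride_L$ into $\Slide$, and the observation that the hypotheses of Lemma~\ref{lem:evaldilate} (the same divisibility and non-emptiness conditions assumed here) hold for arbitrary $D\geq\ROI$ with no divisibility constraint on $D$ itself --- this is precisely what lets $\EvalDilate$ process signals of any length, in contrast to the stuffing device needed for $\EvalSlide$ in Theorem~\ref{thm:processing-chain}.
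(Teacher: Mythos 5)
Your proposal is correct and follows essentially the same route as the paper's own proof: both invoke Lemma~\ref{lem:evaldilate}\ref{lem:evaldilate-d} with $j = L$ and $u_L = 1$ to collapse $\DilatedSubsignal_{(1,\;k_L^*)}$ into plain sample extraction, check the output length via Lemma~\ref{lem:evaldilate}\ref{lem:evaldilate-b}, and conclude subsignal compatibility from Theorem~\ref{thm:sliding-subsignal}. The only difference is cosmetic --- your explicit remarks on the identification $M_L^1\cong M_L$ and on the absence of divisibility constraints on $D$ make precise what the paper leaves implicit.
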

\begin{proof}
Let $\xi\in M_0^D$ be an input signal where $D\in\N_1$, $D\geq\ROI$.
Let $i\inint{1}{D - \ROI + 1}$ be an arbitrary subsignal index.
Let $u_L := \dim_{M_L}(\EvalStride_L(\rho))$ for any $\rho\in M_0^\ROI$, then by requirement $u_L = 1$.
Hence
\begin{align*}
  & \Slide_{\EvalStride_L}(\xi)_i\\
  \equsing{D.~\ref{def:sliding-function}}\ \ \ & \EvalStride_L(\Subsignal_\ROI(\xi,\; i))\\
  \equsing{L.~\ref{lem:evaldilate}\ref{lem:evaldilate-d}}\ \ \ & \DilatedSubsignal_{(1,\; k_L^*)}(\EvalDilate_L(\xi),\; i)\\
  \equsing{D.~\ref{def:dilatedsubsignal}}\ \ \ & \EvalDilate_L(\xi)_i\text{.}
\end{align*}
Now Lemma~\ref{lem:evaldilate}\ref{lem:evaldilate-b} yields $V_L = D - \ROI + 1 + k_L^*(u_L - 1) = D - \ROI + 1$, hence all accesses are within bounds.
Theorem~\ref{thm:sliding-subsignal} finally implies subsignal compatibility of $\EvalDilate_L$.
\end{proof}

\subsection{Computational Complexity Analysis}
After it has been proved that the $\EvalDilate$ operator produces the correct outcome under all circumstances, its computational complexity is analyzed.
Analogous to the analysis of the $\EvalStride$ and $\EvalSlide$ operators, here the required function evaluations are counted and compared.
It will ultimately be shown that the computational complexity of $\EvalDilate$ measured by this means exactly equals that of $\EvalSlide$.
Therefore, the theoretical results obtained for the complexity of $\EvalSlide$ with respect to asymptotic behavior and the comparison with the $\EvalStride$ operator hold for $\EvalDilate$ as well.

The comparison with $\EvalSlide$ requires the input signal length to satisfy a certain divisibility constraint since only this guarantees $\EvalSlide$ is well-defined.
First consider a simple relationship which will turn out helpful later:
\begin{remark}
\label{rem:evaldilate-complexity}
In the situation of Lemma~\ref{lem:evaldilate} assume $k_L^*$ divides $D - \ROI + 1$, which renders $\EvalSlide$ well-defined by Lemma~\ref{lem:processing-chain}.
Then $V_j = U_j^{\col}U_j^{\row}$ for $j\inint{0}{L}$.
\end{remark}
\begin{proof}
One obtains
\begin{displaymath}
  V_j - U_j^{\col}U_j^{\row}
  \ \equsing{($\lozenge$)}\ \left(D - \ROI + 1 + k_j^*(u_j - 1)\right) - \left(D - \ROI + 1 + k_j^*u_j - k_j^*\right)
  \ =\ 0\text{,}
\end{displaymath}
where in the ($\lozenge$) step Lemma~\ref{lem:evaldilate}\ref{lem:evaldilate-b} and Lemma~\ref{lem:processing-chain}\ref{lem:processing-chain-c} have been used, where $U_j^{\col} = k_j^*$ was substituted for the latter due to Lemma~\ref{lem:processing-chain}\ref{lem:processing-chain-b}.
\end{proof}

Now consider the application of the processing chain in a dilated fashion.
Let $\xi\in M_0^D$ be an input signal where the number of samples $D$ has been chosen so that $\EvalSlide$ is well-defined.
Further, let $j\inint{1}{L}$ be a fixed layer index and let $\sigma := \EvalDilate_{j - 1}(\xi)\in M_{j - 1}^{V_{j - 1}}$ be an abbreviation.
The output of the $j$-th layer using the $\EvalDilate$ operator is now
\begin{displaymath}
  \EvalDilate_j(\xi) = \Dilate_{(g_j,\; k_{j - 1}^*)}( \Dilate_{(f_j,\; k_{j - 1}^*)}( \sigma ) )
\end{displaymath}
due to Definition~\ref{def:evaldilate}.
It is
\begin{displaymath}
  \Dilate_{(f_j,\; k_{j - 1}^*)}( \sigma )
  \ \ \equsing{D.~\ref{def:dilate}}\ \ \sum\nolimits_{\mu = 1}^{V_{j - 1} - k_{j - 1}^*(c_j - 1)} f_j(\DilatedSubsignal_{(c_j,\;k_{j - 1}^*)}(\sigma,\;\mu))\cdot e_\mu^{V_{j - 1} - k_{j - 1}^*(c_j - 1)}\text{,}
\end{displaymath}
therefore $f_j$ has to be evaluated $V_{j - 1} - k_{j - 1}^*(c_j - 1)$ times.
Further
\begin{displaymath}
  \Dilate_{(g_j,\; k_{j - 1}^*)}( \Dilate_{(f_j,\; k_{j - 1}^*)}( \sigma ) )
  \ \ \equsing{D.~\ref{def:dilate}}\ \ \sum\nolimits_{\nu = 1}^{V_j} g_j(\DilatedSubsignal_{(k_j,\;k_{j - 1}^*)}(\Dilate_{(f_j,\; k_{j - 1}^*)}( \sigma ),\;\nu))\cdot e_\nu^{V_j}\text{,}
\end{displaymath}
where it has been used that $V_j = V_{j - 1} - k_{j - 1}^*(c_j - 1) - k_{j - 1}^*(k_j - 1)$ is the dimensionality of $\EvalDilate_j(\xi)$ by definition and by Lemma~\ref{lem:evaldilate}\ref{lem:evaldilate-a}.
In conclusion, $V_j$ evaluations of $g_j$ are necessary.

Accounting for the number of input fragments, $\EvalSlide_j$ needs $U_{j - 1}^{\col} (U_{j - 1}^{\row} - c_j + 1)$ evaluations of $f_j$ as shown in the main part of this paper.
Subtracting this from the number of evaluations $\EvalDilate_j$ requires yields
\begin{displaymath}
  V_{j - 1} - k_{j - 1}^*(c_j - 1) - U_{j - 1}^{\col} (U_{j - 1}^{\row} - c_j + 1)
  = \left(V_{j - 1} - U_{j - 1}^{\col}U_{j - 1}^{\row}\right) - (c_j - 1)\left(k_{j - 1}^* - U_{j - 1}^{\col}\right)
  = 0\text{,}
\end{displaymath}
where Remark~\ref{rem:evaldilate-complexity} and Lemma~\ref{lem:processing-chain}\ref{lem:processing-chain-b} have been used in the final step.
Since the difference vanishes, both $\EvalSlide_j$ and $\EvalDilate_j$ require the exact same number of evaluations of $f_j$.

It has further been shown previously that $\EvalSlide_j$ requires $U_{j - 1}^{\col}k_j U_j^{\row}$ evaluations of $g_j$ in total.
Since $U_{j - 1}^{\col}k_j = U_j^{\col}$ with Lemma~\ref{lem:processing-chain}\ref{lem:processing-chain-b}, it follows that $\EvalDilate_j$ requires the exact same number of evaluations of $g_j$ with Remark~\ref{rem:evaldilate-complexity}.

In conclusion, the computational complexity in terms of function evaluations is the same for the dilated function application and a fragmentation-based approach.
While for $\EvalDilate$ all necessary functions such as convolution have to be generalized to support strided input data, the permutation realized through fragmentation in the $\EvalSlide$ approach facilitates usage of routines operating on contiguous data.
Therefore, for CNNs, readily available highly-optimized implementations of tensor convolutions can be employed without any modifications whatsoever.

\bibliographystyledil{IEEEtran}
\bibliographydil{IEEEabrv,the}

\clearpage
\appendix[Application of Processing Chains in a Relaxed Fashion]
The focus of this paper has been to study efficient signal processing schemes that carry out dense signal scanning without any accuracy loss.
This appendix analyzes a method of processing chain application that naively uses the same processing pipeline for signal-based application as for subsignal-based application.
The theory developed in this paper so far suggests that such an approach cannot produce the same result as exact dense signal scanning and hence must involve relaxations of some kind.
Here, the negative effects on the output signal quality of relaxed processing chain application are exactly characterized.
This analysis sheds light on approaches that employ this scheme as a central part of their computations such as~\citerlx{Sermanet2014rlx,Long2015rlx} and helps in identifying possible limitations and optimization potential.
Further, the relationship between relaxed application and a fragmentation-based approach is investigated and a computational complexity analysis is conducted.

\subsection{Application of Processing Chains in a Relaxed Fashion}
First, consider the definition of the operator that naively extends the application of a processing chain in a strided fashion to larger input signals:
\begin{definition}
\label{def:evalrelax}
Suppose a processing chain as in Definition~\ref{def:processing-chain} is given and let $f_j\colon M_{j - 1}^{c_j}\to N_j$ be the unique functions with $T_j = \Slide_{f_j}$ for all $j\inint{1}{L}$ due to Theorem~\ref{thm:sliding-subsignal}.
Then for $j\inint{0}{L}$, the operator $\EvalRelax_j\colon\cup_\ROI(M_0)\to\cup_1(M_j)$,
\begin{displaymath}
  \xi\mapsto
  \begin{cases}
    \xi\text{,} & \text{if } j = 0\text{,}\\
    \Stride_{g_j}(\Slide_{f_j}(\EvalRelax_{j - 1}(\xi)))\text{, } & \text{if } j > 0\text{,}
  \end{cases}
\end{displaymath}
applies the processing chain in a \emph{relaxed} fashion.
\end{definition}

Note that the definition of $\EvalRelax$ is essentially the same as that of $\EvalStride$ except for its domain, which allows for arbitrarily large input signals with more than $\ROI$ samples.
Further, $\EvalRelax$ is not well-defined unless constraints on the input signal length are fulfilled.
These constraints and the fundamental dynamics of this operator are detailed below:

\begin{lemma}
\label{lem:evalrelax}
Suppose a processing chain as in Definition~\ref{def:evalrelax} is given.
Assume that $k_j$ divides $\dim_{N_j}(\Slide_{f_j}(\EvalStride_{j - 1}(\rho)))$ and that $\EvalStride_{j}(\rho)$ is non-empty for all $j\inint{1}{L}$ and all $\rho\in M_0^\ROI$, or in other words, $\EvalStride$ should be well-defined.

Let $D\in\N_1$, $D\geq\ROI$, be an input signal dimensionality.
Assume that the stride product of the final layer $k_L^*$ divides the difference $D - \ROI$.
Let $\xi\in M_0^D$ be an input signal and let $W_j := \dim_{M_j}(\EvalRelax_j(\xi))\in\N_1$ for all $j\inint{0}{L}$ denote the length of the intermediate representations when the processing chain is applied in a relaxed fashion to $\xi$.
As in Lemma~\ref{lem:processing-chain}, let $u_j := \dim_{M_j}(\EvalStride_j(\Subsignal_\ROI(\xi,\; i)))\in\N_1$ for $j\inint{0}{L}$ denote the dimensionalities encountered during application of the processing chain in a strided fashion, which are actually independent of any concrete subsignal index $i$.
Note that for these numbers only it is not necessary that the divisibility requirements on the number of subsignals $D - \ROI + 1$ from Lemma~\ref{lem:processing-chain} are fulfilled.
Then the following holds for all $j\inint{0}{L}$:
\begin{enumerate}\setlength{\itemsep}{.5ex}
  \item \label{lem:evalrelax-a} $W_j = \frac{1}{k_j^*}\left(D - \sum_{\mu = 1}^j k_{\mu - 1}^*(c_\mu - 1)\right)$.
  \item \label{lem:evalrelax-b} $W_j - u_j = \frac{1}{k_j^*}(D - \ROI)$.
  \item \label{lem:evalrelax-c} The application of the processing chain in a relaxed fashion up to layer $j$ is well-defined.
  \item \label{lem:evalrelax-d} For all $i\inint{1}{W_j - u_j + 1}$ it holds that
        \begin{displaymath}
          \EvalStride_j(\Subsignal_\ROI(\xi,\; k_j^*(i - 1) + 1))
          = \Subsignal_{u_j}(\EvalRelax_j(\xi),\; i)\text{.}
        \end{displaymath}
        The index $i$ here represents all feasible subsignals of $\EvalRelax_j(\xi)$, therefore this identity provides a complete characterization of the information available in the intermediate representations of the application of a processing chain in a relaxed fashion.
        On the other hand, the left-hand side describes the result of the $\EvalStride_j$ operator applied to only each $k_j^*$-th subsignal of the input signal $\xi$.
        In other words, the application of a processing chain in a relaxed fashion results in the loss of spatial resolution for non-trivial stride products.
\end{enumerate}
\end{lemma}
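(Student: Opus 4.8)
The plan is to establish \ref{lem:evalrelax-a}--\ref{lem:evalrelax-c} together by a single induction on $j$ and then \ref{lem:evalrelax-d} by a separate induction, closely mirroring the proof of Lemma~\ref{lem:processing-chain}. The governing observation is that, wherever $\EvalRelax_j$ is well-defined, the recursive definition together with Definition~\ref{def:strided-function} and the dimensionality reduction property of $\Slide_{f_j}=T_j$ force $W_j=\frac{1}{k_j}(W_{j-1}-c_j+1)$, which is exactly the recursion governing $u_j$ in the proof of Lemma~\ref{lem:processing-chain}\ref{lem:processing-chain-a} but with the initial value $W_0=\dim_{M_0}(\xi)=D$ in place of $u_0=\ROI$. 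Hence \ref{lem:evalrelax-a} follows by induction with the same bookkeeping. Statement \ref{lem:evalrelax-b} is then immediate: subtracting the formula of Lemma~\ref{lem:processing-chain}\ref{lem:processing-chain-a} from \ref{lem:evalrelax-a} cancels the identical summation term and leaves $\frac{1}{k_j^*}(D-\ROI)$.

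For well-definedness \ref{lem:evalrelax-c} I would first note that $k_j^*$ divides $k_L^*$, so the hypothesis $k_L^*\mid(D-\ROI)$ yields $k_j^*\mid(D-\ROI)$ for every $j$; together with \ref{lem:evalrelax-b} and $u_j\in\N_1$ this shows $W_j=u_j+\frac{1}{k_j^*}(D-\ROI)\in\N_1$. Consequently $W_{j-1}\geq c_j$, so $\Slide_{f_j}$ is applicable, and $W_{j-1}-c_j+1=k_jW_j$ is divisible by $k_j$, so $\Stride_{g_j}$ is applicable; this closes the interleaved induction establishing that each layer of $\EvalRelax$ is well-defined.

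The substantive part is \ref{lem:evalrelax-d}, which I would prove by induction on $j$. The base case $j=0$ is direct since $k_0^*=1$ and $u_0=\ROI$, so both sides reduce to $\Subsignal_\ROI(\xi,i)$. For the step, fix $i\in\discint{1}{W_j-u_j+1}$ and a sample index $\mu\inint{1}{u_j}$, and write $\tau:=\EvalStride_{j-1}(\Subsignal_\ROI(\xi,\,k_j^*(i-1)+1))$. Expanding the left-hand side sample-wise through Definition~\ref{def:processing-chain}, Definition~\ref{def:strided-function}, and Theorem~\ref{thm:sliding-subsignal} turns $\EvalStride_j(\cdots)_\mu$ into a double sum over $\nu\inint{1}{k_j}$ and $\lambda\inint{1}{c_j}$ of $g_j$ and $f_j$ evaluated at entries $\tau_{k_j(\mu-1)+\nu+\lambda-1}$. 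The crucial move is to recognise, using $k_j^*=k_jk_{j-1}^*$, that $k_j^*(i-1)+1=k_{j-1}^*(i''-1)+1$ with $i'':=k_j(i-1)+1$, so the induction hypothesis rewrites $\tau=\Subsignal_{u_{j-1}}(\EvalRelax_{j-1}(\xi),\,i'')$ and hence $\tau_m=\EvalRelax_{j-1}(\xi)_{k_j(i-1)+m}$. Expanding the right-hand side $\EvalRelax_j(\xi)_{i+\mu-1}$ the same way produces the identical double sum with entries $\EvalRelax_{j-1}(\xi)_{k_j(i+\mu-2)+\nu+\lambda-1}$, and the two index expressions coincide because $k_j(i-1)+k_j(\mu-1)=k_j(i+\mu-2)$; matching the sums sample by sample completes the step.

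The main obstacle is the index bookkeeping needed to justify that every operator and every application of the induction hypothesis stays within bounds, rather than any conceptual difficulty. Concretely, I would verify with Lemma~\ref{lem:processing-chain}\ref{lem:processing-chain-a} that the strided and sliding extractions land exactly in range---the largest value $k_j(\mu-1)+\nu+\lambda-1$ equals $u_{j-1}$ precisely when $\mu,\nu,\lambda$ are maximal, using $u_{j-1}-c_j+1=k_ju_j$---and, most importantly, that $i''=k_j(i-1)+1$ is a feasible subsignal index for $\EvalRelax_{j-1}(\xi)$. The last point is where \ref{lem:evalrelax-b} does the real work: the bound $i\leq W_j-u_j+1=\frac{1}{k_j^*}(D-\ROI)+1$ maps under $i\mapsto k_j(i-1)+1$ exactly onto $i''\leq\frac{1}{k_{j-1}^*}(D-\ROI)+1=W_{j-1}-u_{j-1}+1$, so the induction hypothesis is applicable even at the extreme index and no subsignal is lost.
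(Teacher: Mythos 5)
Your proposal is correct and follows essentially the same route as the paper's own proof: parts \ref{lem:evalrelax-a}--\ref{lem:evalrelax-c} via the interleaved induction mirroring Lemma~\ref{lem:processing-chain}\ref{lem:processing-chain-a} together with the divisibility hypothesis $k_L^*\mid D-\ROI$, and part \ref{lem:evalrelax-d} via induction on $j$ with exactly the paper's index substitutions ($\tilde{i}=k_j(i-1)+1$, $\tilde{\mu}=k_j(\mu-1)+\nu+\lambda-1$) and the same bound checks $\tilde{i}\leq W_{j-1}-u_{j-1}+1$ and $\tilde{\mu}\leq u_{j-1}$. The only differences are cosmetic: you apply the induction hypothesis to rewrite the $\EvalStride$ side in terms of $\EvalRelax_{j-1}$ samples, whereas the paper rewrites in the opposite direction, and your well-definedness argument in \ref{lem:evalrelax-c} packages the paper's explicit $t$-and-$s$ substitution a bit more compactly.
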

\begin{proof}
\ref{lem:evalrelax-a}
Completely analogous to the proof of Lemma~\ref{lem:processing-chain}\ref{lem:processing-chain-a}.

\ref{lem:evalrelax-b}
Using~\ref{lem:evalrelax-a} and Lemma~\ref{lem:processing-chain}\ref{lem:processing-chain-a} leads to
\begin{displaymath}
  W_j - u_j
  = \tfrac{1}{k_j^*}\left(D - \sum\nolimits_{\mu = 1}^j k_{\mu - 1}^*(c_\mu - 1)\right) - \tfrac{1}{k_j^*}\left(\ROI - \sum\nolimits_{\mu = 1}^j k_{\mu - 1}^*(c_\mu - 1)\right)
  = \tfrac{1}{k_j^*}(D - \ROI)\text{.}
\end{displaymath}

\ref{lem:evalrelax-c}
For well-definedness of $\EvalRelax_j$ it must be shown that each component is applicable.
$\Slide_{f_j}$ is always applicable to non-empty signals.
Therefore, it is enough to show that $\Stride_{g_j}$ is applicable to the outcome of $\Slide_{f_j}$, and that the overall result of the $j$-th layer is a non-empty signal.
With Definition~\ref{def:strided-function} it is required that the dimensionality of the input to $\Stride_{g_j}$ is divisible by $k_j$.
By requirement on the input signal length $D$ there is a natural number $t\in\N$ with $D - \ROI = k_L^* t$.
Since $\EvalStride_j$ is well-defined by requirement, there is another natural number $s\in\N_1$ so that $\dim_{N_j}(\Slide_{f_j}(\EvalStride_{j - 1}(\rho))) = k_j s$ for any $\rho\in M_0^\ROI$.
Now the input to $\Stride_{g_j}$ is of dimensionality
\begin{align*}
  & \dim_{N_j}(\Slide_{f_j}(\EvalRelax_{j - 1}(\xi)))\\
  \equsing{D.~\ref{def:sliding-function}}\ \ \ & W_{j - 1} - c_j + 1\\
  \equsing{\ref{lem:evalrelax-a}}\ \ \ & \tfrac{1}{k_{j - 1}^*}\left(D - \ROI + \ROI - \sum\nolimits_{\mu = 1}^{j - 1} k_{\mu - 1}^*(c_\mu - 1)\right) - c_j + 1\\
  =\ \ \ & \tfrac{1}{k_{j - 1}^*}(D - \ROI) + \tfrac{1}{k_{j - 1}^*}\left(\ROI - \sum\nolimits_{\mu = 1}^{j - 1} k_{\mu - 1}^*(c_\mu - 1)\right) - c_j + 1\\
  \equsing{L.~\ref{lem:processing-chain}\ref{lem:processing-chain-a}}\ \ \ & \tfrac{1}{k_{j - 1}^*}(D - \ROI) + \dim_{N_j}(\Slide_{f_j}(\EvalStride_{j - 1}(\rho)))\\
  \equsing{($\lozenge$)}\ \ \ & \tfrac{1}{k_{j - 1}^*}k_L^* t + k_j s\\
  =\ \ \ & (k_{j + 1}\cdots k_L\cdot t + s)\cdot k_j\text{,}
\end{align*}
where in the ($\lozenge$) step the relations from the divisibility requirements have been substituted.
Since $k_{j + 1}\cdots k_L\cdot t + s\in\N_1$, this positive dimensionality is divisible by $k_j$, and hence $\Stride_{g_j}$ is applicable.
The resulting signal length is
\begin{align*}
  W_j\ \ &\equsing{D.~\ref{def:evalrelax}}\ \ \dim_{M_j}(\Stride_{g_j}(\Slide_{f_j}(\EvalRelax_{j - 1}(\xi))))\\
  &\equsing{D.~\ref{def:strided-function}}\ \ \tfrac{1}{k_j}\dim_{N_j}(\Slide_{f_j}(\EvalRelax_{j - 1}(\xi)))\\
  &=\ \ k_{j + 1}\cdots k_L\cdot t + s\text{,}
\end{align*}
which is a positive natural number.
Hence the output signal of the $j$-th layer is non-empty, and in conclusion, $\EvalRelax_j$ is well-defined.

\ref{lem:evalrelax-d}
This is shown through induction.
For $j = 0$ it is $W_0 = D$ and $u_0 = \ROI$.
Let $i\inint{1}{D - \ROI + 1}$ be arbitrary, then the left-hand side of the claim equals
\begin{displaymath}
  \EvalStride_0(\Subsignal_\ROI(\xi,\; k_0^*(i - 1) + 1)) = \EvalStride_0(\Subsignal_\ROI(\xi,\; i)) = \Subsignal_\ROI(\xi,\; i)\text{.}
\end{displaymath}
Consideration of the right-hand side leads to $\Subsignal_{u_0}(\EvalRelax_0(\xi),\; i) = \Subsignal_\ROI(\xi,\; i)$.
Since all index accesses are clearly within bounds, the claim follows for $j = 0$.

For $j - 1 \to j$, first note that the dimensionalities on both sides of the claim match by definition of $u_j$.
Let $i\inint{1}{W_j - u_j + 1}$ be a subsignal index and let $\mu\inint{1}{u_j}$ be an arbitrary sample index.
Then the $\mu$-th sample of the left-hand side of the claim is
\begin{align*}
  & \EvalStride_j(\Subsignal_\ROI(\xi,\; k_j^*(i - 1) + 1))_\mu\\
  \equsing{D.~\ref{def:processing-chain}}\ \ & \Stride_{g_j}(\Slide_{f_j}(\EvalStride_{j - 1}(\Subsignal_\ROI(\xi,\; k_j^*(i - 1) + 1))))_\mu\\
  \equsing{D.~\ref{def:strided-function}}\ \ & g_j\!\left( \sum\nolimits_{\nu = 1}^{k_j} \Slide_{f_j}(\EvalStride_{j - 1}(\Subsignal_\ROI(\xi,\; k_j^*(i - 1) + 1)))_{k_j(\mu - 1) + \nu}\cdot e_\nu^{k_j} \right)\\
  \equsing{D.~\ref{def:sliding-function}}\ \ & g_j\!\left( \sum\nolimits_{\nu = 1}^{k_j} f_j\!\left(\sum\nolimits_{\lambda = 1}^{c_j} \EvalStride_{j - 1}(\Subsignal_\ROI(\xi,\; k_j^*(i - 1) + 1))_{k_j(\mu - 1) + \nu + \lambda - 1}\cdot e_\lambda^{c_j}\right)\cdot e_\nu^{k_j} \right)\text{.}
\end{align*}

The same sample of the right-hand side of the claim equals
\begin{align*}
  & \Subsignal_{u_j}(\EvalRelax_j(\xi),\; i)_\mu\\
  \equsing{D.~\ref{def:evalrelax}}\ \ & \Stride_{g_j}(\Slide_{f_j}(\EvalRelax_{j - 1}(\xi)))_{i + \mu - 1}\\
  \equsing{D.~\ref{def:strided-function}}\ \ & g_j\!\left( \sum\nolimits_{\nu = 1}^{k_j} \Slide_{f_j}(\EvalRelax_{j - 1}(\xi))_{k_j(i - 1) + k_j(\mu - 1) + \nu}\cdot e_\nu^{k_j} \right)\\
  \equsing{D.~\ref{def:sliding-function}}\ \ & g_j\!\left( \sum\nolimits_{\nu = 1}^{k_j} f_j\!\left(\sum\nolimits_{\lambda = 1}^{c_j} \EvalRelax_{j - 1}(\xi)_{k_j(i - 1) + k_j(\mu - 1) + \nu + \lambda - 1}\cdot e_\lambda^{c_j}\right)\cdot e_\nu^{k_j} \right)\text{.}
\end{align*}
Considering the innermost expression $\EvalRelax_{j - 1}(\xi)_{k_j(i - 1) + k_j(\mu - 1) + \nu + \lambda - 1}$, define $\tilde{i} := k_j(i - 1) + 1$ and $\tilde{\mu} := k_j(\mu - 1) + \nu + \lambda - 1$.
Then
\begin{align*}
  & \Subsignal_{u_{j - 1}}(\EvalRelax_{j - 1}(\xi),\; \tilde{i})_{\tilde{\mu}}\\
  \equsing{D.~\ref{def:subsignal}}\ \ & \EvalRelax_{j - 1}(\xi)_{\tilde{i} + \tilde{\mu} - 1}\\
  =\ \ & \EvalRelax_{j - 1}(\xi)_{k_j(i - 1) + k_j(\mu - 1) + \nu + \lambda - 1}\text{,}
\end{align*}
where it must be shown that $\tilde{i}\inint{1}{W_{j - 1} - u_{j - 1} + 1}$ and $\tilde{\mu}\inint{1}{u_{j - 1}}$.
Since both indices are clearly integers it suffices to verify they are within bounds.
Obviously, $\tilde{i}\geq 1$.
Now, from~\ref{lem:evalrelax-a} follows that $W_j = \frac{1}{k_j}(W_{j - 1} - c_j + 1)$ and Lemma~\ref{lem:processing-chain}\ref{lem:processing-chain-a} implies that $u_j = \frac{1}{k_j}(u_{j - 1} - c_j + 1)$.
Therefore substitution of the maximum value of $i$ yields
\begin{displaymath}
  \tilde{i}
  \leq k_j(W_j - u_j + 1 - 1) + 1
  = (W_{j - 1} - c_j + 1) - (u_{j - 1} - c_j + 1) + 1
  = W_{j - 1} - u_{j - 1} + 1\text{.}
\end{displaymath}
Since $\mu,\nu,\lambda\geq 1$ follows $\tilde{\mu}\geq 1$.
Substitution of the maximum values leads to
\begin{displaymath}
  \tilde{\mu}
  \leq k_j(u_j - 1) + k_j + c_j - 1
  =  k_j u_j + c_j - 1
  = u_{j - 1}\text{.}
\end{displaymath}
Hence the indices $\tilde{i}$ and $\tilde{\mu}$ are both within bounds.

Substitution of the induction hypothesis now yields
\begin{align*}
  & \EvalRelax_{j - 1}(\xi)_{k_j(i - 1) + k_j(\mu - 1) + \nu + \lambda - 1}\\
  \equsing{IH}\ \ & \EvalStride_{j - 1}(\Subsignal_\ROI(\xi,\; k_{j - 1}^*(\tilde{i} - 1) + 1))_{\tilde{\mu}}\\
  =\ \ & \EvalStride_{j - 1}(\Subsignal_\ROI(\xi,\; k_j^*(i - 1) + 1))_{k_j(\mu - 1) + \nu + \lambda - 1}\text{.}
\end{align*}
In conclusion, the $\mu$-th sample of the right-hand side of the claim is
\begin{align*}
  & \Subsignal_{u_j}(\EvalRelax_j(\xi),\; i)_\mu\\
  =\ \ & g_j\!\left( \sum\nolimits_{\nu = 1}^{k_j} f_j\!\left(\sum\nolimits_{\lambda = 1}^{c_j} \EvalStride_{j - 1}(\Subsignal_\ROI(\xi,\; k_j^*(i - 1) + 1))_{k_j(\mu - 1) + \nu + \lambda - 1}\cdot e_\lambda^{c_j}\right)\cdot e_\nu^{k_j} \right)\text{,}
\end{align*}
which is the same as the $\mu$-th sample of the left-hand side of the claim.
Therefore the identity holds.
\end{proof}

\begin{figure}[p]
  \centering
  \scalebox{1.04}{\includegraphics[page=15]{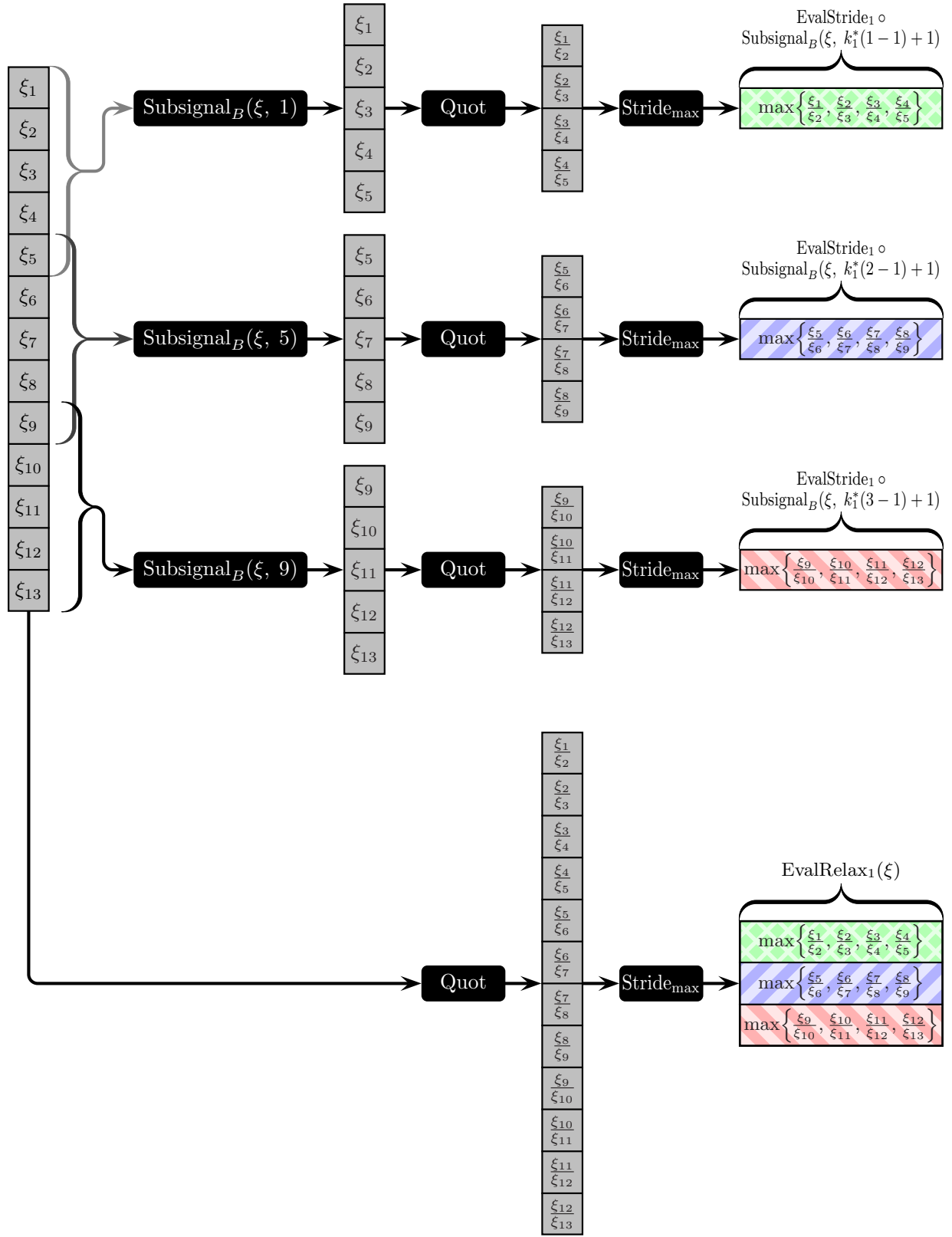}}
  \caption{Example of the application of a processing chain in a strided fashion to a subset of the feasible subsignals of an input signal (upper part of the graphics) and in a relaxed fashion to the entire input signal (lower part).
    The processing chain here has $L = 1$ layer, which consists of the $\operatorname{Quot}$ operator followed by max-pooling with a stride of $k_1 = 4$, resulting in a receptive field size of $\ROI = 5$.
    $\EvalStride$ is here well-defined and computes output signals with $u_1 = 1$ sample.
    The input signal $\xi$ has $D = 13$ samples in total.
    Since this implies that $k_1^* = 4$ divides $D - \ROI = 8$, the statements of Lemma~\ref{lem:evalrelax} hold.
    In particular, the result of $\EvalRelax$ evaluated on $\xi$ has $W_1 = 3$ samples.
    It contains the result of $\EvalStride$ evaluated on the subsignals of $\xi$ with the indices $k_1^*(i - 1) + 1$ for $i\inint{1}{W_1 - u_1 + 1} = \{1,\;2,\;3\}$ (highlighted with a colored pattern in the graphics).}
  \label{fig:relaxed}
\end{figure}

Lemma~\ref{lem:evalrelax} has shown that the output of the $\EvalRelax$ operator consists of the result of the $\EvalStride$ operator applied to only some of the subsignals of the input signal.
An example for this is illustrated in Fig.~\ref{fig:relaxed}.
The next result analyzes this in greater detail and further shows how the divisibility constraints of Lemma~\ref{lem:evalrelax} can be satisfied for arbitrary input signals without incurring any more output signal quality loss than already involved with the application of processing chains in a relaxed fashion.
\begin{theorem}
\label{tmh:evalrelax}
Suppose a processing chain as in Definition~\ref{def:evalrelax} is given so that $\EvalStride$ is well-defined.
Assume the processing chain applied in a strided fashion outputs signals with exactly one sample, that is $\dim_{M_L}(\EvalStride_L(\rho)) = 1$ for all $\rho\in M_0^\ROI$.
Further, assume the stride product of the $L$-th layer is non-trivial, that is $k_L^*\neq 1$ should hold.

Let $r_{\EvalRelax_L}\colon\cup_\ROI(M_0)\to\discint{0}{k_L^* - 1}$, $\xi\mapsto\rem{\dim_{M_0}(\xi) - \ROI}{k_L^*}$, denote the number of samples that have to be trimmed away so that $\EvalRelax_L$ can be applied due to divisibility constraints.
Consider the function $T\colon\cup_\ROI(M_0)\to\cup_1(M_L)$,
\begin{displaymath}
  \xi\mapsto\EvalRelax_L(\Trimming_{r_{\EvalRelax_L}(\xi)}(\xi))\text{,}
\end{displaymath}
which applies the processing chain in a relaxed fashion to an input signal where the final samples have been trimmed away.
Then $T = \Downsampling_{k_L^*}\circ \Slide_{\EvalStride_L}$, that is the result of $T$ equals exactly the result of conventional dense signal scanning downsampled by the final stride product $k_L^*$ (see Definition~\ref{def:downsampling} for the downsampling operator).
\end{theorem}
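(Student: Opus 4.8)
The plan is to fix an arbitrary signal $\xi\in\cup_\ROI(M_0)$, abbreviate $\tilde{D} := \dim_{M_0}(\xi)$ and $r := r_{\EvalRelax_L}(\xi) = \rem{\tilde{D} - \ROI}{k_L^*}$, and set $\eta := \Trimming_r(\xi)$ with $D := \dim_{M_0}(\eta) = \tilde{D} - r$. First I would verify that $\eta$ satisfies the hypotheses of Lemma~\ref{lem:evalrelax}. Euclidean division gives $\tilde{D} - \ROI = \div{\tilde{D} - \ROI}{k_L^*}\cdot k_L^* + r$, so $D - \ROI = (\tilde{D} - \ROI) - r = \div{\tilde{D} - \ROI}{k_L^*}\cdot k_L^*$ is divisible by $k_L^*$, and since $r\inint{0}{k_L^* - 1}$ and $\tilde{D}\geq\ROI$ one also obtains $D\geq\ROI$. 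Thus Lemma~\ref{lem:evalrelax} applies to $\eta$, and $T(\xi) = \EvalRelax_L(\eta)$ is well-defined.

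Since $u_L = \dim_{M_L}(\EvalStride_L(\rho)) = 1$ by assumption, Lemma~\ref{lem:evalrelax}\ref{lem:evalrelax-b} yields $W_L := \dim_{M_L}(\EvalRelax_L(\eta)) = 1 + \frac{1}{k_L^*}(D - \ROI) = \div{\tilde{D} - \ROI}{k_L^*} + 1$. Next I would compute the dimensionality of the right-hand side: $\Slide_{\EvalStride_L}(\xi)$ has $\tilde{D} - \ROI + 1$ samples by Definition~\ref{def:sliding-function}, so by Definition~\ref{def:downsampling} the downsampled signal has $\nceil{(\tilde{D} - \ROI + 1)/k_L^*}$ samples. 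The two dimensionalities agree because of the elementary identity $\div{a}{k} + 1 = \nceil{(a + 1)/k}$ for $a\in\N$, $k\in\N_1$, which follows by writing $a = \div{a}{k}\cdot k + \rem{a}{k}$ and noting that $\rem{a}{k} + 1\inint{1}{k}$ forces the residual ceiling to equal $1$.

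With matching dimensionalities I would compare samples. For $\nu\inint{1}{W_L}$, Lemma~\ref{lem:evalrelax}\ref{lem:evalrelax-d} together with $u_L = 1$ (so that $\Subsignal_1(\cdot,\;\nu)$ is just the $\nu$-th sample) gives $\EvalRelax_L(\eta)_\nu = \EvalStride_L(\Subsignal_\ROI(\eta,\;k_L^*(\nu - 1) + 1))$, whereas Definitions~\ref{def:downsampling} and~\ref{def:sliding-function} give $\Downsampling_{k_L^*}(\Slide_{\EvalStride_L}(\xi))_\nu = \EvalStride_L(\Subsignal_\ROI(\xi,\;k_L^*(\nu - 1) + 1))$. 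It therefore remains to show $\Subsignal_\ROI(\eta,\;k_L^*(\nu - 1) + 1) = \Subsignal_\ROI(\xi,\;k_L^*(\nu - 1) + 1)$. Since trimming only discards the final $r$ entries, $\eta_\ell = \xi_\ell$ for all $\ell\inint{1}{D}$, and by Definition~\ref{def:subsignal} the subsignal starting at $k_L^*(\nu - 1) + 1$ accesses exactly the samples with indices up to $k_L^*(\nu - 1) + \ROI$. The bound $\nu\leq W_L$ is equivalent to $k_L^*(\nu - 1)\leq D - \ROI$, hence $k_L^*(\nu - 1) + \ROI\leq D$, so none of the trimmed samples is touched and the two subsignals coincide. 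Chaining these equalities proves that $T(\xi)_\nu$ equals the $\nu$-th sample of $\Downsampling_{k_L^*}(\Slide_{\EvalStride_L}(\xi))$ for every $\nu$, and since $\xi$ was arbitrary the claimed operator identity follows.

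The hard part will be the index bookkeeping in the final step: one must confirm that precisely the subsignals indexed by $k_L^*(\nu - 1) + 1$ for $\nu\inint{1}{W_L}$ are the ones surviving both the downsampling on the right-hand side and the relaxed evaluation on the trimmed signal on the left-hand side, and that the trimming bound $k_L^*(\nu - 1) + \ROI\leq D$ holds uniformly in $\nu$. The dimensionality identity through the ceiling function is the other place where a careful argument is needed, but it reduces to the standard Euclidean-division fact recalled in Proposition~\ref{prop:number-theory}.
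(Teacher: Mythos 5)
Your proposal is correct and follows essentially the same route as the paper's own proof: trim the signal, invoke Lemma~\ref{lem:evalrelax} (well-definedness via parts \ref{lem:evalrelax-a}--\ref{lem:evalrelax-c}, the sample identity via part \ref{lem:evalrelax-d} with $u_L = 1$), and verify that the accessed indices $k_L^*(\nu - 1) + \ROI \leq D$ never touch the trimmed tail so the trimming operator can be dropped inside $\Subsignal_\ROI$.
The only deviation is in the dimensionality comparison, where your single identity $\div{a}{b} + 1 = \nceil{(a + 1)/b}$ replaces the paper's two-case analysis on whether $k_L^*$ divides $\tilde{D} - \ROI + 1$, a streamlining that is valid and incidentally never uses the hypothesis $k_L^* \neq 1$.
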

\begin{proof}
Consider an input signal $\xi\in\cup_\ROI(M_0)$ and let $\tilde{D} := \dim_{M_0}(\xi)$ denote its dimensionality.
It must be verified first that the trimming actually permits $\EvalRelax_L$ to be applied.
For showing this, define $D := \dim_{M_0}(\Trimming_{r_{\EvalRelax_L}(\xi)}(\xi))$, then $D = \tilde{D} - \rem{\tilde{D} - \ROI}{k_L^*}$ by Definition~\ref{def:stuff-trim} and by the definition of $r_{\EvalRelax_L}$.
Now $\rem{D - \ROI}{k_L^*} = \rem{\tilde{D} - \ROI - \rem{\tilde{D} - \ROI}{k_L^*}}{k_L^*} = 0$ due to the idempotence of the $\operatorname{rem}$ operator.
Therefore, Lemma~\ref{lem:evalrelax} guarantees $\EvalRelax_L$ can be applied to $\Trimming_{r_{\EvalRelax_L}(\xi)}(\xi)$.
Hence $T$ is well-defined.

Next, it is verified that the dimensionalities on both sides of the claimed identity match.
Since $u_L = 1$, Lemma~\ref{lem:evalrelax}\ref{lem:evalrelax-b} yields $\dim_{M_L}(T(\xi)) = \frac{1}{k_L^*}(D - \ROI) + 1$.
For the other side follows
\begin{displaymath}
  \dim_{M_L}(\Downsampling_{k_L^*}(\Slide_{\EvalStride_L}(\xi)))
  \ \ \equsing{D.~\ref{def:downsampling}}\ \ \nceil{\tfrac{1}{k_L^*} \dim_{M_L}(\Slide_{\EvalStride_L}(\xi))}
  \ \ \equsing{D.~\ref{def:sliding-function}}\ \ \nceil{\tfrac{1}{k_L^*}(\tilde{D} - \ROI + 1)}\text{.}
\end{displaymath}

First consider the case in which $k_L^*$ divides $\tilde{D} - \ROI + 1$.
This implies that $\rem{\tilde{D} - \ROI}{k_L^*} = k_L^* - 1$, and hence $\dim_{M_L}(T(\xi)) = \frac{1}{k_L^*}(\tilde{D} - \ROI + 1 - k_L^*) + 1 = \frac{1}{k_L^*}(\tilde{D} - \ROI + 1)$.
Since this is a natural number by requirement, it is a fixed point of the ceiling function, and hence dimensionalities match on both sides.

Now to the case where $k_L^*$ does not divide $\tilde{D} - \ROI + 1$.
Here, $\rem{\tilde{D} - \ROI}{k_L^*} \neq k_L^* - 1$ since $k_L^* > 1$.
This leads to
\begin{align*}
  & \dim_{M_L}(T(\xi))\\
  =\ \ & \tfrac{1}{k_L^*}(D - \ROI) + 1\\
  =\ \ & \div{D - \ROI}{k_L^*} + 1\\
  \equsing{($\lozenge$)}\ \ & \div{D - \ROI + 1 + \rem{\tilde{D} - \ROI}{k_L^*}}{k_L^*} + 1\\
  =\ \ & \div{\tilde{D} - \ROI + 1}{k_L^*} + 1\text{,}
\end{align*}
where in the ($\lozenge$) step $\rem{D - \ROI}{k_L^*} = 0$ and $1 + \rem{\tilde{D} - \ROI}{k_L^*}\inint{1}{k_L^* - 1}$ have been used.
On the other hand, $\dim_{M_L}(\Downsampling_{k_L^*}(\Slide_{\EvalStride_L}(\xi))) = \div{\tilde{D} - \ROI + 1}{k_L^*} + 1$ as $\rem{\tilde{D} - \ROI + 1}{k_L^*}\neq 0$.
In conclusion, dimensionalities match on both sides of the claimed identity.

Now the output signals can be compared sample-wise.
Let $\mu\inint{1}{\frac{1}{k_L^*}(D - \ROI) + 1}$ be an arbitrary sample index.
Then
\begin{align*}
  & T(\xi)_\mu\\
  =\ \ \ & \EvalRelax_L(\Trimming_{r_{\EvalRelax_L}(\xi)}(\xi))_\mu\\
  \equsing{D.~\ref{def:subsignal}}\ \ \ & \Subsignal_{u_L}(\EvalRelax_L(\Trimming_{r_{\EvalRelax_L}(\xi)}(\xi)),\; \mu)\\
  \equsing{L.~\ref{lem:evalrelax}\ref{lem:evalrelax-d}}\ \ \ & \EvalStride_L(\Subsignal_\ROI(\Trimming_{r_{\EvalRelax_L}(\xi)}(\xi),\; k_L^*(\mu - 1) + 1))\text{.}
\end{align*}
Here, it is
\begin{displaymath}
  \Subsignal_\ROI(\Trimming_{r_{\EvalRelax_L}(\xi)}(\xi),\; k_L^*(\mu - 1) + 1) \ \ \equsing{D.~\ref{def:subsignal}}\ \  \sum\nolimits_{\nu = 1}^\ROI \Trimming_{r_{\EvalRelax_L}(\xi)}(\xi)_{k_L^*(\mu - 1) + \nu}\cdot e_\nu^{\ROI}\text{.}
\end{displaymath}
The maximum sample index is here $k_L^*(\mu - 1) + \nu \leq k_L^*\cdot\frac{1}{k_L^*}(D - \ROI) + \ROI = D$, which exactly equals the dimensionality of $\Trimming_{r_{\EvalRelax_L}(\xi)}(\xi)$.
Thus the trimming operator can be omitted with Definition~\ref{def:stuff-trim}:
\begin{displaymath}
  T(\xi)_\mu
  = \EvalStride_L(\Subsignal_\ROI(\xi,\; k_L^*(\mu - 1) + 1))\text{.}
\end{displaymath}
On the other hand,
\begin{align*}
  & \Downsampling_{k_L^*}(\Slide_{\EvalStride_L}(\xi))_\mu\\
  \equsing{D.~\ref{def:downsampling}}\ \ & \Slide_{\EvalStride_L}(\xi)_{k_L^*(\mu - 1) + 1}\\
  \equsing{D.~\ref{def:sliding-function}}\ \ & \EvalStride_L(\Subsignal_\ROI(\xi,\; k_L^*(\mu - 1) + 1))\text{,}
\end{align*}
which proves the claimed identity.
\end{proof}

In conclusion, by processing appropriately trimmed signals through $\EvalRelax$ one obtains the same result as with dense signal scanning followed by downsampling.
In other words, $\EvalRelax$ involves no precision loss but a loss in spatial resolution since output samples are available only for each $k_L^*$-th input subsignal.
This is illustrated in Fig.~\ref{fig:relaxed-down} for an exemplary processing chain.

Hence the amount of pooling layers has a significant influence on the output signal quality if a processing chain is applied in a relaxed fashion rather than using any of the exact notions discussed earlier.
This degradation can be ameliorated for example using a skip architecture~\citerlx{Long2015rlx}, which includes forward connections from intermediate pooling results to the final processing chain output to restore a limited amount of spatial information.
Nevertheless, this method does not yield the same result as exact dense signal scanning.

\begin{figure}[p]
  \centering
  \scalebox{0.92}{\includegraphics[page=16]{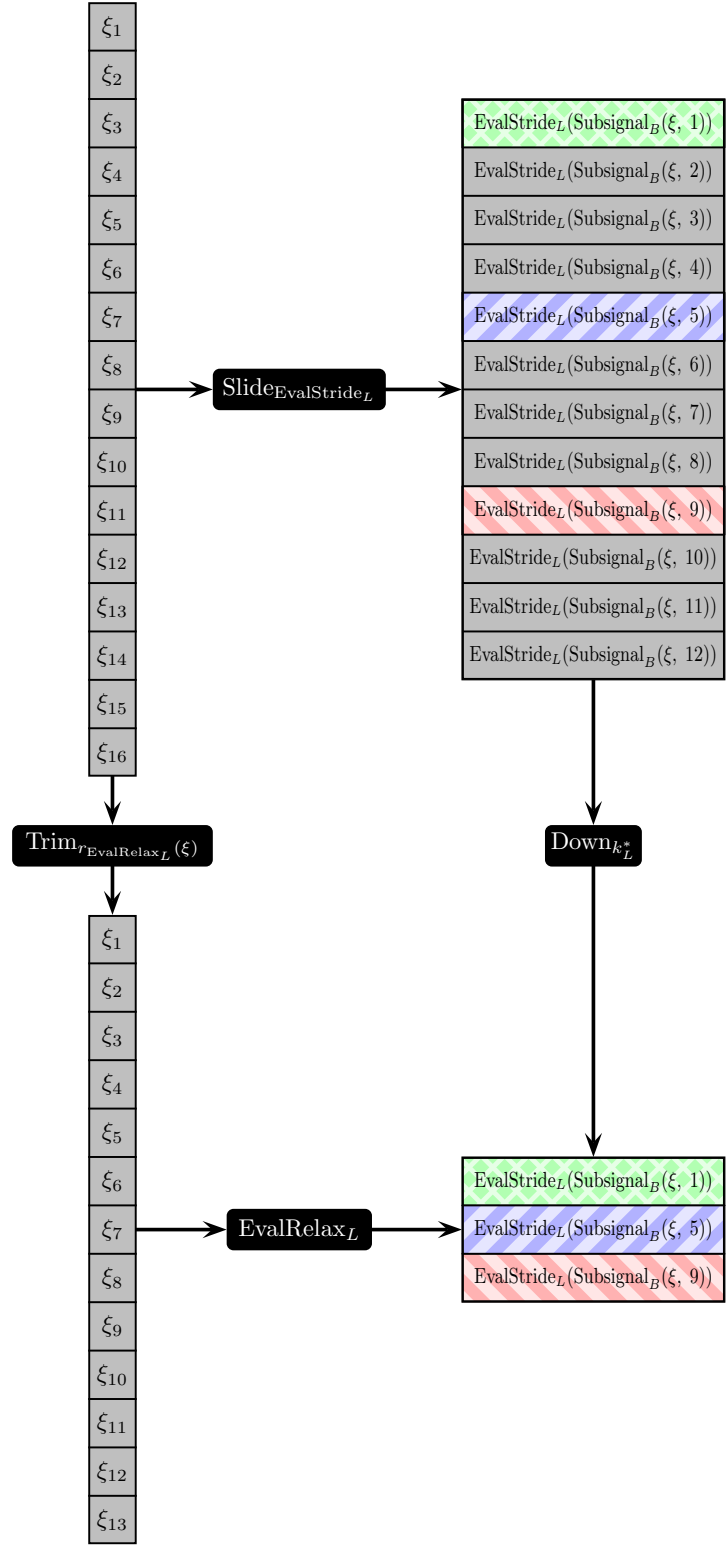}}
  \caption{Illustration of the statements of Theorem~\ref{tmh:evalrelax}.
    If the $\EvalStride$ operator is applied in a sliding fashion, an output value for each subsignal of the input signal with length equal to the processing chain's receptive field size $\ROI$ is gained.
    This is shown in the upper part of the graphics, where an input signal $\xi$ with $\tilde{D} = 16$ samples is processed using a receptive field size of $\ROI = 5$, yielding $\tilde{D} - \ROI + 1 = 12$ output samples.
    Application of the processing chain with a stride product of $k_L^* = 4$ in a relaxed fashion first requires that the input signal is trimmed to satisfy divisibility requirements.
    After its final $r_{\EvalRelax_L}(\xi) = 3$ entries have been trimmed away, the $\EvalRelax$ operator can be applied since $k_L^*$ divides $\tilde{D} - r_{\EvalRelax_L}(\xi) - \ROI = 8$.
    See Fig.~\ref{fig:relaxed} for a detailed illustration of processing chain application in a relaxed fashion.
    It here yields an output signal with three samples (lower part of the graphics).
    As shown in Theorem~\ref{tmh:evalrelax}, this corresponds exactly with the result of conventional dense signal scanning downsampled by the final stride product $k_L^*$ (highlighted with a colored pattern in the graphics).}
  \label{fig:relaxed-down}
\end{figure}

\subsection{Recovery of Full Spatial Resolution}
Although $\EvalRelax$ reduces the spatial output signal resolution by a factor equal to the final layer's stride product, it is straightforward to perform multiple passes through this operator using shifted versions of the input signal and afterwards combine the result to obtain full resolution output.
This has also been noted by~\citerlx{Long2015rlx}, and~\citerlx{Sermanet2014rlx} exploited a similar method to increase spatial resolution.
An exact \emph{shift-and-stitch} approach is here analyzed in greater detail and put in context with a fragmentation-based approach (see Fig.~\ref{fig:relaxed-stitch} for an illustration):

\begin{figure}[t]
  \centering
  \scalebox{1.035}{\includegraphics[page=17]{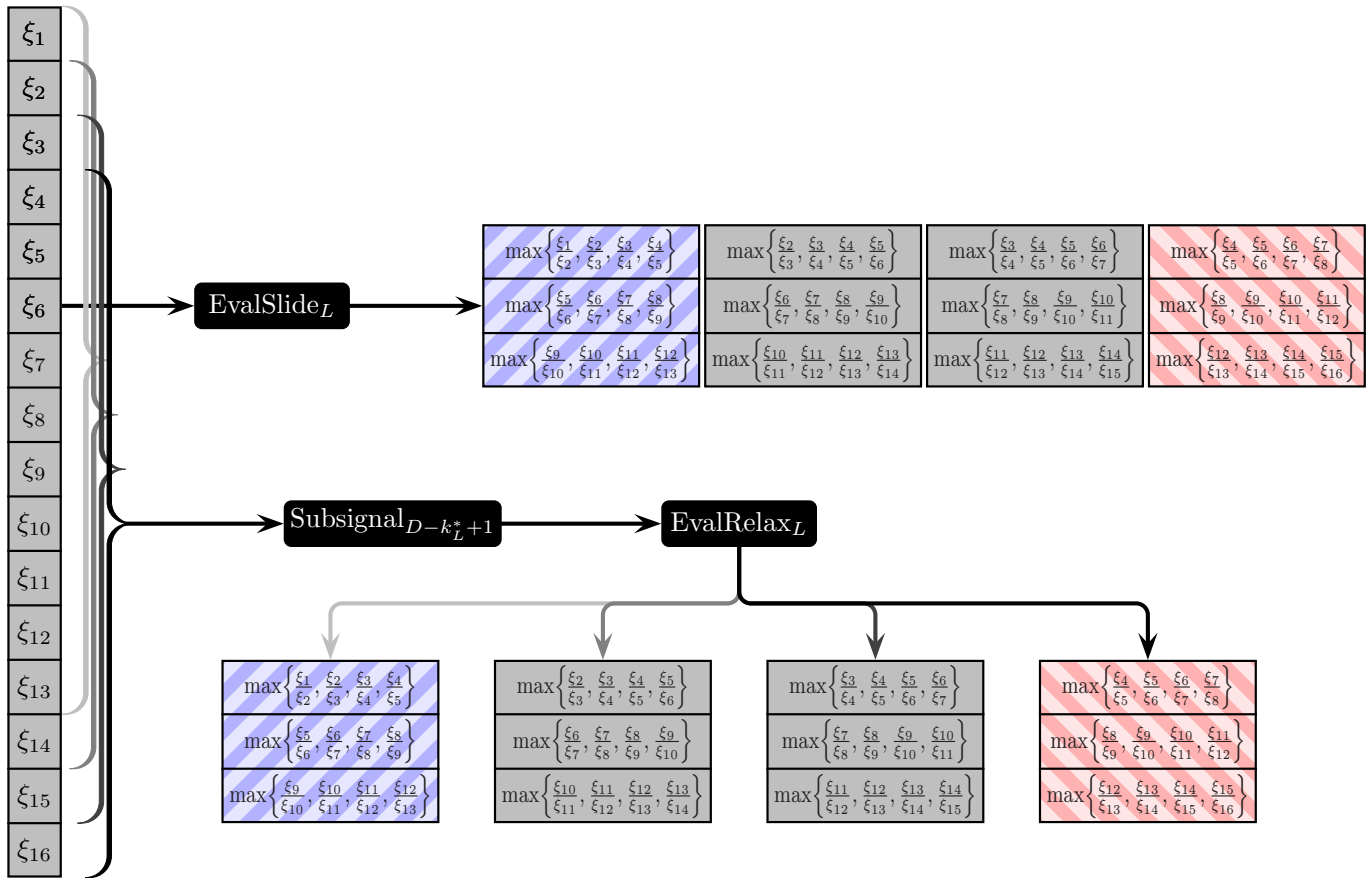}}
  \caption{Although the $\EvalRelax$ operator only computes output values for a regular subset of the feasible subsignals of an input signal $\xi$, repeated evaluation on shifted versions of $\xi$ leads to dense output values for all the feasible subsignals.
    This is illustrated in the lower part of the graphics for an input signal with $D = 16$ samples and for a processing chain with $L = 1$ layer consisting of the $\operatorname{Quot}$ operator and max-pooling with a stride of $k_1 = 4$ (as in Fig.~\ref{fig:relaxed}), resulting in a final stride product of $k_L^* = 4$.
    As shown in Theorem~\ref{thm:evalrelax_rec}, $\EvalRelax$ has to be applied to all $k_L^*$ subsignals of length $D - k_L^* + 1 = 13$ of the input signal $\xi$.
    The outcome of each shifted evaluation corresponds exactly to an output fragment of the $\EvalSlide$ operator, which processes $\xi$ in its entirety (upper part of the graphics, exemplary correspondences are highlighted with a colored pattern).
    In particular, the first output fragment matches the result of $\EvalRelax$ applied to a trimmed version of the input signal, as implied by Theorem~\ref{tmh:evalrelax}.}
  \label{fig:relaxed-stitch}
\end{figure}

\begin{theorem}
\label{thm:evalrelax_rec}
Consider a processing chain as in Definition~\ref{def:evalrelax} where $\EvalStride$ is well-defined.
Let $D\in\N_1$, $D\geq\ROI$, be an input signal dimensionality so that $k_L^*$ divides $D - \ROI + 1$ and let $\xi\in M_0^D$ be an input signal.
Then for all $\gamma\inint{1}{k_L^*}$ it holds that
\begin{displaymath}
  \sum\nolimits_{\nu = 1}^{U_L^{\row}} \EvalSlide_L(\xi)_{\nu,\;\gamma}\cdot e_\nu^{U_L^{\row}}
  = \EvalRelax_L(\Subsignal_{D - k_L^* + 1}(\xi,\;\gamma))\text{,}
\end{displaymath}
where $U_L^{\row}$ denotes the output signal dimensionality of $\EvalSlide_L$ applied to a signal of length $D$ as in Lemma~\ref{lem:processing-chain}.
Here, the left-hand side equals the $\gamma$-th fragment of the result of the processing chain applied in a sliding fashion to $\xi$.
The right-hand side is the outcome of the processing chain applied in a relaxed fashion to the subsignal starting at the $\gamma$-th sample of $\xi$, where the length has been chosen so that $\EvalRelax_L$ is applicable.
In other words, since $\EvalRelax$ is known to produce only a downsampled version of the result of exact dense signal scanning due to Theorem~\ref{tmh:evalrelax}, considering shifted versions of the input signal can be used to restore the original resolution.
\end{theorem}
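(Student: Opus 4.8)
The plan is to prove the two signals are equal by first matching their dimensionalities and then comparing them sample by sample, routing every sample through the value of $\EvalStride_L$ applied to a common subsignal of $\xi$. Write $\xi^{(\gamma)} := \Subsignal_{D - k_L^* + 1}(\xi,\;\gamma)$ and $D' := D - k_L^* + 1$. First I would dispatch well-definedness: since $\gamma\inint{1}{k_L^*}$ is exactly the admissible range of subsignal indices for extracting a length-$D'$ subsignal, $\xi^{(\gamma)}$ is well-defined; moreover $D' \geq \ROI$ and $k_L^*$ divides $D' - \ROI = (D - \ROI + 1) - k_L^*$, so Lemma~\ref{lem:evalrelax} applies to $\xi^{(\gamma)}$ while Lemma~\ref{lem:processing-chain} applies to $\xi$. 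The dimensionality check then reduces to comparing Lemma~\ref{lem:evalrelax}\ref{lem:evalrelax-a} for $\xi^{(\gamma)}$ with Lemma~\ref{lem:processing-chain}\ref{lem:processing-chain-b} for $\xi$: both evaluate to $\frac{1}{k_L^*}\big(D - k_L^* + 1 - \sum_{\mu = 1}^L k_{\mu - 1}^*(c_\mu - 1)\big)$, so the $\gamma$-th fragment (which has $U_L^{\row}$ samples) and $\EvalRelax_L(\xi^{(\gamma)})$ (which has $W_L$ samples) share the common length $U_L^{\row} = W_L$.

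For the sample-wise step, fix $\mu\inint{1}{U_L^{\row}}$. Because $W_L \geq u_L$ follows from Lemma~\ref{lem:evalrelax}\ref{lem:evalrelax-b} together with $D' \geq \ROI$, I can split $\mu = (i' - 1) + \lambda$ with $\lambda\inint{1}{u_L}$ and $i'\inint{1}{W_L - u_L + 1}$, for instance by taking $\lambda := \min\{\mu,\; u_L\}$ and $i' := \mu - \lambda + 1$. Applying Lemma~\ref{lem:evalrelax}\ref{lem:evalrelax-d} to $\xi^{(\gamma)}$ and reading off the $\lambda$-th sample gives $\EvalRelax_L(\xi^{(\gamma)})_\mu = \EvalStride_L(\Subsignal_\ROI(\xi^{(\gamma)},\; k_L^*(i' - 1) + 1))_\lambda$. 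The crucial manipulation is then Lemma~\ref{lem:subsignal-composition}, which collapses the nested subsignal extraction into $\Subsignal_\ROI(\xi,\; \gamma + k_L^*(i' - 1))$; I would set $i := \gamma + k_L^*(i' - 1)$ and verify $i\inint{1}{D - \ROI + 1}$ using the identity $k_L^*(W_L - u_L + 1) = D - \ROI + 1$.

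Finally, I would apply Lemma~\ref{lem:processing-chain}\ref{lem:processing-chain-d} to $\xi$ at index $i$, turning $\EvalStride_L(\Subsignal_\ROI(\xi,\; i))_\lambda$ into $\EvalSlide_L(\xi)_{\div{i - 1}{k_L^*} + \lambda,\;\rem{i - 1}{k_L^*} + 1}$. Since $i - 1 = k_L^*(i' - 1) + (\gamma - 1)$ with $\gamma - 1\inint{0}{k_L^* - 1}$, Proposition~\ref{prop:number-theory} yields $\rem{i - 1}{k_L^*} = \gamma - 1$ and $\div{i - 1}{k_L^*} = i' - 1$, so the column index is $\gamma$ and the row index is $(i' - 1) + \lambda = \mu$. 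This identifies the sample as $\EvalSlide_L(\xi)_{\mu,\;\gamma}$, exactly the $\mu$-th sample of the left-hand side. As the samples agree for every $\mu$ and every $\gamma$, the two signals coincide.

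The main obstacle I anticipate is the index bookkeeping rather than any conceptual difficulty: ensuring the chosen decomposition $\mu = (i' - 1) + \lambda$ respects both $\lambda\inint{1}{u_L}$ and $i'\inint{1}{W_L - u_L + 1}$ (which is precisely where $W_L \geq u_L$ is needed), and confirming that the assembled index $i = \gamma + k_L^*(i' - 1)$ stays within $\inint{1}{D - \ROI + 1}$. Once those bounds are verified, the key structural fact is that the $\gamma$-offset of the shifted input aligns with the remainder term $\rem{i - 1}{k_L^*} = \gamma - 1$, which is exactly what makes the $\gamma$-th fragment produced by $\EvalSlide_L$ coincide with the relaxed evaluation on the $\gamma$-shifted subsignal.
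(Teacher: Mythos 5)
Your proposal is correct and follows essentially the same route as the paper's proof: both establish well-definedness and $W_L = U_L^{\row}$, then route every entry through $\EvalStride_L(\Subsignal_\ROI(\xi,\;\gamma + k_L^*(\cdot - 1)))$ via Lemma~\ref{lem:evalrelax}\ref{lem:evalrelax-d}, Lemma~\ref{lem:subsignal-composition} and Lemma~\ref{lem:processing-chain}\ref{lem:processing-chain-d}, with the same Euclidean-division bookkeeping $\rem{i - 1}{k_L^*} = \gamma - 1$ and $\div{i - 1}{k_L^*} = i' - 1$. The only difference is cosmetic: the paper compares whole length-$u_L$ subsignals indexed by $\tilde{i}$, whereas you compare individual samples by decomposing $\mu = (i' - 1) + \lambda$, which is a valid (and if anything slightly more explicit) variant of the same argument.
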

\begin{proof}
The left-hand side of the claim is well-defined since the prerequisites of Lemma~\ref{lem:processing-chain} are satisfied and all accesses are within bounds due to Lemma~\ref{lem:processing-chain}\ref{lem:processing-chain-b}.
Considering the right-hand side, first note that there are actually $D - (D - k_L^* + 1) + 1 = k_L^*$ distinct subsignals with $D - k_L^* + 1$ samples in $\xi$.
It has further to be verified that the signal input to the $\EvalRelax_L$ operator has at least $\ROI$ samples.
By requirement there is a positive natural number $t\in\N_1$ with $D - \ROI + 1 = k_L^* t$.
Therefore, $\ROI = D - k_L^* t + 1 \leq D - k_L^* + 1$.
Moreover, Proposition~\ref{prop:number-theory} implies that $\rem{(D - k_L^* + 1) - \ROI}{k_L^*} = \rem{D - \ROI + 1}{k_L^*}$.
Since this number vanishes by requirement, the input signal length divisibility constraint of Lemma~\ref{lem:evalrelax} is fulfilled, and the right-hand side is well-defined.
Furthermore, the dimensionality of the right-hand side is
\begin{displaymath}
  W_L
  \ \ \ \equsing{L.~\ref{lem:evalrelax}\ref{lem:evalrelax-a}}\ \ \ \tfrac{1}{k_L^*}\left( (D - k_L^* + 1) - \sum\nolimits_{\mu = 1}^L k_{\mu - 1}^*(c_\mu - 1)\right)
  \ \ \ \equsing{L.~\ref{lem:processing-chain}\ref{lem:processing-chain-b}}\ \ \ U_L^{\row}\text{.}
\end{displaymath}
Here, $W_L$ is the output signal length of $\EvalRelax_L$ applied to a signal of length $D - k_L^* + 1$ from Lemma~\ref{lem:evalrelax}.
Hence the number of samples on both sides of the claim are equal, and the contents of the output signals can be compared.

The comparison is here carried out on the subsignal level rather than on the sample level, which allows for a more direct application of Lemma~\ref{lem:processing-chain}.
For this, let $\gamma\inint{1}{k_L^*}$ be a fixed fragment index or shifting offset.
The number of subsignals of length $u_L$, see Lemma~\ref{lem:processing-chain} for its concrete definition, in the claimed identity is $U_L^{\row} - u_L + 1$.
With Lemma~\ref{lem:processing-chain}\ref{lem:processing-chain-c} this number equals $\frac{1}{k_L^*}(D - \ROI + 1)$, which is a natural number by requirement.
Let $\tilde{i}\inint{1}{\frac{1}{k_L^*}(D - \ROI + 1)}$ be an arbitrary subsignal index and define $i := \gamma + k_L^*(\tilde{i} - 1)$.
Then clearly $i\inint{1}{D - \ROI + 1}$.
Let $\mu\inint{1}{u_L}$ be arbitrary, then
\begin{displaymath}
  \div{i - 1}{k_L^*} + \mu
  \ =\ \div{\gamma + k_L^*(\tilde{i} - 1) - 1}{k_L^*} + \mu
  \ \equsing{P.~\ref{prop:number-theory}}\ \div{\gamma - 1}{k_L^*} + \tilde{i} + \mu - 1
  \ =\ \tilde{i} + \mu - 1
\end{displaymath}
since $\div{\gamma - 1}{k_L^*} = 0$ as $\gamma - 1\inint{0}{k_L^* - 1}$.
For the same reason it follows that
\begin{displaymath}
  \rem{i - 1}{k_L^*} + 1
  \ =\ \rem{\gamma + k_L^*(\tilde{i} - 1) - 1}{k_L^*} + 1
  \ \equsing{P.~\ref{prop:number-theory}}\ \rem{\gamma - 1}{k_L^*} + 1
  \ =\ \gamma\text{.}
\end{displaymath}
Therefore, extraction of the subsignal with index $\tilde{i}$ from the $\gamma$-th fragment of the left-hand side of the claim leads to
\begin{align*}
  & \Subsignal_{u_L}\left(\sum\nolimits_{\nu = 1}^{U_L^{\row}} \EvalSlide_L(\xi)_{\nu,\;\gamma}\cdot e_\nu^{U_L^{\row}},\;\tilde{i}\right)\\
  \equsing{D.~\ref{def:subsignal}}\ \ \ & \sum\nolimits_{\mu = 1}^{u_L} \EvalSlide_L(\xi)_{\tilde{i} + \mu - 1,\;\gamma}\cdot e_\mu^{u_L}\\
  =\ \ \ & \sum\nolimits_{\mu = 1}^{u_L} \EvalSlide_L(\xi)_{\div{i - 1}{k_L^*} + \mu,\;\rem{i - 1}{k_L^*} + 1}\cdot e_\mu^{u_L}\\
  \equsing{L.~\ref{lem:processing-chain}\ref{lem:processing-chain-d}}\ \ \ & \sum\nolimits_{\mu = 1}^{u_L} \EvalStride_L(\Subsignal_\ROI(\xi,\; i))_\mu\cdot e_\mu^{u_L}\\
  =\ \ \ & \EvalStride_L(\Subsignal_\ROI(\xi,\; \gamma + k_L^*(\tilde{i} - 1)))\text{.}
\end{align*}

The subsignal of the right-hand side of the claim with the same subsignal index equals
\begin{align*}
  & \Subsignal_{u_L}(\EvalRelax_L(\Subsignal_{D - k_L^* + 1}(\xi,\;\gamma)),\;\tilde{i})\\
  \equsing{L.~\ref{lem:evalrelax}\ref{lem:evalrelax-d}}\ \ \ & \EvalStride_L(\Subsignal_\ROI(\Subsignal_{D - k_L^* + 1}(\xi,\;\gamma),\; k_L^*(\tilde{i} - 1) + 1))\\
  \equsing{L.~\ref{lem:subsignal-composition}}\ \ \ & \EvalStride_L(\Subsignal_\ROI(\xi,\; \gamma + k_L^*(\tilde{i} - 1)))\text{,}
\end{align*}
where it is straightforward to verify all requirements of Lemma~\ref{lem:subsignal-composition} are met.
Since all feasible subsignals of both sides of the claimed identity match, the proof is finished.
\end{proof}

Contiguous output values from the multiple passes of the $\EvalRelax$ operator can finally be obtained with conventional defragmentation.
It is noteworthy that Theorem~\ref{thm:evalrelax_rec} implies that ordinary $\EvalRelax$ application is the same as using the first fragment in all the intermediate steps of $\EvalSlide$ evaluation and discarding the other fragments.
Since $\EvalSlide$ processes the input signal once in its entirety and the just investigated shift-and-stitch approach analyzes several shifted versions of the input signal one after the other, one might suspect the latter has an elevated computational complexity.
It is shown next that indeed a significant amount of redundant computations is involved with shift-and-stitch.

\subsection{Computational Complexity Analysis}
Theorem~\ref{thm:evalrelax_rec} has demonstrated that full spatial resolution can be recovered by applying a processing chain in a relaxed fashion to multiple shifted versions of the original input signal.
Now, the computational complexity of this method is analyzed and compared with the complexity of a fragmentation-based approach.

\subsubsection{Prerequisites}
The analysis is started with a few properties that are required later:
\begin{remark}
\label{rem:evalrelax-comp}
In the situation of Theorem~\ref{thm:evalrelax_rec}, the following holds for all $j\inint{1}{L}$:
\begin{enumerate}
  \item \label{rem:evalrelax-comp-a} $W_j = U_j^{\row} - \frac{k_L^*}{k_j^*} + 1$.
  \item \label{rem:evalrelax-comp-b} $u_j \geq \frac{k_L^*}{k_j^*}$.
\end{enumerate}
\end{remark}
\begin{proof}
\ref{rem:evalrelax-comp-a}
As the dimensionality of the input to $\EvalRelax$ is here $D - k_L^* + 1$, one obtains
\begin{align*}
  W_j\ \ \ &\equsing{L.~\ref{lem:evalrelax}}\ \ \ \tfrac{1}{k_j^*}\left(D - k_L^* + 1 - \sum\nolimits_{\mu = 1}^j k_{\mu - 1}^*(c_\mu - 1)\right)\\
  &=\ \ \ \tfrac{1}{k_j^*}\left(D - k_j^* + 1 - \sum\nolimits_{\mu = 1}^j k_{\mu - 1}^*(c_\mu - 1)\right) - \tfrac{1}{k_j^*}\left(k_L^* - k_j^*\right)\\
  &\equsing{L.~\ref{lem:processing-chain}\ref{lem:processing-chain-b}}\ \ \ U_j^{\row} - \tfrac{k_L^*}{k_j^*} + 1\text{.}
\end{align*}

\ref{rem:evalrelax-comp-b}
This is shown through reverse induction.
For $j = L$, the claim is $u_L \geq \frac{k_L^*}{k_L^*} = 1$, which is fulfilled since the processing chain outputs non-empty signals.
Now consider the induction step $j \to j - 1$.
From Lemma~\ref{lem:processing-chain}\ref{lem:processing-chain-a} follows that $u_j = \frac{1}{k_j}(u_{j - 1} - c_j + 1)$, hence
\begin{displaymath}
  u_{j - 1}
  \ =\  k_j u_j + c_j - 1
  \ \gequsing{IH}\ k_j \tfrac{k_L^*}{k_j^*} + c_j - 1
  \ =\ \tfrac{k_L^*}{k_{j - 1}^*} + c_j - 1
  \ \geq\ \tfrac{k_L^*}{k_{j - 1}^*}\text{,}
\end{displaymath}
where in the final step $c_j \geq 1$ has been used.
Therefore, the claim holds for $j - 1$.
\end{proof}

For demonstrating that redundancies are involved in the shift-and-stitch approach detailed in Theorem~\ref{thm:evalrelax_rec}, a simple inequality will be used twice:
\begin{remark}
\label{rem:evalrelax-comp-ineq}
Let $k,h\in\R$ be real numbers with $h \geq k \geq 1$.
Then $k\cdot\left(1 - \frac{k - 1}{h}\right) \geq 1$.
\end{remark}
\begin{proof}
The claim clearly holds for $k = 1$.
Now suppose that $k > 1$, then $h \geq k = \frac{k - 1}{k - 1}\cdot k = \frac{k - 1}{(k - 1) / k} = \frac{k - 1}{1 - \frac{1}{k}}$.
As $h > 0$ and $1 - \frac{1}{k} > 0$ follows $1 - \frac{1}{k} \geq \frac{k - 1}{h}$.
Therefore $1 - \frac{k - 1}{h} \geq \frac{1}{k}$, and the claim follows as $k > 0$.
\end{proof}

\subsubsection{Identification of the Number of Function Evaluations}
Let $j\inint{1}{L}$ be a fixed layer index and let $\xi\in M_0^D$ be an input signal.
Assume the requirements of Theorem~\ref{thm:evalrelax_rec} are satisfied, that is, $\EvalSlide$ should be applicable to $\xi$.
Further, let $\gamma\inint{1}{k_L^*}$ be a fixed offset and consider the application of $\EvalRelax_j$ to $\Subsignal_{D - k_L^* + 1}(\xi,\;\gamma)$.
Let $\pi_{\EvalRelax_j} := \EvalRelax_{j - 1}(\Subsignal_{D - k_L^* + 1}(\xi,\;\gamma))\in M_{j - 1}^{W_{j - 1}}$ be an abbreviation for the input to the $j$-th layer, which then has to evaluate $\EvalRelax_j(\Subsignal_{D - k_L^* + 1}(\xi,\;\gamma)) = \Stride_{g_j}(\Slide_{f_j}(\pi_{\EvalRelax_j}))$ due to Definition~\ref{def:evalrelax}.
Since
\begin{displaymath}
  \Slide_{f_j}(\pi_{\EvalRelax_j})
  \ \ \equsing{D.~\ref{def:sliding-function}}\ \ \sum\nolimits_{\mu = 1}^{W_{j - 1} - c_j + 1} f_j(\Subsignal_{c_j}(\pi_{\EvalRelax_j},\;\mu))\cdot e_\mu^{W_{j - 1} - c_j + 1}\text{,}
\end{displaymath}
$W_{j - 1} - c_j + 1$ evaluations of $f_j$ are necessary.
Further,
\begin{align*}
  & \Stride_{g_j}(\Slide_{f_j}(\pi_{\EvalRelax_j}))\\
  \equsing{D.~\ref{def:strided-function}}\ \ & \sum\nolimits_{\nu = 1}^{(W_{j - 1} - c_j + 1) / k_j} g_j(\Subsignal_{k_j}(\Slide_{f_j}(\pi_{\EvalRelax_j}),\;k_j(\nu - 1) + 1))\cdot e_\nu^{(W_{j - 1} - c_j + 1) / k_j}\text{.}
\end{align*}
Since $(W_{j - 1} - c_j + 1) / k_j = W_j$ due to Lemma~\ref{lem:evalrelax}\ref{lem:evalrelax-a}, $g_j$ has to be evaluated $W_j$ times.

\subsubsection{Analysis for the Subsignal Compatible Transformation Evaluation Component}
Since $\EvalRelax_j$ has to be carried out $k_L^*$ times in Theorem~\ref{thm:evalrelax_rec}, the overall number of function evaluations divided by the number of function evaluations of a fragmentation-based approach equals
\begin{displaymath}
  S_{f_j}^{\EvalRelax} := \frac{k_L^*(W_{j - 1} - c_j + 1)}{U_{j - 1}^{\col} (U_{j - 1}^{\row} - c_j + 1)}\text{.}
\end{displaymath}
Here, the denominator equals exactly that of $S_{f_j}$ from the main part of this paper.
It follows that
\begin{displaymath}
  S_{f_j}^{\EvalRelax}
  \ \ \ \equsing{R.~\ref{rem:evalrelax-comp}\ref{rem:evalrelax-comp-a}}\ \ \ \frac{k_L^*(U_{j - 1}^{\row} - \frac{k_L^*}{k_{j - 1}^*} + 1 - c_j + 1)}{U_{j - 1}^{\col} (U_{j - 1}^{\row} - c_j + 1)}
  \ \ \ \equsing{L.~\ref{lem:processing-chain}\ref{lem:processing-chain-b}}\ \ \ \frac{k_L^*}{k_{j - 1}^*}\left(1 - \frac{\tfrac{k_L^*}{k_{j - 1}^*} - 1}{U_{j - 1}^{\row} - c_j + 1}\right)\text{.}
\end{displaymath}

It is first shown that $S_{f_j}^{\EvalRelax} \geq 1$.
For this, let $\bar{k}_{j - 1} := \frac{k_L^*}{k_{j - 1}^*} = k_j\cdots k_L \geq 1$ and $h_{j - 1} := U_{j - 1}^{\row} - c_j + 1$ be abbreviations.
From $U_{j - 1}^{\row} \geq u_{j - 1}$ follows that
\begin{displaymath}
  h_{j - 1}
  \ \geq\ u_{j - 1} - c_j + 1
  \ \ \ \equsing{L.~\ref{lem:processing-chain}\ref{lem:processing-chain-a}}\ \ \ k_j u_j
  \ \ \ \gequsing{R.~\ref{rem:evalrelax-comp}\ref{rem:evalrelax-comp-b}}\ \ \ k_j \tfrac{k_L^*}{k_j^*}
  \ =\ \tfrac{k_L^*}{k_{j - 1}^*}
  \ =\ \bar{k}_{j - 1}\text{,}
\end{displaymath}
therefore Remark~\ref{rem:evalrelax-comp-ineq} finally implies
\begin{displaymath}
  1 \leq \bar{k}_{j - 1}\cdot\left(1 - \frac{\bar{k}_{j - 1} - 1}{h_{j - 1}}\right) = S_{f_j}^{\EvalRelax}\text{.}
\end{displaymath}
In other words, a fragmentation-based approach requires at most the same number of function evaluations as $\EvalRelax$ being applied $k_L^*$ times to differently shifted versions of the original input signal.

For an analysis of the properties of $S_{f_j}^{\EvalRelax}$ when the input signal length is increased, let $D_+ := D + k_L^*$ denote the next larger feasible input signal dimensionality.
The only quantity in $S_{f_j}^{\EvalRelax}$ that depends on the input signal length is $U_{j - 1}^{\row}$.
Therefore,
\begin{align*}
  S_{f_j}^{\EvalRelax}(D_+) - S_{f_j}^{\EvalRelax}(D)
  &= \bar{k}_{j - 1}\cdot(\bar{k}_{j - 1} - 1)\cdot\left(\frac{1}{U_{j - 1}^{\row}(D) - c_j + 1} - \frac{1}{U_{j - 1}^{\row}(D_+) - c_j + 1}\right)\\
  &= \bar{k}_{j - 1}\cdot(\bar{k}_{j - 1} - 1)\cdot\frac{U_{j - 1}^{\row}(D_+) - U_{j - 1}^{\row}(D)}{(U_{j - 1}^{\row}(D_+) - c_j + 1) (U_{j - 1}^{\row}(D) - c_j + 1)}\text{.}
\end{align*}
This difference is non-negative as $\bar{k}_{j - 1}\geq 1$ and $U_{j - 1}^{\row}(D_+) \geq U_{j - 1}^{\row}(D)$.
Hence the speedup increases for a larger input signal dimensionality.
Further, one obtains $\lim_{D\to\infty} S_{f_j}^{\EvalRelax}(D) = \frac{k_L^*}{k_{j - 1}^*}$.
In other words, the maximally achievable speedup is greatest in early layers.

\subsubsection{Analysis for the Strided Function Evaluation Component}
Analogously to the analysis for the evaluations of subsignal compatible transformations, the quotient of the number of strided function evaluations is
\begin{displaymath}
  S_{g_j}^{\EvalRelax} := \frac{k_L^* W_j}{U_{j - 1}^{\col}k_j U_j^{\row}}\text{.}
\end{displaymath}
Here, the numerator corresponds to $k_L^*$ evaluations of $\EvalRelax_j$.
The denominator corresponds to a fragmentation-based approach, see $S_{g_j}$ from the main part of this paper.
Using Remark~\ref{rem:evalrelax-comp}\ref{rem:evalrelax-comp-a} and Lemma~\ref{lem:processing-chain}\ref{lem:processing-chain-b} yields
\begin{displaymath}
  S_{g_j}^{\EvalRelax} = \frac{k_L^*}{k_j^*}\left(1 - \frac{\tfrac{k_L^*}{k_j^*} - 1}{U_j^{\row}}\right)\text{.}
\end{displaymath}
As $U_j^{\row} \geq u_j$, which is in turn greater than $\frac{k_L^*}{k_j^*}$ with Remark~\ref{rem:evalrelax-comp}\ref{rem:evalrelax-comp-b}, Remark~\ref{rem:evalrelax-comp-ineq} implies that $S_{g_j}^{\EvalRelax} \geq 1$, proving that here redundant computations are involved as well.

Suppose that $D_+ := D + k_L^*$ denotes the next larger feasible input signal dimensionality and $\bar{k}_j := \tfrac{k_L^*}{k_j^*}$ is an abbreviation, then
\begin{displaymath}
  S_{g_j}^{\EvalRelax}(D_+) - S_{g_j}^{\EvalRelax}(D)
  = \bar{k}_j\cdot(\bar{k}_j - 1)\cdot\frac{U_j^{\row}(D_+) - U_j^{\row}(D)}{U_j^{\row}(D_+) U_j^{\row}(D)}
  \geq 0\text{.}
\end{displaymath}
Moreover, $\lim_{D\to\infty} S_{g_j}^{\EvalRelax}(D) = \frac{k_L^*}{k_j^*}$.
Hence here the speedup converges increasingly to the ratio of two stride products as well.

\subsubsection{Conclusion and Discussion of Plain Application in a Relaxed Fashion}
The computational complexity analysis has proved that a significant amount of redundant computations is involved should $\EvalRelax$ be used to obtain full resolution output using a shift-and-stitch approach.
For this use case, a method that avoids redundant computations as discussed earlier in this paper should be preferred.
If, however, in a certain use case downsampled output signals are sufficient, the $\EvalRelax$ operator can be used to achieve this with reasonable computational complexity:
If only one instead of $k_L^*$ evaluations of $\EvalRelax$ are carried out, the computational complexity analysis above can be repeated while simultaneously omitting the factor $k_L^*$ in the numerators of the ratios of the number of function evaluations for $f_j$ and $g_j$.
These ratios are hence
\begin{displaymath}
  \tilde{S}_{f_j}^{\EvalRelax}
  := \frac{1}{k_{j - 1}^*}\left(1 - \frac{\tfrac{k_L^*}{k_{j - 1}^*} - 1}{U_{j - 1}^{\row} - c_j + 1}\right)
  = \tfrac{1}{k_L^*}\cdot S_{f_j}^{\EvalRelax}
\end{displaymath}
and
\begin{displaymath}
  \tilde{S}_{g_j}^{\EvalRelax}
  := \frac{1}{k_j^*}\left(1 - \frac{\tfrac{k_L^*}{k_j^*} - 1}{U_j^{\row}}\right)
  = \tfrac{1}{k_L^*}\cdot S_{g_j}^{\EvalRelax}\text{.}
\end{displaymath}

Furthermore, in the limit case of arbitrarily large input signals one yields $\lim_{D\to\infty} \tilde{S}_{f_j}^{\EvalRelax}(D) = \frac{1}{k_{j - 1}^*} \leq 1$ and $\lim_{D\to\infty} \tilde{S}_{g_j}^{\EvalRelax}(D) = \frac{1}{k_j^*} \leq 1$.
This demonstrates that here plain $\EvalRelax$ requires only a fraction of the operations and hence has a lower computational complexity, of course at the expense of only yielding low-resolution output signals compared with the exact approaches discussed earlier.
It should finally be noted that the maximum speedup that can be achieved in each layer is only that of intermediate stride products.
These are not necessarily equal to the final stride product $k_L^*$, which corresponds to the actual downsampling factor.
In conclusion, the overall speedup is inferior to $k_L^*$.

\bibliographystylerlx{IEEEtran}
\bibliographyrlx{IEEEabrv,the}

\clearpage
\appendix[Application of Processing Chains in a Mixed Fashion]
The application of a processing chain in a relaxed fashion as discussed in the previous appendix results in a downscaled output signal.
More precisely, output values are obtained only for each $k_L^*$-th input subsignal, where $k_L^*$ denotes the final stride product of the processing chain.
This appendix investigates the application of a processing chain in a mixed fashion:
The first layers are applied in a relaxed fashion, followed by the application of the remaining layers in a sliding fashion as in a fragmentation-based approach which has been studied in the main part of this paper.
It is shown here that this approach facilitates direct control over the spatial output resolution through choosing how many of the processing chain's layers should be applied in a relaxed fashion.
An example is illustrated in Fig.~\ref{fig:relaxslide-dec}.
Besides the degradation of the output resolution, there is no further precision loss involved with this method.

\begin{figure}[t]
  \centering
  \scalebox{1.20}{\includegraphics[page=18]{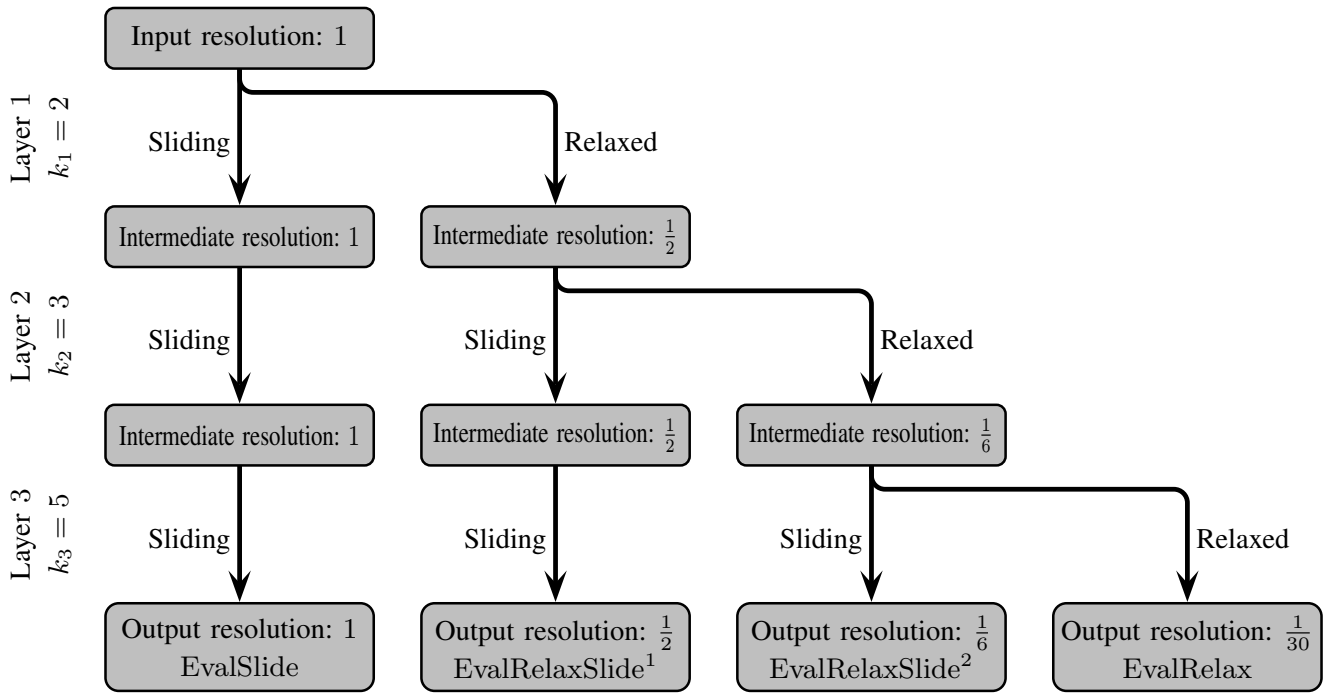}}
  \caption{Illustration of the effect of mixed processing chain application on the spatial resolution of the output signal using a three-layered ($L = 3$) processing chain with strides $k_1 = 2$, $k_2 = 3$, and $k_3 = 5$ as an example.
    A purely sliding evaluation using the $\EvalSlide$ operator, analyzed in the main part of this paper, involves no loss of spatial resolution whatsoever (left-hand side of the graphics).
    The other extreme of applying each layer in a relaxed fashion through the $\EvalRelax$ operator results in a decrease of spatial output resolution by a factor equal to the final stride product $k_L^* = 30$ (jagged transition from the input at the top left to the output at the bottom right of the graphics).
    The application in a mixed fashion using the $\EvalRelaxSlide^\ell$ operator introduced in this appendix offers a compromise between the two extremes.
    Here, the first $\ell$ layers are applied in a relaxed fashion, accounting for a resolution loss by only a factor of $k_\ell^*$.
    Then, the remaining $L - \ell$ layers are applied in a sliding fashion, preserving spatial resolution.
    For example, if $\ell = 2$ only the first two layers are applied in a relaxed fashion, and the third layer is applied in a sliding fashion.
    This results in a moderate resolution loss of factor $k_\ell^* = k_1 k_2 = 6$ (transition leading to the second output box from the right in the graphics).}    
  \label{fig:relaxslide-dec}
\end{figure}

A somewhat related approach is that of~\citerlxsld{Chen2016rlxsld}, where the stride of the application of dilated convolution and pooling has been systematically manipulated to obtain output signals with a reduced spatial resolution, which is however greater than the output resolution of a purely relaxed approach.
While it is unclear at first that such an approach would lead to the desired results and satisfies correctness statements in any sense, here it is proven rigorously that the combination of relaxed application with a fragmentation-based approach provides an efficient trade-off between full output resolution and completely reduced resolution.
The here developed results further show how divisibility requirements can be fulfilled prior to the actual processing chain application, enabling the use of homogeneous data structures at all times and therefore optimizing throughput on massively parallel processors.

\subsection{Application of Processing Chains in a Mixed Fashion}
The analysis starts with an exact definition of what is meant by the application of a processing chain in a mixed fashion:
\begin{definition}
\label{def:evalrelaxslide}
Consider a processing chain as in Definition~\ref{def:processing-chain} and let $\ell\inint{1}{L - 1}$ be a fixed layer index.
With Theorem~\ref{thm:sliding-subsignal} let $f_j\colon M_{j - 1}^{c_j}\to N_j$ be the unique functions which fulfill $T_j = \Slide_{f_j}$ for all $j\inint{1}{L}$.
For $j\inint{0}{L}$, the operator $\EvalRelaxSlide_j^\ell\colon\cup_{\ROI + k_L^* - k_\ell^*}(M_0)\to\cup_1(M_j)$, 
\begin{displaymath}
  \xi\mapsto
  \begin{cases}
    \xi\text{,} & \text{if } j = 0\text{,}\\
    \Stride_{g_j}(\Slide_{f_j}(\EvalRelaxSlide_{j - 1}^\ell(\xi)))\text{, } & \text{if } j\inint{1}{\ell}\text{,}\\
    \Fragmentation_{k_j}(\Slide_{g_j}(\Slide_{f_j}(\EvalRelaxSlide_{j - 1}^\ell(\xi))))\text{, } & \text{if } j\inint{\ell + 1}{L}\text{,}
  \end{cases}
\end{displaymath}
applies the processing chain in a fashion which is \emph{mixed of a relaxed and a sliding fashion}.
Divisibility requirements for well-definedness of this operator are detailed below.
\end{definition}

The just defined operator is parameterized with a layer index $\ell$.
The first $\ell$ layers of the processing chain are applied in a relaxed fashion, analogous to Definition~\ref{def:evalrelax}.
The remaining $L - \ell$ layers are applied in a sliding fashion using a fragmentation-based approach, as in Definition~\ref{def:processing-chain}.
Therefore, the extreme cases $\ell = 0$ and $\ell = L$ would correspond to a purely sliding fashion and a purely relaxed fashion, respectively, and were excluded in the definition as these have already been analyzed.

The following result elaborates on divisibility requirements for well-definedness.
Moreover, it is shown that the application of a processing chain in a mixed fashion results in the loss of spatial resolution, but not in any loss of precision.
The amount of the output signal degradation can be explicitly controlled with the $\ell$ parameter.

\begin{lemma}
\label{lem:evalrelaxslide}
Consider a processing chain as in Definition~\ref{def:evalrelaxslide}.
Suppose $\EvalStride$ is well-defined and let $u_j := \dim_{M_j}(\EvalStride_j(\rho))\in\N_1$ for $j\inint{0}{L}$ denote the lengths of the intermediate representations when this operator is applied to an arbitrary signal $\rho\in M_0^\ROI$.

Let $D\in\N_1$ be a signal length where there is a positive natural number $t\in\N_1$ with $D = \ROI + k_L^*t - k_\ell^*$.
As $t \geq 1$ this is at least the minimum input signal length of $\EvalRelaxSlide^\ell$ as required by Definition~\ref{def:evalrelaxslide}.
Suppose $\xi\in M_0^D$ denotes the input signal about to be processed.

For $j\inint{1}{\ell}$ let $W_j := \dim_{M_j}(\EvalRelaxSlide_j^\ell(\xi))\in\N_1$ denote the intermediate dimensionalities from the relaxed fashion part of $\EvalRelaxSlide^\ell$.
These numbers equal exactly those from Lemma~\ref{lem:evalrelax} with the same symbol.
Further, for $j\inint{\ell + 1}{L}$ let $\tilde{U}_j^{\col} := \cdim_{M_j}(\EvalRelaxSlide_j^\ell(\xi))\in\N_1$ and $\tilde{U}_j^{\row} := \rdim_{M_j}(\EvalRelaxSlide_j^\ell(\xi))\in\N_1$ denote the dimensionalities encountered during the application of the fragmentation-based part of $\EvalRelaxSlide^\ell$.
Then the following holds:
\begin{enumerate}
  \item \label{lem:evalrelaxslide-a} The intermediate stride product $k_\ell^*$ divides $D - \ROI$.
        This implies that $\EvalRelaxSlide_j^\ell(\xi)$ is well-defined for $j\inint{0}{\ell}$.
        The application up to the $\ell$-th layer yields an intermediate signal of dimensionality $W_\ell = \frac{1}{k_\ell^*}(D - \ROI) + u_\ell$.
  \item \label{lem:evalrelaxslide-b} $\EvalRelaxSlide_j^\ell(\xi)$ is well-defined with $\tilde{U}_j^{\col} = \frac{k_j^*}{k_\ell^*}$ and $\tilde{U}_j^{\row} - u_j + 1 = \frac{k_L^*}{k_j^*}t$ for all $j\inint{\ell + 1}{L}$.
  \item \label{lem:evalrelaxslide-c} For $j\inint{1}{\ell}$ and all $i\inint{1}{W_j - u_j + 1}$ it is
        \begin{displaymath}
          \EvalStride_j(\Subsignal_\ROI(\xi,\; k_j^*(i - 1) + 1))
          = \Subsignal_{u_j}(\EvalRelaxSlide_j^\ell(\xi),\; i)\text{.}
        \end{displaymath}
  \item \label{lem:evalrelaxslide-d} For all $j\inint{\ell + 1}{L}$, all $i\inint{1}{W_\ell - u_\ell + 1}$ and all $\mu\inint{1}{u_j}$ it holds that
        \begin{displaymath}
          \EvalStride_j(\Subsignal_\ROI(\xi,\; k_\ell^*(i - 1) + 1))_\mu
          = \EvalRelaxSlide_j^\ell(\xi)_{\div{i - 1}{k_j^* / k_\ell^*} + \mu,\;\rem{i - 1}{k_j^* / k_\ell^*} + 1}\text{.}
        \end{displaymath}
\end{enumerate}
In other words, $\EvalRelaxSlide_j^\ell(\xi)$ is well-defined for all $j\inint{0}{L}$ and the output of the $L$-th layer equals the result of $\EvalStride_L$ applied to each $k_\ell^*$-th subsignal of $\xi$ of length $\ROI$.
\end{lemma}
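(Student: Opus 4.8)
The plan is to split the argument at layer $\ell$, exploiting that the first $\ell$ layers of $\EvalRelaxSlide^\ell$ are defined by exactly the same recursion as the relaxed operator $\EvalRelax$ of Definition~\ref{def:evalrelax}, whereas layers $\ell+1$ through $L$ reproduce a fragmentation-based $\EvalSlide$ cascade of Definition~\ref{def:processing-chain} applied to the single-fragment signal emitted by the relaxed part. Accordingly, I would derive parts~\ref{lem:evalrelaxslide-a} and~\ref{lem:evalrelaxslide-c} by invoking Lemma~\ref{lem:evalrelax} on the truncated chain consisting of layers $1,\dots,\ell$, and parts~\ref{lem:evalrelaxslide-b} and~\ref{lem:evalrelaxslide-d} by invoking Lemma~\ref{lem:processing-chain} on the tail chain consisting of layers $\ell+1,\dots,L$.

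First I would establish the divisibility underlying the relaxed part. From $D = \ROI + k_L^* t - k_\ell^*$ together with $k_L^* = k_\ell^*\cdot(k_{\ell+1}\cdots k_L)$ it follows that $D - \ROI = k_\ell^*\big(k_{\ell+1}\cdots k_L\cdot t - 1\big)$, so $k_\ell^*$ divides $D - \ROI$. This is precisely the hypothesis Lemma~\ref{lem:evalrelax} requires for the truncated chain, whose final stride product is $k_\ell^*$ and whose strided application is well-defined because the full chain's $\EvalStride$ is — the divisibility and non-emptiness conditions are statements about dimensionalities only (via the dimensionality reduction property and strided division) and are therefore inherited for every $j\le\ell$. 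Lemma~\ref{lem:evalrelax}\ref{lem:evalrelax-c} then yields well-definedness of $\EvalRelaxSlide_j^\ell$ for $j\le\ell$, Lemma~\ref{lem:evalrelax}\ref{lem:evalrelax-b} gives $W_\ell - u_\ell = \frac{1}{k_\ell^*}(D-\ROI)$ and hence the stated formula for $W_\ell$, and Lemma~\ref{lem:evalrelax}\ref{lem:evalrelax-d} is verbatim part~\ref{lem:evalrelaxslide-c}.

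For the fragmentation-based part I would regard $\pi := \EvalRelaxSlide_\ell^\ell(\xi)\in M_\ell^{W_\ell}$ as a single-fragment input to the tail chain, so that $\EvalRelaxSlide_j^\ell(\xi)$ equals the tail chain's $\EvalSlide_{j-\ell}(\pi)$ for $j>\ell$. The tail chain has receptive field $u_\ell$ and final stride product $k_L^*/k_\ell^*$, and its number of subsignals is $W_\ell - u_\ell + 1 = \frac{1}{k_\ell^*}(D-\ROI)+1 = \frac{k_L^*}{k_\ell^*}\,t$, which is divisible by $k_L^*/k_\ell^*$ with quotient $t$ — exactly the divisibility Lemma~\ref{lem:processing-chain} demands (its hypotheses being again inherited, since $\EvalStride_\ell(\rho)\in M_\ell^{u_\ell}$ and the full chain's conditions at layers $>\ell$ are purely dimensional). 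Applying that lemma gives well-definedness together with $\tilde{U}_j^{\col} = k_{\ell+1}\cdots k_j = k_j^*/k_\ell^*$ from Lemma~\ref{lem:processing-chain}\ref{lem:processing-chain-b} and $\tilde{U}_j^{\row} - u_j + 1 = \frac{k_\ell^*}{k_j^*}\cdot\frac{k_L^*}{k_\ell^*}\,t = \frac{k_L^*}{k_j^*}\,t$ from Lemma~\ref{lem:processing-chain}\ref{lem:processing-chain-c}, establishing part~\ref{lem:evalrelaxslide-b}.

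Finally, I would obtain part~\ref{lem:evalrelaxslide-d} by composition. Because $\EvalStride_j$ factors as the tail strided cascade applied to $\EvalStride_\ell$ for $j>\ell$, I would start from part~\ref{lem:evalrelaxslide-c} at $j=\ell$, namely $\EvalStride_\ell(\Subsignal_\ROI(\xi,k_\ell^*(i-1)+1)) = \Subsignal_{u_\ell}(\pi,i)$, apply the tail's strided operator to both sides, and invoke Lemma~\ref{lem:processing-chain}\ref{lem:processing-chain-d} for the tail chain with subsignal index $i$ and stride product $k_j^*/k_\ell^*$ to locate the result inside $\EvalSlide_{j-\ell}^{\text{tail}}(\pi) = \EvalRelaxSlide_j^\ell(\xi)$ at coordinates $\big(\div{i-1}{k_j^*/k_\ell^*}+\mu,\ \rem{i-1}{k_j^*/k_\ell^*}+1\big)$, which is the claimed identity. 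The main obstacle I anticipate is not any single computation but the bookkeeping between the two incompatible stride-product scales — the full chain's $k_j^*$ against the tail chain's $k_j^*/k_\ell^*$ — and the careful verification that the domain, receptive-field, and divisibility hypotheses of the two cited lemmas are genuinely met by the sub-chains, in particular that their $\EvalStride$ well-definedness transfers because it constrains only dimensionalities.
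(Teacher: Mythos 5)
Your proposal is correct, and on the two substantive parts it takes a genuinely different route from the paper's own proof — in fact, precisely the route the paper flags and declines: the proof in the paper opens by remarking that ``an alternative, but somewhat obscure proof could have proceeded with the application of Lemma~\ref{lem:processing-chain} using a modified processing chain'' and that this ``is not carried out here for clarity of the presentation.'' For parts~\ref{lem:evalrelaxslide-a} and~\ref{lem:evalrelaxslide-c} you do exactly what the paper does: observe $\EvalRelaxSlide_j^\ell = \EvalRelax_j$ for $j\leq\ell$, check $k_\ell^*$ divides $D-\ROI$ via $D-\ROI = (k_{\ell+1}\cdots k_L\, t - 1)k_\ell^*$, and quote Lemma~\ref{lem:evalrelax} for the truncated chain of the first $\ell$ layers. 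The divergence is in parts~\ref{lem:evalrelaxslide-b} and~\ref{lem:evalrelaxslide-d}: the paper proves these by a self-contained induction over the layers $j>\ell$, redoing the divisibility computation for the row dimension, the fragment counting via Lemma~\ref{lem:frag-ops}, and the Euclidean-division index manipulations from the proof of Lemma~\ref{lem:processing-chain}\ref{lem:processing-chain-d}; you instead package layers $\ell+1,\dotsc,L$ as a processing chain in its own right with receptive field $u_\ell$, stride products $k_j^*/k_\ell^*$, and single-fragment input $\pi = \EvalRelaxSlide_\ell^\ell(\xi)$, and invoke Lemma~\ref{lem:processing-chain} wholesale. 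The verifications you list are exactly the ones that make this sound: the tail chain's hypotheses are purely dimensional (by the DRP they depend only on $u_\ell$, not on the concrete input, so they are inherited from the full chain), the recursions identify $\EvalRelaxSlide_{\ell+j'}^\ell(\xi)$ with the tail's $\EvalSlide_{j'}(\pi)$, the strided cascades factor as $\EvalStride_j = \EvalStride'_{j-\ell}\circ\EvalStride_\ell$ on $M_0^\ROI$, and the subsignal count $W_\ell - u_\ell + 1 = \tfrac{k_L^*}{k_\ell^*}t$ is divisible by the tail's final stride product $k_L^*/k_\ell^*$, which is Lemma~\ref{lem:processing-chain}'s divisibility hypothesis; the dimension formulas of~\ref{lem:evalrelaxslide-b} and the locating identity of~\ref{lem:evalrelaxslide-d} then fall out of Lemma~\ref{lem:processing-chain}\ref{lem:processing-chain-b}--\ref{lem:processing-chain-d} after rescaling stride products by $k_\ell^*$. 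What each approach buys: yours is considerably shorter and reuses existing machinery, at the cost of the two-scale bookkeeping you yourself flag and of having to argue hypothesis inheritance for an auxiliary chain the paper never formally constructs; the paper's induction is longer but keeps every index on the original chain's scale, which is evidently why the authors judged it clearer.
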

\begin{proof}
The proof presented here uses Lemma~\ref{lem:evalrelax} for the relaxed fashion parts and explicit arguments for the sliding fashion parts.
An alternative, but somewhat obscure proof could have proceeded with the application of Lemma~\ref{lem:processing-chain} using a modified processing chain.
This is not carried out here for clarity of the presentation.

\ref{lem:evalrelaxslide-a}
By requirement on the input signal length $D$ it follows that
\begin{displaymath}
  D - \ROI
  = k_L^*t - k_\ell^*
  = (k_{\ell + 1}\cdots k_L\cdot t - 1)\cdot k_\ell^*\text{,}
\end{displaymath}
therefore $k_\ell^*$ divides the difference $D - \ROI$.
Since $\EvalRelaxSlide_j^\ell(\xi) = \EvalRelax_j(\xi)$ for $j\inint{0}{\ell}$, Lemma~\ref{lem:evalrelax} now guarantees that the $\EvalRelaxSlide^\ell$ operator can be applied up to the $\ell$-th layer.
The claimed dimensionality follows directly from Lemma~\ref{lem:evalrelax}\ref{lem:evalrelax-b}.

\ref{lem:evalrelaxslide-b}
This is shown with induction.
Consider the case $j = \ell + 1$ and let $\pi_\ell := \EvalRelaxSlide_\ell^\ell(\xi)\in M_\ell^{W_\ell}$ be an abbreviation for the input to the considered layer.
Since the layers $1,\dotsc,\ell$ are completely agnostic of any fragmentation, it indeed holds that $\rdim_{M_\ell}(\pi_\ell) = W_\ell$ and $\cdim_{M_\ell}(\pi_\ell) = 1$.
Now
\begin{align*}
  \chi_{\ell + 1}\ &:=\ \rdim_{M_{\ell + 1}}(\Slide_{g_{\ell + 1}}(\Slide_{f_{\ell + 1}}(\pi_\ell)))\\
  &\equsing{D.~\ref{def:sliding-function}}\ W_\ell - c_{\ell + 1} + 1 - k_{\ell + 1} + 1\\
  &\equsing{\ref{lem:evalrelaxslide-a}}\ \tfrac{1}{k_\ell^*}(D - \ROI) + u_\ell - c_{\ell + 1} + 1 - k_{\ell + 1} + 1\\
  &\equsing{($\lozenge$)}\ \tfrac{1}{k_\ell^*}(k_L^*t - k_\ell^*) + k_{\ell + 1}u_{\ell + 1} + c_{\ell + 1} - 1 - c_{\ell + 1} + 1 - k_{\ell + 1} + 1\\
  &=\ k_{\ell + 1}\cdots k_L\cdot t + k_{\ell + 1}u_{\ell + 1} - k_{\ell + 1}\text{,}
\end{align*}
where in the ($\lozenge$) step $D - \ROI = k_L^*t - k_\ell^*$ from~\ref{lem:evalrelaxslide-a} and $u_{\ell} = k_{\ell + 1}u_{\ell + 1} + c_{\ell + 1} - 1$ from Lemma~\ref{lem:processing-chain}\ref{lem:processing-chain-a} have been substituted.
As $k_{\ell + 1}$ divides $\chi_{\ell + 1}$, $\Fragmentation_{k_{\ell + 1}}$ can be applied to $\Slide_{g_{\ell + 1}}(\Slide_{f_{\ell + 1}}(\pi_\ell))$.
This yields a non-empty signal because $t$ is positive, and hence $\EvalRelaxSlide_{\ell + 1}^\ell(\xi)$ is well-defined.
The number of output fragments is
\begin{align*}
  \tilde{U}_{\ell + 1}^{\col}\ \ &\equsing{D.\ref{def:evalrelaxslide}}\ \ \cdim_{M_{\ell + 1}}(\Fragmentation_{k_{\ell + 1}}(\Slide_{g_{\ell + 1}}(\Slide_{f_{\ell + 1}}(\pi_\ell))))\\
  &\equsing{L.~\ref{lem:frag-ops}}\ \ k_{\ell + 1}\cdot \cdim_{M_{\ell + 1}}(\Slide_{g_{\ell + 1}}(\Slide_{f_{\ell + 1}}(\pi_\ell)))\\
  &\equsing{D.~\ref{def:fragmentwise-evaluation}}\ \ k_{\ell + 1}\cdot \cdim_{M_\ell}(\pi_\ell)\\
  &=\ \ k_{\ell + 1}\\
  &=\ \ \tfrac{k_{\ell + 1}^*}{k_\ell^*}\text{,}
\end{align*}
which matches the claimed identity for $j = \ell + 1$.
Furthermore, one obtains
\begin{displaymath}
  \tilde{U}_{\ell + 1}^{\row}
  \ \ \equsing{D.\ref{def:evalrelaxslide}}\ \ \rdim_{M_{\ell + 1}}(\Fragmentation_{k_{\ell + 1}}(\Slide_{g_{\ell + 1}}(\Slide_{f_{\ell + 1}}(\pi_\ell))))
  \ \ \equsing{L.~\ref{lem:frag-ops}}\ \ \tfrac{1}{k_{\ell + 1}}\cdot\chi_{\ell + 1}
  \ \ =\ \ \tfrac{k_L^*}{k_{\ell + 1}^*}\cdot t + u_{\ell + 1} - 1
\end{displaymath}
and hence $\tilde{U}_{\ell + 1}^{\row} - u_{\ell + 1} + 1 = \frac{k_L^*}{k_{\ell + 1}^*}t$ as claimed.

Turning now to $j - 1 \to j$, let $\pi_{j - 1} := \EvalRelaxSlide_{j - 1}^\ell(\xi)\in M_{j - 1}^{\tilde{U}_{j - 1}^{\row} \times \tilde{U}_{j - 1}^{\col}}$ be the input to the $j$-th layer.
It is then
\begin{align*}
  \chi_j\ &:=\ \rdim_{M_j}(\Slide_{g_j}(\Slide_{f_j}(\pi_{j - 1})))\\
  &\equsing{D.~\ref{def:sliding-function}}\ \tilde{U}_{j - 1}^{\row} - c_j + 1 - k_j + 1\\
  &\equsing{IH}\ \tfrac{k_L^*}{k_{j - 1}^*}\cdot t + u_{j - 1} - 1 - c_j + 1 - k_j + 1\\
  &\equsing{($\lozenge$)}\ k_j\cdots k_L\cdot t + k_ju_j - k_j\text{.}
\end{align*}
The identity $u_{j - 1} = k_ju_j + c_j - 1$ used in the ($\lozenge$) step is due to Lemma~\ref{lem:processing-chain}\ref{lem:processing-chain-a}.
Therefore, $k_j$ is a divisor of $\chi_j$ and hence the fragmentation operator in layer $j$ can be applied.
This yields an output signal with
\begin{align*}
  \tilde{U}_j^{\col}\ \ &\equsing{D.\ref{def:evalrelaxslide}}\ \ \cdim_{M_j}(\Fragmentation_{k_j}(\Slide_{g_j}(\Slide_{f_j}(\pi_{j - 1}))))\\
  &\equsing{L.~\ref{lem:frag-ops}}\ \ k_j\cdot \cdim_{M_j}(\Slide_{g_j}(\Slide_{f_j}(\pi_{j - 1})))\\
  &\equsing{D.~\ref{def:fragmentwise-evaluation}}\ \ k_j\cdot \cdim_{M_{j - 1}}(\pi_{j - 1})\\
  &=\ \ k_j\cdot \tilde{U}_{j - 1}^{\col}\\
  &\equsing{IH}\ k_j\cdot\tfrac{k_{j - 1}^*}{k_\ell^*}\\
  &=\ \ \tfrac{k_j^*}{k_\ell^*}
\end{align*}
fragments, where each fragment has
\begin{displaymath}
  \tilde{U}_j^{\row}
  \ \ \equsing{D.\ref{def:evalrelaxslide}}\ \ \rdim_{M_j}(\Fragmentation_{k_j}(\Slide_{g_j}(\Slide_{f_j}(\pi_{j - 1}))))
  \ \ \equsing{L.~\ref{lem:frag-ops}}\ \ \tfrac{1}{k_j}\cdot\chi_j
  \ \ =\ \ \tfrac{k_L^*}{k_j^*}\cdot t + u_j - 1
\end{displaymath}
samples.
The claimed identity for $\tilde{U}_j^{\row}$ immediately follows from this result.
Eventually, this finished the induction, proved that the analyzed operator is well-defined for all $j\inint{\ell + 1}{L}$, and that the claimed identities hold.

\ref{lem:evalrelaxslide-c}
Follows directly from Lemma~\ref{lem:evalrelax}\ref{lem:evalrelax-d}.

\ref{lem:evalrelaxslide-d}
The induction for $j$ is started with $j = \ell + 1$.
Let $i\inint{1}{W_\ell - u_\ell + 1}$ and $\mu\inint{1}{u_{\ell + 1}}$ be arbitrary, and let $\tau^\ell := \EvalStride_\ell(\Subsignal_\ROI(\xi,\; k_\ell^*(i - 1) + 1))\in M_\ell^{u_\ell}$ be an abbreviation.
Then the $\mu$-th sample of the left-hand side of the claimed identity is
\begin{align*}
  & \EvalStride_{\ell + 1}(\Subsignal_\ROI(\xi,\; k_\ell^*(i - 1) + 1))_\mu\\
  \equsing{D.~\ref{def:processing-chain}}\ \ & \Stride_{g_{\ell + 1}}(\Slide_{f_{\ell + 1}}(\tau^\ell))_\mu\\
  \equsing{D.~\ref{def:strided-function}}\ \ & g_{\ell + 1}\!\left( \sum\nolimits_{\nu = 1}^{k_{\ell + 1}} \Slide_{f_{\ell + 1}}(\tau^\ell)_{k_{\ell + 1}(\mu - 1) + \nu}\cdot e_\nu^{k_{\ell + 1}} \right)\\
    \equsing{D.~\ref{def:sliding-function}}\ \ & g_{\ell + 1}\!\left( \sum\nolimits_{\nu = 1}^{k_{\ell + 1}} f_{\ell + 1}\!\left( \sum\nolimits_{\lambda = 1}^{c_{\ell + 1}} \tau^\ell_{k_{\ell + 1}(\mu - 1) + \nu + \lambda - 1}\cdot e_\lambda^{c_{\ell + 1}} \right) \cdot e_\nu^{k_{\ell + 1}} \right)\text{.}
\end{align*}
Considering the right-hand side, one obtains
\begin{align*}
  & \EvalRelaxSlide_{\ell + 1}^\ell(\xi)_{\div{i - 1}{k_{\ell + 1}^* / k_\ell^*} + \mu,\;\rem{i - 1}{k_{\ell + 1}^* / k_\ell^*} + 1}\\
  \equsing{D.~\ref{def:evalrelaxslide}}\ \ & \Fragmentation_{k_{\ell + 1}}(\Slide_{g_{\ell + 1}}(\Slide_{f_{\ell + 1}}(\EvalRelaxSlide_\ell^\ell(\xi))))_{\div{i - 1}{k_{\ell + 1}} + \mu,\;\rem{i - 1}{k_{\ell + 1}} + 1}\\
  \equsing{L.~\ref{lem:frag-ops}}\ \ & \Slide_{g_{\ell + 1}}(\Slide_{f_{\ell + 1}}(\EvalRelaxSlide_\ell^\ell(\xi)))_{\div{\phi_{\ell + 1}}{1} + 1,\;\rem{\phi_{\ell + 1}}{1} + 1}\text{,}
\end{align*}
where $\phi_{\ell + 1} := (\div{i - 1}{k_{\ell + 1}} + \mu - 1)k_{\ell + 1} + \rem{i - 1}{k_{\ell + 1}}$.
Note that in the application of Lemma~\ref{lem:frag-ops} it was used that there is only one input fragment to this layer as already noted in~\ref{lem:evalrelaxslide-b}.
The definition of the operators from Euclidean division implies that $\phi_{\ell + 1} = i - 1 + k_{\ell + 1}(\mu - 1)$.
Therefore,
\begin{align*}
  & \EvalRelaxSlide_{\ell + 1}^\ell(\xi)_{\div{i - 1}{k_{\ell + 1}^* / k_\ell^*} + \mu,\;\rem{i - 1}{k_{\ell + 1}^* / k_\ell^*} + 1}\\
  \equsing{P.~\ref{prop:number-theory}}\ \ & \Slide_{g_{\ell + 1}}(\Slide_{f_{\ell + 1}}(\EvalRelaxSlide_\ell^\ell(\xi)))_{i + k_{\ell + 1}(\mu - 1)}\\
    \equsing{D.~\ref{def:sliding-function}}\ \ & g_{\ell + 1}\!\left( \sum\nolimits_{\nu = 1}^{k_{\ell + 1}} f_{\ell + 1}\!\left( \sum\nolimits_{\lambda = 1}^{c_{\ell + 1}} \EvalRelaxSlide_\ell^\ell(\xi)_{i + k_{\ell + 1}(\mu - 1) + \nu - 1 + \lambda - 1}\cdot e_\lambda^{c_{\ell + 1}} \right) \cdot e_\nu^{k_{\ell + 1}} \right)\text{.}
\end{align*}
Now consider $\tilde{\mu} := k_{\ell + 1}(\mu - 1) + \nu + \lambda - 1$.
Clearly $\tilde{\mu}\in\N_1$ and further
\begin{displaymath}
  \tilde{\mu}
  \ \ \ \leq\ \ \ k_{\ell + 1}(u_{\ell + 1} - 1) + k_{\ell + 1} + c_{\ell + 1} - 1
  \ \ \ =\ \ \ k_{\ell + 1}u_{\ell + 1} + c_{\ell + 1} - 1
  \ \ \ \equsing{L.~\ref{lem:processing-chain}\ref{lem:processing-chain-a}}\ \ \ u_\ell\text{,}
\end{displaymath}
hence $\tilde{\mu}\inint{1}{u_\ell}$.
Therefore the inner part of the right-hand side of the claim leads to
\begin{align*}
  & \EvalRelaxSlide_\ell^\ell(\xi)_{i + k_{\ell + 1}(\mu - 1) + \nu - 1 + \lambda - 1}\\
  =\ & \EvalRelaxSlide_\ell^\ell(\xi)_{i + \tilde{\mu} - 1}\\
  \equsing{D.~\ref{def:subsignal}}\ & \Subsignal_{u_\ell}(\EvalRelaxSlide_\ell^\ell(\xi),\; i)_{\tilde{\mu}}\\
  \equsing{\ref{lem:evalrelaxslide-c}}\ & \EvalStride_\ell(\Subsignal_\ROI(\xi,\; k_\ell^*(i - 1) + 1))_{\tilde{\mu}}\\
  =\ & \tau^\ell_{\tilde{\mu}}\text{.}
\end{align*}
Substitution into the right-hand side of the claim yields the left-hand side of the claim, proving it for $j = \ell + 1$.

Turning to the induction step $j - 1 \to j$, let $i\inint{1}{W_\ell - u_\ell + 1}$ and $\mu\inint{1}{u_j}$ be indices and let $\tau^{j - 1} := \EvalStride_{j - 1}(\Subsignal_\ROI(\xi,\; k_\ell^*(i - 1) + 1))\in M_{j - 1}^{u_{j - 1}}$.
Completely analogous to the case when $j = \ell + 1$ follows for the left-hand side of the claim that
\begin{displaymath}
   \EvalStride_j(\Subsignal_\ROI(\xi,\; k_\ell^*(i - 1) + 1))_\mu
  \ =\ g_j\!\left( \sum\nolimits_{\nu = 1}^{k_j} f_j\!\left( \sum\nolimits_{\lambda = 1}^{c_j} \tau^{j - 1}_{k_j(\mu - 1) + \nu + \lambda - 1}\cdot e_\lambda^{c_j} \right) \cdot e_\nu^{k_j} \right)\text{.}
\end{displaymath}
On the other hand, for the right-hand side one obtains
\begin{align*}
  & \EvalRelaxSlide_j^\ell(\xi)_{\div{i - 1}{k_j^* / k_\ell^*} + \mu,\;\rem{i - 1}{k_j^* / k_\ell^*} + 1}\\
  \equsing{D.~\ref{def:evalrelaxslide}}\ \ & \Fragmentation_{k_j}(\Slide_{g_j}(\Slide_{f_j}(\EvalRelaxSlide_{j - 1}^\ell(\xi))))_{\div{i - 1}{k_j^* / k_\ell^*} + \mu,\;\rem{i - 1}{k_j^* / k_\ell^*} + 1}\\
  \equsing{L.~\ref{lem:frag-ops}}\ \ & \Slide_{g_j}(\Slide_{f_j}(\EvalRelaxSlide_{j - 1}^\ell(\xi)))_{\div{\phi_j}{k_{j - 1}^* / k_\ell^*} + 1,\;\rem{\phi_j}{k_{j - 1}^* / k_\ell^*} + 1}\text{,}
\end{align*}
where $\phi_j := (\div{i - 1}{k_j^* / k_\ell^*} + \mu - 1)k_j\frac{k_{j - 1}^*}{k_\ell^*} + \rem{i - 1}{k_j^* / k_\ell^*}$.
As shown in~\ref{lem:evalrelaxslide-b}, the number of input fragments is here $\tilde{U}_{j - 1}^{\col} = \frac{k_{j - 1}^*}{k_\ell^*}$.
Clearly $\phi_j = i - 1 + k_j(\mu - 1)\frac{k_{j - 1}^*}{k_\ell^*}$ by the definition of the operators from Euclidean division.
Therefore,
\begin{displaymath}
  \div{\phi_j}{k_{j - 1}^* / k_\ell^*} + 1
  \ \ \equsing{P.~\ref{prop:number-theory}}\ \ \div{i - 1}{k_{j - 1}^* / k_\ell^*} + k_j(\mu - 1) + 1
\end{displaymath}
and
\begin{displaymath}
  \rem{\phi_j}{k_{j - 1}^* / k_\ell^*} + 1
  \ \ \equsing{P.~\ref{prop:number-theory}}\ \ \rem{i - 1}{k_{j - 1}^* / k_\ell^*} + 1\text{.}
\end{displaymath}
This leads to
\begin{align*}
  & \EvalRelaxSlide_j^\ell(\xi)_{\div{i - 1}{k_j^* / k_\ell^*} + \mu,\;\rem{i - 1}{k_j^* / k_\ell^*} + 1}\\
  \equsing{D.~\ref{def:sliding-function}}\ \ & g_j\!\left( \sum\nolimits_{\nu = 1}^{k_j} f_j\!\left( \sum\nolimits_{\lambda = 1}^{c_j}\EvalRelaxSlide_{j - 1}^\ell(\xi)_{\div{i - 1}{k_{j - 1}^* / k_\ell^*} + k_j(\mu - 1) + \nu + \lambda - 1,\;\rem{i - 1}{k_{j - 1}^* / k_\ell^*} + 1} \!\cdot\! e_\lambda^{c_j} \right) \!\cdot\! e_\nu^{k_j} \right)\text{.}
\end{align*}
Define $\tilde{\mu} := k_j(\mu - 1) + \nu + \lambda - 1$, then Lemma~\ref{lem:processing-chain}\ref{lem:processing-chain-a} implies that $\tilde{\mu}\inint{1}{u_{j - 1}}$.
Eventually, it follows that
\begin{align*}
  & \EvalRelaxSlide_j^\ell(\xi)_{\div{i - 1}{k_j^* / k_\ell^*} + \mu,\;\rem{i - 1}{k_j^* / k_\ell^*} + 1}\\
  =\ \ & g_j\!\left( \sum\nolimits_{\nu = 1}^{k_j} f_j\!\left( \sum\nolimits_{\lambda = 1}^{c_j}\EvalRelaxSlide_{j - 1}^\ell(\xi)_{\div{i - 1}{k_{j - 1}^* / k_\ell^*} + \tilde{\mu},\;\rem{i - 1}{k_{j - 1}^* / k_\ell^*} + 1} \cdot e_\lambda^{c_j} \right) \cdot e_\nu^{k_j} \right)\\
\equsing{IH}\ \ &g_j\!\left( \sum\nolimits_{\nu = 1}^{k_j} f_j\!\left( \sum\nolimits_{\lambda = 1}^{c_j}\EvalStride_{j - 1}(\Subsignal_\ROI(\xi,\; k_\ell^*(i - 1) + 1))_{\tilde{\mu}} \cdot e_\lambda^{c_j} \right) \cdot e_\nu^{k_j} \right)\\
  =\ \ & g_j\!\left( \sum\nolimits_{\nu = 1}^{k_j} f_j\!\left( \sum\nolimits_{\lambda = 1}^{c_j} \tau^{j - 1}_{k_j(\mu - 1) + \nu + \lambda - 1}\cdot e_\lambda^{c_j} \right) \cdot e_\nu^{k_j} \right)\text{,}
\end{align*}
proving the claim.
\end{proof}

\begin{landscape}
\begin{figure}[p]
\centering
\scalebox{0.85}{\includegraphics[page=19]{paper-pics.pdf}}
\newsavebox{\figrelaxslidecaption}
\sbox{\figrelaxslidecaption}{\parbox{20.78mm}{\caption{}}}
\label{fig:relaxslide}
\end{figure}
\end{landscape}

Lemma~\ref{lem:evalrelaxslide} has shown that the application of the $\EvalRelaxSlide^\ell$ operator is well-defined provided the final stride product $k_L^*$ divides $D - \ROI + k_\ell^*$ and the minimum input signal length of $\ROI + k_L^* - k_\ell^*$ is satisfied.
Further, the $\EvalRelaxSlide_L^\ell$ operator computes exactly the same as the $\EvalStride_L$ operator applied to each $k_\ell^*$-th subsignal with $\ROI$ samples of the input signal.
Therefore, by tuning the number of layers $\ell$ that are applied in a relaxed fashion, the degree of the loss of spatial output resolution becomes controllable between the factors $k_1^*,\dotsc,k_{L - 1}^*$.
If no resolution loss is desired, equivalent to a factor of $k_0^* = 1$, a conventional fragmentation-based approach can be used.
The other extreme of full resolution loss by a factor of $k_L^*$ can be achieved if all layers are applied in a relaxed fashion as is done during evaluation of the $\EvalRelax$ operator.

\subsection{Processing of Signals with Arbitrary Input Length}
Lemma~\ref{lem:evalrelaxslide} requires that the length of the input signal satisfies certain divisibility requirements.
Here, an approach is developed that allows processing of input signals with an arbitrary number of samples.
First, a relationship between the remainders with respect to two numbers, where one divides the other, is established:
\begin{lemma}
\label{lem:remdiv}
Let $a\in\N$ and let $m,n\in\N_1$ so that $m$ divides $n$.
Then there is an integer $z\in\Z$ with $\rem{a}{n} = \rem{a}{m} + m\cdot z$.
\end{lemma}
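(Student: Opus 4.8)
The plan is to reduce the statement to showing that $m$ divides the difference $\rem{a}{n} - \rem{a}{m}$, which then immediately produces the desired integer $z$. The starting point is the defining identities of Euclidean division, namely $a = \div{a}{n}\cdot n + \rem{a}{n}$ and $a = \div{a}{m}\cdot m + \rem{a}{m}$, both of which hold by definition since $n,m\in\N_1$ and $a\in\N$.

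First I would subtract these two identities from one another. Eliminating the common term $a$ yields $\rem{a}{n} - \rem{a}{m} = \div{a}{m}\cdot m - \div{a}{n}\cdot n$. The key move is then to exploit the hypothesis that $m$ divides $n$: writing $p := \div{n}{m}\in\N_1$ gives $n = m\cdot p$ because $\rem{n}{m} = 0$. Substituting this into the right-hand side and factoring out $m$ produces the identity $\rem{a}{n} - \rem{a}{m} = m\cdot\big(\div{a}{m} - \div{a}{n}\cdot p\big)$.

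It then only remains to set $z := \div{a}{m} - \div{a}{n}\cdot p\in\Z$ and to observe that this is indeed an integer, being a difference of products of natural numbers. Rearranging gives $\rem{a}{n} = \rem{a}{m} + m\cdot z$, which is exactly the claim.

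There is essentially no hard part here: the statement is a one-line consequence of the definition of Euclidean division. The only point requiring a modicum of care is the justification that $n = m\cdot p$ for a natural number $p$, which is precisely what ``$m$ divides $n$'' means and is recorded cleanly through $\rem{n}{m} = 0$; everything else is routine integer algebra, and in particular no case distinction is needed.
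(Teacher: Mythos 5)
Your proof is correct and follows essentially the same route as the paper's own: both subtract the two Euclidean division identities for $a$ modulo $m$ and modulo $n$, substitute $n = m\cdot p$ (the paper writes $n = t\cdot m$), factor out $m$, and read off $z$ as the resulting integer coefficient. No gaps; the arguments are interchangeable.
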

\begin{proof}
Since $m$ divides $n$ there is a positive natural number $t\in\N_1$ with $n = t\cdot m$.
As
\begin{displaymath}
  a = \div{a}{m}\cdot m + \rem{a}{m}
  \quad\text{and}\quad
  a = \div{a}{n}\cdot t\cdot m + \rem{a}{n}\text{,}
\end{displaymath}
one obtains
\begin{align*}
  &\rem{a}{n} - \rem{a}{m}\\
  =\ & a - \div{a}{n}\cdot t\cdot m - a + \div{a}{m}\cdot m\\
  =\ & (\div{a}{m} - \div{a}{n}\cdot t)\cdot m\text{,}
\end{align*}
and the claim follows by setting $z := \div{a}{m} - \div{a}{n}\cdot t$, which is indeed an integer.
\end{proof}

Next, an identity concerning the ceiling function's result for the quotient of two numbers, where the numerator does not divide the denominator, is developed:
\begin{lemma}
\label{lem:ceilrem}
Let $a\in\N$ and $b\in\N_1$ where it is required that $a$ does not divide $b$.
Then $\nceil{\frac{a}{b}} = \frac{a + b - \rem{a}{b}}{b}$.
\end{lemma}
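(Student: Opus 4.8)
The plan is to reduce everything to the Euclidean division of $a$ by $b$ and then compare both sides of the claimed identity directly. First I would invoke the defining relation $a = \div{a}{b}\cdot b + \rem{a}{b}$ with $\rem{a}{b}\in\discint{0}{b - 1}$, as guaranteed by Proposition~\ref{prop:number-theory} and the definition of the operators from Euclidean division. The hypothesis that the division does not come out even guarantees that the remainder does not vanish, so in fact $\rem{a}{b}\in\discint{1}{b - 1}$; this is the single place where the non-divisibility assumption enters, and it is exactly what separates the ceiling value from the floor value.

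Next I would evaluate the right-hand side by substituting the division identity into the numerator: one has $a + b - \rem{a}{b} = \div{a}{b}\cdot b + \rem{a}{b} + b - \rem{a}{b} = (\div{a}{b} + 1)\cdot b$. Dividing by $b$ then yields $\div{a}{b} + 1$, with no rounding required since the numerator is now a manifest multiple of $b$.

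Finally I would show the left-hand side equals the same number. Writing $\tfrac{a}{b} = \div{a}{b} + \tfrac{\rem{a}{b}}{b}$ and using $\rem{a}{b}\in\discint{1}{b - 1}$ gives a strict inequality $0 < \tfrac{\rem{a}{b}}{b} < 1$ for the fractional part, so the ceiling rounds up to $\div{a}{b} + 1$. Both sides therefore coincide. I expect no real obstacle here; the only subtlety to watch is the degenerate case $\rem{a}{b} = 0$, in which the formula would spuriously add one and the fractional-part argument would collapse — but that case is precisely excluded by the hypothesis, so the argument is clean.
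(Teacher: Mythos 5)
Your proof is correct and matches the paper's own argument essentially step for step: both rest on the Euclidean division identity $a = \div{a}{b}\cdot b + \rem{a}{b}$, use the non-divisibility hypothesis (i.e.\ $\rem{a}{b}\neq 0$, so $0 < \tfrac{\rem{a}{b}}{b} < 1$) to evaluate the ceiling as $\div{a}{b} + 1$, and observe that $a + b - \rem{a}{b} = (\div{a}{b} + 1)\cdot b$ so the right-hand side equals the same quantity. The only difference is cosmetic ordering — you treat the right-hand side before the ceiling, the paper does the reverse.
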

\begin{proof}
It holds that $a = \div{a}{b}\cdot b + \rem{a}{b}$.
Therefore,
\begin{displaymath}
  \left\lceil \frac{a}{b} \right\rceil
  \ =\ \left\lceil \frac{\div{a}{b}\cdot b + \rem{a}{b}}{b} \right\rceil
  \ =\ \left\lceil \div{a}{b} + \frac{\rem{a}{b}}{b} \right\rceil
  \ \equsing{($\lozenge$)}\ \div{a}{b} + 1\text{,}
\end{displaymath}
where $\div{a}{b}\in\N$ and $0 < \frac{\rem{a}{b}}{b} < 1$ have been used in the ($\lozenge$) step.

From $a = \div{a}{b}\cdot b + \rem{a}{b}$ follows also that $a + b - \rem{a}{b} = \div{a}{b}\cdot b + b$, which is indeed divisible by $b$.
This implies that $\frac{a + b - \rem{a}{b}}{b} = \div{a}{b} + 1 = \nceil{\frac{a}{b}}$.
\end{proof}

\begin{figure}[t]
  \centering
  \scalebox{0.9}{\includegraphics[page=20]{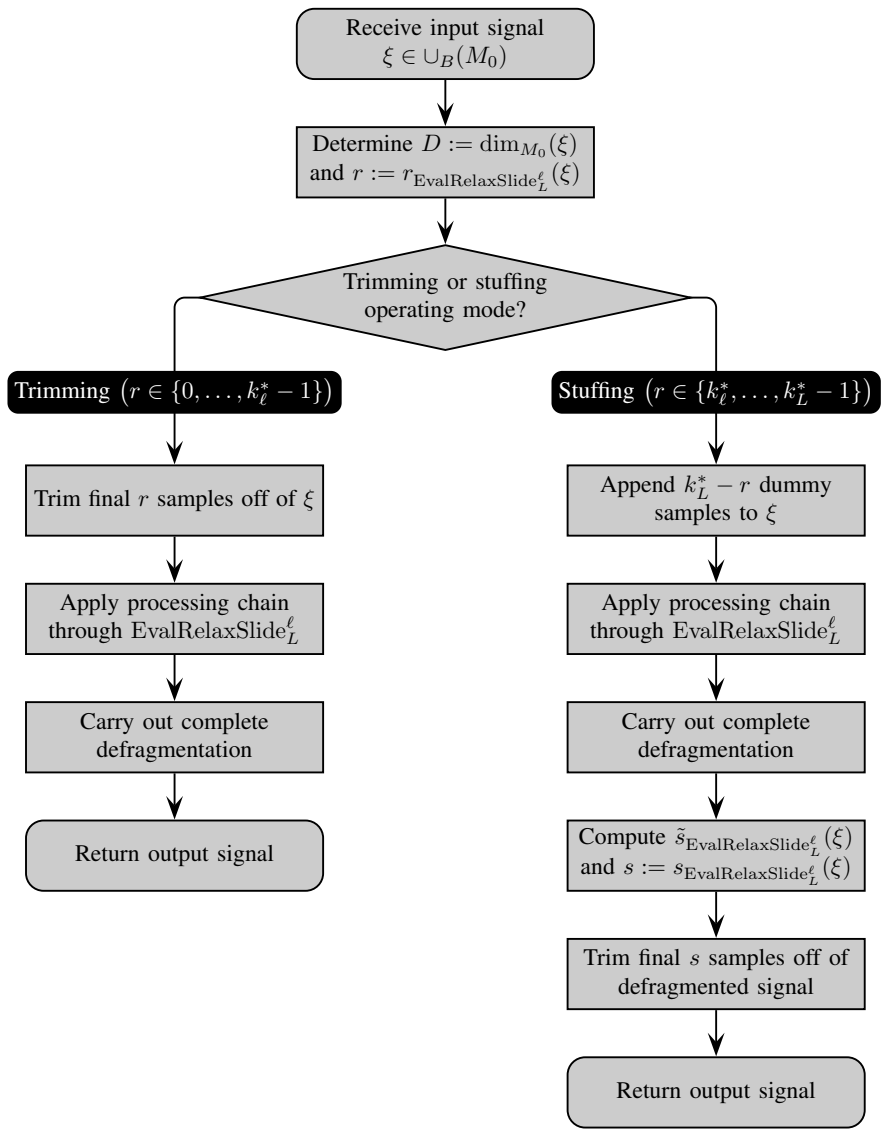}}
  \caption{Flowchart of the algorithm for the application of a processing chain in a mixed fashion to signals of arbitrary dimensionality.
    Depending on the concrete input signal length, either the trimming operating mode (left-hand side of the graphics) or the stuffing operating mode (right-hand side of the graphics) is carried out.
    Both are guaranteed to compute the same result as conventional dense signal scanning followed by downsampling using a factor equal to the intermediate stride product $k_\ell^*$ due to Theorem~\ref{thm:evalrelaxslide}.}
  \label{fig:relaxslide-alg}
\end{figure}

After these preparations, an algorithm for the application of a processing chain in a mixed fashion to a signal of arbitrary dimensionality can be proposed and proved correct.
The algorithm is summarized in Fig.~\ref{fig:relaxslide-alg}.
It consists of two operating modes, one in which the input signal is first trimmed and one in which dummy samples are first stuffed at the end of the input signal.
Which of these two modes is used eventually depends on the concrete input signal dimensionality.
\begin{theorem}
\label{thm:evalrelaxslide}
Suppose a processing chain as in Definition~\ref{def:evalrelaxslide} is given.
Further suppose the application of the processing chain in a strided fashion is well-defined and outputs signals with trivial spatial extent, that is $u_L := \dim_{M_L}(\EvalStride_L(\rho)) = 1$ for all $\rho\in M_0^\ROI$.

Let $r_{\EvalRelaxSlide_L^\ell}\colon\cup_\ROI(M_0)\to\discint{0}{k_L^* - 1}$, $\xi\mapsto\rem{\dim_{M_0}(\xi) - \ROI + k_\ell^*}{k_L^*}$, denote the number of samples that have to be trimmed away in the trimming operating mode, which is also the additive inverse modulo $k_L^*$ of the number of samples that have to be appended in the stuffing operating mode.
Moreover, let $\tilde{s}_{\EvalRelaxSlide_L^\ell}\colon\cup_\ROI(M_0)\to\discint{0}{k_\ell^*}$,
\begin{displaymath}
  \xi\mapsto
  \begin{cases}
    k_\ell^*\text{,} & \text{if }k_\ell^*\text{ divides }\dim_{M_0}(\xi) - \ROI + 1\text{,}\\
    \rem{\dim_{M_0}(\xi) - \ROI + 1}{k_\ell^*}\text{,} & \text{otherwise,}
  \end{cases}
\end{displaymath}
be an auxiliary function and let $s_{\EvalRelaxSlide_L^\ell}\colon\cup_\ROI(M_0)\to\N_1$,
\begin{displaymath}
  \xi\mapsto\tfrac{1}{k_\ell^*}(k_L^* - r_{\EvalRelaxSlide_L^\ell}(\xi) + \tilde{s}_{\EvalRelaxSlide_L^\ell}(\xi) - 1)\text{,}
\end{displaymath}
denote the number of superfluous samples that have to be trimmed away at the end of the stuffing operating mode due to the initial stuffing.

Let $T_{\Trimming}\colon\cup_{\ROI + k_L^* - k_\ell^*}(M_0)\to\cup_1(M_L)$,
\begin{displaymath}
  \xi\mapsto \Defragmentation_{k_L^* / k_\ell^*}( \EvalRelaxSlide_L^\ell( \Trimming_{r_{\EvalRelaxSlide_L^\ell}(\xi)}(\xi) ) )\text{,}
\end{displaymath}
denote the operator that implements the trimming operating mode.
Here, the input signal is first trimmed to satisfy divisibility constraints.
Then, the processing chain is applied in a mixed fashion.
Finally, the output signal is defragmented to obtain the overall result.

The function $T_{\Stuffing}\colon\cup_\ROI(M_0)\to\cup_1(M_L)$,
\begin{displaymath}
  \xi\mapsto \Trimming_{s_{\EvalRelaxSlide_L^\ell}(\xi)}( \Defragmentation_{k_L^* / k_\ell^*}( \EvalRelaxSlide_L^\ell( \Stuffing_{k_L^* - r_{\EvalRelaxSlide_L^\ell}(\xi)}(\xi) ) ) )\text{,}
\end{displaymath}
implements the stuffing operating mode.
First, divisibility requirements are fulfilled by appending a certain number of dummy samples to the input signal.
After the processing chain has been applied in a mixed fashion, its output is defragmented.
Superfluous samples that emerged from the stuffing are finally removed.

Eventually, the function $T\colon\cup_\ROI(M_0)\to\cup_1(M_L)$,
\begin{displaymath}
  \xi\mapsto
  \begin{cases}
    T_{\Trimming}(\xi)\text{,} & \text{if } r_{\EvalRelaxSlide_L^\ell}(\xi)\inint{0}{k_\ell^* - 1}\text{,}\\
    T_{\Stuffing}(\xi)\text{,} & \text{if } r_{\EvalRelaxSlide_L^\ell}(\xi)\inint{k_\ell^*}{k_L^* - 1}\text{,}
  \end{cases}
\end{displaymath}
decides which of the two operating modes should be used based on the input signal's number of samples and returns its result.

Then $T = \Downsampling_{k_\ell^*}\circ \Slide_{\EvalStride_L}$.
In other words, $T$ yields the same result as conventional dense signal scanning followed by downsampling by the intermediate stride product $k_\ell^*$.
\end{theorem}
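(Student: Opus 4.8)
The plan is to mirror the structure of the proof of Theorem~\ref{thm:processing-chain}, but to branch on the two operating modes selected by $T$ and, in each branch, reduce everything to Lemma~\ref{lem:evalrelaxslide} together with the defragmentation bookkeeping of Lemma~\ref{lem:defrag-ops}. Fix an input $\xi\in\cup_\ROI(M_0)$, write $\tilde{D}:=\dim_{M_0}(\xi)$ and $r:=r_{\EvalRelaxSlide_L^\ell}(\xi)=\rem{\tilde{D}-\ROI+k_\ell^*}{k_L^*}$. First I would verify well-definedness in each branch. In the trimming branch ($r\leq k_\ell^*-1$) the trimmed length $D:=\tilde{D}-r$ satisfies $D-\ROI+k_\ell^*=(\tilde{D}-\ROI+k_\ell^*)-r$, which is a positive multiple $k_L^*t$ of $k_L^*$ since $\tilde{D}\geq\ROI$ forces $r<\tilde{D}-\ROI+k_\ell^*$; hence $t\geq1$ and $D\geq\ROI+k_L^*-k_\ell^*$, so $\Trimming_r(\xi)$ lies in the domain of $\EvalRelaxSlide_L^\ell$. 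In the stuffing branch ($r\geq k_\ell^*$) the stuffed length $D:=\tilde{D}+(k_L^*-r)$ makes $D-\ROI+k_\ell^*\equiv0\pmod{k_L^*}$ and again positive, giving $t\geq1$. This places both branches into the hypotheses of Lemma~\ref{lem:evalrelaxslide}.

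Next I would describe the fragmented output. Lemma~\ref{lem:evalrelaxslide}\ref{lem:evalrelaxslide-b} with $j=L$ and the hypothesis $u_L=1$ gives $\tilde{U}_L^{\col}=k_L^*/k_\ell^*$ and $\tilde{U}_L^{\row}=t$, so $\Defragmentation_{k_L^*/k_\ell^*}$ produces a single fragment with $(k_L^*/k_\ell^*)t$ samples. Lemma~\ref{lem:defrag-ops} (with $k=k_L^*/k_\ell^*$, $s=1$) then identifies the defragmented sample at index $\mu$ with the fragment entry in row $\div{\mu-1}{k_L^*/k_\ell^*}+1$ and column $\rem{\mu-1}{k_L^*/k_\ell^*}+1$, which by Lemma~\ref{lem:evalrelaxslide}\ref{lem:evalrelaxslide-d} (its sample index forced to $1$ since $u_L=1$, subsignal index $\mu$) equals $\EvalStride_L(\Subsignal_\ROI(\cdot,\;k_\ell^*(\mu-1)+1))$ of the processed signal. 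On the other side, Definition~\ref{def:downsampling} and Definition~\ref{def:sliding-function} give $\Downsampling_{k_\ell^*}(\Slide_{\EvalStride_L}(\xi))_\mu=\EvalStride_L(\Subsignal_\ROI(\xi,\;k_\ell^*(\mu-1)+1))$. The remaining sample-level work is to replace the trimmed or stuffed signal by $\xi$ inside these subsignals: for trimming, the largest accessed index $k_\ell^*(\mu-1)+\ROI$ stays within the first $D=\tilde{D}-r$ retained samples, so Definition~\ref{def:stuff-trim} lets me drop the trimming; for stuffing, the same bound shows the stuffed tail feeds only the surplus outputs that the outer $\Trimming_{s_{\EvalRelaxSlide_L^\ell}(\xi)}$ discards.

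The dimension bookkeeping and the surplus count $s_{\EvalRelaxSlide_L^\ell}$ are where I expect the real work, and this is the main obstacle. For the trimming branch I would first show, using Lemma~\ref{lem:remdiv} with the divisor $k_\ell^*$ of $k_L^*$ and Proposition~\ref{prop:number-theory}, that $r\leq k_\ell^*-1$ forces $r=\rem{\tilde{D}-\ROI}{k_\ell^*}$, and then verify via Lemma~\ref{lem:ceilrem} that the produced length $\tfrac{1}{k_\ell^*}(\tilde{D}-\ROI+k_\ell^*-r)$ equals $\nceil{(\tilde{D}-\ROI+1)/k_\ell^*}=\dim_{M_L}(\Downsampling_{k_\ell^*}(\Slide_{\EvalStride_L}(\xi)))$, splitting on whether $k_\ell^*$ divides $\tilde{D}-\ROI+1$. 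For the stuffing branch I would compute the total number $(k_L^*/k_\ell^*)t$ of defragmented samples, subtract the target count $\nceil{(\tilde{D}-\ROI+1)/k_\ell^*}$ (again case-splitting via Lemma~\ref{lem:ceilrem}), and check that the difference is exactly $s_{\EvalRelaxSlide_L^\ell}(\xi)=\tfrac{1}{k_\ell^*}(k_L^*-r+\tilde{s}_{\EvalRelaxSlide_L^\ell}(\xi)-1)$; this simultaneously shows $s_{\EvalRelaxSlide_L^\ell}(\xi)\in\N_1$, since it counts genuine surplus samples, and that the outer trimming is well-defined. Combining the two branches with the sample-level identity from the previous step then yields $T=\Downsampling_{k_\ell^*}\circ\Slide_{\EvalStride_L}$ on all of $\cup_\ROI(M_0)$.
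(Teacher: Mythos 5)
Your proposal is correct and follows essentially the same route as the paper's own proof: branch on the two operating modes, establish divisibility and minimum-length requirements so Lemma~\ref{lem:evalrelaxslide} applies, identify output samples through Lemma~\ref{lem:defrag-ops} and Lemma~\ref{lem:evalrelaxslide}\ref{lem:evalrelaxslide-d}, drop the trimming/stuffing inside the subsignal extraction by index bounds, and settle the dimension bookkeeping with Lemma~\ref{lem:remdiv}, Proposition~\ref{prop:number-theory} and Lemma~\ref{lem:ceilrem} under the same case split on whether $k_\ell^*$ divides $\dim_{M_0}(\xi) - \ROI + 1$. Your observation that the trimming branch forces $r_{\EvalRelaxSlide_L^\ell}(\xi) = \rem{\dim_{M_0}(\xi) - \ROI}{k_\ell^*}$ slightly streamlines the paper's two-case derivation, but this is a cosmetic simplification rather than a different argument.
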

\begin{proof}
Let $\xi\in\cup_\ROI(M_0)$ be an input signal and write $D := \dim_{M_0}(\xi)$.
Based on the concrete value of $D$, the proof is here divided into the cases in which $T = T_{\Trimming}$ and $T = T_{\Stuffing}$, respectively.
First, a few statements required for analyzing both cases are shown:
\begin{enumerate}\setlength{\baselineskip}{1.1\baselineskip}
  \item \label{thm:evalrelaxslide-preq-a} $\dim_{M_L}( \Downsampling_{k_\ell^*}( \Slide_{\EvalStride_L}(\xi) ) ) = \nceil{\frac{1}{k_\ell^*}( D - \ROI + 1 )}$.
  \item \label{thm:evalrelaxslide-preq-b} $\Downsampling_{k_\ell^*}( \Slide_{\EvalStride_L}(\xi) )_i = \EvalStride_L(\Subsignal_\ROI(\xi,\; k_\ell^*(i - 1) + 1))$ for all feasible sample indices $i\inint{1}{\dim_{M_L}( \Downsampling_{k_\ell^*}( \Slide_{\EvalStride_L}(\xi) ) )}$.
\end{enumerate}

The proof is accomplished first for $T_{\Trimming}$, that is the case in which $r_{\EvalRelaxSlide_L^\ell}(\xi)\inint{0}{k_\ell^* - 1}$ is fulfilled, in these steps:
\begin{enumerate}\setlength{\baselineskip}{1.1\baselineskip}
  \setcounter{enumi}{2}
  \item \label{thm:evalrelaxslide-trim-a} $D_{\Trimming} := \dim_{M_0}( \Trimming_{r_{\EvalRelaxSlide_L^\ell}(\xi)}(\xi) ) = D - r_{\EvalRelaxSlide_L^\ell}(\xi) \geq \ROI + k_L^* - k_\ell^*$. In other words, the minimum input signal length required in Lemma~\ref{lem:evalrelaxslide} is fulfilled.
  \item \label{thm:evalrelaxslide-trim-b} There exists $t_{\Trimming}\in\N_1$ so that $D_{\Trimming} = \ROI + k_L^*t_{\Trimming} - k_\ell^*$. Hence, the divisibility requirements of Lemma~\ref{lem:evalrelaxslide} are met as well, and $\EvalRelaxSlide_L^\ell( \Trimming_{r_{\EvalRelaxSlide_L^\ell}(\xi)}(\xi) )$ is well-defined.
  \item \label{thm:evalrelaxslide-trim-c} $\Defragmentation_{k_L^* / k_\ell^*}$ can be applied to $\EvalRelaxSlide_L^\ell( \Trimming_{r_{\EvalRelaxSlide_L^\ell}(\xi)}(\xi) )$. This renders $T_{\Trimming}(\xi)$ well-defined with exactly one output fragment with $\dim_{M_L}( T_{\Trimming}(\xi) ) = \tfrac{1}{k_\ell^*}(D_{\Trimming} - \ROI) + 1$ samples.
  \item \label{thm:evalrelaxslide-trim-d} $\dim_{M_L}( T_{\Trimming}(\xi) ) = \dim_{M_L}( \Downsampling_{k_\ell^*}( \Slide_{\EvalStride_L}(\xi) ) )$, therefore both output signals are of the same dimensionality.
  \item \label{thm:evalrelaxslide-trim-e} $T_{\Trimming}(\xi) = \Downsampling_{k_\ell^*}( \Slide_{\EvalStride_L}(\xi) )$, proving correctness of the trimming operating mode.
\end{enumerate}

Afterwards, the situation in which $r_{\EvalRelaxSlide_L^\ell}(\xi)\inint{k_\ell^*}{k_L^* - 1}$ and hence $T_{\Stuffing}$ is used, is analyzed:
\begin{enumerate}\setlength{\baselineskip}{1.1\baselineskip}
  \setcounter{enumi}{7}
  \item \label{thm:evalrelaxslide-stuff-a} $D_{\Stuffing} := \dim_{M_0}( \Stuffing_{k_L^* - r_{\EvalRelaxSlide_L^\ell}(\xi)}(\xi) ) = D + k_L^* - r_{\EvalRelaxSlide_L^\ell}(\xi) \geq \ROI + k_L^* - k_\ell^*$, hence the minimum input signal length of Lemma~\ref{lem:evalrelaxslide} is satisfied.
  \item \label{thm:evalrelaxslide-stuff-b} There is a number $t_{\Stuffing}\in\N_1$ with $D_{\Stuffing} = \ROI + k_L^*t_{\Stuffing} - k_\ell^*$, therefore also the divisibility requirements of Lemma~\ref{lem:evalrelaxslide} are fulfilled and $\EvalRelaxSlide_L^\ell( \Stuffing_{k_L^* - r_{\EvalRelaxSlide_L^\ell}(\xi)}(\xi) )$ is well-defined.
  \item \label{thm:evalrelaxslide-stuff-c} The requirements for the application of $\Defragmentation_{k_L^* / k_\ell^*}$ to $\EvalRelaxSlide_L^\ell( \Stuffing_{k_L^* - r_{\EvalRelaxSlide_L^\ell}(\xi)}(\xi) )$ are satisfied.
    Complete defragmentation produces an output signal with exactly one fragment and with $\dim_{M_L}( \Defragmentation_{k_L^* / k_\ell^*}( \EvalRelaxSlide_L^\ell( \Stuffing_{k_L^* - r_{\EvalRelaxSlide_L^\ell}(\xi)}(\xi) ) ) ) = \tfrac{1}{k_\ell^*}(D_{\Stuffing} - \ROI) + 1$ output samples.
  \item \label{thm:evalrelaxslide-stuff-d} $s_{\EvalRelaxSlide_L^\ell}(\xi)$ and $T_{\Stuffing}(\xi)$ are well-defined.
    The number of samples in both output signals matches, that is $\dim_{M_L}( T_{\Stuffing}(\xi) ) = \dim_{M_L}( \Downsampling_{k_\ell^*}( \Slide_{\EvalStride_L}(\xi) ) )$.
  \item \label{thm:evalrelaxslide-stuff-e} $T_{\Stuffing}(\xi) = \Downsampling_{k_\ell^*}( \Slide_{\EvalStride_L}(\xi) )$, or in other words, the stuffing operating mode is also correct.
\end{enumerate}
The combination of~\ref{thm:evalrelaxslide-trim-e} and~\ref{thm:evalrelaxslide-stuff-e} finally implies that $T = \Downsampling_{k_\ell^*}\circ \Slide_{\EvalStride_L}$ regardless of the concrete input signal dimensionality $D$.

{\parindent0mm \emph{Proof of the common statements.}}
First, the statements on downsampling commonly required by the two cases are shown:

\ref{thm:evalrelaxslide-preq-a}
One obtains
\begin{align*}
  & \dim_{M_L}( \Downsampling_{k_\ell^*}( \Slide_{\EvalStride_L}(\xi) ) )\\
  \equsing{D.~\ref{def:downsampling}}\ \ & \nceil{\tfrac{1}{k_\ell^*}( \dim_{M_L}( \Slide_{\EvalStride_L}(\xi) ) )}\\
  \equsing{D.~\ref{def:sliding-function}}\ \ & \nceil{\tfrac{1}{k_\ell^*}( D - \ROI + 1 )}\text{.}
\end{align*}

\ref{thm:evalrelaxslide-preq-b}
Let $i\inint{1}{\dim_{M_L}( \Downsampling_{k_\ell^*}( \Slide_{\EvalStride_L}(\xi) ) )}$, then
\begin{align*}
  & \Downsampling_{k_\ell^*}( \Slide_{\EvalStride_L}(\xi) )_i\\
  \equsing{D.~\ref{def:downsampling}}\ \ & \Slide_{\EvalStride_L}(\xi)_{k_\ell^*(i - 1) + 1}\\
  \equsing{D.~\ref{def:sliding-function}}\ \ & \EvalStride_L(\Subsignal_\ROI(\xi,\; k_\ell^*(i - 1) + 1))\text{.}
\end{align*}

{\parindent0mm \emph{Proof for $T = T_{\Trimming}$.}}
Here it holds that $r_{\EvalRelaxSlide_L^\ell}(\xi)\inint{0}{k_\ell^* - 1}$ by requirement.

\ref{thm:evalrelaxslide-trim-a}
Clearly $D \geq \ROI \geq k_L^* > r_{\EvalRelaxSlide_L^\ell}(\xi)$, therefore $\Trimming_{r_{\EvalRelaxSlide_L^\ell}(\xi)}$ can be applied to $\xi$.
With Definition~\ref{def:stuff-trim} follows that $D_{\Trimming} := \dim_{M_0}( \Trimming_{r_{\EvalRelaxSlide_L^\ell}(\xi)}(\xi) ) = D - r_{\EvalRelaxSlide_L^\ell}(\xi)$.
By definition it holds that $D - \ROI + k_\ell^* = \div{D - \ROI + k_\ell^*}{k_L^*}\cdot k_L^* + r_{\EvalRelaxSlide_L^\ell}(\xi)$.
First assume $\div{D - \ROI + k_\ell^*}{k_L^*}$ would vanish, then for the left-hand side it would hold that $D - \ROI + k_\ell^* \geq k_\ell^*$, whereas the right-hand side reads $r_{\EvalRelaxSlide_L^\ell}(\xi) < k_\ell^*$.
As this is impossible, $\div{D - \ROI + k_\ell^*}{k_L^*}$ must be positive.
From the identity above one eventually obtains $D - r_{\EvalRelaxSlide_L^\ell}(\xi) = \ROI + \div{D - \ROI + k_\ell^*}{k_L^*}\cdot k_L^* - k_\ell^* \geq \ROI + k_L^* - k_\ell^*$, as claimed.

\ref{thm:evalrelaxslide-trim-b}
With the identities from~\ref{thm:evalrelaxslide-trim-a} follows that
\begin{align*}
  & D_{\Trimming} - \ROI + k_\ell^*\\
  \equsing{\ref{thm:evalrelaxslide-trim-a}}\ \ & D - r_{\EvalRelaxSlide_L^\ell}(\xi) - \ROI + k_\ell^*\\
  \equsing{\ref{thm:evalrelaxslide-trim-a}}\ \ & \div{D - \ROI + k_\ell^*}{k_L^*}\cdot k_L^*\text{,}
\end{align*}
hence $t_{\Trimming} := \div{D - \ROI + k_\ell^*}{k_L^*}\in\N$ implies $D_{\Trimming} = \ROI + k_L^*t_{\Trimming} - k_\ell^*$.
It has already been shown in~\ref{thm:evalrelaxslide-trim-a} that $t_{\Trimming}\neq 0$, hence $t_{\Trimming}\in\N_1$.
Since all requirements from Lemma~\ref{lem:evalrelaxslide} are fulfilled, $\EvalRelaxSlide_L^\ell$ can be applied to $\Trimming_{r_{\EvalRelaxSlide_L^\ell}(\xi)}(\xi)$.

\ref{thm:evalrelaxslide-trim-c}
The application of Lemma~\ref{lem:evalrelaxslide}\ref{lem:evalrelaxslide-b} yields $\cdim_{M_L}( \EvalRelaxSlide_L^\ell( \Trimming_{r_{\EvalRelaxSlide_L^\ell}(\xi)}(\xi) ) ) = \frac{k_L^*}{k_\ell^*}$ and $\rdim_{M_L}( \EvalRelaxSlide_L^\ell( \Trimming_{r_{\EvalRelaxSlide_L^\ell}(\xi)}(\xi) ) ) = \frac{k_L^*}{k_L^*}t_{\Trimming} = t_{\Trimming}$ as $u_L = 1$ by requirement.
Since the number of fragments equals $\frac{k_L^*}{k_\ell^*}$, $\Defragmentation_{k_L^* / k_\ell^*}$ can be applied to $\EvalRelaxSlide_L^\ell( \Trimming_{r_{\EvalRelaxSlide_L^\ell}(\xi)}(\xi) )$ which eventually yields $T_{\Trimming}(\xi)$.
As this is a complete defragmentation, the number of output fragments is unity due to Lemma~\ref{lem:defrag-ops}, that is $\cdim_{M_L}( T_{\Trimming}(\xi) ) = 1$.
From~\ref{thm:evalrelaxslide-trim-b} follows $k_L^*t_{\Trimming} = D_{\Trimming} - \ROI + k_\ell^*$, therefore the output signal length equals
\begin{align*}
  & \dim_{M_L}( T_{\Trimming}(\xi) )\\
  =\ \ & \rdim_{M_L}( T_{\Trimming}(\xi) )\\
  \equsing{L.~\ref{lem:defrag-ops}}\ \ & \tfrac{k_L^*}{k_\ell^*}\cdot\rdim_{M_L}( \EvalRelaxSlide_L^\ell( \Trimming_{r_{\EvalRelaxSlide_L^\ell}(\xi)}(\xi) ) )\\
  =\ \ & \tfrac{k_L^*t_{\Trimming}}{k_\ell^*}\\
  =\ \ & \tfrac{1}{k_\ell^*}(D_{\Trimming} - \ROI) + 1\text{.}
\end{align*}

\ref{thm:evalrelaxslide-trim-d}
The case in which $D - \ROI + 1$ is divisible by $k_\ell^*$ is considered first.
Using~\ref{thm:evalrelaxslide-preq-a} one obtains $\dim_{M_L}( \Downsampling_{k_\ell^*}( \Slide_{\EvalStride_L}(\xi) ) ) = \frac{1}{k_\ell^*}( D - \ROI + 1 )$.
Moreover, it here holds that $\rem{D - \ROI + 1}{k_\ell^*} = 0$, which implies that $\rem{D - \ROI + k_\ell^*}{k_\ell^*} = \rem{D - \ROI + 1 + (k_\ell^* - 1)}{k_\ell^*} = k_\ell^* - 1$.
Because $k_\ell^*$ divides $k_L^*$, Lemma~\ref{lem:remdiv} guarantees there is an integer $z\in\Z$ with
\begin{displaymath}
  r_{\EvalRelaxSlide_L^\ell}(\xi) = \rem{D - \ROI + k_\ell^*}{k_L^*} = \rem{D - \ROI + k_\ell^*}{k_\ell^*} + k_\ell^*\cdot z = k_\ell^* - 1 + k_\ell^*\cdot z\text{.}
\end{displaymath}
By requirement $r_{\EvalRelaxSlide_L^\ell}(\xi)\inint{0}{k_\ell^* - 1}$, hence it can only hold that $r_{\EvalRelaxSlide_L^\ell}(\xi) = k_\ell^* - 1$.
Substitution now yields
\begin{displaymath}
  \dim_{M_L}( T_{\Trimming}(\xi) )
  \ \equsing{\ref{thm:evalrelaxslide-trim-c}}\ \tfrac{1}{k_\ell^*}(D_{\Trimming} - \ROI) + 1
  \ \equsing{\ref{thm:evalrelaxslide-trim-a}}\ \tfrac{1}{k_\ell^*}(D - r_{\EvalRelaxSlide_L^\ell}(\xi) - \ROI) + 1
  \ =\ \tfrac{1}{k_\ell^*}( D - \ROI + 1 )\text{,}
\end{displaymath}
therefore the output dimensionalities match as claimed in this case.

Now to the case in which $k_\ell^*$ does not divide $D - \ROI + 1$.
As the requirements of Lemma~\ref{lem:ceilrem} are fulfilled, it follows that
\begin{align*}
  & \dim_{M_L}( \Downsampling_{k_\ell^*}( \Slide_{\EvalStride_L}(\xi) ) )\\
  \equsing{\ref{thm:evalrelaxslide-preq-a}}\ \ & \nceil{\tfrac{1}{k_\ell^*}( D - \ROI + 1 )}\\
  \equsing{L.~\ref{lem:ceilrem}} \ \ & \tfrac{1}{k_\ell^*}(D - \ROI + 1 + k_\ell^* - \rem{D - \ROI + 1}{k_\ell^*})\text{.}
\end{align*}
Because $\rem{D - \ROI + 1}{k_\ell^*} \neq 0$ by requirement, $\rem{D - \ROI + 1}{k_\ell^*} - 1 = \rem{D - \ROI}{k_\ell^*}$ holds.
Further, $\rem{D - \ROI}{k_\ell^*} \ \equsing{P.~\ref{prop:number-theory}} \ \rem{D - \ROI + k_\ell^*}{k_\ell^*}$.
As $k_\ell^*$ divides $k_L^*$, there is an integer $z\in\Z$ due to Lemma~\ref{lem:remdiv} satisfying
\begin{displaymath}
  r_{\EvalRelaxSlide_L^\ell}(\xi) = \rem{D - \ROI + k_\ell^*}{k_L^*} = \rem{D - \ROI + k_\ell^*}{k_\ell^*} + k_\ell^*\cdot z\text{.}
\end{displaymath}
Here, $z$ must vanish as both $r_{\EvalRelaxSlide_L^\ell}(\xi)$ and $\rem{D - \ROI + k_\ell^*}{k_\ell^*}$ lie within $\discint{0}{k_\ell^* - 1}$.
This yields $r_{\EvalRelaxSlide_L^\ell}(\xi) = \rem{D - \ROI + 1}{k_\ell^*} - 1$.
Therefore,
\begin{align*}
  & \dim_{M_L}( \Downsampling_{k_\ell^*}( \Slide_{\EvalStride_L}(\xi) ) )\\
  =\ \ & \tfrac{1}{k_\ell^*}(D - \ROI + k_\ell^* - r_{\EvalRelaxSlide_L^\ell}(\xi))\\
  \equsing{\ref{thm:evalrelaxslide-trim-a}}\ \ & \tfrac{1}{k_\ell^*}(D_{\Trimming} - \ROI) + 1\\
  \equsing{\ref{thm:evalrelaxslide-trim-c}}\ \ & \dim_{M_L}( T_{\Trimming}(\xi) )\text{.}
\end{align*}
The dimensionalities match in this case as well.

\ref{thm:evalrelaxslide-trim-e}
The output signals are of the same length due to~\ref{thm:evalrelaxslide-trim-d} and are here compared sample-wise.
Let $i\inint{1}{\dim_{M_L}( T_{\Trimming}(\xi) )}$ be an arbitrary sample index.
Then
\begin{align*}
  & T_{\Trimming}(\xi)_i\\
  =\ \ \ & \Defragmentation_{k_L^* / k_\ell^*}( \EvalRelaxSlide_L^\ell( \Trimming_{r_{\EvalRelaxSlide_L^\ell}(\xi)}(\xi) ) )_{i,\;1}\\
  \equsing{L.~\ref{lem:defrag-ops}}\ \ \ & \EvalRelaxSlide_L^\ell( \Trimming_{r_{\EvalRelaxSlide_L^\ell}(\xi)}(\xi) )_{\div{i - 1}{k_L^* / k_\ell^*} + 1,\;\rem{i - 1}{k_L^* / k_\ell^*} + 1}\\
  \equsing{L.~\ref{lem:evalrelaxslide}\ref{lem:evalrelaxslide-d}}\ \ \ & \EvalStride_L(\Subsignal_\ROI(\Trimming_{r_{\EvalRelaxSlide_L^\ell}(\xi)}(\xi),\; k_\ell^*(i - 1) + 1))_1\\
  \equsing{$u_L = 1$}\ \ \ & \EvalStride_L(\Subsignal_\ROI(\Trimming_{r_{\EvalRelaxSlide_L^\ell}(\xi)}(\xi),\; k_\ell^*(i - 1) + 1))\\
  \equsing{D.~\ref{def:subsignal}}\ \ \ & \EvalStride_L\!\left(\sum\nolimits_{\nu = 1}^\ROI \Trimming_{r_{\EvalRelaxSlide_L^\ell}(\xi)}(\xi)_{k_\ell^*(i - 1) + 1 + \nu - 1}\cdot e_\nu^\ROI\right)\\
  \equsing{($\lozenge$)}\ \ \ & \EvalStride_L\!\left(\sum\nolimits_{\nu = 1}^\ROI \xi_{k_\ell^*(i - 1) + 1 + \nu - 1}\cdot e_\nu^\ROI\right)\\
  \equsing{D.~\ref{def:subsignal}}\ \ \ & \EvalStride_L(\Subsignal_\ROI(\xi,\; k_\ell^*(i - 1) + 1))\\
  \equsing{\ref{thm:evalrelaxslide-preq-b}}\ \ \ & \Downsampling_{k_\ell^*}( \Slide_{\EvalStride_L}(\xi) )_i\text{,}
\end{align*}
where in the ($\lozenge$) step it has to be shown that $k_\ell^*(i - 1) + 1 + \nu - 1\inint{1}{\dim_{M_0}( \Trimming_{r_{\EvalRelaxSlide_L^\ell}(\xi)}(\xi) )}$ so that the trimming operator can be omitted using Definition~\ref{def:stuff-trim}.
This sample index is clearly an integer and no less than unity.
Moreover,
\begin{align*}
  & k_\ell^*(i - 1) + 1 + \nu - 1\\
  \leq\ & k_\ell^*(\dim_{M_L}( T_{\Trimming}(\xi) ) - 1) + \ROI\\
  \equsing{\ref{thm:evalrelaxslide-trim-c}}\ & k_\ell^*(\tfrac{1}{k_\ell^*}(D_{\Trimming} - \ROI) + 1 - 1) + \ROI\\
  =\ & D_{\Trimming}\\
  \equsing{\ref{thm:evalrelaxslide-trim-a}}\ & \dim_{M_0}( \Trimming_{r_{\EvalRelaxSlide_L^\ell}(\xi)}(\xi) )\text{.}
\end{align*}
In conclusion, correctness for the $T_{\Trimming}$ case has been shown.

{\parindent0mm \emph{Proof for $T = T_{\Stuffing}$.}}
Here, the case in which $r_{\EvalRelaxSlide_L^\ell}(\xi)\inint{k_\ell^*}{k_L^* - 1}$ is considered.

\ref{thm:evalrelaxslide-stuff-a}
Because of $r_{\EvalRelaxSlide_L^\ell}(\xi) \leq k_L^* - 1$ it follows that $k_L^* - r_{\EvalRelaxSlide_L^\ell}(\xi) \geq 1$, therefore $\Stuffing_{k_L^* - r_{\EvalRelaxSlide_L^\ell}(\xi)}(\xi)$ is well-defined with Definition~\ref{def:stuff-trim}.
From Definition~\ref{def:stuff-trim} follows directly that $D_{\Stuffing} := \dim_{M_0}( \Stuffing_{k_L^* - r_{\EvalRelaxSlide_L^\ell}(\xi)}(\xi) ) = D + k_L^* - r_{\EvalRelaxSlide_L^\ell}(\xi)$.
As by definition it holds that $D - \ROI + k_\ell^* = \div{D - \ROI + k_\ell^*}{k_L^*}\cdot k_L^* + r_{\EvalRelaxSlide_L^\ell}(\xi)$, one obtains $D - \ROI + k_\ell^* - r_{\EvalRelaxSlide_L^\ell}(\xi) = \div{D - \ROI + k_\ell^*}{k_L^*}\cdot k_L^* \geq 0$, hence $D_{\Stuffing} - (\ROI + k_L^* - k_\ell^*) = D + k_L^* - r_{\EvalRelaxSlide_L^\ell}(\xi) - \ROI - k_L^* + k_\ell^* \geq 0$, therefore the required minimum input signal length is fulfilled after stuffing.

\ref{thm:evalrelaxslide-stuff-b}
Using the equations derived in~\ref{thm:evalrelaxslide-stuff-a} one obtains
\begin{align*}
  & D_{\Stuffing} - \ROI + k_\ell^*\\
  \equsing{\ref{thm:evalrelaxslide-stuff-a}}\ \ & D + k_L^* - r_{\EvalRelaxSlide_L^\ell}(\xi) - \ROI + k_\ell^*\\
  \equsing{\ref{thm:evalrelaxslide-stuff-a}}\ \ & (\div{D - \ROI + k_\ell^*}{k_L^*} + 1)\cdot k_L^*\text{,}
\end{align*}
and therefore with $t_{\Stuffing} := \div{D - \ROI + k_\ell^*}{k_L^*} + 1\in\N_1$ follows that $D_{\Stuffing} = \ROI + k_L^* t_{\Stuffing} - k_\ell^*$.
In conclusion, $\EvalRelaxSlide_L^\ell( \Stuffing_{k_L^* - r_{\EvalRelaxSlide_L^\ell}(\xi)}(\xi) )$ is well-defined using Lemma~\ref{lem:evalrelaxslide}.

\ref{thm:evalrelaxslide-stuff-c}
From Lemma~\ref{lem:evalrelaxslide}\ref{lem:evalrelaxslide-b} follows that $\cdim_{M_L}( \EvalRelaxSlide_L^\ell( \Stuffing_{k_L^* - r_{\EvalRelaxSlide_L^\ell}(\xi)}(\xi) ) ) = \frac{k_L^*}{k_\ell^*}$ and $\rdim_{M_L}( \EvalRelaxSlide_L^\ell( \Stuffing_{k_L^* - r_{\EvalRelaxSlide_L^\ell}(\xi)}(\xi) ) ) = \frac{k_L^*}{k_L^*}t_{\Stuffing} = t_{\Stuffing}$ as $u_L = 1$ by requirement.
Hence, a complete defragmentation using $\Defragmentation_{k_L^* / k_\ell^*}$ can be carried out.
This results in a single fragment with Lemma~\ref{lem:defrag-ops}, that is $\cdim_{M_L}( \Defragmentation_{k_L^* / k_\ell^*}( \EvalRelaxSlide_L^\ell( \Stuffing_{k_L^* - r_{\EvalRelaxSlide_L^\ell}(\xi)}(\xi) ) ) ) = 1$.
The number of samples equals
\begin{align*}
  & \dim_{M_L}( \Defragmentation_{k_L^* / k_\ell^*}( \EvalRelaxSlide_L^\ell( \Stuffing_{k_L^* - r_{\EvalRelaxSlide_L^\ell}(\xi)}(\xi) ) ) )\\
  =\ \ & \rdim_{M_L}( \Defragmentation_{k_L^* / k_\ell^*}( \EvalRelaxSlide_L^\ell( \Stuffing_{k_L^* - r_{\EvalRelaxSlide_L^\ell}(\xi)}(\xi) ) ) )\\
  \equsing{L.~\ref{lem:defrag-ops}}\ \ & \tfrac{k_L^*}{k_\ell^*}\cdot\rdim_{M_L}( \EvalRelaxSlide_L^\ell( \Stuffing_{k_L^* - r_{\EvalRelaxSlide_L^\ell}(\xi)}(\xi) ) )\\
  =\ \ & \tfrac{k_L^*t_{\Stuffing}}{k_\ell^*}\\
  \equsing{\ref{thm:evalrelaxslide-stuff-b}}\ \ & \tfrac{1}{k_\ell^*}(D_{\Stuffing} - \ROI) + 1\text{.}
\end{align*}

\ref{thm:evalrelaxslide-stuff-d}
First, the situation in which $k_\ell^*$ divides $D - \ROI + 1$ is considered.
Here, $\tilde{s}_{\EvalRelaxSlide_L^\ell}(\xi) = k_\ell^*$.
For showing that $s_{\EvalRelaxSlide_L^\ell}(\xi)$ is well-defined it has to be verified that its output is a positive natural number.
Because of $r_{\EvalRelaxSlide_L^\ell}(\xi) \leq k_L^* - 1$ and $\tilde{s}_{\EvalRelaxSlide_L^\ell}(\xi) \geq 1$ it holds that $k_L^* - r_{\EvalRelaxSlide_L^\ell}(\xi) + \tilde{s}_{\EvalRelaxSlide_L^\ell}(\xi) - 1 \geq 1$, thus $s_{\EvalRelaxSlide_L^\ell}(\xi)$ is positive.
With Lemma~\ref{lem:remdiv} there exists an integer $z\in\Z$ with
\begin{displaymath}
  r_{\EvalRelaxSlide_L^\ell}(\xi) = \rem{D - \ROI + k_\ell^*}{k_L^*} = \rem{D - \ROI + k_\ell^*}{k_\ell^*} + k_\ell^*\cdot z
\end{displaymath}
as $k_\ell^*$ divides $k_L^*$.
From $\rem{D - \ROI + 1}{k_\ell^*} = 0$ follows $\rem{D - \ROI + k_\ell^*}{k_\ell^*} = k_\ell^* - 1$, so that $r_{\EvalRelaxSlide_L^\ell}(\xi) = k_\ell^* - 1 + k_\ell^*\cdot z$.
Hence
\begin{align*}
  & k_L^* - r_{\EvalRelaxSlide_L^\ell}(\xi) + \tilde{s}_{\EvalRelaxSlide_L^\ell}(\xi) - 1\\
  =\ & k_L^* - k_\ell^* + 1 - k_\ell^*\cdot z + k_\ell^* - 1\\
  =\ & k_L^* - k_\ell^*\cdot z\\
  =\ & (k_{\ell + 1}\cdots k_L - z)\cdot k_\ell^*
\end{align*}
is divisible by $k_\ell^*$, which finally implies that $s_{\EvalRelaxSlide_L^\ell}(\xi)$ is a positive natural number.
A comparison of the output dimensionality of defragmentation with the number of samples that should be trimmed away leads to the difference
\begin{align*}
  & \dim_{M_L}( \Defragmentation_{k_L^* / k_\ell^*}( \EvalRelaxSlide_L^\ell( \Stuffing_{k_L^* - r_{\EvalRelaxSlide_L^\ell}(\xi)}(\xi) ) ) ) - s_{\EvalRelaxSlide_L^\ell}(\xi)\\
  \equsing{\ref{thm:evalrelaxslide-stuff-c}}\ & \tfrac{1}{k_\ell^*}(D_{\Stuffing} - \ROI + k_\ell^*) - \tfrac{1}{k_\ell^*} (k_L^* - r_{\EvalRelaxSlide_L^\ell}(\xi) + \tilde{s}_{\EvalRelaxSlide_L^\ell}(\xi) - 1)\\
  \equsing{\ref{thm:evalrelaxslide-stuff-a}}\ & \tfrac{1}{k_\ell^*}\big( (D + k_L^* - r_{\EvalRelaxSlide_L^\ell}(\xi) - \ROI + k_\ell^*) - (k_L^* - r_{\EvalRelaxSlide_L^\ell}(\xi) + \tilde{s}_{\EvalRelaxSlide_L^\ell}(\xi) - 1) \big)\\
  =\ & \tfrac{1}{k_\ell^*}(D - \ROI + 1)\text{.}
\end{align*}
By requirement this is a positive natural number.
This proves that $\Trimming_{s_{\EvalRelaxSlide_L^\ell}(\xi)}$ can be applied to $\Defragmentation_{k_L^* / k_\ell^*}( \EvalRelaxSlide_L^\ell( \Stuffing_{k_L^* - r_{\EvalRelaxSlide_L^\ell}(\xi)}(\xi) ) )$ due to Definition~\ref{def:stuff-trim}, hence $T_{\Stuffing}(\xi)$ is well-defined with an output signal length of $\tfrac{1}{k_\ell^*}(D - \ROI + 1)$, as has just been shown.
Finally, \ref{thm:evalrelaxslide-preq-a}~implies that $\dim_{M_L}( T_{\Stuffing}(\xi) ) = \dim_{M_L}( \Downsampling_{k_\ell^*}( \Slide_{\EvalStride_L}(\xi) ) )$.

Now suppose $k_\ell^*$ does not divide $D - \ROI + 1$.
Here $\tilde{s}_{\EvalRelaxSlide_L^\ell}(\xi) = \rem{D - \ROI + 1}{k_\ell^*}$ by definition.
This is a positive number by requirement.
Now $k_L^* - r_{\EvalRelaxSlide_L^\ell}(\xi) + \tilde{s}_{\EvalRelaxSlide_L^\ell}(\xi) - 1 \geq 1$ as $r_{\EvalRelaxSlide_L^\ell}(\xi) \leq k_L^* - 1$ and $\tilde{s}_{\EvalRelaxSlide_L^\ell}(\xi) \geq 1$ as just stated.
This shows that $s_{\EvalRelaxSlide_L^\ell}(\xi)$ is a positive number.
For showing it is also a natural number, note that
\begin{align*}
  & k_L^* - r_{\EvalRelaxSlide_L^\ell}(\xi) + \tilde{s}_{\EvalRelaxSlide_L^\ell}(\xi) - 1\\
  =\ & k_L^* - \rem{D - \ROI + k_\ell^*}{k_L^*} + \rem{D - \ROI + 1}{k_\ell^*} - 1\\
  =\ & k_L^* - (D - \ROI + k_\ell^* - \div{D - \ROI + k_\ell^*}{k_L^*}\cdot k_L^*) + (D - \ROI + 1 - \div{D - \ROI + 1}{k_\ell^*}\cdot k_\ell^*) - 1\\
  =\ & \big(k_{\ell + 1}\cdots k_L - 1 + \div{D - \ROI + k_\ell^*}{k_L^*}\cdot k_{\ell + 1}\cdots k_L - \div{D - \ROI + 1}{k_\ell^*} \big)\cdot k_\ell^*\text{.}
\end{align*}
As $k_\ell^*$ divides this number, $s_{\EvalRelaxSlide_L^\ell}(\xi)$ is guaranteed eventually to be a positive natural number.
Moreover,
\begin{align*}
  & \dim_{M_L}( \Defragmentation_{k_L^* / k_\ell^*}( \EvalRelaxSlide_L^\ell( \Stuffing_{k_L^* - r_{\EvalRelaxSlide_L^\ell}(\xi)}(\xi) ) ) ) - s_{\EvalRelaxSlide_L^\ell}(\xi)\\
  \equsing{\ref{thm:evalrelaxslide-stuff-c}}\ & \tfrac{1}{k_\ell^*}(D_{\Stuffing} - \ROI + k_\ell^*) - \tfrac{1}{k_\ell^*} (k_L^* - r_{\EvalRelaxSlide_L^\ell}(\xi) + \tilde{s}_{\EvalRelaxSlide_L^\ell}(\xi) - 1)\\
  \equsing{\ref{thm:evalrelaxslide-stuff-a}}\ & \tfrac{1}{k_\ell^*}\big( (D + k_L^* - r_{\EvalRelaxSlide_L^\ell}(\xi) - \ROI + k_\ell^*) - (k_L^* - r_{\EvalRelaxSlide_L^\ell}(\xi) + \rem{D - \ROI + 1}{k_\ell^*} - 1) \big)\\
  =\ & \tfrac{1}{k_\ell^*}(D - \ROI + 1 + k_\ell^* - \rem{D - \ROI + 1}{k_\ell^*})\text{.}
\end{align*}
From $D \geq \ROI$ and $\rem{D - \ROI + 1}{k_\ell^*} \leq k_\ell^* - 1$ follows that this difference is positive.
Therefore, $\Trimming_{s_{\EvalRelaxSlide_L^\ell}(\xi)}$ is applicable to $\Defragmentation_{k_L^* / k_\ell^*}( \EvalRelaxSlide_L^\ell( \Stuffing_{k_L^* - r_{\EvalRelaxSlide_L^\ell}(\xi)}(\xi) ) )$.
In conclusion, $T_{\Stuffing}(\xi)$ is well-defined.
Its dimensionality equals the difference just analyzed due to Definition~\ref{def:stuff-trim}.
By requirement, this can be simplified using Lemma~\ref{lem:ceilrem}:
\begin{align*}
  & \dim_{M_L}( T_{\Stuffing}(\xi) )\\
  =\ \ & \tfrac{1}{k_\ell^*}(D - \ROI + 1 + k_\ell^* - \rem{D - \ROI + 1}{k_\ell^*})\\
  \equsing{L.~\ref{lem:ceilrem}}\ \ & \nceil{\tfrac{1}{k_\ell^*}( D - \ROI + 1 )}\\
  \equsing{\ref{thm:evalrelaxslide-preq-a}}\ \ & \dim_{M_L}( \Downsampling_{k_\ell^*}( \Slide_{\EvalStride_L}(\xi) ) )\text{.}
\end{align*}
Summing up, the claim holds regardless whether $k_\ell^*$ divides $D - \ROI + 1$ or not.

\ref{thm:evalrelaxslide-stuff-e}
Since it has been shown in~\ref{thm:evalrelaxslide-stuff-d} that the dimensionality of both signals matches, it remains to be shown that the individual samples are equal.
For this, let $i\inint{1}{\dim_{M_L}( T_{\Stuffing}(\xi) )}$ be a sample index.
It is then
\begin{align*}
  & T_{\Stuffing}(\xi)_i\\
  =\ \ \ & \Trimming_{s_{\EvalRelaxSlide_L^\ell}(\xi)}( \Defragmentation_{k_L^* / k_\ell^*}( \EvalRelaxSlide_L^\ell( \Stuffing_{k_L^* - r_{\EvalRelaxSlide_L^\ell}(\xi)}(\xi) ) ) )_i\\
  \equsing{D.~\ref{def:stuff-trim}}\ \ \ & \Defragmentation_{k_L^* / k_\ell^*}( \EvalRelaxSlide_L^\ell( \Stuffing_{k_L^* - r_{\EvalRelaxSlide_L^\ell}(\xi)}(\xi) ) )_{i,\;1}\\
  \equsing{L.~\ref{lem:defrag-ops}}\ \ \ & \EvalRelaxSlide_L^\ell( \Stuffing_{k_L^* - r_{\EvalRelaxSlide_L^\ell}(\xi)}(\xi) )_{\div{i - 1}{k_L^* / k_\ell^*} + 1,\;\rem{i - 1}{k_L^* / k_\ell^*} + 1}\\
  \equsing{L.~\ref{lem:evalrelaxslide}\ref{lem:evalrelaxslide-d}}\ \ \ & \EvalStride_L(\Subsignal_\ROI(\Stuffing_{k_L^* - r_{\EvalRelaxSlide_L^\ell}(\xi)}(\xi),\; k_\ell^*(i - 1) + 1))_1\\
  \equsing{$u_L = 1$}\ \ \ & \EvalStride_L(\Subsignal_\ROI(\Stuffing_{k_L^* - r_{\EvalRelaxSlide_L^\ell}(\xi)}(\xi),\; k_\ell^*(i - 1) + 1))\\
  \equsing{D.~\ref{def:subsignal}}\ \ \ & \EvalStride_L\!\left(\sum\nolimits_{\nu = 1}^\ROI \Stuffing_{k_L^* - r_{\EvalRelaxSlide_L^\ell}(\xi)}(\xi)_{k_\ell^*(i - 1) + 1 + \nu - 1}\cdot e_\nu^\ROI\right)\\
  \equsing{($\lozenge$)}\ \ \ & \EvalStride_L\!\left(\sum\nolimits_{\nu = 1}^\ROI \xi_{k_\ell^*(i - 1) + 1 + \nu - 1}\cdot e_\nu^\ROI\right)\\
  \equsing{D.~\ref{def:subsignal}}\ \ \ & \EvalStride_L(\Subsignal_\ROI(\xi,\; k_\ell^*(i - 1) + 1))\\
  \equsing{\ref{thm:evalrelaxslide-preq-b}}\ \ \ & \Downsampling_{k_\ell^*}( \Slide_{\EvalStride_L}(\xi) )_i\text{,}
\end{align*}
where in the ($\lozenge$) step the stuffing operator has been omitted due to Definition~\ref{def:stuff-trim} because the sample index $k_\ell^*(i - 1) + 1 + \nu - 1 = k_\ell^*(i - 1) + \nu$ lies in the set $\discint{1}{D}$.
Clearly, this index is an integer.
Substitution of the lower bounds $i \geq 1$ and $\nu \geq 1$ shows that the sample index is always greater than or equal to one.
If $k_\ell^*$ divides $D - \ROI + 1$, then
\begin{align*}
  & k_\ell^*(i - 1) + \nu\\
  \leq\ & k_\ell^*(\dim_{M_L}( T_{\Stuffing}(\xi) ) - 1) + \ROI\\
  \equsing{\ref{thm:evalrelaxslide-stuff-d}}\ & k_\ell^*\left(\tfrac{1}{k_\ell^*}(D - \ROI + 1) - 1\right) + \ROI\\
  =\ & D - \ROI + 1 - k_\ell^* + \ROI\\
  \leq\ & D
\end{align*}
because $k_\ell^* \geq 1$.
If, on the other hand, $k_\ell^*$ does not divide $D - \ROI + 1$, one obtains
\begin{align*}
  & k_\ell^*(i - 1) + \nu\\
  \leq\ \ & k_\ell^*(\dim_{M_L}( T_{\Stuffing}(\xi) ) - 1) + \ROI\\
  \equsing{\ref{thm:evalrelaxslide-stuff-d}}\ \ & k_\ell^*\left(\nceil{\tfrac{1}{k_\ell^*}( D - \ROI + 1 )} - 1\right) + \ROI\\
  \equsing{L.~\ref{lem:ceilrem}}\ \ & k_\ell^*\left(\tfrac{1}{k_\ell^*}(D - \ROI + 1 + k_\ell^* - \rem{D - \ROI + 1}{k_\ell^*}) - 1\right) + \ROI\\
  =\ \ & D - \ROI + 1 + k_\ell^* - \rem{D - \ROI + 1}{k_\ell^*} - k_\ell^* + \ROI\\
  \leq\ \ & D
\end{align*}
as $\rem{D - \ROI + 1}{k_\ell^*} \geq 1$.

This finally proved correctness in the $T_{\Stuffing}$ case.
\end{proof}

There is one notable special case in the statements of Theorem~\ref{thm:evalrelaxslide}:
If $\dim_{M_0}(\xi) \geq \ROI + k_L^* - k_\ell^*$ and $r_{\EvalRelaxSlide_L^\ell}(\xi) = 0$, where the latter is equivalent to $k_L^*$ dividing $\dim_{M_0}(\xi) - \ROI + k_\ell^*$, then $\Defragmentation_{k_L^* / k_\ell^*}( \EvalRelaxSlide_L^\ell( \xi ) ) = \Downsampling_{k_\ell^*}( \Slide_{\EvalStride_L} ( \xi ) )$.
Only in this situation, no non-trivial trimming or stuffing of the input signal is necessary for processing chain application in a mixed fashion.
Note that these conditions are exactly the requirements of Lemma~\ref{lem:evalrelaxslide}, underlining that Theorem~\ref{thm:evalrelaxslide} is a non-trivial generalization of that result.

In conclusion, a method for efficiently scanning signals with a controllable trade-off between spatial output signal resolution and required computational complexity has been proved to always produce the correct outcome.
Since this approach relies on both relaxed and fragmentation-based processing, all benefits such as efficient homogeneous data structures and the absence of necessary modifications to the computationally demanding functions can be conserved.

\bibliographystylerlxsld{IEEEtran}
\bibliographyrlxsld{IEEEabrv,the}

\end{document}